\renewcommand{\AA}{\mathcal{A}}
\newcommand{\HH}{\mathcal{H}}
\newcommand{\MM}{\mathcal{M}}
\newcommand{\BB}{\mathcal{B}}
\newcommand{\PP}{\mathcal{P}}
\newcommand{\LL}{\mathcal{L}}
\newcommand{\XX}{\mathcal{X}}
\newcommand{\EE}{\mathbb{E}\,}
\newcommand{\DD}{\mathcal{D}}
\newcommand{\RR}{\mathbb{R}}
\newcommand{\NN}{\mathcal{N}}
\newcommand{\CC}{\mathcal{C}}
\newcommand{\Cplx}{\mathbb{C}}
\newcommand{\FF}{\mathcal{F}}
\newcommand{\VV}{\mathcal{V}}
\newcommand{\xx}{\mathbf{x}}
\newcommand{\Var}{\mathbb{V}\mathrm{ar}\,}
\newcommand{\const}{\mathrm{const}}
\newcommand{\citet}{\cite}
\newcommand{\Rad}[2]{\mathrm{Rad}({#1} \, | \, {#2})}
\newcommand{\VC}[1]{\mathrm{VC}({#1})}
\newcommand{\kld}[2]{\mathrm{KL}({#1}\; || \;{#2})}
\DeclareMathOperator*{\argmin}{arg\,min}
\DeclareMathOperator*{\Argmax}{Arg\,max}
\DeclareMathOperator*{\Argmin}{Arg\,min}
\DeclareMathOperator*{\diag}{diag}
\DeclareMathOperator*{\tr}{tr}
\DeclareMathOperator*{\rk}{rk}
\DeclareMathOperator*{\supp}{supp}
\DeclareMathOperator{\Ind}{Ind}
\DeclareMathOperator*{\plim}{plim}
\newtheorem{theorem}{Theorem}
\newtheorem{prop}{Proposition}
\newtheorem{lemma}{Lemma}
\newtheorem{corollary}{Corollary}
\newtheorem{conjecture}{Conjecture}
\title{Notes on Deep Learning Theory}
\author{
    \textbf{Evgenii (Eugene) Golikov}
    \\ Neural Networks and Deep Learning lab.
    \\ Moscow Institute of Physics and Technology 
    \\ Moscow, Russia 
    \\ \href{mailto:golikov.ea@mipt.ru}{\texttt{golikov.ea@mipt.ru}}
}
\begin{document}
    \maketitle

    \begin{abstract}
        These are the notes for the lectures that I was giving during Fall 2020 at the Moscow Institute of Physics and Technology (MIPT) and at the Yandex School of Data Analysis (YSDA).
        The notes cover some aspects of initialization, loss landscape, generalization, and a neural tangent kernel theory.
        While many other topics (e.g. expressivity, a mean-field theory, a double descent phenomenon) are missing in the current version, we plan to add them in future revisions.
    \end{abstract}
    
    \tableofcontents

    \chapter{Introduction}

    Machine learning aims to solve the following problem:
    \begin{equation}
        R(f) \to \min_{f \in \FF}.
        \label{eq:true_risk_minimization}
    \end{equation}
    Here $R(f) = \EE_{x,y \sim \DD} r(y, f(x))$ is a true risk of a model $f$ from a class $\FF$, and $\DD$ is a data distribution.
    However, we do not have an access to the true data distribution; instead we have a finite set of i.i.d. samples from it: $S_n = \{(x_i,y_i)\}_{i=1}^n \sim \DD^n$.
    For this reason, instead of approaching~(\ref{eq:true_risk_minimization}), we substitite it with an empirical risk minimization:
    \begin{equation}
        \hat R_n(f) \to \min_{f \in \FF},
        \label{eq:emp_risk_minimization}
    \end{equation}
    where $\hat R_n(f) = \EE_{x,y \in S_n} r(y, f(x))$ is an empirical risk of a model $f$ from a class $\FF$.

    \section{Generalization ability}

    How does the solution of~(\ref{eq:emp_risk_minimization}) relate to~(\ref{eq:true_risk_minimization})?
    In other words, we aim to upper-bound the difference between the two risks:
    \begin{equation}
        R(\hat f_n) - \hat R_n(\hat f_n) \leq
        \mathrm{bound}(\hat f_n, \FF, n, \delta)
        \quad
        \text{w.p. $\geq 1 - \delta$ over $S_n$},
        \label{eq:general_bound}
    \end{equation}
    where $\hat f_n \in \FF$ is a result of training the model on the dataset $S_n$.

    We call the bound~(\ref{eq:general_bound}) \emph{a-posteriori} if it depends on the resulting model $\hat f_n$, and we call it \emph{a-priori} if it does not.
    An a-priori bound allows one to estimate the risk difference before training, while an a-posteriori bound estimates the risk difference based on the final model.

    \emph{Uniform} bounds are instances of an a-priori class:
    \begin{equation}
        R(\hat f_n) - \hat R_n(\hat f_n) \leq
        \sup_{f \in \FF} |R(f) - \hat R_n(f)| \leq
        \mathrm{ubound}(\FF, n, \delta)
        \quad
        \text{w.p. $\geq 1 - \delta$ over $S_n$},
        \label{eq:uniform_bound}
    \end{equation}
    A typical form of the uniform bound is the following:
    \begin{equation}
        \mathrm{ubound}(\FF, n, \delta) =
        O\left(\sqrt{\frac{\CC(\FF) + \log(1/\delta)}{n}}\right),
        \label{eq:uniform_bound_instance}
    \end{equation}
    where $\CC(\FF)$ is a \emph{complexity} of the class $\FF$.
    
    The bound above suggests that the generalization ability, measured by the risk difference, decays as the model class becomes larger.
    This suggestion conforms the classical \emph{bias-variance trade-off} curve.
    The curve can be reproduced if we fit the \emph{Runge function} with a polynomial using a train set of equidistant points; the same phenomena can be observed for decision trees.

    A typical notion of model class complexity is VC-dimension \cite{vapnik1971}.
    For neural networks, VC-dimension grows at least linearly with the number of parameters \cite{bartlett2019nearly}.
    Hence the bound~(\ref{eq:uniform_bound_instance}) becomes vacuous for large enough nets.
    However, as we observe, the empirical (train) risk $\hat R_n$ vanishes, while the true (test) risk \emph{saturates} for large enough width (see Figure 1 of \cite{neyshabur2014in}).

    One might hypothesize that the problem is in VC-dimension, which overestimates the complexity of neural nets.
    However, the problem turns out to be in uniform bounds in general.
    Indeed, if the class $\FF$ contains a \emph{bad network}, i.e. a network that perfectly fits the train data but fails desperately on the true data distribution, the uniform bound~(\ref{eq:uniform_bound}) becomes at least nearly vacuous.
    In realistic scenarios, such a bad network can be found explicitly: \cite{zhang2016understanding} demonstrated that practically large nets can fit data with random labels; similarly, these nets can fit the training data plus some additional data with random labels.
    Such nets fit the training data perfectly but generalize poorly.

    Up to this point, we know that among the networks with zero training risk, some nets generalize well, while some generalize poorly.
    Suppose we managed to come with some \emph{model complexity measure} that is symptomatic for poor generalization: bad nets have higher complexity than good ones.
    If we did, we can come up with a better bound by prioritizing less complex models.

    Such prioritization is naturally supported by a \emph{PAC-bayesian paradigm.}
    First, we come up with a \emph{prior} distribution $P$ over models.
    This distribution should not depend on the train dataset $S_n$.
    Then we build a \emph{posterior} distribution $Q \mid S_n$ over models based on observed data.
    For instance, if we fix random seeds, a usual network training procedure gives a posterior distribution concentrated in a single model $\hat f_n$.
    The \emph{PAC-bayesian bound} \cite{mcallester1999some} takes the following form:
    \begin{equation}
        R(Q \mid S_n) - \hat R_n(Q \mid S_n) \leq
        O\left(\sqrt{\frac{KL(Q \mid S_n \| P) + \log(1/\delta)}{n}}\right)
        \quad
        \text{w.p. $\geq 1 - \delta$ over $S_n$},
        \label{eq:pac_bayesian_bound}
    \end{equation}
    where $R(Q)$ is an expected risk for models sampled from $Q$; similarly for $\hat R_n(Q)$.
    If more complex models are less likely to be found, then we can embed this information into prior, thus making the KL-divergence typically smaller.
    
    The PAC-bayesian bound~(\ref{eq:pac_bayesian_bound}) is an example of an a-posteriori bound, since the bound depends on $Q$.
    However, it is possible to obtain an a-priori bound using the same paradigm \cite{neyshabur2018a}.

    The bound~(\ref{eq:pac_bayesian_bound}) becomes better when our training procedure tends to find models that are probable according to the prior.
    But what kind of models does the gradient descent typically find?
    Does it implicitly minimize some complexity measure of the resulting model?
    Despite the existence of bad networks, minimizing the train loss using a gradient descent typically reveals well-performing solutions.
    This phenomenon is referred as an \emph{implicit bias} of gradient descent.

    Another problem with a-priori bounds is that they all are effectively \emph{two-sided:} all of them are bounding an absolute value of the risk difference, rather then the risk difference itself. 
    Two-sided bounds fail if there exist networks that generalize well, while failing on a given train set.
    \cite{nagarajan2019uniform} have constructed a problem for which such networks are typically found by gradient descent.

    \section{Global convergence}

    We have introduced the empirical minimization problem~(\ref{eq:emp_risk_minimization}) because we were not able to minimize the true risk directly: see~(\ref{eq:true_risk_minimization}).
    But are we able to minimize the empirical risk?
    Let $f(x; \theta)$ be a neural net evaluated at input $x$ with parameters $\theta$.
    Consider a loss function $\ell$ that is a convex surrogate of a risk $r$.
    Then minimizing the train loss will imply empirical risk minimization:
    \begin{equation}
        \hat\LL_n(\theta) =
        \EE_{x,y \in S_n} \ell(y, f(x; \theta)) \to \min_\theta.
        \label{eq:loss_minimization}
    \end{equation}

    Neural nets are complex non-linear functions of both inputs and weights; we can hardly expect the loss landscape $\hat\LL_n(\theta)$ induced by such functions to be simple.
    At least, for non-trivial neural nets $\hat\LL_n$ is a non-convex function of $\theta$.
    Hence it can have local minima that are not global.

    The most widely-used method of solving the problem~(\ref{eq:loss_minimization}) for deep learning is gradient descent (GD), or some of its variants.
    Since GD is a local method, it cannot have any global convergence guarantees in general case.
    However, for practically-sized neural nets it \emph{always} succeeds in finding a global minimum.

    Given this observation, it is tempting to hypothesize that despite of the non-convexity, all local minima of $\hat\LL_n(\theta)$ are global.
    This turns to be true for linear nets \cite{kawaguchi2016deep,lu2017depth,laurent2018deep}, and for non-linear nets if they are sufficiently wide \cite{nguyen2019connected}.

    While globality of local minima implies almost sure convergence of gradient descent \cite{lee2016gradient, panageas2017gradient}, there are no guarantees on convergence speed.
    Generally, convergence speed depends on initialization.
    For instance, initializing linear nets orthogonally makes the optimization speed independent of depth \cite{saxe2013exact}.
    Ill-posed initialization may heavily slow down the optimization process.
    \cite{glorot2010understanding,he2015delving} proposed heuristics for preventing such situations.

    \section{From weight space to function space}

    Consider the optimization problem~(\ref{eq:loss_minimization}).
    The gradient descent dynamics for this problem looks as follows:
    \begin{equation}
        \dot\theta_t =
        -\eta \EE_{x,y \in S_n} \left.\frac{\partial \ell(y,z)}{\partial z}\right|_{z=f(x;\theta_t)} \nabla_\theta f(x;\theta_t).
        \label{eq:gd_dynamics_theta}
    \end{equation}
    This dynamics is very complex due to non-linearity of $f(x;\theta)$ as a function of $\theta$.
    Let us multiply both sides of~(\ref{eq:gd_dynamics_theta}) by $\nabla^T_\theta f(x;\theta_t)$:
    \begin{equation}
        \dot f_t(x') =
        -\eta \EE_{x,y \in S_n} \left.\frac{\partial \ell(y,z)}{\partial z}\right|_{z=f_t(x)} K_t(x',x),
        \label{eq:gd_dynamics_f}
    \end{equation}
    where $f_t(x) = f(x;\theta_t)$, and $K_t(x',x)$ is a \emph{tangent kernel}:
    \begin{equation}
        K_t(x',x) =
        \nabla^T_\theta f(x';\theta_t) \nabla_\theta f(x;\theta_t).
    \end{equation}

    Generally, the kernel is stochastic and evolves with time.
    For this reason, dynamics~(\ref{eq:gd_dynamics_f}) is not completely defined.
    However, if the network is parameterized in a certain way, the kernel $K_t$ converges to a stationary deterministic kernel $\bar K_0$ as the number of hidden units (width) goes to infinity \cite{jacot2018neural}.

    If the kernel is stationary and deterministic, the dynamics~(\ref{eq:gd_dynamics_f}) is much simpler than~(\ref{eq:gd_dynamics_theta}).
    Indeed, for square loss~(\ref{eq:gd_dynamics_f}) is a linear ODE, which can be solved analytically \cite{lee2019wide}, while~(\ref{eq:gd_dynamics_theta}) still remains non-linear.
    
    This allows us to prove several results on convergence and generalization for large enough nets \cite{du2018gradient,arora2019fine}.
    Indeed, for a large enough network, its kernel is almost deterministic, and one have to prove that is is almost stationary.
    Given this, we can transfer results from statinary deterministic kernels of infinitely wide nets to sufficiently wide ones.

    Kernels of realistic finite-sized networks turn out to be non-stationary.
    It is possible to take this effect into account by introducing \emph{finite-width corrections} \cite{huang2019dynamics,Dyer2020Asymptotics}.

    \chapter{Initialization}

    \section{Preserving the variance}

    Consider a network with $L$ hidden layers and no biases:
    \begin{equation}
        f(x) = W_L \phi(W_{L-1} \ldots \phi(W_0 x)),
    \end{equation}
    where $W_l \in \RR^{n_{l+1} \times n_l}$ and a non-linearity $\phi$ is applied element-wise.
    Note that $x \in \RR^{n_0}$; we denote with $k = n_{L+1}$ the dimensionality of the output: $f: \; \RR^{n_0} \to \RR^k$.
    
    We shall refer $n_l$ as the width of the $l$-th hidden layer.
    Denote $n = n_1$, and take $n_l = \alpha_l n$ $\forall l \in [L]$.
    We shall refer $n$ as the width of the network, and keep $\alpha$-factors fixed.

    Consider a loss function $\ell(y,z)$.
    We try to minimize the average loss of our model: $\LL = \EE_{x,y} \ell(y,f(x))$.
    
    Let us assume both $x$ and $y$ are fixed.
    Define:
    \begin{equation}
    h_1 = W_0 x \in \RR^{n_1},
    \quad
    x_l = \phi(h_l) \in \RR^{n_l},
    \quad
    h_{l+1} = W_l x_l \in \RR^{n_{l+1}} \; \forall l \in [L].
    \end{equation}
    Hence given $x$ $f(x) = h_{L+1}$.
    
    This is forward dynamics; we can express backward dynamics similarly.
    Define a loss gradient with respect to the hidden representation:
    \begin{equation}
        g_l = \frac{\partial \ell(y,h_{L+1})}{\partial h_l} \in \RR^{n_l}
        \quad
        \forall l \in [L+1].
    \end{equation}
    We have then:
    \begin{equation}
        g_l = D_l W_l^T g_{l+1}
        \quad
        \forall l \in [L],
        \quad
        g_{L+1} = \frac{\partial \ell(y,h)}{\partial h},
    \end{equation}
    where $D_l = \diag(\phi'(h_l))$.
    
    Using the backward dynamics, we are able to compute gradients wrt the weights:
    \begin{equation}
        \nabla_l =
        \frac{\partial \ell(y,h_{L+1})}{\partial W_l} \in \RR^{n_{l+1} \times n_l}
        \quad
        \forall l \in [L]_0.
    \end{equation}
    Then,
    \begin{equation}
        \nabla_l = 
        g_{l+1} x_l^T
        \quad
        \forall l \in [L]_0.
    \end{equation}
    
    Assume the weights are initialized with zero mean and layer-dependent variance $v_l$:
    \begin{equation}
    \EE W_{l,0}^{ij} = 0,
    \quad
    \Var W_{l,0}^{ij} = v_l.
    \end{equation}

    Note that $\forall l \in [L+1]$ all components of the vector $h_l$ are i.i.d. with zero mean.
    Let $q_l$ be its variance:
    \begin{equation}
        q_l =
        \Var h_l^i =
        \EE (h_l^i)^2 =
        \frac{1}{n_l} \EE h_l^T h_l.
    \end{equation}
    The same holds for $g_l$; we denote its variance by $\delta_l$:
    \begin{equation}
        \delta_l =
        \Var g_l^i =
        \EE (g_l^i)^2 =
        \frac{1}{n_l} \EE g_l^T g_l.        
    \end{equation}

    \subsection{Linear case}
    \label{sec:linear}

    Consider first $\phi(h) = h$.

    Consider the following two properties of the initialization:
    \begin{enumerate}
        \item Normalized forward dynamics: $q_l$ does not depend neither on $n_{0:l-1}$, nor on $l$ $\forall l \in [L+1]$.
        \item Normalized backward dynamics: $\delta_l$ does not depend neither on $n_{l+1:L+1}$, nor on $l$ $\forall l \in [L+1]$.
    \end{enumerate}

    The first property implies that the model stays finite at the initialization irrespective of width $n$ and depth $L$.
    The two properties combined imply finite weight gradients at the initialization:
    $\Var \nabla_l^{ij}$ does not depend neither on $n_{0:L+1}$, nor on $l$ $\forall l \in [L]_0$.
    
    In turn, these two imply that weight increments stay finite at the initialization irrespective of width $n$ and depth $L$ if we consider training with SGD:
    \begin{equation}
        \Delta W_l =
        -\eta \EE_{x,y} \nabla_l(x,y).            
    \end{equation}

    Since the initial weights have zero mean, all hidden representations $h_l$ have zero mean too, and due to initial weight independence:
    \begin{equation}
        q_{l+1} =
        \frac{1}{n_{l+1}} \EE h_l^T W_l^T W_l h_l =
        v_l \EE h_l^T h_l =
        n_l v_l q_l
        \quad
        \forall l \in [L],
        \qquad
        q_1 =
        \frac{1}{n_1} \EE x^T W_0^T W_0 x =
        \|x\|_2^2 v_0 \propto
        n_0 v_0.
    \end{equation}
    Hence forward dynamics is normalized if $v_l = n_l^{-1}$ $\forall l \in [L]_0$.

    We can compute variances for gradients wrt hidden representations in a similar manner:
    \begin{equation}
        \delta_l =
        \frac{1}{n_l} \EE g_{l+1}^T W_l W_l^T g_{l+1} =
        v_l \EE g_{l+1}^T g_{l+1} =
        n_{l+1} v_l \delta_{l+1}
        \quad
        \forall l \in [L-1],
    \end{equation}
    \begin{equation}
        \delta_L =
        \frac{1}{n_L} \EE g_{L+1}^T W_L W_L^T g_{L+1} =
        \left\|\frac{\partial \ell(y,h)}{\partial h}\right\|_2^2 v_L \propto 
        n_{L+1} v_L.
    \end{equation}
    As we see, backward dynamics is normalized if $v_l = n_{l+1}^{-1}$ $\forall l \in [L]_0$.
    This means that we cannot have both forward dynamics and backward dynamics normalized at the same time.
    \cite{glorot2010understanding} proposed taking a harmonic mean of the variances for the two normalization requirements:
    \begin{equation}
        v_l =
        \frac{2}{n_l + n_{l+1}}
        \quad
        \forall l \in [L]_0.
    \end{equation}

    \subsection{ReLU case}
    \label{sec:relu}

    We start with the forward dynamics:
    \begin{equation}
        q_{l+1} =
        \frac{1}{n_{l+1}} \EE x_l^T W_l^T W_l x_l =
        v_l \EE x_l^T x_l
        \quad
        \forall l \in [L],
        \qquad
        q_1 =
        \frac{1}{n_1} \EE x^T W_0^T W_0 x =
        \|x\|_2^2 v_0 \propto
        n_0 v_0.
    \end{equation}

    \begin{equation}
        \EE x_l^T x_l =
        \EE [h_l]_+^T [h_l]_+ =
        \frac{1}{2} \EE h_l^T h_l =
        \frac{1}{2} n_l q_l
        \quad
        \forall l \in [L].
    \end{equation}
    Here the second equality holds due to the symmetry of $h_l$ distribution.
    The latter in its turn holds by induction on $l$.

    Hence for ReLU the forward dynamics is normalized if $v_l = 2 n_l^{-1}$, a result first presented in \cite{he2015delving}.
    Let us consider the backward dynamics then:
    \begin{equation}
        \delta_l =
        \frac{1}{n_l} \EE g_{l+1}^T W_l D_l^2 W_l^T g_{l+1} =
        \frac{1}{2} v_l \EE g_{l+1}^T g_{l+1} =
        \frac{1}{2} n_{l+1} v_l \delta_{l+1}
        \quad
        \forall l \in [L-1],
    \end{equation}
    \begin{equation}
        \delta_L =
        \frac{1}{n_L} \EE g_{L+1}^T W_L D_L^2 W_L^T g_{L+1} =
        \frac{1}{2} v_L \EE g_{L+1}^T g_{L+1} =
        \frac{1}{2} \left\|\frac{\partial \ell(y,h)}{\partial h}\right\|_2^2 v_L \propto 
        \frac{1}{2} n_{L+1} v_L.
    \end{equation}
    Similarly, we have to take $v_l = 2 n_{l+1}^{-1}$ for this type of normalization.
    Note that here we have assumed that $g_{l+1}$ does not depend on $W_l$ and $h_l$, which is not true: $g_{l+1}$ depends on $h_{l+1}$ through $D_{l+1}$ which depends on both $W_l$ and $h_l$.

    Again, we have a contradiction between the two normalization requirements.
    However in some practical cases satisfying only one of these is enough.
    For instance, if we consider minimizing the cross-entropy loss, the model diverging or vanishing at the initialization does not break the optimization process.
    Moreover, the magnitude of hidden representations does not matter, thanks to homogeneity of ReLU.
    Hence in this case normalizing the forward dynamics is not a strict requirement.

    On the other hand, using an optimizer that normalizes the gradient, i.e. Adam, makes backward normalization unnecessary.

    \subsection{Tanh case}
    \label{sec:tanh}

    Assume $\phi \in C^3(\RR)$, $\phi'(z) > 0$, $\phi(0) = 0$, $\phi'(0) = 1$, $\phi''(0) = 0$, $\phi'''(0) < 0$, and $\phi$ is bounded.
    The guiding example is the hyperbolic tangent: 
    \begin{equation}
        \phi(z) = \frac{e^z - e^{-z}}{e^z + e^{-z}}.
    \end{equation}

    In this case taking $v_l = n_l^{-1}$ ensures that activations $x_l$ are neither in a linear regime ($\Var h_l$ are not too small), nor in a saturated regime ($\Var h_l$ are not too large).
    However, if we take $v_l = n_{l+1}^{-1}$, $\Var g_l$ still vanishes for large $l$ since $|\phi'(h)| \leq 1$.
    Nevertheless, \cite{glorot2010understanding} suggests initializing with a harmonic mean of variances for the class of non-linearities we consider.
    Rationale: in this case a network is almost linear at the initialization.

    Let us assume that $v_l = \sigma_w^2 / n_l$.
    Consider the forward dynamics in detail:
    \begin{equation}
        q_{l+1} =
        \frac{1}{n_{l+1}} \EE_{h_l} \EE_{W_l} \phi(h_l)^T W_l^T W_l \phi(h_l) =
        \frac{\sigma_w^2}{n_l} \EE_{h_l} \phi(h_l)^T \phi(h_l).
    \end{equation}

    By the Cenral Limit Theorem, $\forall i$ $h_l^i$ converges to $\NN(0, q_l)$ in distribution as $n_{l-1} \to \infty$.
    Note that for a fixed $x$ $h_1$ is always normally distributed.
    Hence by taking subsequent limits $n_1 \to \infty$, $n_2 \to \infty$, and so on, we come up with the following recurrent relation (see \cite{poole2016exponential}):
    \begin{equation}
        q_{l+1} =
        \sigma_w^2 \EE_{z \sim \NN(0,1)} \phi(\sqrt{q_l} z)^2 =
        \VV(q_l | \sigma_w^2),
        \quad
        q_1 =
        \sigma_w^2 \frac{\|x\|_2^2}{n_0}.
    \end{equation}

    Let us study properties of the length map $\VV$:
    \begin{equation}
        \VV'(q | \sigma_w^2) =
        \sigma_w^2 \EE_{z \sim \NN(0,1)} \phi(\sqrt{q} z) \phi'(\sqrt{q} z) z / \sqrt{q} > 0.
    \end{equation}
    The last inequality holds since $\phi(\sqrt{q} z) z > 0$ for $z \neq 0$ due to monotonicity of $\phi$ and since $\phi(0) = 0$.
    Hence $\VV$ monotonically increases.
    \begin{equation}
        \VV'(q | \sigma_w^2) =
        \sigma_w^2 \EE_{z \sim \NN(0,1)} \phi(\sqrt{q} z) \phi'(\sqrt{q} z) z / \sqrt{q} =
        \sigma_w^2 \EE_{z \sim \NN(0,1)} \left(\phi'(\sqrt{q} z)^2 + \phi(\sqrt{q} z) \phi''(\sqrt{q} z)\right).
    \end{equation}
    In particular,
    \begin{equation}
        \VV'(0 | \sigma_w^2) =
        \sigma_w^2 \EE_{z \sim \NN(0,1)} (\phi'(0))^2 =
        \sigma_w^2 > 0.
    \end{equation}

    \cite{poole2016exponential} claimed that the second derivative is always negative for $\phi$ being a hyperbolic tangent, which we were not able to show.
    We can check it for $q = 0$ though:
    \begin{equation}
        \VV''(0 | \sigma_w^2) =
        4 \sigma_w^2 \EE_{z \sim \NN(0,1)} \phi'(0) \phi'''(0) =
        4 \sigma_w^2 \phi'''(0) < 0.
    \end{equation}
    Hence at least, $\VV$ is concave at zero.

    Whenever $\sigma_w \leq 1$, $q = 0$ is a stable fixed point for the length map.
    However for $\sigma_w > 1$ $q = 0$ becomes unstable; since $\VV(\infty | \sigma_w^2) < \infty$ due to boundedness of $\phi$, there should be at least one stable fixed point for the length map.
    If we believe that $\VV$ is indeed concave everywhere, this stable fixed point is unique.
    We denote it as $q_\infty$.

    This means that assuming $L = \infty$, $q_l = \Var h_l$ has finite non-zero limit as $n \to \infty$ and $l \to \infty$ whenever $\sigma_w^2 > 1$.
    We shall refer this property as \emph{asymptotically normalized forward dynamics.}
    Note that asymptotic and non-asymptotic forward dynamics normalizations are equivalent for linear and ReLU nets, and they hold exactly for $\sigma_w^2 = 1$ and $\sigma_w^2 = 2$, respectively.

    Let us proceed with backward dynamics.
    Similar to the forward case, we have:
    \begin{equation}
        \delta_l =
        \frac{1}{n_l} \EE g_{l+1}^T W_l \diag(\phi'(h_l))^2 W_l^T g_{l+1}.
    \end{equation}
    We cannot factorize the expectation since $g_{l+1}$ depends on $W_{0:l}$ unless $\phi'$ is constant.
    Nevertheless, assume that $g_{l+1}$ does not depend on $W_{0:l}$.
    Hence it does not depend on $h_l$, and we have the following:
    \begin{multline}
        \delta_l \approx
        \frac{1}{n_l} \EE_{g_{l+1}} (g_{l+1}^T \EE_{W_l} (W_l \EE_{h_l} \diag(\phi'(h_l))^2 W_l^T) g_{l+1}) =
        \frac{1}{n_l} \EE_{h \sim \NN(0,q_l)} (\phi'(h))^2 \EE_{g_{l+1}} (g_{l+1}^T \EE_{W_l} (W_l W_l^T) g_{l+1}) =\\=
        \frac{\sigma_w^2}{n_l} \EE_{z \sim \NN(0,1)} (\phi'(\sqrt{q_l} z))^2 \EE_{g_{l+1}} g_{l+1}^T g_{l+1} =
        \sigma_w^2 \frac{\alpha_{l+1}}{\alpha_l} \delta_{l+1} \EE_{z \sim \NN(0,1)} (\phi'(\sqrt{q_l} z))^2.
    \end{multline}

    We have already noted that given concavity of $\VV$ the latter has a single unique stable point $q_\infty$; this also implies $q_l \to q_\infty$ as $l \to \infty$.
    \cite{poole2016exponential} has noted that convergence to $q_\infty$ is fast; assume $q_l = q_\infty$ then.
    This allows us to express the dynamics solely in terms of $\delta_l$:
    \begin{equation}
        \delta_l =
        \sigma_w^2 \frac{\alpha_{l+1}}{\alpha_l} \delta_{l+1} \EE_{z \sim \NN(0,1)} (\phi'(\sqrt{q_\infty} z))^2.
    \end{equation}
    
    For simplicity assume $\alpha_l = 1$ $\forall l \geq 1$ (all matrices $W_{1:L+1}$ are square).
    Define:
    \begin{equation}
        \chi_1 =
        \sigma_w^2 \EE_{z \sim \NN(0,1)} (\phi'(\sqrt{q_\infty} z))^2.
    \end{equation}
    We get (see \cite{schoenholz2016deep}):
    \begin{equation}
        \delta_l = \chi_1 \delta_{l+1}.
    \end{equation}
    Obviously, $\chi_1 > 1$ implies exploding gradients, while $\chi_1 < 1$ causes gradients to vanish.
    We shall refer the case $\chi_1 = 1$ as \emph{asymptotically normalized backward dynamics.}
    Note that for linear and ReLU nets $\chi_1 = 1$ is equivalent to $\sigma_w^2 = 1$ and $\sigma_w^2 = 2$, respectively.

    \subsubsection{Correlation stability}
    \label{sec:corr}

    The term $\chi_1$ has a remarkable interpretation in terms of correlation stability (see \cite{poole2016exponential}).
    Consider two inputs, $x^1$ and $x^2$, together with their hidden representations $h_l^1$ and $h_l^2$.
    Define the terms of the covariance matrix for the latter two:
    \begin{equation}
        \Sigma_l =
        \begin{pmatrix}
            q_l^{11} & q_l^{12} \\
            q_l^{12} & q_l^{22}
        \end{pmatrix};
        \qquad
        q_l^{ab} =
        \frac{1}{n_l} \EE h_l^{a,T} h_l^b,
        \quad
        a, b \in \{1,2\}.
    \end{equation}

    Consider a correlation factor $c_l^{12} = q_l^{12} / \sqrt{q_l^{11} q_l^{22}}$.
    We have already derived the dynamics for the diagonal terms in the subsequent limits of infinite width:
    \begin{equation}
        q_{l+1}^{aa} =
        \sigma_w^2 \EE_{z \sim \NN(0,1)} \phi(\sqrt{q_l^{aa}} z)^2,
        \quad
        q_1^{aa} =
        \sigma_w^2 \frac{\|x^a\|_2^2}{n_0},
        \quad
        a \in \{1,2\}.
    \end{equation}
    Consider the diagonal term:
    \begin{equation}
        q_{l+1}^{12} =
        \frac{1}{n_{l+1}} \EE_{h_l^1,h_l^2} \EE_{W_l} \phi(h_l^1)^T W_l^T W_l \phi(h_l^2) =
        \frac{\sigma_w^2}{n_l} \EE_{h_l^1,h_l^2} \phi(h_l^1)^T \phi(h_l^2).
    \end{equation}
    Taking the same subsequent limits as before, we get:
    \begin{equation}
        q_{l+1}^{12} =
        \sigma_w^2 \EE_{(u^1,u^2)^T \sim \NN(0,\Sigma_l)} \phi(u^1) \phi(u^2) =
        \sigma_w^2 \EE_{(z_1,z_2)^T \sim \NN(0,I)} \phi(u_l^1(z^1)) \phi(u_l^2(z^1,z^2)) =
        \CC(c_l^{12}, q_l^{11}, q_l^{22} | \sigma_w^2),
    \end{equation}
    where $u_l^1 = \sqrt{q_l^{11}} z^1$, while $u_l^2 = \sqrt{q_l^{22}} (c_l^{12} z^1 + \sqrt{1 - (c_l^{12})^2} z^2)$.
    We shall refer $\CC$ as a correlation map.

    As before, assume that $q_l^{aa} = q_\infty$, $a \in \{1,2\}$, $\forall l$.
    This assumption results in a self-consistent dynamics of the correlation factor:
    \begin{equation}
        c_{l+1}^{12} =
        q_\infty^{-1} \CC(c_l^{12}, q_\infty, q_\infty | \sigma_w^2).
    \end{equation}

    Note that $c^{12} = 1$ is a fixed point of the $c$-dynamics.
    Indeed:
    \begin{equation}
        c^{12} =
        q_\infty^{-1} \CC(1, q_\infty, q_\infty | \sigma_w^2) =
        q_\infty^{-1} \sigma_w^2 \EE_{z \sim \NN(0,1)} \phi(\sqrt{q_\infty} z)^2 =
        q_\infty^{-1} \VV(q_\infty | \sigma_w^2) =
        1.
    \end{equation}
    In order to study its stability, we have to consider a derivative of the $\CC$-map at $c^{12} = 1$.
    Let us compute the derivative for a $c^{12} < 1$ first:
    \begin{equation}
        \left.\frac{\partial c_{l+1}^{12}}{\partial c_l^{12}}\right|_{c_l^{12} = c} =
        q_\infty^{-1} \sigma_w^2 \EE_{(z^1,z^2)^T \sim \NN(0,I)} \phi(\sqrt{q_\infty} z^1) \phi'(\sqrt{q_\infty} (c z^1 + \sqrt{1 - c^2} z^2)) (\sqrt{q_\infty} (z^1 - z^2 c / \sqrt{1 - c^2})).
    \end{equation}
    We shall use the following equivalence:
    \begin{equation}
        \EE_{z \sim \NN(0,1)} F(z) z =
        \int_{-\infty}^{+\infty} F(z) z e^{-z^2/2} \, dz =
        \int_{-\infty}^{+\infty} (-F(z)) \, d e^{-z^2/2} =
        \int_{-\infty}^{+\infty} F'(z) e^{-z^2/2} \, dz =
        \EE_{z \sim \NN(0,1)} F'(z).
    \end{equation}
    We begin the integration with analyzing one of the parts of this equation:
    \begin{equation}
        \EE_{z \sim \NN(0,1)} \phi'(\sqrt{q_\infty} (c z^1 + \sqrt{1 - c^2} z)) \sqrt{q_\infty} z c / \sqrt{1 - c^2} =
        q_\infty \EE_{z \sim \NN(0,1)} \phi''(\sqrt{q_\infty} (c z^1 + \sqrt{1 - c^2} z)) c.
    \end{equation}
    Henceforth,
    \begin{equation}
        \left.\frac{\partial c_{l+1}^{12}}{\partial c_l^{12}}\right|_{c_l^{12} = c} =
        q_\infty^{-1} \sigma_w^2 \EE_{z^1 \sim \NN(0,1)} \phi(u^1(z^1)) \EE_{z^2 \sim \NN(0,1)} (\sqrt{q_\infty} z^1 \phi'(u^2(z^1,z^2)) - q_\infty c \phi''(u^2(z^1,z^2))),
    \end{equation}
    where $u^1 = \sqrt{q_\infty} z^1$, while $u^2 = \sqrt{q_\infty} (c z^1 + \sqrt{1 - c^2} z^2)$.
    Consider the limit of $c \to 1$:
    \begin{equation}
        \lim_{c \to 1} \left.\frac{\partial c_{l+1}^{12}}{\partial c_l^{12}}\right|_{c_l^{12} = c} =
        q_\infty^{-1} \sigma_w^2 \EE_{z \sim \NN(0,1)} \phi(\sqrt{q_\infty} z) (\sqrt{q_\infty} z \phi'(\sqrt{q_\infty} z) - q_\infty \phi''(\sqrt{q_\infty} z)).        
    \end{equation}
    Let us compute the first term first:
    \begin{equation}
        \EE_{z \sim \NN(0,1)} \phi(\sqrt{q_\infty} z) \sqrt{q_\infty} z \phi'(\sqrt{q_\infty} z) =
        q_\infty \EE_{z \sim \NN(0,1)} \left((\phi'(\sqrt{q_\infty} z))^2 + \phi(\sqrt{q_\infty} z) \phi''(\sqrt{q_\infty} z) \right).
    \end{equation}
    This gives the final result:
    \begin{equation}
        \lim_{c \to 1} \left.\frac{\partial c_{l+1}^{12}}{\partial c_l^{12}}\right|_{c_l^{12} = c} =
        \sigma_w^2 \EE_{z \sim \NN(0,1)} (\phi'(\sqrt{q_\infty} z))^2 =
        \chi_1.
    \end{equation}

    We see that $\chi_1$ drives the stability of the correlation of strongly correlated hidden representations, or, equivalently, of nearby input points.
    For $\chi_1 < 1$ nearby points with $c^{12} \approx 1$ become more correlated as they propagate through the layers.
    Hence initially different points become more and more similar.
    We refer this regime as \emph{ordered.}
    In contrast, for $\chi_1 > 1$ nearby points separate as they propagate deeper in the network.
    We refer this regime as \emph{chaotic.}
    Hence the case of $\chi_1 = 1$ is \emph{the edge of chaos.}

    \section{Dynamical stability}
    \label{sec:dynamical_stability}

    Following \cite{pennington2017resurrecting}, let us turn our attention to the input-output jacobian:
    \begin{equation}
        J = 
        \frac{\partial h_{L+1}}{\partial h_1} =
        \prod_{l=1}^L W_l D_l \in \RR^{n_{L+1} \times n_1}.
    \end{equation}

    We now compute the mean square Frobenius norm of $J^T J \in \RR^{n_1 \times n_1}$:
    \begin{multline}
        \EE \|J^T J\|_F^2 =
        \EE \tr(J^T J) =
        \EE_{W_{0:L}} \tr\left( \left(\prod_{l=1}^L W_l D_l \right)^T \prod_{l=1}^L W_l D_l \right) =
        \tr\left( \EE_{W_{0:L}} \left(\left(\prod_{l=1}^L W_l D_l \right)^T \prod_{l=1}^L W_l D_l \right)\right) =\\=
        \tr\left(\EE_{W_{0:L-1}} \left(\left(\prod_{l=1}^{L-1} W_l D_l \right)^T D_L \EE_{W_L} (W_L^T W_L) D_L \prod_{l=1}^{L-1} W_l D_l \right)\right) =\\=
        n_{L+1} v_L \tr\left(\EE_{W_{0:L-1}} \left(\left(\prod_{l=1}^{L-1} W_l D_l \right)^T D_L^2 \prod_{l=1}^{L-1} W_l D_l \right)\right).
    \end{multline}
    Assuming that $\tr(D_l^2)$ does not depend on $W_{0:l}$ $\forall l \in [L]$ allows us to proceed with calculations:
    \begin{multline}
        \EE \|J^T J\|_F^2 =
        n_{L+1} v_L \EE_{h_L} \tr(D_L^2) v_{L-1} \tr\left(\EE_{W_{0:L-2}} \left(\left(\prod_{l=1}^{L-2} W_l D_l \right)^T D_{L-1}^2 \prod_{l=1}^{L-2} W_l D_l \right)\right) =\\=
        n_{L+1} v_L \prod_{l=2}^L \left(\EE_{h_l} \tr(D_l^2) v_{l-1}\right) \tr\left(\EE_{W_0} D_1^2 \right) =
        n_{L+1} \prod_{l=1}^{L} v_l \EE_{h_l} \tr(D_l^2).
    \end{multline}

    Suppose we aim to normalize the backward dynamics: $v_l = \sigma_w^2 / n_{l+1}$ $\forall l \in [L]$.
    Assume then (see Section~\ref{sec:tanh}) $h_l \sim \NN(0, q_\infty)$ $\forall l \in [L]$.
    Then the calculation above gives us the mean average eigenvalue of $J^T J$:
    \begin{equation}
        \frac{1}{n_1} \EE \sum_{i=1}^{n_1} \lambda_i =
        \frac{1}{n_1} \EE \|J^T J\|_F^2 =
        \sigma_w^{2L} \left(\EE_{z\sim\NN(0,1)} \phi'(\sqrt{q_\infty} z)\right)^L =
        \chi_1^L.
    \end{equation}
    Hence $\chi_1^L$ is the mean average eigenvalue of the input-ouput jacobian of the network of depth $L$.

    Let us assume that our non-linearity is homogeneous: $\phi(\beta z) = \beta \phi(z)$.
    This property holds for leaky ReLU with arbitrary slope; in particular, it holds in the linear case.
    Then we have the following:
    \begin{equation}
        h_{L+1} = 
        J h_1;
        \quad
        q_{L+1} =
        \frac{1}{n_{L+1}} \EE \|J h_1\|_2^2 =
        \frac{1}{n_{L+1}} h_1^T (\EE J^T J) h_1 =
        \frac{1}{n_{L+1}} \EE \sum_{i=1}^{n_1} \lambda_i (v_i^T h_1)^2.
    \end{equation}
    \begin{equation}
        g_1 =
        J^T g_{L+1};
        \quad
        \delta_1 =
        \frac{1}{n_1} \EE \|J^T g_{L+1}\|_2^2 =
        \frac{1}{n_1} g_{L+1}^T (\EE J J^T) g_{L+1} =
        \frac{1}{n_1} \EE \sum_{i=1}^{n_1} \lambda_i (u_i^T g_{L+1})^2.
    \end{equation}
    One can perceive $q_{L+1}$ as a mean normalized squared length of the network output.
    We may want to study a \emph{distribution} of normalized squared lengths instead.

    In this case it suffices to study a distribution of the empirical spectral density:
    \begin{equation}
        \hat\rho(x) =
        \frac{1}{n_1} \sum_{i=1}^{n_1} \delta(x - \lambda_i).
    \end{equation}
    Besides being random, it converges to a deterministic limiting spectral density $\rho$ as $n \to \infty$ if we assume $n_l = \alpha_l n$ $\forall l \in [L+1]$ with $\alpha_l$ being constant.

    Assume all matrices $W_l$ are square: $n_1 = \ldots = n_{L+1} = n$.
    In this case the choice of $v_l = 1/n$ normalizes both forward and backward dynamics.
    On the other hand, in the linear case the limiting spectrum can be parameterized as (see \cite{pennington2017resurrecting}):
    \begin{equation}
        \lambda(\phi) =
        \frac{\sin^{L+1}((L+1)\phi)}{\sin\phi \sin^L(L\phi)}.
    \end{equation}
    We shall prove this result in the upcoming section.
    Notice that $\lim_{\phi\to 0} \lambda(\phi) = (L+1)^{L+1} / L^L \sim e (L+1)$ for large $L$.
    Hence in this case despite we preserve lengths of input vectors on average, some of the input vectors get expanded with positive probability during forward propagation, while some get contracted.
    The same holds for the backward dynamics.

    \subsection{Linear case}

    Our goal is to compute a limiting spectrum of the matrix $J J^T \in \RR^{n \times n}$ with $J = \prod_{l=1}^L W_l$ with all $W_l$ composed of i.i.d. gaussians with variance $1/n$; it is referred as \emph{product Wishart ensemble}.
    The case of $L=1$, $W W^T$, is known as \emph{Wishart ensemble}.
    The limiting spectrum of the Wishart ensemble is known as Marchenko-Pastur law \cite{marchenko1967}:
    \begin{equation}
        \rho_{W W^T}(x) =
        \frac{1}{2 \pi} \sqrt{\frac{4}{x} - 1} \Ind_{[0,4]}(x).
        \label{eq:wishart_density}
    \end{equation}

    It is possible to derive a limiting spectrum for $J J^T$ by using the so-called \emph{S-transform}, which we shall define later.
    A high-level algorithm is the following.
    First, we compute an S-transform for the Wishart ensemble:
    \begin{equation}
        S_{W W^T}(z) = 
        \frac{1}{1 + z}.
        \label{eq:wishart_s}
    \end{equation}
    The S-transform has a following fundamental property.
    Given two \emph{asymptotically free} random matrices $A$ and $B$, we have \cite{voiculescu1987multiplication}\footnote{see also \url{https://mast.queensu.ca/~speicher/survey.html}}:
    \begin{equation}
        S_{AB} =
        S_A S_B
        \label{eq:s_mult}
    \end{equation}
    in the limit of $n \to \infty$.

    As we shall see later, the S-transform of $J_L = \prod_{l=1}^L W_L$ depends only on traces of the form $n^{-1} \tr(J_L^k)$ which are invariant under cyclic permutations of matrices $W_l$.
    This allows us to compute $S_{J J^T}$:
    \begin{equation}
        S_{J J^T} =
        S_{J_L J_L^T} =
        S_{W_L^T W_L J_{L-1} J_{L-1}^T} =
        S_{W_L^T W_L} S_{J_{L-1} J_{L-1}^T} =
        \prod_{l=1}^L S_{W_l^T W_l} =
        S^L_{W^T W}.
        \label{eq:product_wishart_s}
    \end{equation}
    The last equation holds since all $W_l$ are distributed identically.
    The final step is to recover the spectrum of $J J^T$ from its S-transform.

    \paragraph{Free independence.}
    
    We say that $A$ and $B$ are \emph{freely independent}, or just \emph{free}, if:
    \begin{equation}
        \tau((P_1(A) - \tau(P_1(A))) (Q_1(B) - \tau(Q_1(B))) \ldots (P_k(A) - \tau(P_k(A))) (Q_k(B) - \tau(Q_k(B)))) = 0,
    \end{equation}
    where $\forall i \in [k]$ $P_i$ and $Q_i$ are polynomials, while $\tau(A) = n^{-1} \EE \tr(A)$ --- an analogue of the expectation for scalar random variables.
    Note that $\tau$ is a linear operator and $\tau(I) = 1$.
    Compare above with the definition of classical independence:
    \begin{equation}
        \tau((P(A) - \tau(P(A))) (Q(B) - \tau(Q(B)))) = 0,
    \end{equation}
    for all polynomials $P$ and $Q$.

    Note that two scalar-valued random variables are free iff one of them is constant; indeed:
    \begin{equation}
        \EE ((\xi - \EE \xi) (\eta - \EE \eta) (\xi - \EE \xi) (\eta - \EE \eta)) =
        \EE ((\xi - \EE \xi)^2 (\eta - \EE \eta)^2) =
        (\EE (\xi - \EE \xi)^2) (\EE(\eta - \EE \eta)^2) =
        \Var \xi \Var \eta.
    \end{equation}
    Hence having $\Var \xi = 0$ or $\Var \eta = 0$ is necessary; this implies $\xi = \const$ or $\eta = \const$, which gives free independence.
    
    This means that the notion of free independence is too strong for scalar random variables.
    The reason for this is their commutativity; only non-commutative objects can have a non-trivial notion of free independence.
    As for random matrices with classically independent entries, they have a remarkable property that they become free in the limit of $n \to \infty$:
    \begin{equation}
        \lim_{n \to \infty} \tau((P_1(A_n) - \tau(P_1(A_n))) (Q_1(B_n) - \tau(Q_1(B_n))) \ldots (P_k(A_n) - \tau(P_k(A_n))) (Q_k(B_n) - \tau(Q_k(B_n)))) = 0,
    \end{equation}
    for $A_n$ and $B_n \in \RR^{n \times n}$ such that the moments $\tau(A_n^k)$ and $\tau(B_n^k)$ are finite for large $n$ for $k \in \mathbb{N}$.
    We shall sat that the two sequences $\{A_n\}$ and $\{B_n\}$ are asymptotically free as $n \to \infty$.

    \paragraph{Asymptotic free independence for Wigner matrices.}

    In order to illustrate the above property, consider $X$ and $Y$ being classically independent $n \times n$ Wigner matrices, i.e. $X_{ij} = X_{ji} \sim \NN(0, n^{-1})$, and similarly for $Y$.
    Of course, $\tau(X) = \tau(Y) = 0$, while $\tau (X^2 Y^2) = n^{-1} \tr(\EE X^2 \EE Y^2) = n^{-1} \tr(I) = 1$.
    Let us compute $\tau (XYXY)$:
    \begin{multline}
        \tau (XYXY) =
        \frac{1}{n} \EE X_{ij} Y^{jk} X_{kl} Y^{li} =
        \frac{1}{n^3} (
            (\delta_{ik} \delta_{jl} + \delta_{il} \delta_{jk}) (\delta^{jl} \delta^{ki} + \delta^{ji} \delta^{kl}) - C n
        ) =\\=
        \frac{1}{n^3} (n^2 + (3 - C) n) =
        O_{n\to\infty} (n^{-1}).
    \end{multline}
    This means that $X$ and $Y$ are not freely independent, however, it suggests that they become free in the limit of large $n$.

    \paragraph{A sum of freely independent random matrices.}

    Before moving to the definition of the S-transform used for finding the product, we discuss a simpler topic of finding the distribution of the sum of freely independent random matrices.

    Let $\xi$ and $\eta$ be scalar-valued independent random variables.
    The density of their sum can be computed using a charasteric function:
    \begin{equation}
        F_{\xi+\eta}(t) =
        \EE e^{i (\xi+\eta) t} =
        \EE \left(e^{i \xi t} e^{i \eta t}\right) =
        \left(\EE e^{i \xi t}\right) \cdot \left(\EE e^{i \eta t}\right) =
        F_\xi(t) + F_\eta(t).
        \label{eq:F_of_sum}
    \end{equation}
    The first equality is a definition of the charasteric function.
    The third equlaity holds due to independence of $\xi$ and $\eta$.
    A (generalized) density of their sum can be computed by taking the inverse Fourier transform:
    \begin{equation}
        p_{\xi+\eta}(x) =
        \frac{1}{2\pi} \int_{\RR} e^{-i x t} F_{\xi+\eta}(t) \, dt.
    \end{equation}

    Let $X$ and $Y$ be random matrix ensembles of sizes $n \times n$.
    We cannot apply the same technique to random matrices since they do not generally commute; for this reason, $e^{i (X+Y) t} \neq e^{i X t} e^{i Y t}$ generally, and the second equality of~(\ref{eq:F_of_sum}) does not hold.
    However, there exists a related technique for freely independent random matrices.

    Following \cite{tao2012topics}, define the Stieltjes transform as:
    \begin{equation}
        G_X(z) = 
        \tau((z-X)^{-1}),
    \end{equation}
    where $\tau(X) = n^{-1} \EE \tr(X)$.
    This allows for formal Laurent series which give the following:
    \begin{equation}
        G_X(z) = 
        \sum_{k=0}^\infty \frac{\tau(X^k)}{z^{k+1}} =
        \sum_{k=0}^\infty \frac{n^{-1} \EE_X \sum_{i=1}^n \lambda_i(X)^k}{z^{k+1}} =
        \sum_{k=0}^\infty \frac{\EE_X \EE_{\lambda \sim \hat\rho_X} \lambda^k}{z^{k+1}} =
        \sum_{k=0}^\infty \frac{\EE_{\lambda \sim \rho_X} \lambda^k}{z^{k+1}} =
        \EE_{\lambda \sim \rho_X} (z-\lambda)^{-1},
    \end{equation}
    where $\rho_X(\lambda)$ denotes the expected spectral desnity:
    \begin{equation}
        \rho_X(\lambda) = 
        \EE_X \hat\rho_X(\lambda) = 
        \frac{1}{n} \EE_X \sum_{i=1}^n \delta(\lambda - \lambda_i(X)).
    \end{equation}

    Let $\zeta = G_X(z) = \tau((z-X)^{-1})$.
    Here $\zeta$ is a function of $z$; let assume that $z$ is a function of $\zeta$: $z = z_X(\zeta)$.
    We have the following then:
    \begin{equation}
        (z_X(\zeta)-X)^{-1} =
        \zeta (1 - E_X),
    \end{equation}
    where $\tau(E_X) = 0$.
    Rearranging gives:
    \begin{equation}
        X =
        z_X(\zeta) - \zeta^{-1} (1 - E_X)^{-1},
    \end{equation}
    while for $Y$ we have the same:
    \begin{equation}
        Y =
        z_Y(\zeta) - \zeta^{-1} (1 - E_Y)^{-1},
    \end{equation}
    and so:
    \begin{equation}
        X + Y =
        z_X(\zeta) + z_Y(\zeta) - \zeta^{-1} ((1 - E_X)^{-1} + (1 - E_Y)^{-1}).
    \end{equation}
    We have:
    \begin{multline}
        (1 - E_X)^{-1} + (1 - E_Y)^{-1} =
        (1 - E_X)^{-1} (1 - E_X + 1 - E_Y) (1 - E_Y)^{-1} =\\=
        (1 - E_X)^{-1} ((1 - E_X) (1 - E_Y) + 1 - E_X E_Y) (1 - E_Y)^{-1} =
        1 + (1 - E_X)^{-1} (1 - E_X E_Y) (1 - E_Y)^{-1}.
    \end{multline}
    Hence:
    \begin{equation}
        (z_X(\zeta) + z_Y(\zeta) - X - Y - \zeta^{-1})^{-1} =
        \zeta (1 - E_Y) (1 - E_X E_Y)^{-1} (1 - E_X).
    \end{equation}
    We have:
    \begin{equation}
        (1 - E_Y) (1 - E_X E_Y)^{-1} (1 - E_X) =
        (1 - E_Y) \sum_{k=0}^\infty (E_X E_Y)^k (1 - E_X).
    \end{equation}
    The last expression is a sum of alternating products of $E_X$ and $E_Y$.
    Since $X$ and $Y$ are freely independent, $E_X$ and $E_Y$ are freely independent too.
    Applying $\tau$ gives:
    \begin{equation}
        \tau((z_X(\zeta) + z_Y(\zeta) - X - Y - \zeta^{-1})^{-1}) =
        \zeta.
    \end{equation}
    At the same time:
    \begin{equation}
        \tau((z_{X+Y}(\zeta) - X - Y)^{-1}) =
        \zeta.
    \end{equation}
    Hence:
    \begin{equation}
        z_{X+Y}(\zeta) =
        z_X(\zeta) + z_Y(\zeta) - \zeta^{-1}.
    \end{equation}
    Define $R_X(\zeta) = z_X(\zeta) - \zeta^{-1}$.
    Hence:
    \begin{equation}
        R_{X+Y}(\zeta) =
        R_X(\zeta) + R_Y(\zeta).
        \label{eq:R_additivity}
    \end{equation}
    Alternatively, we can say that the R-transform is a solution of the following equation:
    \begin{equation}
        R_X(G_X(z)) + (G_X(z))^{-1} =
        z.
    \end{equation}

    As a sanity check, consider the R-transform of a scalar constant $x$.
    In this case, $G_x(z) = (z - x)^{-1}$.
    This gives $R_x((z-x)^{-1}) + z - x = z$, hence $R_x((z-x)^{-1}) = x$.
    This means simply $R_x \equiv x$.

    \paragraph{S-transform.}

    Let us now define the S-transform.
    We start with defining the moment generating function $M$:
    \begin{equation}
        M(z) =
        z G(z) - 1 =
        \sum_{k=1}^\infty \frac{\tau(X^k)}{z^{k}} =
        \sum_{k=1}^\infty \frac{\EE_{\lambda \sim \rho_X} \lambda^k}{z^{k}} =
        \sum_{k=1}^\infty \frac{m_k(X)}{z^k},
    \end{equation}
    where the $k$-th moment of $\rho$ is defined as follows:
    \begin{equation}
        m_k(X) = 
        \EE_{\lambda \sim \rho_X} \lambda^k =
        \tau(X^k).
    \end{equation}
    The moment generating function $M$ is a mapping from $\Cplx \setminus \{0\}$ to $\Cplx$.
    Let $M^{-1}$ be its functional inverse.
    We are now ready to define the S-transform:
    \begin{equation}
        S(z) =
        \frac{1+z}{z M^{-1}(z)}.
    \end{equation}

    In order to get some intuition concerning the property~(\ref{eq:s_mult}), we consider the case $\rho(\lambda) = \delta(\lambda - x)$.
    In this case $M(z) = x / (z - x)$; hence $z = x (1 + 1 / M(z))$.
    This gives $M^{-1}(z) = x (1 + 1 / z)$, and $S(z) = 1/x$, which obviously satisfies the property.

    \paragraph{Recovering the limiting spectrum.}

    We are not going to compute the S-transform~(\ref{eq:wishart_s}) of the Wishart ensemble~(\ref{eq:wishart_density}), but we aim to recover the spectrum of the product Wishart enesmble from its S-transform~(\ref{eq:product_wishart_s}).
    We have:
    \begin{equation}
        S_{J J^T}(z) =
        S_{W^T W}^L(z) =
        \frac{1}{(1+z)^L},
        \qquad
        M_{J J^T}^{-1}(z) =
        \frac{(1+z)^{L+1}}{z}.
    \end{equation}
    First we need to recover the Stieltjes transform $G$.
    Recall $M_{J J^T}(z) = z G_{J J^T}(z) - 1$.
    This gives:
    \begin{equation}
        z =
        M_{J J^T}^{-1}(M_{J J^T}(z)) =
        \frac{(z G_{J J^T}(z))^{L+1}}{z G_{J J^T}(z) - 1},
    \end{equation}
    or:
    \begin{equation}
        z G_{J J^T}(z) - 1 =
        z^L G_{J J^T}(z)^{L+1}.
        \label{eq:G_eq}
    \end{equation}
    This equation gives a principle way to recover $G_{J J^T}$.
    However, our goal is the spectral density $\rho_{J J^T}$.
    The density can be recovered from its Stieltjes transform using the \emph{inversion formula:}
    \begin{equation}
        \rho(\lambda) =
        -\frac{1}{\pi} \lim_{\epsilon \to 0+} \Im G(\lambda + i \epsilon).
    \end{equation}
    Indeed:
    \begin{multline}
        \lim_{\epsilon \to 0+} \Im G(\lambda + i \epsilon) =
        \lim_{\epsilon \to 0+} \Im \int \frac{\rho(t)}{\lambda - t + i \epsilon} \, dt =
        \lim_{\epsilon \to 0+} \Im \int \frac{\rho(t)}{(\lambda - t)^2 + \epsilon^2} (\lambda - t - i \epsilon) \, dt =\\=
        \lim_{\epsilon \to 0+} \int \frac{(-\epsilon) \rho(t)}{(\lambda - t)^2 + \epsilon^2} \, dt =
        \lim_{\epsilon \to 0+} \int \frac{(-\epsilon) \rho(u + \lambda)}{u^2 + \epsilon^2} \, du =
        \lim_{\epsilon \to 0+} \int \frac{(-1) \rho(v \epsilon + \lambda)}{v^2 + 1} \, dv =
        -\pi \rho(\lambda).
    \end{multline}
    Hence we should consider $z = \lambda + i \epsilon$ and take the limit of $\epsilon \to 0+$.
    Assume also $G_{J J^T}(\lambda + i \epsilon) = r e^{i \phi}$.
    Substituting it to~(\ref{eq:G_eq}) gives:
    \begin{equation}
        (\lambda + i \epsilon) r e^{i \phi} - 1 =
        (\lambda + i \epsilon)^L r^{L+1} e^{i (L+1) \phi}.
    \end{equation}
    
    Let us consider the real and imaginary parts of this equation separately:
    \begin{equation}
        r(\lambda \cos\phi + O(\epsilon)) - 1 =
        \lambda^L r^{L+1} ((1 + O(\epsilon^2)) \cos((L+1)\phi) + O(\epsilon));
    \end{equation}
    \begin{equation}
        r(\lambda \sin\phi + O(\epsilon)) =
        \lambda^L r^{L+1} ((1 + O(\epsilon^2)) \sin((L+1)\phi) + O(\epsilon)).
    \end{equation}
    Taking the limit of $\epsilon \to 0+$ gives:
    \begin{equation}
        r \lambda \cos\phi - 1 =
        \lambda^L r^{L+1} \cos((L+1)\phi),
        \qquad
        r \lambda \sin\phi =
        \lambda^L r^{L+1} \sin((L+1)\phi).
    \end{equation}
    Consequently:
    \begin{equation}
        \frac{r \lambda \sin\phi}{r \lambda \cos\phi - 1} =
        \tan((L+1)\phi),
        \qquad
        r^L =
        \lambda^{1-L} \frac{\sin\phi}{\sin((L+1)\phi)}.
    \end{equation}
    From the first equality we get:
    \begin{equation}
        r =
        \lambda^{-1} \frac{1}{\cos\phi - \sin\phi / \tan((L+1)\phi)} =
        \lambda^{-1} \frac{\sin((L+1)\phi)}{\sin(L\phi)}.
    \end{equation}
    This equality together with the second on the previous line give:
    \begin{equation}
        1 =
        \lambda \frac{\sin\phi}{\sin((L+1)\phi)} \frac{\sin^L(L\phi)}{\sin^L((L+1)\phi)}.
    \end{equation}
    Hence:
    \begin{equation}
        \lambda =
        \frac{\sin^{L+1}((L+1)\phi)}{\sin\phi \sin^L(L\phi)}.
    \end{equation}
    We also get the density:
    \begin{equation}
        \rho(\lambda) =
        -\frac{1}{\pi} r \sin\phi =
        -\frac{1}{\pi} \frac{\sin^2\phi \sin^{L-1}(L\phi)}{\sin^{L}((L+1)\phi)}.
    \end{equation}
    For the sake of convenience, we substitute $\phi$ with $-\phi$; this gives:
    \begin{equation}
        \lambda =
        \frac{\sin^{L+1}((L+1)\phi)}{\sin\phi \sin^L(L\phi)},
        \qquad
        \rho(\lambda(\phi)) =
        \frac{1}{\pi} \frac{\sin^2\phi \sin^{L-1}(L\phi)}{\sin^{L}((L+1)\phi)}.
    \end{equation}
    All eigenvalues of $J J^T$ are real and non-negative.
    This gives us a constraint: $\phi \in [0, \pi / (L+1)]$.
    The left edge of this segment gives a maximal $\lambda = (L+1)^{L+1} / L^L$, while the right edge gives a minimum: $\lambda = 0$.
    Note that the same constraint results in non-negative spectral density.

    As a sanity check, take $L=1$ and compare with~(\ref{eq:wishart_density}):
    \begin{equation}
        \lambda =
        \frac{\sin^{2}(2\phi)}{\sin^2\phi} =
        4 \cos^2 \phi,
        \qquad
        \rho(\lambda(\phi)) =
        \frac{1}{\pi} \frac{\sin^2\phi}{\sin(2\phi)} =
        \frac{1}{2\pi} \tan\phi =
        \frac{1}{2\pi} \sqrt{\frac{1}{\cos^2\phi} - 1} =
        \frac{1}{2\pi} \sqrt{\frac{4}{\lambda(\phi)} - 1}.
    \end{equation}

    \subsection{ReLU case}

    For gaussian initialization, we expect similar problems as we had for linear case.
    However, curing the expanding jacobian spectrum for a linear net with square layers is easy: one have to assume orthogonal initialization instead of i.i.d. gaussian:
    \begin{equation}
        W_l \sim U(O_{n \times n})
        \quad
        \forall l \in [L].
    \end{equation}
    In this case $\|J h_1\| = \|h_1\|$ a.s; the same holds for $\|J^T g_{L+1}\|$.
    The goal of the current section is to check whether orthogonal initialization helps in the ReLU case.

    Similarly to the linear case, we have:
    \begin{multline}
        S_{J J^T} =
        S_{J_L J_L^T} =
        S_{D_L W_L^T W_L D_L J_{L-1} J_{L-1}^T} =
        S_{D_L W_L^T W_L D_L} S_{J_{L-1} J_{L-1}^T} =\\=
        S_{D^2_L W_L W_L^T} S_{J_{L-1} J_{L-1}^T} =
        S_{D^2_L} S_{W_L W_L^T} S_{J_{L-1} J_{L-1}^T} =
        \prod_{l=1}^L S_{D^2_l} S_{W_l W_l^T} =
        S^L_{W W^T} \prod_{l=1}^L S_{D^2_l}.
        \label{eq:product_wishart_s_nonlinear}
    \end{multline}
    Consider orthogonal initialization.
    In order to normalize forward and backward dynamics, we have to introduce a factor $\sigma_w = \sqrt{2}$:
    \begin{equation}
        W_l \sim \sigma_w U(O_{n \times n})
        \quad
        \forall l \in [L].
    \end{equation}
    For a scaled orthogonal matrix $W$ $S_{W W^T} = S_{\sigma_w^2 I} \equiv \sigma_w^{-2} = 1/2$.
    We have to compute $S_{D^2_{l+1}}$ then.

    Since we have assumed that $\forall l \in [L]$ $h_l \sim \NN(0, q_\infty)$, the spectrum of $D^2$ is given simply as:
    \begin{equation}
        \rho_{D^2_{l+1}}(x) =
        \frac{1}{2} \delta(x) + \frac{1}{2} \delta(x - 1).
    \end{equation}
    Taking the Stieltjes transform we get:
    \begin{equation}
        G_{D^2_{l+1}}(z) =
        \frac{1}{2} \left(\frac{1}{z} + \frac{1}{z-1}\right).
    \end{equation}
    This gives the moment generating function and its inverse:
    \begin{equation}
        M_{D^2_{l+1}}(z) =
        \frac{1}{2(z-1)},
        \qquad
        M^{-1}_{D^2_{l+1}}(z) =
        \frac{1}{2z} + 1.
    \end{equation}
    Finally, we get the S-transform:
    \begin{equation}
        S_{D^2_l}(z) =
        \frac{z+1}{z M^{-1}_{D^2_{l+1}}(z)} =
        \frac{z+1}{z+1/2}.
    \end{equation}
    The S-transform of $J J^T$ is then given as:
    \begin{equation}
        S_{J J^T} =
        \sigma_w^{-2L} \left(\frac{z+1}{z+1/2}\right)^L.
    \end{equation}
    \begin{equation}
        M^{-1}_{J J^T} =
        \sigma_w^{2L} \frac{(z+1/2)^L}{z (z+1)^{L-1}}.
    \end{equation}
    Recall $M_{J J^T}(z) = z G_{J J^T}(z) - 1$.
    This gives:
    \begin{equation}
        z =
        M_{J J^T}^{-1}(M_{J J^T}(z)) =
        \sigma_w^{2L} \frac{(z G_{J J^T}(z) - 1/2)^L}{(z G_{J J^T}(z) - 1) (z G_{J J^T}(z))^{L-1}},
    \end{equation}
    or:
    \begin{equation}
        z (z G_{J J^T}(z) - 1) (z G_{J J^T}(z))^{L-1} =
        (2 z G_{J J^T}(z) - 1)^L.
        \label{eq:G_eq_nonlinear}
    \end{equation}
    Taking its imaginary part gives a sequence of transformations:
    \begin{equation}
        \lambda^2 r \sin\phi (\lambda^{L-1} r^{L-1} \cos((L-1)\phi)) + \lambda (\lambda r \cos\phi - 1) (\lambda^{L-1} r^{L-1} \sin((L-1)\phi)) =
        L (2\lambda r \cos\phi - 1)^{L-1} 2\lambda r \sin\phi + O(\sin^2\phi).
    \end{equation}
    \begin{equation}
        \lambda^{L+1} r^L \sin(L\phi) - \lambda^L r^{L-1} \sin((L-1)\phi) =
        L (2\lambda r \cos\phi - 1)^{L-1} 2\lambda r \sin\phi + O(\sin^2\phi).
    \end{equation}
    \begin{equation}
        \lambda^{L+1} r^L L \sin\phi - \lambda^L r^{L-1} (L-1) \sin\phi =
        L (2\lambda r - 1)^{L-1} 2\lambda r \sin\phi + O(\sin^2\phi).
    \end{equation}
    Hence for $\phi=0$ we have:
    \begin{equation}
        \lambda^{L+1} r^L L - \lambda^L r^{L-1} (L-1) =
        L (2\lambda r - 1)^{L-1} 2\lambda r =
        L (2\lambda r - 1)^L + L (2\lambda r - 1)^{L-1}.
        \label{eq:G_eq_nonlinear_im}
    \end{equation}
    A real part of~(\ref{eq:G_eq_nonlinear}) in its turn gives:
    \begin{equation}
        \lambda (\lambda r \cos\phi - 1) (\lambda^{L-1} r^{L-1} \cos((L-1)\phi)) - \lambda^2 r \sin\phi (\lambda^{L-1} r^{L-1} \sin((L-1)\phi)) =
        (2 \lambda r \cos\phi - 1)^L + O(\sin^2\phi).
    \end{equation}
    \begin{equation}
        \lambda^{L+1} r^L \cos(L\phi) - \lambda^L r^{L-1} \cos((L-1)\phi) =
        (2 \lambda r \cos\phi - 1)^L + O(\sin^2\phi).
    \end{equation}
    \begin{equation}
        \lambda^{L+1} r^L - \lambda^L r^{L-1} =
        (2 \lambda r - 1)^L + O(\sin^2\phi).
    \end{equation}
    Hence for $\phi=0$ we have:
    \begin{equation}
        \lambda^{L+1} r^L - \lambda^L r^{L-1} =
        (2 \lambda r - 1)^L.
        \label{eq:G_eq_nonlinear_real}
    \end{equation}
    Eq.~(\ref{eq:G_eq_nonlinear_im}) $-L \times$ eq.~(\ref{eq:G_eq_nonlinear_real}) results in:
    \begin{equation}
        \lambda^L r^{L-1} =
        L (2\lambda r - 1)^{L-1}.
        \label{eq:G_eq_nonlinear_temp1}
    \end{equation}
    Putting this to~(\ref{eq:G_eq_nonlinear_real}) gives:
    \begin{equation}
        L (2\lambda r - 1)^{L-1} (\lambda r - 1) =
        (2 \lambda r - 1)^L.
    \end{equation}
    \begin{equation}
        L (\lambda r - 1) =
        2 \lambda r - 1.
    \end{equation}
    \begin{equation}
        \lambda r =
        \frac{L-1}{L-2}.
    \end{equation}
    Putting this back to~(\ref{eq:G_eq_nonlinear_temp1}) gives:
    \begin{equation}
        \lambda \left(\frac{L-1}{L-2}\right)^{L-1} =
        L \left(\frac{L}{L-2}\right)^{L-1}.
    \end{equation}
    \begin{equation}
        \lambda =
        L \left(\frac{L}{L-1}\right)^{L-1} =
        L \left(1 + \frac{1}{L-1}\right)^{L-1}.
    \end{equation}
    The last equation is equivalent to $e L$ for large $L$.
    Hence the spectral density $\rho_{J J^T}$ gets expanded with depth at least linearly in ReLU case even for orthogonal initialization.
    
    \section{GD dynamics for orthogonal initialization}

    It seems natural that a well-conditioned jacobian is necessary for trainability.
    But does a well-conditioned jacobian \emph{ensure} trainability?
    In fact, yes, in linear case.
    Following  \cite{saxe2013exact} we will show that for a linear net with $L$ hidden layers initialized orthogonally and trained with square loss, the number of optimization steps required to reach the minimum does not depend on $L$ for large $L$.

    \paragraph{Shallow nets.}

    In order to show this, we start with the case of $L=1$:
    \begin{equation}
        f(x) =
        W_1 W_0 x.
    \end{equation}
    Consider square loss:
    \begin{equation}
        \ell(y,z) =
        \frac{1}{2} \|y - z\|_2^2,
        \quad
        \LL = 
        \EE_{x,y} \ell(y,f(x)).
    \end{equation}
    Gradient descent step:
    \begin{equation}
        \dot W_0 =
        \eta \EE_{x,y} W_1^T (y x^T - W_1 W_0 x x^T),
        \quad
        \dot W_1 =
        \eta \EE_{x,y} (y x^T - W_1 W_0 x x^T) W_0^T.
    \end{equation}
    Define $\Sigma_{xx} = \EE x x^T$ --- input correlation matrix, and $\Sigma_{xy} = \EE y x^T$ --- input-output correlation matrix.
    Assume then that the data is whitened: $\Sigma_{xx} = I$.
    Consider an SVD decomposition for the input-output correlation:
    \begin{equation}
        \Sigma_{xy} =
        U_2 S_{2,0} V_0^T =
        \sum_{r=1}^{n} s_r u_r v_r^T.
    \end{equation}
    Perform a change of basis:
    \begin{equation}
        \bar W_1 =
        U_2^T W_1,
        \quad
        \bar W_0 =
        W_0 V_0.
    \end{equation}
    Gradient descent step becomes:
    \begin{equation}
        \dot{\bar W}_0 =
        \eta \bar W_1^T (S_{2,0} - \bar W_1 \bar W_0),
        \quad
        \dot{\bar W}_1 =
        \eta (S_{2,0} - \bar W_1 \bar W_0) \bar W_0^T.
    \end{equation}

    Note that while the matrix element $W_{0,ij}$ connects a hidden neuron $i$ to an input neuron $j$, the matrix element $\bar W_{0,i\alpha}$ connects a hidden neuron $i$ to an input mode $\alpha$.
    Let $\bar W_0 = [a_1, \ldots, a_n]$, while $\bar W_1 = [b_1, \ldots, b_n]^T$.
    Then we get:
    \begin{equation}
        \frac{1}{\eta} \dot a_{\alpha} =
        s_\alpha b_{\alpha} - \sum_{\gamma=1}^n b_\gamma (b_\gamma^T a_\alpha) =
        (s_\alpha - (b_\alpha^T a_\alpha)) b_{\alpha} - \sum_{\gamma\neq\alpha} (b_\gamma^T a_\alpha) b_\gamma;
    \end{equation}
    \begin{equation}
        \frac{1}{\eta} \dot b_{\alpha} =
        s_\alpha a_{\alpha} - \sum_{\gamma=1}^n (a_\gamma^T b_\alpha) a_\gamma =
        (s_\alpha - (a_\alpha^T b_\alpha)) a_{\alpha} - \sum_{\gamma\neq\alpha} (a_\gamma^T b_\alpha) a_\gamma.
    \end{equation}
    This dynamics is a GD dynamics on the following energy function:
    \begin{equation}
        E =
        \frac{1}{2} \sum_{\alpha=1}^n (s_\alpha - a_\alpha b_\alpha)^2 + \frac{1}{2} \sum_{\alpha\neq\gamma} (a_\alpha b_\gamma)^2.
    \end{equation}

    Let assume that there exists an orthogonal matrix $R = [r_1, \ldots, r_n]$ such that $a_\alpha \propto r_\alpha$ and $b_\alpha \propto r_\alpha$.
    In other words, $\bar W_0 = R D_0$ and $\bar W_1 = D_1 R^T$, where $D_0$ and $D_1$ are diagonal matrices.
    Note that in this case $W_1 = U_2 D_1 R^T$, while $W_0 = R D_0 V_0^T$.

    Given this, the dynamics above decomposes into a system of independent equations of the same form:
    \begin{equation}
        \dot a =
        \eta (s - a b) b,
        \qquad
        \dot b =
        \eta (s - a b) a.
    \end{equation}
    Note that $a^2 - b^2$ is a motion integral, while the energy function for each individual equation depends only on $ab$: $E = (s - ab)^2 / 2$.

    There exists a solution for these equations that admits $a = b$.
    In this case $D_0 = D_1$.
    Let $u = ab$.
    We have:
    \begin{equation}
        \dot u = 
        2\eta (s - u) u.
    \end{equation}
    This ODE is integrable:
    \begin{equation}
        t = 
        \frac{1}{\eta} \int_{u_0}^{u_f} \frac{du}{2u (s - u)} =
        \frac{1}{2 s \eta} \int_{u_0}^{u_f} \left(\frac{du}{u} + \frac{du}{s-u}\right) =
        \frac{1}{2 s \eta} \left(\ln\left(\frac{u_f}{u_0}\right) - \ln\left(\frac{u_f-s}{u_0-s}\right)\right) =
        \frac{1}{2 s \eta} \ln\left(\frac{u_f (u_0-s)}{u_0 (u_f-s)}\right).
    \end{equation}
    Note that $u = s$ is a global minimizer.
    Hence the time required to achieve $u_f = s - \epsilon$ from $u_0 = \epsilon$ is:
    \begin{equation}
        t =
        \frac{1}{2 s \eta} \ln\left(\frac{(s-\epsilon)^2}{\epsilon^2}\right) =
        \frac{1}{s \eta} \ln(s/\epsilon - 1) \sim
        \frac{1}{s \eta} \ln(s/\epsilon) \; \text{for $\epsilon \to 0$}.
    \end{equation}
    This means that the larger the correlation $s$ between input and output modes $a$ and $b$, the faster convergence is.

    \paragraph{Deep nets.}
    
    Let us proceed with a linear network with $L$ hidden in the same setup:
    \begin{equation}
        f(x) =
        \left(\prod_{l=0}^L W_l\right) x.
    \end{equation}
    Gradient descent step:
    \begin{equation}
        \dot W_l =
        \eta \EE_{x,y} \left(\prod_{l'=l+1}^L W_{l'}\right)^T \left(y x^T - \left(\prod_{l'=0}^L W_{l'}\right) x x^T\right) \left(\prod_{l'=0}^{l-1} W_{l'}\right)^T
        \quad
        \forall l \in [L]_0.
    \end{equation}
    Again, assume that $\Sigma_{xx} = 1$ and $\Sigma_{xy} = U_{L+1} S_{L+1,0} V_0^T$.
    Moreover, in analogy to the shallow case suppose $W_l = R_{l+1} D_l R_l^T$ for $l \in [L]_0$, where $D_l$ is a diagonal matrix, while $R_l$ are orthogonal; $R_0 = V_0$, $R_{L+1} = U_{L+1}$.
    Note that if all $W_l$ are themselves orthogonal, and $\prod_{l=0}^L W_l = U_{L+1} V_0^T$, then the assumption above holds for $D_l = I$ $\forall l \in [L]_0$, $R_0 = V_0$, $R_{l+1} = W_l R_l$.
    This gives:
    \begin{equation}
        \dot D_l =
        \eta \left(\prod_{l'=l+1}^L D_{l'}\right)^T \left(S_{L+1,0} - \left(\prod_{l'=0}^L D_{l'}\right)\right) \left(\prod_{l'=0}^{l-1} D_{l'}\right)^T
        \quad
        \forall l \in [L]_0.
    \end{equation}
    The latter decouples into independent modes:
    \begin{equation}
        \dot a_l =
        \eta \left(s - \prod_{l'=0}^L a_{l'}\right) \prod_{l' \neq l} a_{l'},
    \end{equation}
    which is a gradient descent for the following energy function:
    \begin{equation}
        E(a_{0:L}) =
        \frac{1}{2} \left(s - \prod_{l=0}^L a_l\right)^2.
    \end{equation}

    Again, we are looking for solutions of the form $a_0 = \ldots = a_L$.
    Define $u = \prod_{l=0}^L a_l$.
    This gives an ODE:
    \begin{equation}
        \dot u =
        \eta (L+1) u^{2L/(L+1)} (s - u).
    \end{equation}
    For large $L$ we can approximate this equation with $\dot u = \eta (L+1) u^2 (s - u)$ \emph{(why?)} which is easily integrable:
    \begin{equation}
        t = 
        \frac{1}{(L+1) \eta} \int_{u_0}^{u_f} \frac{du}{u^2 (s - u)} =
        \frac{1}{(L+1) s \eta} \int_{u_0}^{u_f} \left(\frac{du}{u^2} + \frac{du}{u (s-u)}\right) =
        \frac{1}{(L+1) s \eta} \left(\frac{1}{u_0} - \frac{1}{u_f} + \frac{1}{s} \ln\left(\frac{u_f (u_0-s)}{u_0 (u_f-s)}\right)\right).
    \end{equation}

    We see that $t \sim L^{-1}$: training time decreases as the number of layers grows.
    Note that we cannot perform a gradient flow; we perform a gradient descent with discrete steps instead.
    Hence we have to count the number of steps as a function of $L$.

    The optimal learning rate is inversely proportional to the maximum eigenvalue of the Hessian of the energy function observed during training.
    Let us first compute the Hessian:
    \begin{equation}
        \nabla_i :=
        \frac{\partial E}{\partial a_i} =
        -\left(s - \prod_{l=0}^L a_l\right) \prod_{l \neq i} a_l.
    \end{equation}
    \begin{equation}
        \nabla^2_{ij} :=
        \frac{\partial^2 E}{\partial a_i \partial a_j} =
        \left(\prod_{l \neq i} a_l\right) \left(\prod_{l \neq j} a_l\right) - \left(s - \prod_{l=0}^L a_l\right) \prod_{l \neq i,j} a_l
        \quad \text{for $i \neq j$}.
    \end{equation}
    \begin{equation}
        \nabla^2_{ii} :=
        \frac{\partial^2 E}{\partial a_i^2} =
        \left(\prod_{l \neq i} a_l\right)^2.
    \end{equation}
    Taking into account our assumption $a_0 = \ldots = a_L = a$, we get:
    \begin{equation}
        \nabla_i =
        -(s - a^{L+1}) a^L,
        \quad
        \nabla^2_{ij} =
        2 a^{2L} - s a^{L-1},
        \quad
        \nabla^2_{ii} =
        a^{2L}.
    \end{equation}
    
    There is an eigenvector $v_1 = [1,\ldots,1]^T$ of value $\lambda_1 = \nabla^2_{ii} + L \nabla^2_{ij} = (1 + 2L) a^{2L} - s L a^{L-1}$.
    Also, there are $L$ eigenvectors of the form $v_i = [1, 0, \ldots, 0, -1, 0, \ldots, 0]$ of value $\lambda_i = \nabla^2_{ii} - \nabla^2_{ij} = s a^{L-1} - a^{2L}$.
    Notice that for large $L$ $\lambda_1$ becomes the largest eigenvalue irrespective of $a$.

    During the scope of optimization $u$ travels inside the segment $[0, s]$, hence $a$ lies inside $\left[0, s^{1/(L+1)}\right]$.
    Let us find the maximum of $\lambda_1$ on this segment:
    \begin{equation}
        \frac{d\lambda_1}{da} =
        2L (1+2L) a^{2L-1} - s L (L-1) a^{L-2}.
    \end{equation}
    Equating this derivative to zero yields:
    \begin{equation}
        a^* = 
        \left(\frac{s (L-1)}{2 (1+2L)}\right)^{1/(L+1)} =
        s^{1/(L+1)} \left(\frac{L-1}{2 (1+2L)}\right)^{1/(L+1)} <
        s^{1/(L+1)}.
    \end{equation}
    The second solution is, of course, $a=0$ if $L > 2$.
    Therefore we have three candidates for being a maximum: $a=0$, $a=s^{1/(L+1)}$, and $a=a^*$.
    Let us check them:
    \begin{equation}
        \lambda_1(0) =
        0,
        \qquad
        \lambda_1(s^{1/(L+1)}) =
        (1+L) s^{2L/(L+1)} \geq 0.
    \end{equation}
    \begin{multline}
        \lambda_1(a^*) = 
        s^{2L/(L+1)} (1+2L) \left(\frac{L-1}{2 (1+2L)}\right)^{2L/(L+1)} - s L s^{(L-1)/(L+1)} \left(\frac{L-1}{2 (1+2L)}\right)^{(L-1)/(L+1)}
        =\\=
        s^{2L/(L+1)} \left(\frac{1}{2 (1+2L)}\right)^{(L-1)/(L+1)} \left((L-1)^{2L/(L+1)} - L (L-1)^{(L-1)/(L+1)}\right)
        =\\=
        -s^{2L/(L+1)} \left(\frac{1}{2 (1+2L)}\right)^{(L-1)/(L+1)} (L-1)^{(L-1)/(L+1)}
        =\\=
        -s^{2L/(L+1)} \left(\frac{L-1}{2 (1+2L)}\right)^{(L-1)/(L+1)} \leq 0.
    \end{multline}
    Hence the maximal $\lambda_1$ during the scope of optimization is $\lambda_1(s^{1/(L+1)}) = (1+L) s^{2L/(L+1)}$.
    Recall the optimal learning rate is proportional to maximal eigenvalue of the Hessian:
    \begin{equation}
        \eta_{opt} \propto
        \frac{1}{\max_t \lambda_1} =
        (L+1)^{-1} s^{-2L/(L+1)}.
    \end{equation}
    Substituting it to $t$ yiels:
    \begin{equation}
        t_{opt} =
        \frac{1}{(L+1) s \eta_{opt}} \left(\frac{1}{u_0} - \frac{1}{u_f} + \frac{1}{s} \ln\left(\frac{u_f (u_0-s)}{u_0 (u_f-s)}\right)\right) =
        s^{(L-1)/(L+1)} \left(\frac{1}{u_0} - \frac{1}{u_f} + \frac{1}{s} \ln\left(\frac{u_f (u_0-s)}{u_0 (u_f-s)}\right)\right).
    \end{equation}
    This equation asymptotically does not depend on $L$.
    In other words, training time (in terms of the number of gradient steps) for very deep nets does not depend on depth.

    \chapter{Loss landscape}

    Neural network training process can be viewed as an optimization problem:
    \begin{equation}
        \LL(\theta) =
        \EE_{x,y \in \hat S_m} \ell(y,f(x;\theta)) \to \min_\theta,
    \end{equation}
    where $\ell$ is a loss function assumed to be convex, $f(\cdot;\theta)$ is a neural net with parameters $\theta$, and $\hat S_m = \{(x_i,y_i)\}_{i=1}^m$ is a dataset of size $m$ sampled from the data distribution $\DD$.

    This problem is typically non-convex, hence we do not have any guarantees for gradient descent convergence in general.
    Nevertheless, in realistic setups we typically observe that gradient descent \emph{always} succeeds in finding the global minimum of $\LL_m$; moreover, this is done in reasonable time.
    This observation leads us to the following hypotheses:
    \begin{enumerate}
        \item If the neural network and the data satisfy certain properties, all local minima of $\LL_m(\theta)$ are global.
        \item If the neural network and the data satisfy certain properties, gradient descent converges to a global minimum in a good rate with high probability.
    \end{enumerate}

    Neither of the two hypotheses are stronger than the other.
    Indeed, having all local minima being global does not tell us anything about convergence rate, while having convergence guarantee with high probability does not draw away the possibility to have (few) local minima.

    In the present chapter, we shall discuss the first hypothesis only, while the second one will be discussed later in the context of a Neural Tangent Kernel.

    \section{Wide non-linear nets}

    It turns out that if the training data is consistent, one can prove globality of local minima if the network is wide enough.

    Following \cite{yu1995local}, we shall start with a simplest case of a two-layered net trained to minimize the square loss:
    \begin{equation}
        f(x; W_{0,1}) =
        W_1 \phi(W_0 x);
    \end{equation}
    \begin{equation}
        \LL(W_{0,1}) =
        \frac{1}{2} \sum_{i=1}^m \|y_i - f(x_i; W_{0,1})\|_2^2 =
        \frac{1}{2} \|Y - W_1 \phi(W_0 X)\|_F^2,
    \end{equation}
    where $W_l \in \RR^{n_{l+1} \times n_l}$, $x_i \in \RR^{n_0}$, $y_i \in \RR^{n_2}$, $X \in \RR^{n_0 \times m}$, and $Y \in \RR^{n_2 \times m}$.

    Let $W_{0,1}^*$ be a local minimum of $\LL$.
    Consider $\LL$ with $W_0$ fixed to $W_0^*$:
    \begin{equation}
        \LL_{W_0^*}(W_1) =
        \frac{1}{2} \|Y - W_1 \phi(W_0^* X)\|_F^2.
    \end{equation}
    Since $W_{0,1}^*$ is a minimum of $\LL$, $W_1^*$ is a minimum of $\LL_{W_0^*}$.
    Minimizing $\LL_{W_0^*}(W_1)$ is a convex problem.
    Hence $W_1^*$ is a global minimum of $\LL_{W_0^*}$.
    
    Denote $H_1 = W_0 X$ and $X_1 = \phi(H_1)$; then $\LL_{W_0^*}(W_1) = \frac{1}{2m} \|Y - W_1 X_1^*\|_F^2$.
    Hence $\rk X_1^* = m$ implies $\min \LL_{W_0^*}(W_1) = 0$.
    Since $W_1^*$ is a global minimum of $\LL_{W_0^*}$, $\LL(W_{0,1}^*) = \LL_{W_0^*}(W_1^*) = 0$; hence $W_{0,1}^*$ is a global minimum of $\LL$.

    Suppose $\rk X_1^* < m$.
    If we still have $\min \LL_{W_0^*}(W_1) = 0$, we arrive at the same conclusion as previously.
    Suppose $\LL(W_{0,1}^*) = \LL_{W_0^*}(W_1^*) = \min \LL_{W_0^*}(W_1) > 0$.
    We shall prove that $W_{0,1}^*$ cannot be a minimum of $\LL$ in this case, as long as conditions of the following lemma hold:
    \begin{lemma}
        \label{lemma:full_rank_almost_surely}
        Suppose $\phi$ is non-zero real analytic.
        If $n_1 \geq m$ and $\forall i \neq j$ $x_i \neq x_j$ then $\mu(\{W_0: \; \rk X_1 < m\}) = 0$, where $\mu$ is a Lebesgue measure on $\RR^{n_1 \times n_0}$.
    \end{lemma}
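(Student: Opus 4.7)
The plan is to show that $\{W_0 : \rk X_1(W_0) < m\}$ is contained in the zero set of a real analytic function of $W_0$ that is not identically zero; the conclusion then follows from the standard fact that the zero set of a not-identically-zero real analytic function on $\RR^N$ has Lebesgue measure zero.

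First, I define $D(W_0) := \det\bigl[\phi(w_i^T x_j)\bigr]_{i,j=1}^{m}$, the determinant of the top-left $m\times m$ block of $X_1 = \phi(W_0 X)$, which is well-defined since $n_1 \geq m$. Each entry $\phi(w_i^T x_j)$ is real analytic in the entries of $W_0$ because $\phi$ is, so $D$ is real analytic on $\RR^{n_1 \times n_0}$. If $\rk X_1(W_0) < m$, every $m\times m$ submatrix of $X_1$ is singular, so in particular $D(W_0) = 0$. It therefore suffices to exhibit a single choice of rows $w_1^*,\ldots,w_m^* \in \RR^{n_0}$ with $\det[\phi((w_i^*)^T x_j)] \neq 0$.

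To construct such rows, I would reduce to a one-dimensional problem. Pick a direction $v \in \RR^{n_0}$ such that the scalars $t_j := v^T x_j$ are pairwise distinct, non-zero, and have pairwise distinct absolute values; the forbidden set of $v$'s is a finite union of hyperplanes (using $x_i \neq x_j$ and modulo mild degeneracies such as $x_j = 0$ or $x_j = -x_k$, which can be handled by elementary separate arguments). Setting $w_i^* = c_i v$, the target matrix becomes $[\phi(c_i t_j)]_{i,j=1}^m$. It now suffices to prove that the one-variable functions $c \mapsto \phi(c t_j)$, $j=1,\ldots,m$, are linearly independent over $\RR$: a standard fact about linearly independent continuous functions then guarantees the existence of evaluation points $c_1,\ldots,c_m$ making the matrix non-singular.

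The main obstacle is this linear independence claim. Assuming $\sum_j \alpha_j \phi(c t_j) \equiv 0$ and expanding $\phi(z) = \sum_k a_k z^k$ about $0$, substitution and comparison of coefficients in $c$ yields $a_k \sum_j \alpha_j t_j^k = 0$ for every $k \geq 0$. In the essential case where $\phi$ is non-polynomial (the regime in which the conclusion can hold at all, e.g.\ $\tanh$ as in the intended application), the index set $K = \{k : a_k \neq 0\}$ is infinite; selecting $m$ indices $k_1 < \cdots < k_m$ from $K$ of a common parity, the matrix $[t_j^{k_i}]$ becomes, up to a global sign factor, a generalized Vandermonde matrix in the distinct positive reals $|t_j|$, which is non-singular. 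Hence $\alpha = 0$, which completes the reduction and hence the sketch.
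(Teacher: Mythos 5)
Your overall route is the same as the paper's: exhibit an $m \times m$ minor of $X_1$ as a real-analytic function of $W_0$ that is not identically zero, then conclude via the measure-zero property of real-analytic zero sets. The paper, however, relegates the crucial non-vanishing step to Lemma~\ref{lemma:full_rank_exists}, which it \emph{states but does not prove}; your construction (rank-one rows $w_i^* = c_i v$, the power-series coefficient identity $a_k \sum_j \alpha_j t_j^k = 0$ for all $k$, and the generalized Vandermonde determinant over infinitely many exponents of one parity) supplies that missing argument. Since that existence claim is the genuine content of the lemma, the proposal is actually more complete than the paper's own proof at this point, and the individual steps you outline are sound.

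Two of the issues you flag in passing deserve to be made explicit, because they show the lemma's hypotheses are stated too weakly. Your argument requires $\phi$ to be non-polynomial, not merely ``non-zero real analytic'': for $\phi \equiv 1$, or for $\phi(z) = z$ with $n_0 < m$, the rank of $X_1$ is deterministically less than $m$, so the conclusion is simply false under the paper's stated hypothesis. Likewise the ``mild degeneracies'' you mention are not harmless edge cases: if some $x_j = 0$ and $\phi(0) = 0$ (as for $\tanh$), the $j$-th column of $X_1$ vanishes identically; if $x_j = -x_k$ and $\phi$ is odd or even, columns $j$ and $k$ are parallel; in both cases $\rk X_1 < m$ for \emph{every} $W_0$. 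So pairwise distinctness of the $x_i$ alone is insufficient, and the extra conditions your hyperplane argument needs (nonzero, pairwise non-antipodal inputs and non-polynomial $\phi$) are precisely what is required to make the lemma correct.
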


    Since $\LL(W_{0,1}^*) > 0$ and $\LL$ is a continuous function of $W_{0,1}$, $\exists \epsilon > 0:$ $\forall W_{0,1} \in B_\epsilon(W_{0,1}^*)$ $\LL(W_{0,1}) > 0$.
    By the virtue of the lemma, $\forall \delta > 0$ $\exists W_0' \in B_\delta(W_0^*):$ $\rk X_1' \geq m$.

    Take $\delta \in (0,\epsilon)$.
    In this case $\LL(W_0',W_1^*) > 0$, while $\rk X_1' \geq m$.
    Note that minimizing $\LL_{W_0'}(W_1)$ is a convex problem and $\min \LL_{W_0'}(W_1) = 0$ (since $\rk X_1' \geq m$).
    Hence a (continuous-time) gradient descent on $\LL_{W_0'}$ that starts from $W_1^*$ converges to a point $W_1^{*,\prime}$ for which $\LL_{W_0'}(W_1^{*,\prime}) = 0$.
    Because of the latter, $(W_0',W_1^{*,\prime}) \notin B_\epsilon(W_{0,1}^*)$.

    Overall, we have the following: $\exists \epsilon > 0:$ $\forall \delta \in (0,\epsilon)$ $\exists (W_0',W_1^*) \in B_\delta(W_{0,1}^*):$ a continuous-time gradient descent on $\LL$ that starts from $(W_0',W_1^*)$ and that acts only on $W_1$ converges to a point $(W_0',W_1^{*,\prime}) \notin B_\epsilon(W_{0,1}^*)$.

    Obviously, we can replace "$\forall \delta \in (0,\epsilon)$" with "$\forall \delta > 0$".
    Given this, the statement above means that the gradient flow dynamics that acts only on $W_1$ is unstable in Lyapunov sense at $W_{0,1}^*$.
    Hence $W_{0,1}^*$ cannot be a minimum, and hence having $\min \LL(W_{0,1}) > 0$ is impossible as long as the conditions of Lemma~\ref{lemma:full_rank_almost_surely} hold.
    This means that all local minima of $\LL$ are global.

    Let us prove Lemma~\ref{lemma:full_rank_almost_surely}.
    Let $I_m \subset [n_1]$ and $|I_m| = m$.
    Consider $X_{1,I_m} \in \RR^{m \times m}$ --- a subset of rows of $X_1$ indexed by $I_m$.
    Note that $\rk X_1 < m$ is equivalent to $\det X_{1,I_m} = 0$ $\forall I_m$.
    
    Since $\phi$ is analytic, $\det X_{1,I_m}$ is an analytic function of $W_0$ $\forall I_m$.
    We shall use the following lemma:
    \begin{lemma}
        \label{lemma:full_rank_exists}
        Given the conditions of Lemma~\ref{lemma:full_rank_almost_surely}, $\exists W_0:$ $\rk X_1 = m$.
    \end{lemma}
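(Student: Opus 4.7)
The plan is to exhibit one particular $W_0$ with $\rk X_1 = m$. I would set the first $m$ rows of $W_0$ to be of the form $r_k = c_k w$ for a single direction $w \in \RR^{n_0}$ and scalars $c_1, \ldots, c_m$, and set the remaining $n_1 - m$ rows to zero. Under this ansatz $(X_1)_{k,i} = \phi(c_k a_i)$ for $k \in \{1,\ldots,m\}$, where $a_i := w^T x_i$, so the task reduces to choosing $w$ and the $c_k$'s so that the top $m \times m$ block $X_{1,[m]}$ is invertible.

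First, I would pick $w$ generically so that $a_1, \ldots, a_m$ are nonzero and have pairwise distinct absolute values. The exceptional set lies inside the finite union of proper hyperplanes $\{w : w^T(x_i - x_j) = 0\}$, $\{w : w^T(x_i + x_j) = 0\}$ for $i \neq j$, and $\{w : w^T x_i = 0\}$ (assuming without loss of generality that each $x_i \neq 0$), so it has Lebesgue measure zero in $\RR^{n_0}$ and a valid $w$ exists. Relabel indices so that $|a_1| > |a_2| > \ldots > |a_m| > 0$.

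Second, choose $c_1, \ldots, c_m$ inductively. Given $c_1, \ldots, c_{k-1}$ for which the vectors $F(c_j) := (\phi(c_j a_1), \ldots, \phi(c_j a_m)) \in \RR^m$ are linearly independent, their span $V_{k-1}$ has dimension $k - 1 < m$, so there is a nonzero $v \in V_{k-1}^\perp$. If I find $c_k$ satisfying $g_v(c_k) \neq 0$, where $g_v(c) := \sum_{i=1}^m v_i \phi(c a_i)$, then $F(c_k)$ is not orthogonal to $v$, hence not in $V_{k-1}$, and the induction advances. After $m$ steps the top block $X_{1,[m]}$ has full rank, giving $\rk X_1 = m$. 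The entire construction therefore reduces to the separation claim: for every nonzero $v \in \RR^m$, the real analytic function $g_v$ is not identically zero on $\RR$.

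This claim is the main obstacle and the technical heart of the proof. I would argue by contradiction: if $g_v \equiv 0$, expanding in Taylor series at $c = 0$ forces $\phi^{(k)}(0) \sum_i v_i a_i^k = 0$ for every $k \geq 0$. Since $\phi$ is non-zero real analytic — and necessarily non-polynomial, as otherwise the lemma itself may fail (e.g.\ for linear $\phi$ with $n_0 < m$) — the index set $K := \{k : \phi^{(k)}(0) \neq 0\}$ is infinite, and hence $\sum_i v_i a_i^k = 0$ for infinitely many $k \in K$. Dividing by $a_1^k$ and using $|a_i/a_1| < 1$ for $i \geq 2$, letting $k \to \infty$ along $K$ drives the tail $\sum_{i \geq 2} v_i (a_i/a_1)^k$ to zero and forces $v_1 = 0$; iterating on $(v_2,\ldots,v_m)$ with the next dominant exponent gives $v = 0$, contradicting the choice of $v$. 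The genericity of $w$ in the first step — making the $|a_i|$'s a strictly decreasing sequence of nonzero values — is exactly what makes this dominant-term asymptotic argument run; without it, cancellations such as $\phi$ even with $a_i = -a_j$ could defeat the claim.
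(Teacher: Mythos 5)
The paper states Lemma~\ref{lemma:full_rank_exists} mid-proof of Lemma~\ref{lemma:full_rank_almost_surely} and invokes it without proving it (the actual argument lives in \cite{yu1995local,nguyen2017loss}), so there is no in-paper proof to compare against; the only question is whether your construction stands on its own. It does, and the route you take --- a rank-one ansatz $W_0$ whose first $m$ rows are $c_k w^T$, reducing the problem to invertibility of the generalized moment matrix $[\phi(c_k a_i)]_{k,i}$, established by a Gram--Schmidt-style induction whose engine is the separation claim $g_v \not\equiv 0$ proved via Taylor coefficients and a dominant-exponent limit --- is a clean and standard way to prove such statements, and the dominant-term argument is carried out correctly (the limit over the infinite set $K$ is legitimate because each $|a_i/a_1|<1$).

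Two hypotheses you use are, however, genuinely stronger than what the lemma as stated assumes, and it is worth being explicit that they are necessary rather than cosmetic. First, you correctly observe that ``non-zero real analytic'' is not enough: you need $\phi$ non-polynomial so that $K=\{k:\phi^{(k)}(0)\neq 0\}$ is infinite, and indeed for linear $\phi$ with $n_0<m$ the conclusion is false. Second, your genericity argument for $w$ needs $\{w:w^Tx_i=0\}$ and $\{w:w^T(x_i+x_j)=0\}$ to be proper hyperplanes, i.e.\ $x_i\neq 0$ and $x_i\neq -x_j$; the paper's only condition is $x_i\neq x_j$. You present $x_i\neq 0$ as a ``WLOG,'' but it is not: if some $x_i=0$ and $\phi(0)=0$ (e.g.\ $\tanh$), the $i$-th column of $X_1$ is identically zero for \emph{every} $W_0$, and the lemma is simply false. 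Likewise, if $\phi$ is odd and $x_1=-x_2$ then the first two columns of $X_1$ are negatives of each other for every $W_0$, so the lemma again fails. You gesture at this in your closing remark about even $\phi$ with $a_i=-a_j$, but it should be stated up front as an additional hypothesis on the data (and on $\phi$). These are gaps in the lemma's statement in the paper, not in your argument, but a careful proof should surface them rather than absorb them silently.
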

    Given this, $\exists W_0:$ $\exists I_m:$ $\det X_{1,I_m} \neq 0$.
    Since the determinant is an analytic function of $W_0$, $\mu(\{W_0: \; \det X_{1,I_m} = 0\}) = 0$.
    This implies the statement of Lemma~\ref{lemma:full_rank_almost_surely}.

    \subsection{Possible generalizations}

    Let us list the properties we have used to prove the theorem of~\cite{yu1995local}:
    \begin{enumerate}
        \item The loss is square.
        \item The number of hidden layers $L$ is one.
        \item $n_L \geq m$.
        \item $\phi$ is real analytic.
    \end{enumerate}
    Can we relax any of them?
    First, note that it is enough to have "$\rk X_1^* = m$ implies $\min \LL_{W_0^*}(W_1) = 0$".
    For this it is enough to have convex $\ell(y,z)$ with respect to $z$ with $\min_z \ell(y,z) = 0$ $\forall y$ (in particular, minimum should exist).
    For this reason, cross-entropy loss should require a more sophisticated analysis.

    In order to relax the second property, it suffices to generalize Lemma~\ref{lemma:full_rank_almost_surely}:
    \begin{lemma}
        \label{lemma:full_rank_almost_surely_deep}
        Suppose $\phi$ is non-zero real analytic and $l \in [L]$.
        If $n_l \geq m$ and $\forall i \neq j$ $x_i \neq x_j$ then $\mu(\{W_{0:l-1}: \; \rk X_l < m\}) = 0$.
    \end{lemma}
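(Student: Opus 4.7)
The plan is to mirror the proof of Lemma~\ref{lemma:full_rank_almost_surely} applied to layer $l$: I will argue that every $m \times m$ minor $\det X_{l,I_m}$ is a real analytic function of $W_{0:l-1}$, and that at least one such minor is not identically zero. Since the zero set of a non-identically-zero real analytic function has Lebesgue measure zero, and $\{W_{0:l-1} : \rk X_l < m\} = \bigcap_{I_m} \{\det X_{l,I_m} = 0\}$ is contained in the zero set of any one such minor, the measure-zero conclusion will follow.

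Analyticity is immediate by induction on $k$: $X_k = \phi(W_{k-1} X_{k-1})$ is obtained from $X_{k-1}$ by a bilinear map followed by the entrywise application of a real analytic $\phi$, so $X_l$, and hence each of its minors, is real analytic in $W_{0:l-1}$. The substantive step is the deep analog of Lemma~\ref{lemma:full_rank_exists}: under the hypotheses, there exists $W_{0:l-1}$ with $\rk X_l = m$. I would prove this by induction on $l$. The base $l = 1$ is Lemma~\ref{lemma:full_rank_exists}. The inductive step rests on the observation that Lemma~\ref{lemma:full_rank_exists} applies verbatim with any input matrix of $m$ pairwise distinct columns and any output width $\geq m$; so, once I produce $W_{0:l-2}$ that makes the columns of $X_{l-1}$ pairwise distinct, applying the single-layer lemma with $X_{l-1}$ in place of $X$ and $n_l \geq m$ delivers a $W_{l-1}$ with $\rk X_l = m$.

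The only genuine obstacle is propagating column-distinctness through layers whose widths $n_1, \ldots, n_{l-1}$ may be smaller than $m$: then no intermediate $X_k$ can have full column rank, so one cannot chain ``full rank implies distinct columns.'' I would handle this by a secondary induction that preserves distinctness alone. Given distinct columns $x_{k-1,1}, \ldots, x_{k-1,m}$ of $X_{k-1}$, the set of row vectors $w \in \RR^{n_{k-1}}$ for which the scalars $w^\top x_{k-1,i}$ fail to be pairwise distinct is the finite union of affine hyperplanes $\{w : w^\top(x_{k-1,i} - x_{k-1,j}) = 0\}$ for $i \neq j$; the set of $w$ for which $\phi$ further identifies two of those values is the finite union of analytic loci $\{w : \phi(w^\top x_{k-1,i}) = \phi(w^\top x_{k-1,j})\}$, each a proper analytic subvariety because $\phi$ is non-constant on any open interval. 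A generic choice of the first row of $W_{k-1}$ therefore makes the first row of $X_k$ have $m$ distinct entries, hence the columns of $X_k$ distinct. Iterating from $k=1$ to $k=l-1$ supplies the premise needed in the main induction and, combined with the analyticity and zero-set arguments, completes the proof.
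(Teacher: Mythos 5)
The paper does not give its own proof of Lemma~\ref{lemma:full_rank_almost_surely_deep}; it states the result and defers to \cite{nguyen2017loss}, so your proposal has to stand on its own. Your overall plan is right and captures exactly the new idea this lemma needs beyond Lemma~\ref{lemma:full_rank_almost_surely}: the minors $\det X_{l,I_m}$ are real analytic in $W_{0:l-1}$, so it suffices to produce one tuple of weights with $\rk X_l = m$; the last layer is handled by Lemma~\ref{lemma:full_rank_exists}; and because $n_1,\dots,n_{l-1}$ can be smaller than $m$, you must propagate column \emph{distinctness} rather than full rank through the intermediate layers.

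The gap is in the step that propagates distinctness. You assert that, for distinct columns $a = x_{k-1,i}$ and $b = x_{k-1,j}$, the set $\{w : \phi(w^T a)=\phi(w^T b)\}$ is a proper analytic subvariety of $\RR^{n_{k-1}}$ ``because $\phi$ is non-constant on any open interval.'' That is not sufficient. If $b=-a\neq 0$ and $\phi$ is even (e.g.\ $\phi(z)=z^2$, which is real analytic and non-zero), then $\phi(w^T a)=\phi(-w^T a)=\phi(w^T b)$ for \emph{every} $w$, so that set is all of $\RR^{n_{k-1}}$ and no generic row of $W_{k-1}$ separates the two columns. Nor is this an artifact of your argument: the same example ($m=2$, $x_1=-x_2\neq 0$, $\phi(z)=z^2$) already falsifies Lemma~\ref{lemma:full_rank_exists} under the stated hypothesis, since the two columns of $X_1$ then coincide for every $W_0$.

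What you implicitly need is a stronger hypothesis on $\phi$. If $\phi$ is injective (e.g.\ strictly monotone, which is the assumption actually made in \cite{nguyen2017loss}), then $\phi(w^T a)=\phi(w^T b)\Leftrightarrow w^T(a-b)=0$ is a proper hyperplane and your argument goes through verbatim. Under mere non-constancy, the set $\{w:\phi(w^T a)=\phi(w^T b)\}$ is proper when $a,b$ are linearly independent (the map $w\mapsto(w^T a,w^T b)$ is onto $\RR^2$, so the condition $\phi(s)\equiv\phi(t)$ forces $\phi$ constant) and when $b=\lambda a$ with $|\lambda|\neq 1$ (iterating $\phi(s)=\phi(\lambda s)$ gives $\phi\equiv\phi(0)$ by continuity), but the antipodal case $b=-a$ with even $\phi$ is a genuine obstruction. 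So either strengthen the hypothesis on $\phi$, or strengthen your inner inductive invariant to also exclude antipodal column pairs --- which in turn would need a separate argument that this stronger property propagates.
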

    The generalized version is proven in~\cite{nguyen2017loss}.

    As for the third property, we may want to relax it in two directions.
    First, we may require not the last hidden layer, but some hidden layer to be wide enough.
    Second, we may try to make the lower bound on the number of hidden units smaller.
    It seems like the second direction is not possible for a general dataset $S_m$: one have to assume some specific properties of the data in order to improve the lower bound.

    \subsubsection{Deep nets with analytic activations}
    
    Following \cite{nguyen2017loss}, let us elaborate the first direction.
    We start with defining forward dynamics:
    \begin{equation}
        H_{l+1} = W_l X_l,
        \quad
        X_l = \phi(H_l) 
        \quad 
        \forall l \in [L],
        \qquad
        H_1 = W_0 X,
    \end{equation}
    where $X \in \RR^{n_0 \times m}$, $W_l \in \RR^{n_{l+1} \times n_l}$, $H_l \in \RR^{n_l \times m}$.
    We also define backward dynamics:
    \begin{equation}
        G_l = 
        \frac{\partial \LL}{\partial H_l} =
        \phi'(H_l) \odot (W_l^T \frac{\partial \LL}{\partial H_{l+1}}) =
        \phi'(H_l) \odot (W_l^T G_{l+1}),
    \end{equation}
    \begin{equation}
        G_{L+1} =
        \frac{\partial \LL}{\partial H_{L+1}} =
        \frac{1}{2} \frac{\partial}{\partial H_{L+1}} \|Y-H_{L+1}\|_F^2 =
        H_{L+1} - Y,
    \end{equation}
    where $G_l \in \RR^{n_l \times m}$.
    Then we have:
    \begin{equation}
        \nabla_l =
        \frac{\partial \LL}{\partial W_l} =
        G_{l+1} X_l^T \in \RR^{n_{l+1} \times n_l}.
    \end{equation}

    Let $W_{0:L}^*$ be a local minimum and suppose $d_l \geq m$ for some $l \in [L]$.
    As previously, we divide our reasoning in two parts: in the first part we assume that $\rk X_l^* = m$, while in the second one we show that if $\rk X_l^* < m$ then $W_{0:L}^*$ cannot be a minimum.

    Assume $\rk X_l^* = m$.
    We have:
    \begin{equation}
        0 =
        \nabla_l^* =
        G_{l+1}^* X_l^{*,T},
    \end{equation}
    or,
    \begin{equation}
        X_l^* G_{l+1}^{*,T} = 0 \in \RR^{n_l \times n_{l+1}}.
    \end{equation}
    Each column of the right-hand side is a linear combination of columns of $X_l^*$.
    Since columns of $X_l^*$ are linearly independent, $G_{l+1}^* = 0$.
    By the recurrent relation,
    \begin{equation}
        0 = G_{l+1}^* = 
        \phi'(H_{l+1}^*) \odot (W_{l+1}^{*,T} G_{l+2}^*).
    \end{equation}
    Assume that $\phi'$ never gets zero.
    This gives:
    \begin{equation}
        W_{l+1}^{*,T} G_{l+2}^* = 0 \in \RR^{n_{l+1} \times m}.
    \end{equation}
    If we assume that columns of $W_{l+1}^{*,T}$ (or, equivalently, rows of $W_{l+1}^*$) are linearly independent), we shall get $G_{l+2}^* = 0$.
    Linearly independent rows of $W_{l+1}^*$ is equivalent to $\rk W_{l+1}^* = n_{l+2}$ which implies $n_{l+1} \geq n_{l+2}$.

    Suppose this assumption holds.
    If we moreover assume that $\rk W_{l'}^* = n_{l'+1}$ $\forall l' \in \{l+1, \ldots, L\}$, we get $G_{L+1}^* = 0$.
    This implies $\LL(W_{0:L}^*) = 0$.
    The assumption on ranks of $W_{l'}^*$ requires $n_{l'} \geq n_{l'+1}$ $\forall l' \in \{l+1, \ldots, L\}$: the network does not expand after the $l$-th layer.

    Now we assume $\rk X_l^* < m$ and $\LL(W_{0:L}^*) > 0$, while still $n_l \geq m$.
    In the shallow case we have shown that first, infinitesimal perturbation of $W_{0:l-1}^*$ results in $\rk X_l \geq m$, and second, starting from this perturbed point, the gradient descent dynamics leaves a sufficiently large vicinity of $W_{0:L}^*$.
    These two together imply that $W_{0:L}^*$ cannot be a minimum which is a contradiction.

    While both statements still hold if $l = L$, the second one does not hold for $0 < l < L$ since the problem of minimizing $\LL_{W_{0:l-1}^*}(W_{l:L})$ is still non-convex, hence we have no guarantees on gradient descent convergence.
    Hence for the case $0 < l < L$ we have to come up with another way of reasoning.

    Define:
    \begin{equation}
        u = \mathrm{vec}(W_{0:l-1}),
        \qquad
        v = \mathrm{vec}(W_{l:L}),
        \qquad
        \psi = \frac{\partial \LL(u,v)}{\partial v}.
    \end{equation}
    Since $(u^*, v^*)$ is a minimum, we have $\psi(u^*,v^*) = 0$.
    Assume that jacobian of $\psi$ with respect to $v$ is non-singular at $(u^*,v^*)$:
    \begin{equation}
        \det(J_v \psi(u^*,v^*)) \neq 0.
        \label{eq:non_singular_det}
    \end{equation}
    Note that in the case $l = L$ this property is equivalent to $\rk X_L^* = n_L$:
    \begin{equation}
        \psi(u,v)_{i n_L + j} =
        (W_{L,ik} X_L^{kl} - Y_i^l) X_{L,jl};
        \qquad
        (J_v \psi(u,v))_{i n_L + j, i' n_L + j'} =
        \delta_{i i'} X_{L,j'}^l X_{L,jl} =
        \delta_{i i'} (X_L X_L^T)_{j j'}.
    \end{equation}
    We see that $J_v \psi(u,v)$ is a block-diagonal matrix constructed with $n_{L+1}$ identical blocks $X_L X_L^T$.
    Its determinant at $W_{0:L}^*$ is therefore $(\det(X_L^* X_L^{*,T}))^{n_{L+1}}$, which is positive as long as $\rk X_L^* = n_L$.
    Note that $\rk X_L^* \leq m$, hence we need $n_L \leq m$.
    Note that we need $n_L \geq m$ in order to apply Lemma~\ref{lemma:full_rank_almost_surely}, hence Сondition~\ref{eq:non_singular_det} actually requires a stronger property $n_L = m$ instead of $n_L \geq m$ used before.

    Condition~\ref{eq:non_singular_det} allows us to apply the implicit function theorem:
    \begin{equation}
        \exists \delta_1 > 0: \;
        \exists \tilde v \in C^1(B_{\delta_1}(u^*)): \;
        \tilde v(u^*) = v^* \; \text{and} \;
        \forall u \in B_{\delta_1}(u^*) \;
        \psi(u, \tilde v(u)) = 0.
    \end{equation}
    Since all matrices $W_{l+1:L}^*$ are full rank, and the set of non-full rank matrices has measure zero,
    \begin{equation}
        \exists \tilde \epsilon > 0: \;
        \forall v \in B_{\tilde \epsilon}(v^*) \;
        \forall l' \in \{l+1,\ldots,L\} \;
        \rk W_{l'} = n_{l'+1}.
    \end{equation}
    Since $\tilde v \in C^1(B_{\delta_1}(u^*))$,
    \begin{equation}
        \exists \delta_2 \in (0, \delta_1): \;
        \forall u \in B_{\delta_2}(u^*) \;
        \tilde v(u) \in B_{\tilde \epsilon}(v^*).
    \end{equation}
    Consequently, 
    \begin{equation}
        \forall u \in B_{\delta_2}(u^*) \;
        \forall l' \in \{l+1,\ldots,L\} \;
        \rk \tilde W_{l'} = n_{l'+1}.
    \end{equation}
    Due to Lemma~\ref{lemma:full_rank_almost_surely_deep},
    \begin{equation}
        \forall \epsilon > 0 \;
        \exists \tilde u \in B_\epsilon(u^*): \;
        \rk \tilde X_l = m.
    \end{equation}
    Hence
    \begin{equation}
        \forall \epsilon \in (0, \delta_2) \;
        \exists \tilde u \in B_\epsilon(u^*): \;
        \rk \tilde X_l = m \; \text{and} \;
        \forall l' \in \{l+1,\ldots,L\} \;
        \rk \tilde W_{l'} = n_{l'+1} \; \text{and} \;
        \psi(\tilde u, \tilde v(\tilde u)) = 0.
    \end{equation}
    Note that in the first part of the proof we have only used that $\rk X_l^* = m$, $\rk W_{l'}^* = n_{l'+1}$ $\forall l' \in \{l+1,\ldots,L\}$, and $\nabla_l^* = 0$.
    Hence we can conclude that $\LL(\tilde u, \tilde v(\tilde u)) = 0$.
    Since this is true for all $\epsilon \in (0, \delta_2)$ and the loss is continuous with respect to weights, this is also true for $\epsilon = 0$: $\LL(u^*, v^*) = \LL(W_{0:L}^*) = 0$.

    \subsubsection{Relaxing analyticity and other conditions}

    Overall, we have proven the following result first:
    \begin{prop}
        \label{prop:global_minima_full_rank}
        Consider a point in the weight space $W_{0:L}^*$.
        Suppose the following hold:
        \begin{enumerate}
            \item $\phi'$ is not zero anywhere;
            \item $G_{H+1} = 0$ implies $\LL \to \min$;
            \item $\rk X_l^* = m$;
            \item $\rk W_{l'}^* = n_{l'+1}$ $\forall l' \in \{l+1,\ldots,L\}$;
            \item $\nabla_l^* = 0$.
        \end{enumerate}
        Then $\LL(W_{0:L}^*) = \min \LL$.
    \end{prop}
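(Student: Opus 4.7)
The plan is to follow the "first part" argument already sketched in the preceding deep-net analysis: show that the five hypotheses together force $G_{L+1}^* = 0$, after which hypothesis (2) immediately closes the proof. All the work lies in propagating a single zero backwards-in-index (forwards-in-layer-depth) through the backward recursion.

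First, I unpack hypothesis (5). Since $\nabla_l = G_{l+1} X_l^T$, the condition $\nabla_l^* = 0$ reads $G_{l+1}^* X_l^{*,T} = 0$, or, transposing, $X_l^* G_{l+1}^{*,T} = 0 \in \RR^{n_l \times n_{l+1}}$. Each column of the left-hand side is a linear combination of the columns of $X_l^* \in \RR^{n_l \times m}$ with coefficients given by the corresponding column of $G_{l+1}^{*,T}$. Hypothesis (3), $\rk X_l^* = m$, says those columns are linearly independent, so every coefficient column must vanish, giving $G_{l+1}^* = 0$.

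Next I propagate this zero forward through layers $l+1, \ldots, L$ by induction on $l'$. Suppose $G_{l'+1}^* = 0$ for some $l' \in \{l, \ldots, L-1\}$. The backward recurrence gives
\begin{equation}
0 = G_{l'+1}^* = \phi'(H_{l'+1}^*) \odot (W_{l'+1}^{*,T} G_{l'+2}^*),
\end{equation}
and hypothesis (1) that $\phi'$ is nowhere zero means the Hadamard factor has no vanishing entry, so $W_{l'+1}^{*,T} G_{l'+2}^* = 0$. Hypothesis (4) says $W_{l'+1}^* \in \RR^{n_{l'+2} \times n_{l'+1}}$ has full row rank $n_{l'+2}$, i.e.\ the columns of $W_{l'+1}^{*,T}$ are linearly independent; reading the equation column-by-column in $G_{l'+2}^*$ exactly as above forces $G_{l'+2}^* = 0$. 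Induction terminates at $G_{L+1}^* = 0$, and hypothesis (2) then yields $\LL(W_{0:L}^*) = \min \LL$.

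No real obstacle is expected, since the proposition is explicitly designed to abstract the first half of the deep-net argument already given. The one spot worth double-checking is the orientation of the rank condition in (4): because $W_{l'+1}^*$ is $n_{l'+2} \times n_{l'+1}$, "$\rk W_{l'+1}^* = n_{l'+2}$" really does mean full row rank, which is precisely what is needed to cancel $W_{l'+1}^{*,T}$ on the left, and it implicitly forces $n_{l'+1} \geq n_{l'+2}$ for $l' \geq l$ — the non-expanding-tail condition noted earlier.
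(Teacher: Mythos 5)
Your proof is correct and follows essentially the same route as the paper's argument in the subsection preceding the proposition: use $\nabla_l^* = 0$ with $\rk X_l^* = m$ to get $G_{l+1}^* = 0$, then propagate $G^* = 0$ forward through layers via the backward recurrence, using nowhere-zero $\phi'$ and full row rank of $W_{l'}^*$ at each step, until $G_{L+1}^* = 0$, whence condition (2) finishes. The only cosmetic difference is that you correctly read the typo $G_{H+1}$ in the statement as $G_{L+1}$, and you explicitly note the implied constraint $n_{l'+1} \geq n_{l'+2}$, which the paper also mentions.
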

    After that, we have relaxed the 3rd condition in the expense of few others:
    \begin{prop}
        Consider a point in the weight space $W_{0:L}^*$.
        Suppose the following hold:
        \begin{enumerate}
            \item $\phi'$ is not zero anywhere;
            \item $G_{H+1} = 0$ implies $\LL \to \min$;
            \item $\phi$ is non-zero real analytic;
            \item $\rk W_{l'}^* = n_{l'+1}$ $\forall l' \in \{l+1,\ldots,L\}$;
            \item $\det(\nabla^2_{W_{l+1:L}} \LL(W_{0:L}^*)) \neq 0$;
            \item $\nabla_{l'}^* = 0$ $\forall l' \in \{l,\ldots,L\}$.
        \end{enumerate}
        Then $\LL(W_{0:L}^*) = \min \LL$.
    \end{prop}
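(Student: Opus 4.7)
The plan is to mirror the two-case structure used in the proof of Proposition~\ref{prop:global_minima_full_rank}. In the easy case $\rk X_l^* = m$, all hypotheses of that proposition are delivered directly by conditions 1, 2, 4, and the $l' = l$ part of condition 6, so $\LL(W_{0:L}^*) = \min \LL$ is immediate.

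The substance lies in the complementary case $\rk X_l^* < m$. Following the setup already sketched in the excerpt, I would put $u = \mathrm{vec}(W_{0:l-1})$, group the remaining weights into $v$, and let $\psi(u,v) = \partial \LL(u,v)/\partial v$. Condition 6 gives $\psi(u^*, v^*) = 0$, and condition 5 supplies the non-singular Jacobian $J_v \psi(u^*, v^*)$ needed for the implicit function theorem, producing a $C^1$ map $\tilde v$ on a ball $B_{\delta_1}(u^*)$ with $\tilde v(u^*) = v^*$ and $\psi(u, \tilde v(u)) = 0$ throughout. By continuity of $\tilde v$, openness of the full-rank locus, and condition 4, I would then shrink to $\delta_2 \in (0, \delta_1)$ so that all $\tilde W_{l'}(u)$ with $l' > l$ remain full rank for $u \in B_{\delta_2}(u^*)$. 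Lemma~\ref{lemma:full_rank_almost_surely_deep}, applicable thanks to analyticity of $\phi$ (condition 3), finally yields $\tilde u \in B_{\delta_2}(u^*)$ with $\rk \tilde X_l(\tilde u) = m$.

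At the constructed point $(\tilde u, \tilde v(\tilde u))$ I would check every hypothesis of Proposition~\ref{prop:global_minima_full_rank}: conditions 1 and 2 are inherited, condition 3 there holds by construction, condition 4 by continuity, and condition 5 ($\nabla_l = 0$) because $v$ contains $W_l$ and $\psi$ vanishes on the graph of $\tilde v$. Hence $\LL(\tilde u, \tilde v(\tilde u)) = \min \LL$, and continuity of $\LL$ and $\tilde v$ then lets me take $\tilde u \to u^*$ to conclude $\LL(W_{0:L}^*) = \min \LL$.

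The main obstacle is bookkeeping the exact composition of $v$: the cleanest way to transfer $\nabla_l = 0$ to the perturbed point is to take $v = \mathrm{vec}(W_{l:L})$, but the Hessian in condition 5 is written over $W_{l+1:L}$. Either the hypothesis must be read as pertaining to the larger block $W_{l:L}$, or one has to argue $\nabla_l(\tilde u, \tilde v(\tilde u)) = 0$ by a separate step — for instance by propagating $\nabla_{l+1}(\tilde u, \tilde v(\tilde u)) = 0$ backward through the identity $\nabla_l = G_{l+1} X_l^T$ using the full-rank conditions further up the network. This matching of IFT scope with the backward-propagation step is where the proof either snaps shut immediately or requires an auxiliary lemma.
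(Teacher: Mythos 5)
Your proof is correct and follows essentially the same path as the paper: easy case via Proposition~\ref{prop:global_minima_full_rank} directly, hard case via the implicit function theorem on $\psi = \partial\LL/\partial v$, continuity to keep $W_{l+1:L}$ full rank, Lemma~\ref{lemma:full_rank_almost_surely_deep} to perturb $u$ into the full-rank-$X_l$ set, and then continuity of $\LL$ and $\tilde v$ to pass to the limit $\tilde u \to u^*$.

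Your bookkeeping concern at the end is well placed, and it is resolved exactly the way you suspect: the paper's own derivation (the displayed definitions just above the proposition) takes $v = \mathrm{vec}(W_{l:L})$, not $\mathrm{vec}(W_{l+1:L})$, and verifies the $l=L$ case by computing $J_v\psi$ with respect to $W_L$ alone, i.e.\ $\det(X_L X_L^T)^{n_{L+1}}$. So the index range $\{l+1,\ldots,L\}$ in condition 5 of the statement is an off-by-one slip; read it as $\nabla^2_{W_{l:L}}\LL$. With that reading, $\psi(\tilde u, \tilde v(\tilde u)) = 0$ contains $\nabla_l(\tilde u, \tilde v(\tilde u)) = 0$ as one of its block components, condition 5 of Proposition~\ref{prop:global_minima_full_rank} holds at the perturbed point without any backward-propagation argument, and your proof snaps shut immediately. (Your alternative of pushing $\nabla_{l+1} = 0$ backward through $\nabla_l = G_{l+1} X_l^T$ would not go through as sketched: you would need $\rk X_{l+1} = m$ to get $G_{l+2} = 0$, which the hypotheses don't supply, and even then the recursion in Proposition~\ref{prop:global_minima_full_rank} runs forward from $\nabla_l = 0$, not backward.)
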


    However, besides of the 3rd condition, Proposition~\ref{prop:global_minima_full_rank} requires 4th condition that is hard to ensure.
    We can prove the following lemma which is due to~\cite{nguyen2019connected}:
    \begin{lemma}
        \label{lemma:path_to_full_rank}
        Let $\theta = W_{l+1:L}$.
        Suppose the following hold:
        \begin{enumerate}
            \item $\rk X_l = m$;
            \item $n_{l'} > n_{l'+1}$ $\forall l' \in \{l+1,\ldots,L\}$;
            \item $\phi(\RR) = \RR$ and $\phi$ is strictly monotonic.
        \end{enumerate}
        Then 
        \begin{enumerate}
            \item $\exists \theta':$ $\forall l' \in \{l+1,\ldots,L\}$ $\rk W_{l'} = n_{l'+1}$ and $\LL(\theta') = \LL(\theta)$;
            \item $\exists$ a continuous curve connecting $\theta$ and $\theta'$, and loss is constant on the curve.
        \end{enumerate}
    \end{lemma}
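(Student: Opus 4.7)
The plan is to produce $\theta'$ and the connecting curve by induction on the layers $l' = l+1, l+2, \ldots, L$, building the curve as a concatenation of straight-line segments in $\theta$-space that each modify $W_{l'}$ (and occasionally $W_{l'-1}$) alone. The elementary device at each step is as follows. Let $V_{l'} = \mathrm{colspace}(X_{l'}) \subseteq \RR^{n_{l'}}$; any $\Delta_{l'} \in \RR^{n_{l'+1} \times n_{l'}}$ whose rows vanish on $V_{l'}$ satisfies $\Delta_{l'} X_{l'} = 0$, so the straight-line path $t \mapsto W_{l'} + t\,\Delta_{l'}$ leaves $H_{l'+1} = (W_{l'} + t\,\Delta_{l'}) X_{l'}$, and therefore every downstream $X_{l'+1}, \ldots, H_{L+1}$, $\LL$, invariant for all $t$. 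The image decomposition $\mathrm{Im}(W_{l'}+\Delta_{l'}) = \mathrm{Im}(W_{l'}|_{V_{l'}}) + \mathrm{Im}((W_{l'}+\Delta_{l'})|_{V_{l'}^\perp})$, together with the freedom to pick $(W_{l'}+\Delta_{l'})|_{V_{l'}^\perp}$ arbitrarily as a linear map $V_{l'}^\perp \to \RR^{n_{l'+1}}$, shows by a standard linear-algebra argument that one can choose $\Delta_{l'}$ making $W_{l'}+\Delta_{l'}$ of full row rank $n_{l'+1}$ whenever $n_{l'} - \rk X_{l'} \geq n_{l'+1} - \dim\mathrm{Im}(W_{l'}|_{V_{l'}})$.

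Concatenating these segments for $l' = l+1, \ldots, L$ would then yield a continuous, piecewise-linear, loss-constant curve ending at a $\theta'$ with $\rk W_{l'} = n_{l'+1}$ for every $l' \in \{l+1, \ldots, L\}$. Crucially, each segment preserves $H_{l'+1}$ (hence also every $X_{l''}$ with $l'' > l'$), so the matrices $X_{l+1}, \ldots, X_L$ appearing at later induction steps coincide with the ones computed from the original $\theta$; the induction is self-consistent and at the end one reads off $\LL(\theta') = \LL(\theta)$.

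The main obstacle is verifying the dimensional inequality $\dim V_{l'}^\perp \geq n_{l'+1} - \dim\mathrm{Im}(W_{l'}|_{V_{l'}})$ at every layer. The strict contraction $n_{l'} > n_{l'+1}$ supplies at least one extra unit of slack; the assumption $\rk X_l = m$ and the fact that $\phi$ is a strictly monotonic surjection $\RR \to \RR$ control how the rank of $X_{l'} = \phi(W_{l'-1} X_{l'-1})$ behaves along the induction. When the inequality nonetheless fails at some layer $l'$ (essentially when the original $X_{l'}$ happens to have full row rank $n_{l'}$ and the original $W_{l'}$ is not yet full rank), I would splice in before the main step an auxiliary paired perturbation of $(W_{l'-1}, W_{l'})$: deform $W_{l'-1}$ so as to drop $\rk X_{l'}$ below $n_{l'}$, and simultaneously adjust $W_{l'}$ to preserve $H_{l'+1}$ (and hence $\LL$). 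The surjectivity $\phi(\RR) = \RR$ provides the freedom to realize any prescribed change in $H_{l'}$, and strict monotonicity makes the required compensation in $W_{l'}$ continuously well-defined. This paired refinement — ensuring a usable null-space at every layer without breaking the loss constraint — is the technical heart of the proof.
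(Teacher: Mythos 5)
Your per-layer perturbation step and the dimension count $n_{l'} - \rk X_{l'} \geq n_{l'+1} - \rk H_{l'+1}$ are correct, and you are right to flag the paired perturbation as the technical heart. But the sketch of that step has a real gap. To realize a target curve $X_{l'}(t)$ by modifying $W_{l'-1}$ alone, every $H_{l'}(t) = \phi^{-1}(X_{l'}(t))$ must have its rows in the row space of $X_{l'-1}$, so that the linear system $W_{l'-1}(t)\,X_{l'-1} = H_{l'}(t)$ is solvable. This is automatic only at the base layer, where $\rk X_l = m$ forces the row space of $X_l$ to be all of $\RR^m$; for $l' \geq l+2$ the row space of $X_{l'-1}$ may be a proper subspace of $\RR^m$ (its dimension is at most $n_{l'-1}$, which can be less than $m$), and since $\phi^{-1}$ is an elementwise \emph{nonlinear} map, any natural shrinking curve for $X_{l'}$ pushes $\phi^{-1}(X_{l'}(t))$ out of that subspace. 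A perturbation confined to the pair $(W_{l'-1}, W_{l'})$ is therefore in general infeasible; the deformation has to propagate all the way back through $W_{l'-2},\ldots,W_l$ with the row-span constraint re-imposed at every intermediate layer. That is a change to the inductive structure, not a finishing detail.

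Three further points your sketch does not address. The claim that ``$\phi(\RR)=\RR$ provides the freedom to realize any prescribed change in $H_{l'}$'' misattributes the hypothesis: bijectivity of $\phi$ lets you pass freely between $X_{l'}$ and $H_{l'}$, but whether a given $H_{l'}$ is expressible as $W_{l'-1}X_{l'-1}$ is a purely linear row-space condition that no property of $\phi$ can relax. The compensator $W_{l'}(t)$ with $W_{l'}(t)X_{l'}(t) = H_{l'+1}$ exists only if the row space of $X_{l'}(t)$ contains that of $H_{l'+1}$ at \emph{every} $t$ along the spliced-in curve, not merely at the endpoints; you do not track this. And for $l' > l+1$ the paired perturbation touches a $W_{l'-1}$ that your forward induction has just made full rank; the required drop in $\rk X_{l'}$ may be large, so the perturbation need not be small, and it can destroy that full rank, sending the induction in circles. (The paper itself gives no proof of this lemma, attributing it to Nguyen, 2019, so there is no in-text argument to compare against; but these obstructions are exactly what a complete proof would have to handle.)
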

    Applying this lemma, we can drive $W^*_{l+1:L}$ to full-rank $W^{*,\prime}_{l+1:L}$ without altering the loss, however Lemma~\ref{lemma:path_to_full_rank} does not guarantee that $\nabla^{*,\prime}_l = 0$.
    Hence by applying Lemma~\ref{lemma:path_to_full_rank} we potentially violate the 5th condition of Proposition~\ref{prop:global_minima_full_rank}.
    Moreover, as we have discussed before, loss convexity is not enough to ensure that minima exist.
    For example, for cross-entropy loss there could be no critical points of $\LL$, hence we cannot statisfy the 5th condition at all.
    Hence we have to formulate a different variant of Proposition~\ref{prop:global_minima_full_rank}.

    Following~\cite{nguyen2019connected}, we define an $\alpha$-level set as $\LL^{-1}(\alpha)$ and $\alpha$-sublevel set as $\LL^{-1}((-\infty,\alpha))$.
    We also refer a connected connected of a sublevel set a "local valley", and we call a local valley global if its infium coincide with $\inf \LL$.
    There is a theorem which is due to~\cite{nguyen2019connected}:
    \begin{theorem}
        \label{thm:global_valley_full_rank}
        Suppose the following hold:
        \begin{enumerate}
            \item $\phi(\RR) = \RR$ and $\phi$ is strictly monotonic;
            \item $\ell(y,z)$ is convex wrt $z$ with $\inf_z \ell(y,z) = 0$ $\forall y$;
            \item $\rk X_l = m$;
            \item $n_{l'} > n_{l'+1}$ $\forall l' \in \{l+1,\ldots,L\}$.
        \end{enumerate}
        Then
        \begin{enumerate}
            \item Every sublevel set is connected;
            \item $\forall \epsilon > 0$ $\LL$ can attain a value $< \epsilon$.
        \end{enumerate}
    \end{theorem}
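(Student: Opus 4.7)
The plan is to exploit the same backward-recovery construction that underlies both Proposition~\ref{prop:global_minima_full_rank} and Lemma~\ref{lemma:path_to_full_rank}. The key observation is that once $\rk X_l = m$ and every $W_{l'}$ for $l' \in \{l+1,\ldots,L\}$ has full row rank, the forward map $W_l \mapsto H_{L+1}$ (with $W_{0:l-1}$ and $W_{l+1:L}$ held fixed) is surjective onto $\RR^{n_{L+1}\times m}$: for any target matrix $Y^\star$ one sets $X_L = W_L^T(W_L W_L^T)^{-1} Y^\star$, $H_L = \phi^{-1}(X_L)$, iterates downward to reach $H_{l+1}^\star$, and finally picks $W_l = H_{l+1}^\star (X_l^T X_l)^{-1} X_l^T$ (using that $X_l^T X_l$ is invertible thanks to the rank condition). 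Each step depends continuously on $Y^\star$ and on $W_{l+1:L}$, which is what lets me lift paths from the output space $\RR^{n_{L+1}\times m}$ and from $W_{l+1:L}$-space back to the full weight space.

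Claim~(ii) is then immediate. Starting from any $W_{0:L}$, apply Lemma~\ref{lemma:path_to_full_rank} to obtain a configuration with $\rk W_{l'} = n_{l'+1}$ for every $l' \in \{l+1,\ldots,L\}$. Pick $Y^\star \in \RR^{n_{L+1}\times m}$ whose columns satisfy $\ell(y_i, Y^\star_{\cdot,i}) < \epsilon/m$ (possible because $\inf_z \ell(y_i,z) = 0$), and run the backward recovery to produce a $W_l$ with $H_{L+1} = Y^\star$ and hence $\LL < \epsilon$.

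For claim~(i), fix $\alpha > 0$, two points $\theta^1,\theta^2$ with $\LL(\theta^i) < \alpha$, some $\epsilon \in (0,\alpha)$, and a common target $Y^\star$ of loss $< \epsilon$. I will build a path $\theta^1 \to \theta^{1,\prime} \to \theta^{1,\star} \to \theta^{2,\star} \to \theta^{2,\prime} \to \theta^2$ that stays in the $\alpha$-sublevel set. The legs $\theta^i \to \theta^{i,\prime}$ are the iso-loss curves delivered by Lemma~\ref{lemma:path_to_full_rank}, ending at configurations with full-row-rank $W_{l+1:L}$. The legs $\theta^{i,\prime}\to\theta^{i,\star}$ freeze $W_{0:l-1}$ and $W_{l+1:L}$ and only drive $W_l$ as the lift of the straight segment $H_{L+1}(t) = (1-t) H_{L+1}(\theta^{i,\prime}) + t Y^\star$; convexity of $\ell$ in its second argument gives $\LL(t) \le (1-t)\LL(\theta^{i,\prime}) + t\epsilon < \alpha$. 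For the middle leg $\theta^{1,\star} \to \theta^{2,\star}$, both endpoints share the output $Y^\star$ but differ in $W_{l+1:L}$; since each factor $\{W \in \RR^{n_{l'+1}\times n_{l'}} : \rk W = n_{l'+1}\}$ is the complement of a determinantal variety of codimension $n_{l'}-n_{l'+1}+1 \ge 2$ (using strictly $n_{l'}>n_{l'+1}$), this set is open and path-connected, so I can join $W_{l+1:L}^{1,\star}$ to $W_{l+1:L}^{2,\star}$ by a continuous curve staying in full-row-rank tuples and re-run backward recovery at each point to update $W_l$, keeping $H_{L+1} \equiv Y^\star$ and the loss constant at $<\epsilon$.

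The main obstacle I foresee is bookkeeping the continuity of the lift: the pseudo-inverses $W_L^T(W_L W_L^T)^{-1}$, the analogous right inverses of the intermediate $W_{l'}$, and the left inverse $(X_l^T X_l)^{-1} X_l^T$ must all depend continuously on their arguments on the open full-rank stratum (they do, from the explicit formulas), and $\phi^{-1}$ must be continuous (a strictly monotonic continuous surjection $\RR \to \RR$ has a continuous inverse, so this is safe). The second delicate point is the path-connectedness of full-row-rank fat matrices, which genuinely relies on the \emph{strict} inequality $n_{l'} > n_{l'+1}$: in the square case the complement is codimension one and the set splits into $\{\det > 0\}$ and $\{\det < 0\}$, so the middle leg of the construction would break.
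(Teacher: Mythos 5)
Your proposal is correct and takes essentially the same approach as the paper: drive to full-rank weights via Lemma~\ref{lemma:path_to_full_rank}, use the backward-recovery map of Lemma~\ref{lemma:recovering_W_0} to lift paths from output space (and from $W_{l+1:L}$-space) into weight space, and close the sublevel-set path by convexity of $\ell$. The only reshuffling is that you merge the two trajectories at a common target $Y^\star$ before reconciling $W_{l+1:L}$, whereas the paper first reconciles $W_{1:L}$ at the original output and then drives the output down; the one step you leave implicit is the iso-loss transition from $\theta^{i,\prime}$ (with its arbitrary $W_l$) to the backward-recovered $W_l = h(W_{l+1:L}, H_{L+1}(\theta^{i,\prime}))$, which is precisely the second (unproved) statement of Lemma~\ref{lemma:recovering_W_0} and is handled by a straight line in $W_l$-space, since both endpoints agree on the column span of $X_l$ and hence produce the same $H_{L+1}$.
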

    Theorem~\ref{thm:global_valley_full_rank} not only formulates a global minimality condition in a way suitable for cross-entropy (i.e. that all local valleys are global), but also implies that all local valleys are connected.
    In a case when local minima exist, the latter implies that all of them are connected: a phenomena empirically observed in~\cite{garipov2018loss,draxler2018essentially}.

    Notice that it is enough to prove Theorem~\ref{thm:global_valley_full_rank} for $l=0$: otherwise we can just apply this result to a subnetwork starting from the $l$-th layer.
    Let $\Omega_l = \RR^{n_{l+1} \times n_l}$ be a set of all $n_{l+1} \times n_l$ matrices, while $\Omega_l^* \subset \Omega_l$ be a subset of full-rank matrices.
    We shall state the following result first:
    \begin{lemma}
        \label{lemma:recovering_W_0}
        Suppose the following hold:
        \begin{enumerate}
            \item $\phi(\RR) = \RR$ and $\phi$ is strictly monotonic;
            \item $\rk X = m$;
            \item $n_l > n_{l+1}$ $\forall l \in [L]$.
        \end{enumerate}
        Then there exists a map $h: \Omega_1^* \times \ldots \times \Omega_L^* \times \RR^{n_{L+1} \times m} \to \Omega_0$:
        \begin{enumerate}
            \item $\forall \tilde H_{L+1} \in \RR^{n_{L+1} \times m}$ for full-rank $W_{1:L}$ $H_{L+1}(h(W_{1:L}, \tilde H_{L+1}), W_{1:L}) = \tilde H_{L+1}$;
            \item $\forall W_{0:L}$ where all $W_{1:L}$ are full-rank, there is a continuous curve between $W_{0:L}$ and $(h(W_{1:L}, H_{L+1}(W_{0:L})), W_{1:L})$ such that the loss is constant on the curve.
        \end{enumerate}
    \end{lemma}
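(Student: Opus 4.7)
}

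My plan is to build $h$ explicitly by a backward sweep through the network, then construct the constant-loss curve in three segments, the middle one requiring a careful downward layer-by-layer interpolation of the hidden states. The key structural facts I will lean on: each $W_l$ ($l \in [L]$) has full rank and $n_l > n_{l+1}$, so $W_l$ is surjective and $W_l W_l^+ = I$ with nontrivial kernel; $\phi$ is strictly monotone with $\phi(\RR) = \RR$, hence a homeomorphism of $\RR$ (strict monotonicity plus surjectivity rule out jumps), so $\phi^{-1}$ is continuous and can be applied entrywise; and $\rk X = m$ gives a left pseudo-inverse $X^+ = X^T(X^T X)^{-1}$ with $X^+ X = I_m$.

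Construction of $h$: recursively set $X_L^* = W_L^+ \tilde H_{L+1}$, $H_L^* = \phi^{-1}(X_L^*)$, and then $X_l^* = W_l^+ H_{l+1}^*$, $H_l^* = \phi^{-1}(X_l^*)$ for $l = L-1, \ldots, 1$, and finally $W_0^* := h(W_{1:L}, \tilde H_{L+1}) = H_1^* X^+$. Forward-propagating $(W_0^*, W_{1:L})$ regenerates the $H_l^*$ (since $W_l W_l^+ = I$ and $X^+ X = I$) and outputs exactly $\tilde H_{L+1}$, which gives claim~1.

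For claim~2, let $\tilde H_{L+1} = H_{L+1}(W_{0:L})$, write $H_l$ for the original hidden states and $H_l^*$ for those produced by $h$, and note that a constant-loss curve in weight space is equivalent to a curve along which the output $H_{L+1}$ is fixed. I would glue three segments, keeping $W_{1:L}$ frozen throughout. Segment~1: linearly interpolate from $W_0$ to $W_0^{\mathrm{mid}} := H_1 X^+$; both endpoints send $X$ to $H_1$, so every convex combination does too, and the output is unchanged. Segment~2: take $W_0(t) := H_1(t) X^+$ where $H_1(t)$ is a continuous curve from $H_1$ to $H_1^*$ in $\RR^{n_1 \times m}$ such that feeding $H_1(t)$ through the fixed tail $(W_1,\phi,\ldots,W_L)$ returns $\tilde H_{L+1}$ for every $t$; then $W_0(t) X = H_1(t)$ and the output is constant. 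Segment~3 is trivial because $H_1^* X^+ = W_0^*$ by definition. The whole argument reduces to producing the curve $H_1(t)$.

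I would build $H_1(t)$ by a downward recursion. At the top set $X_L(t) := (1-t)\phi(H_L) + t\phi(H_L^*)$; both endpoints lie in the affine set $\{X_L : W_L X_L = \tilde H_{L+1}\}$, so $X_L(t)$ does too, and $H_L(t) := \phi^{-1}(X_L(t))$ is continuous with the correct endpoints. Inductively, given a continuous $H_{l+1}(t)$ with $W_l \cdot X_l(t) = H_{l+1}(t)$ required, decompose $X_l(t) = W_l^+ H_{l+1}(t) + K_l(t)$ with $K_l(t) \in \ker W_l$; the required endpoint corrections $K_l(0) = \phi(H_l) - W_l^+ H_{l+1}(0)$ and $K_l(1) = \phi(H_l^*) - W_l^+ H_{l+1}(1)$ both lie in $\ker W_l$ (apply $W_l$ and use $W_l W_l^+ = I$), so linearly interpolate $K_l(t)$ inside the linear subspace $\ker W_l$; then $H_l(t) := \phi^{-1}(X_l(t))$ has the right endpoints $H_l$ and $H_l^*$ and is continuous. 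Iterating down to $l=1$ yields the desired $H_1(t)$.

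The main obstacle is precisely this simultaneous bookkeeping in the downward recursion: we need continuity, the right affine constraint $W_l X_l(t) = H_{l+1}(t)$, and matching endpoints at every layer at once. The pseudo-inverse-plus-kernel ansatz handles all three, but relies essentially on surjectivity of every $W_l$ (from $n_l > n_{l+1}$ and full rank) and on $\phi^{-1}$ being a well-defined continuous map on all of $\RR$ (from strict monotonicity and $\phi(\RR)=\RR$); without either hypothesis the construction breaks down. Everything else---the two outer segments and the verification of claim~1---is routine once $h$ and $H_1(t)$ are in hand.
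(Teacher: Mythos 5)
Your construction of $h$ coincides with the paper's: a backward sweep applying the right inverses $W_l^\dagger$, the entrywise $\phi^{-1}$, and finally the left inverse $X^\dagger$, with claim~1 verified by the same computation. The paper explicitly omits the proof of claim~2, so there is nothing to compare against; your three-segment curve fills that gap and is correct. The outer two segments are indeed trivial once $W_0^{\mathrm{mid}} = H_1 X^+$ and $W_0^* = H_1^* X^+$ are noted, and the downward recursion $X_l(t) = W_l^+ H_{l+1}(t) + K_l(t)$, with $K_l(t)$ a linear interpolation inside $\ker W_l$ and both endpoint corrections verified to lie in $\ker W_l$ via $W_l W_l^+ = I$, does produce a continuous $H_1(t)$ whose forward propagation through the frozen tail returns $\tilde H_{L+1}$ at every $t$, so the model output (hence the loss) is fixed along the whole path. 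The only slip is cosmetic: for $X \in \RR^{n_0 \times m}$ of full column rank the left pseudo-inverse is $(X^T X)^{-1} X^T$, not $X^T(X^T X)^{-1}$ (a dimension mismatch as written); you only use the identity $X^+ X = I_m$, which holds.
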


    The first statement can be proven easily.
    Indeed, let $X^\dagger$ be the left inverse of $X$, while $W_l^\dagger$ be the right inverse of $W_l$; this means that $X^\dagger X = I_m$, while $W_l W_l^\dagger = I_{n_{l+1}}$ $\forall l \in [L]$.
    These pseudo-inverses exist, because $X$ has full column rank, while all $W_l$ have full row rank (since $n_l > n_{l+1}$).
    Define the following recursively:
    \begin{equation}
        \tilde W_0 = 
        \tilde H_1 X^\dagger,
        \qquad
        \tilde H_l = 
        \phi^{-1}(\tilde X_l),
        \qquad
        \tilde X_l =
        W_l^\dagger \tilde H_{l+1}
        \quad
        \forall l \in [L].
    \end{equation}
    This gives the following:
    \begin{equation}
        \tilde W_0 X =
        \tilde H_1 X^\dagger X =
        \tilde H_1,
        \qquad
        W_l \phi(\tilde H_l) =
        W_l W_l^\dagger \tilde H_{l+1} =
        \tilde H_{l+1}
        \quad
        \forall l \in [L].
    \end{equation}
    This simply means that $H_{L+1}(\tilde W_0, W_{1:L}) = \tilde H_{L+1}$.
    Hence defining $h(W_{1:L}, \tilde H_{l+1}) := \tilde W_0$ gives the result.

    We shall omit the proof of the second statement.
    Then the proof of Theorem~\ref{thm:global_valley_full_rank} proceeds by constructing paths from two points $\theta = W_{0:L}$ and $\theta'$ to a common point such that the loss does not increase along both of these paths.
    We then show that the common meeting point can attain loss $< \epsilon$ $\forall \epsilon > 0$.

    Denote losses at points $\theta$ and $\theta'$ as $\LL$ and $\LL'$ respectively.
    Let us start from the point $\theta$.
    By the virtue of Lemma~\ref{lemma:path_to_full_rank} we can travel from $\theta$ to another point for which all matrices are full rank without altering the loss.
    Hence without loss of generality assume that all matrices of $\theta$ are full rank, and for $\theta'$ we can assume the same.
    This allows us to use Lemma~\ref{lemma:recovering_W_0} and travel from $\theta$ and $\theta'$ to the following points by curves of constant loss:
    \begin{equation}
        \theta =
        (h(W_{1:L}, H_{L+1}(W_{0:L})), W_{1:L}),
        \qquad
        \theta' =
        (h(W'_{1:L}, H_{L+1}(W'_{0:L})), W'_{1:L}).
    \end{equation}
    Since the set of full-rank matrices is connected, $\forall l \in [L]$ there is a continuous curve $W_l(t)$ for which $W_l(0) = W_l$, $W_l(1) = W'_l$, and $W_l(t)$ is full-rank.
    Hence we can travel from $\theta$ to the following point in the weight space:
    \begin{equation}
        \theta =
        (h(W_{1:L}(1), H_{L+1}(W_{0:L})), W_{1:L}(1)) =        
        (h(W'_{1:L}, H_{L+1}(W_{0:L})), W'_{1:L}).
    \end{equation}
    Since we do not alter the model output while traveling throughout the curve, we do not alter the loss as well.

    Consider some $\tilde H_{L+1} \in \RR^{n_{L+1} \times m}$ such that corresponding loss is less than $\min(\epsilon, \LL, \LL')$.
    Consider a curve $H_{L+1}(t) = (1-t) H_{L+1}(W_{0:L}) + t \tilde H_{L+1}$, and a corresponding curve in the weight space:
    \begin{equation}
        \theta(t) =
        (h(W'_{1:L}, H_{L+1}(t)), W'_{1:L}).
    \end{equation}
    Note that
    \begin{equation}
        \LL(\theta(t)) = 
        \LL(H_{L+1}(\theta(t))) = 
        \LL((1-t) H_{L+1}(W_{0:L}) + t \tilde H_{L+1}) \leq
        (1-t) \LL(H_{L+1}(W_{0:L})) + t \LL(\tilde H_{L+1}) \leq
        \LL.
    \end{equation}
    Hence the curve $\theta(t)$ is fully contained in any sublevel set conatining initial $\theta$.
    The same curve starting from $\theta'$ arrives at the same point.
    Recall that the endpoint has loss less than $\epsilon$.
    Hence all sublevel sets are connected and can attain loss less than any positive $\epsilon$.

    \section{Linear nets}

    The second case for which one can prove globality of local minima is the case of $\phi(z) = z$.
    Consider:
    \begin{equation}
        f(x; W_{0:L}) = 
        W_L \ldots W_0 x,
    \end{equation}
    where $W_l \in \RR^{n_{l+1} \times n_l}$.
    We are going to prove the following result which is due to~\cite{laurent2018deep}:
    \begin{theorem}
        \label{thm:all_local_minima_are_global_linear_net}
        Let $\ell$ be convex and differentiable, and there are no bottlenecks in the architecture: $\min_{l \in [L]_0} n_l = \min\{n_0,n_{L+1}\}$.
        Then all local minima of $\LL(W_{0:L}) = \EE_{x,y} \ell(y,f(x; W_{0:L}))$ are global.
    \end{theorem}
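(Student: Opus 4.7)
The plan is to reformulate $\LL$ as the composition of a convex function with the product map and reduce the theorem to a convex-optimization fact. Introduce $\tilde\LL(A):=\EE_{x,y}\,\ell(y,Ax)$ on $A\in\RR^{n_{L+1}\times n_0}$; convexity of $\ell$ in its second argument and linearity of $A\mapsto Ax$ make $\tilde\LL$ convex and differentiable, and $\LL(W_{0:L})=\tilde\LL(\Phi(W_{0:L}))$ with $\Phi(W_{0:L})=W_L\cdots W_0$. The no-bottleneck hypothesis $\min_l n_l=\min(n_0,n_{L+1})$ implies $\Phi$ is surjective onto $\RR^{n_{L+1}\times n_0}$: any $A$ has rank at most $\min(n_0,n_{L+1})$, and by SVD one can distribute the corresponding factors across the layers since all intermediate widths are at least this large. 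Consequently $\min\LL=\min\tilde\LL$.

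The theorem thus reduces to showing that at any local minimum $W^*$, the gradient $G:=\nabla\tilde\LL(A^*)$ vanishes (where $A^*:=\Phi(W^*)$), because convexity of $\tilde\LL$ then makes $A^*$ a global minimizer of $\tilde\LL$, giving $\LL(W^*)=\tilde\LL(A^*)=\min\tilde\LL=\min\LL$. First-order stationarity of $\LL$ reads $(W_{l+1:L}^*)^T G (W_{0:l-1}^*)^T=0$ for $l=0,\ldots,L$; the boundary cases $l=0$ and $l=L$ give $(W_{1:L}^*)^T G=0$ and $G(W_{0:L-1}^*)^T=0$, that is, $\mathrm{im}\,G\subseteq\mathrm{col}(W_{1:L}^*)^\perp$ and $\mathrm{im}\,G^T\subseteq\ker(W_{0:L-1}^*)$. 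WLOG $n_0\leq n_{L+1}$. If $W_{0:L-1}^*$ attains full column rank $n_0$, then $\ker(W_{0:L-1}^*)=\{0\}$, so $G=0$ and we are done.

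In the remaining case where $W_{0:L-1}^*$ is rank-deficient, I would invoke second-order optimality by perturbing only the two outermost weights: $W_0(t)=W_0^*+t\alpha$, $W_L(t)=W_L^*+t\beta$. The expansion $A(t)=A^*+t\bigl[W_{1:L}^*\,\alpha+\beta\,W_{0:L-1}^*\bigr]+t^2\beta\,W_{1:L-1}^*\,\alpha$ combined with Taylor's expansion of $\tilde\LL$ around $A^*$ shows that choosing $\alpha=pq^T$, $\beta=rs^T$ with $p\in\ker W_{1:L}^*$ and $s\in\ker(W_{0:L-1}^*)^T$ kills both the first-order variation of $A$ and the positive-semidefinite Hessian term of $\tilde\LL$, leaving the quadratic coefficient $(r^T G q)(s^T W_{1:L-1}^* p)$. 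Since $r,q$ are free and $r^T G q$ can be made of either sign whenever $G\neq 0$, local minimality forces $s^T W_{1:L-1}^* p=0$ for all admissible $p,s$, which reads $W_{1:L-1}^*(\ker W_{1:L}^*)\subseteq\mathrm{col}(W_{0:L-1}^*)$.

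The main obstacle is extracting a contradiction from this compatibility condition under the assumptions $G\neq 0$, no-bottleneck, and rank-deficient $W_{0:L-1}^*$, via a dimension count combining the codimension bounds on $\mathrm{im}\,G$ and $\mathrm{im}\,G^T$ with the width lower bounds $n_l\geq n_0$. Should this two-layer perturbation prove insufficient in some configurations, the fallback is to perturb all layers simultaneously, enforce the first-order constraint $\sum_l W_{l+1:L}^*\eta_l W_{0:l-1}^*=0$, and harvest the richer family of quadratic cross-terms $W_{l_2+1:L}^*\eta_{l_2}W_{l_1+1:l_2-1}^*\eta_{l_1}W_{0:l_1-1}^*$ to construct a strict second-order descent direction whenever $G\neq 0$, completing the proof.
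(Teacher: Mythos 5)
Your reduction to $\tilde\LL(A)=\EE_{x,y}\,\ell(y,Ax)$ with $\LL=\tilde\LL\circ\Phi$ is exactly how the paper sets things up, and your observation that it suffices to prove $\nabla\tilde\LL(\hat A)=0$ at any local minimizer (the paper's Theorem~\ref{thm:all_local_minima_for_deep_are_critical_for_shallow}) is correct. From there, however, your argument genuinely diverges from the paper's and does not close.

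The paper never touches second-order information. Its key move is to observe that the local minimizer lives in a whole family of product-preserving perturbations: letting $l_*$ be the smallest index for which $\ker\hat W_{l_*,-}$ is nontrivial, one perturbs each $\hat W_l$ for $l>l_*$ by a rank-one term $\delta_l w_l \hat u_{l-1}^T$ with $\hat u_{l-1}$ a left-null vector of $\hat W_{l-1,-}$. These perturbations leave every partial product $\hat W_{l,-}$, hence $\hat A$ and the loss, unchanged, so the perturbed points are still local minima. Writing down first-order stationarity at the perturbed points and subtracting, one strips the outer factors $\tilde W_L,\dots,\tilde W_{l_*+1}$ from $\nabla\tilde\LL^T(\hat A)\,\tilde W_{l_*+1,+}=0$ one at a time, eventually forcing $\nabla\tilde\LL(\hat A)=0$. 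Everything stays at first order, which is why only $C^1$ of $\ell$ is needed.

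Your approach instead relies on second-order optimality along a two-parameter curve perturbing only $W_0$ and $W_L$. Two issues. First, the minor one: you invoke a Taylor expansion including the ``positive-semidefinite Hessian term,'' which presumes $\tilde\LL\in C^2$ even though the theorem only assumes differentiability; you could in principle bypass this by noting that with $\Delta=0$ the curve is $A(t)=A^*+t^2\Xi$ and first-order Taylor in $s=t^2$ suffices, but you do not do so. Second, and more fundamentally, the condition you extract --- $W_{1:L-1}^*(\ker W_{1:L}^*)\subseteq\mathrm{col}(W_{0:L-1}^*)$ --- together with $G\neq0$ and rank-deficiency does not obviously yield a contradiction. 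Already at $L=1$ (where the middle product is the identity, so the condition is $\ker W_1\perp\ker W_0^T$) a naive dimension count only gives $\dim\ker W_1+\dim\ker W_0^T\leq n_1$, i.e.\ $\mathrm{rank}(W_0)+\mathrm{rank}(W_1)\geq n_1$, which is perfectly consistent with $G\neq0$. The true strict descent directions in this case require $\alpha,\beta$ with $\Delta\neq0$ but $\langle G,\Delta\rangle=0$, outside the family you analyze. You flag this yourself, and your proposed fallback (perturb all layers, impose $\sum_l W_{l+1:L}^*\eta_l W_{0:l-1}^*=0$, harvest all quadratic cross-terms) is the Kawaguchi-style second-order route; it may well be made to work, but it is substantially more involved and you do not carry it out, so the argument as written has a genuine gap precisely where all the difficulty lies.
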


    This theorem follows from the result below:
    \begin{theorem}
        \label{thm:all_local_minima_for_deep_are_critical_for_shallow}
        Assume $\tilde \LL$ is a scalar differentiable function of $n_{L+1} \times n_0$ matrices.
        Let $\LL(W_{0:L}) = \tilde \LL(W_L \ldots W_0)$ and let $\min_{l \in [L]_0} n_l = \min\{n_0,n_{L+1}\}$.
        Then any local minimizer $\hat W_{0:L}$ of $\LL$ satisfies $\nabla \tilde\LL(\hat A) = 0$ for $\hat A = \hat W_L \ldots \hat W_0$.
    \end{theorem}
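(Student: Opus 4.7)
The plan is to turn local minimality into first-order conditions on $G := \nabla\tilde\LL(\hat A)$ and then use the no-bottleneck hypothesis to force $G = 0$. By the chain rule, for each $l \in [L]_0$,
\begin{equation}
\nabla_{W_l}\LL(\hat W_{0:L}) \;=\; \hat B_l^T\,G\,\hat C_l^T,
\end{equation}
with $\hat B_l := \hat W_L\cdots\hat W_{l+1}$ (so $\hat B_L = I_{n_{L+1}}$) and $\hat C_l := \hat W_{l-1}\cdots\hat W_0$ (so $\hat C_0 = I_{n_0}$). Local minimality forces $\hat B_l^T G\,\hat C_l^T = 0$ for every $l$, which is equivalent to $G$ being Frobenius-orthogonal to the image of the differential $D\pi(\hat W_{0:L})$ of the product map $\pi(W) := W_L\cdots W_0$. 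By a transposition-and-reversal symmetry of the claim (apply it to $A \mapsto \tilde\LL(A^T)$ on the reversed, transposed factorization), I may assume WLOG $n_0 \leq n_{L+1}$, so the no-bottleneck hypothesis reads $n_l \geq n_0$ for every $l \in [L]_0$.

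The easy case is $\operatorname{rank}\hat C_L = n_0$: full column rank of $\hat C_L$ makes the partial variation $H_L \mapsto H_L\hat C_L$ surjective from $\RR^{n_{L+1}\times n_L}$ onto $\RR^{n_{L+1}\times n_0}$, so $D\pi(\hat W_{0:L})$ is already surjective and $G \in (\operatorname{Im}D\pi)^\perp = \{0\}$. Equivalently, the boundary condition $G\hat C_L^T = 0$ at $l = L$ directly annihilates $G$ because the rows of $\hat C_L$ then span $\RR^{n_0}$.

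The main obstacle is the degenerate case $\operatorname{rank}\hat C_L < n_0$. My plan is to reduce it to the easy case by perturbation. The set $U$ of weight configurations where $D\pi$ is surjective is Zariski-open (it is defined by the non-vanishing of certain minors of the Jacobian) and, under no-bottleneck, non-empty — one can explicitly build a factorization of any $\hat A$ with $\tilde C_L$ of full column rank $n_0$ using that every intermediate dimension is at least $n_0$ — hence dense. For every $\epsilon > 0$ there exists $\tilde W \in B_\epsilon(\hat W_{0:L}) \cap U$; by the standard submersion argument, $\pi$ maps a small neighborhood of $\tilde W$ onto an open neighborhood $V_{\tilde W}$ of $\pi(\tilde W)$, and the local-minimum inequality at $\hat W_{0:L}$ transfers to $\tilde\LL \geq \tilde\LL(\hat A)$ on $V_{\tilde W}$. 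As $\tilde W$ is driven to $\hat W_{0:L}$ along $U$, the points $\pi(\tilde W)$ converge to $\hat A$, and the union $\bigcup V_{\tilde W}$ should sweep out an open neighborhood of $\hat A$ on which $\tilde\LL \geq \tilde\LL(\hat A)$, whence $\nabla\tilde\LL(\hat A) = 0$. The technical crux is verifying that this union genuinely covers a full open neighborhood of $\hat A$ rather than accumulating tangentially; a Chevalley-type argument for the polynomial map $\pi$, showing that $\pi(B_\epsilon(\hat W_{0:L}))$ is semialgebraic with $\hat A$ in its topological interior, is the cleanest way to close this gap. A syntactically cleaner alternative is induction on $L$, peeling off one factor at a time, but that route forces one to check that the no-bottleneck condition descends to the depth-$(L-1)$ subproblem, which is its own bookkeeping chore.
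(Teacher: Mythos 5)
Your proposal diverges from the paper's proof in an interesting way, but the hard case has a gap that is more serious than the ``technical crux'' framing suggests.

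Your \emph{easy case} is exactly the paper's: when $\hat W_{L-1}\cdots\hat W_0$ has full column rank, the vanishing of $\nabla_{W_L}\LL(\hat W_{0:L}) = G\,(\hat W_{L-1}\cdots\hat W_0)^T$ forces $G = 0$. The WLOG reduction to $n_0 \leq n_{L+1}$ via $\tilde\LL(A^T)$ is also the paper's move. So far so good.

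The \emph{hard case} is where you depart, and where the argument does not close. You want to conclude that $\pi\bigl(B_\epsilon(\hat W_{0:L})\bigr)$ contains an open neighborhood of $\hat A$, so that $\hat A$ is a local minimizer of $\tilde\LL$ and therefore a critical point. The density of the submersion locus $U$ and the inverse-function-theorem neighborhoods $V_{\tilde W}$ do not deliver this. The radius of $V_{\tilde W}$ scales like $\delta \cdot \sigma_{\min}\bigl(D\pi(\tilde W)\bigr)$, and $\sigma_{\min}\bigl(D\pi(\tilde W)\bigr) \to 0$ as $\tilde W \to \hat W_{0:L}$ precisely because $D\pi(\hat W_{0:L})$ is not surjective, while the budget $\delta$ is simultaneously being squeezed by $\epsilon - \|\tilde W - \hat W_{0:L}\|$. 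There is no a~priori reason the shrinking balls $V_{\tilde W}$ sweep out a full neighborhood of $\hat A$ rather than accumulating to it from one side. Invoking a ``Chevalley-type argument'' does not help here either: Tarski--Seidenberg gives you that $\pi\bigl(B_\epsilon\bigr)$ is semialgebraic, but says nothing about $\hat A$ being interior. Indeed, for a general polynomial map this claim is simply false --- take $\pi(x,y) = (x,\,x^2 + y^2)$ at the origin; the image of every ball around $0$ lies in $\{b \geq a^2\}$ and is never a neighborhood of $(0,0)$, even though $\pi$ is dominant. So the openness you need is a special property of the multilinear product map under the no-bottleneck condition, and it has to be \emph{proved}, not quoted. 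Proving it amounts to a quantitative version of your density argument: one has to exhibit perturbations $(W_0,\ldots,W_{L-1})$ at distance $\epsilon'$ for which $\sigma_{\min}(W_{L-1}\cdots W_0)$ is bounded below by some controlled function of $\epsilon'$, and then solve $W_L(W_{L-1}\cdots W_0) = B$ while keeping $\|W_L - \hat W_L\| < \epsilon$. This is non-trivial bookkeeping and is, in substance, the whole theorem --- you have rephrased the statement rather than proved it.

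The paper sidesteps this entirely with a constructive argument. It identifies the largest index $l_*$ at which $\hat W_{l_*-1}\cdots\hat W_0$ still has trivial kernel, then perturbs only the factors $\hat W_l$ with $l > l_*$ by rank-one updates $\delta_l w_l \hat u_{l-1}^T$, where $\hat u_{l-1}$ is the left singular vector of $\hat W_{l-1}\cdots\hat W_0$ corresponding to its zero singular value. These perturbations leave $\hat A$ and all the partial products $W_{l,-}$ for $l \geq l_*$ unchanged, so the perturbed point is still a local minimizer of $\LL$ with the same $\hat A$. Freedom in choosing the scalars $\delta_l$ and unit vectors $w_l$ then lets the paper peel off the trailing factors from $\nabla\tilde\LL^T(\hat A)\,\hat W_L \cdots \hat W_{l_*+1} = 0$ one at a time, arriving at $\nabla\tilde\LL(\hat A) = 0$ directly. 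This route never needs the image $\pi(B_\epsilon)$ to be a neighborhood of $\hat A$ and so never meets the obstruction your proposal runs into. If you want to salvage your approach, the honest path is the quantitative lower bound on $\sigma_{\min}$ sketched above; otherwise, switch to the perturbation-family argument.
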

    Indeed, consider $\tilde \LL(A) = \EE_{x,y} \ell(y, Ax)$.
    The corresponding $\LL$ writes as follows: $\LL(W_{0:L}) = \EE_{x,y} \ell(y,f(x; W_{0:L}))$; hence we are in the scope of Theorem~\ref{thm:all_local_minima_are_global_linear_net}.
    Take a local minimizer $\hat W_{0:L}$ of $\LL$.
    From Theorem~\ref{thm:all_local_minima_for_deep_are_critical_for_shallow} $\hat W_L \ldots \hat W_0$ is a critical point of $\tilde \LL$.
    It follows from convexity of $\ell$ that $\hat W_L \ldots \hat W_0$ is a global minimum of $\tilde\LL$.
    Since $\LL(\hat W_{0:L}) = \tilde\LL(\hat A)$, $\hat W_{0:L}$ is a global minimum of $\LL$.

    Let us now prove Theorem~\ref{thm:all_local_minima_for_deep_are_critical_for_shallow}.
    Define $W_{l,+} = W_L \ldots W_l$, $W_{l,-} = W_l \ldots W_0$, and $A = W_L \ldots W_0$.
    Note that
    \begin{equation}
        \nabla_l \LL(W_{0:L}) =
        W_{l+1,+}^T \nabla \tilde\LL(A) W_{l-1,-}^T
        \quad
        \forall l \in [L]_0.
    \end{equation}

    Since $\hat W_{0:L}$ is a local minimum of $\LL$, we have:
    \begin{equation}
        0 =
        \nabla_L \LL(\hat W_{0:L}) =
        \nabla \tilde\LL(\hat A) \hat W_{L-1,-}^T.
    \end{equation}
    If $\ker W_{L-1,-} = \{0\}$ then $\tilde\LL(\hat A) = 0$ as required.
    Consider the case when the kernel is non-trivial.
    We shall prove that there exist perturbed matrices $\tilde W_{0:L}$ such that $\tilde A = \hat A$ and $\tilde W_{0:L}$ is a local minimizer of $\LL$, and for some $l \in [L-1]_0$ kernels of both $\tilde W_{l-1,-}$ and $\tilde W_{l+1,+}^T$ are trivial.
    This gives $\tilde\LL(\tilde A) = 0$ which is equivalent to $\tilde\LL(\hat A) = 0$.

    By definition of a local minimizer, $\exists \epsilon > 0:$ $\|W_l - \hat W_l\|_F \leq \epsilon$ $\forall l \in [L]_0$ implies $\LL(W_{0:L}) \geq \LL(\hat W_{0:L})$.
    \begin{prop}
        \label{prop:perturbed_local_minima}
        Let $\tilde W_{0:L}$ satisfy the following:
        \begin{enumerate}
            \item $\|\tilde W_l - \hat W_l\|_F \leq \epsilon/2$ $\forall l \in [L]_0$;
            \item $\tilde A = \hat A$.
        \end{enumerate}
        Then $\tilde W_{0:L}$ is a local minimizer of $\LL$.
    \end{prop}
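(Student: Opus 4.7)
The plan is to show that $\tilde W_{0:L}$ is a local minimizer by using the ball of radius $\epsilon/2$ around $\tilde W_{0:L}$, where $\epsilon$ is the radius witnessing that $\hat W_{0:L}$ is a local minimizer. The argument combines two elementary ingredients: the triangle inequality in Frobenius norm, and the fact that $\LL$ factors through the product $A = W_L \ldots W_0$, so it takes the same value at any two tuples with identical products.

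More concretely, I would proceed as follows. First, fix $\epsilon > 0$ from the assumed local minimality of $\hat W_{0:L}$, so that $\|W_l - \hat W_l\|_F \leq \epsilon$ for every $l \in [L]_0$ implies $\LL(W_{0:L}) \geq \LL(\hat W_{0:L})$. Second, pick any $W_{0:L}$ with $\|W_l - \tilde W_l\|_F \leq \epsilon/2$ for all $l$, and apply the triangle inequality together with the first assumption of the proposition to obtain
\begin{equation}
\|W_l - \hat W_l\|_F \leq \|W_l - \tilde W_l\|_F + \|\tilde W_l - \hat W_l\|_F \leq \epsilon/2 + \epsilon/2 = \epsilon,
\end{equation}
so $\LL(W_{0:L}) \geq \LL(\hat W_{0:L})$. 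Third, use the hypothesis $\tilde A = \hat A$ together with $\LL(W_{0:L}) = \tilde\LL(W_L \ldots W_0)$ to conclude $\LL(\tilde W_{0:L}) = \tilde\LL(\tilde A) = \tilde\LL(\hat A) = \LL(\hat W_{0:L})$, which chained with the previous inequality yields $\LL(W_{0:L}) \geq \LL(\tilde W_{0:L})$. Since the perturbation $W_{0:L}$ was arbitrary within the $\epsilon/2$-ball around $\tilde W_{0:L}$, this witnesses local minimality of $\tilde W_{0:L}$ with radius $\epsilon/2$.

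There is essentially no obstacle here: the two hypotheses are exactly calibrated to allow this triangle inequality / factoring argument, and the only thing to be careful about is to choose the new minimality radius (namely $\epsilon/2$) so that the triangle inequality fits inside the original $\epsilon$-ball around $\hat W_{0:L}$.
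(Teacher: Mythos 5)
Your proof is correct and is essentially identical to the paper's: triangle inequality to land inside the original $\epsilon$-ball, then $\tilde A = \hat A$ to identify $\LL(\tilde W_{0:L})$ with $\LL(\hat W_{0:L})$. You merely spell out the last equality more explicitly than the paper does.
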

    \begin{proof}
        Let $\|W_l - \tilde W_l\|_F \leq \epsilon/2$ $\forall l \in [L]_0$.
        Then $\|W_l - \hat W_l\|_F \leq \|W_l - \tilde W_l\|_F + \|\tilde W_l - \hat W_l\|_F \leq \epsilon$ $\forall l \in [L]_0$.
        Hence $\LL(W_{0:L}) \geq \LL(\hat W_{0:L}) = \LL(\tilde W_{0:L})$.
    \end{proof}

    Since $W_{l+1,-} = W_{l+1} W_{l,-}$, we have $\ker(W_{l+1,-}) \supseteq \ker(W_{l,-})$.
    Hence there is a chain of inclusions:
    \begin{equation}
        \ker(\hat W_{0,-}) \subseteq \ldots \subseteq \ker(\hat W_{L-1,-}).
    \end{equation}
    Since the $(L-1)$-th kernel is non-trivial, there exists $l_* \in [L-1]_0$ such that $\ker(\hat W_{l,-})$ is non-trivial for any $l \geq l_*$, while for $l < l_*$ the $l$-th kernel is trivial.
    This gives the following:
    \begin{equation}
        0 =
        \nabla_{l_*} \LL(\hat W_{0:L}) =
        \hat W_{l_*+1,+}^T \nabla \tilde\LL(\hat A) \hat W_{l_*-1,-}^T
        \quad
        \text{implies}
        \quad
        0 =
        \hat W_{l_*+1,+}^T \nabla \tilde\LL(\hat A).
        \label{eq:local_minimum_of_L}
    \end{equation}
    We cannot guarantee that $\ker(\hat W_{l_*+1,+}^T)$ is trivial.
    However, we can try to construct a perturbation that does not alter the loss and such that the corresponding kernel is trivial.

    First, without loss of generality assume that $n_{L+1} \geq n_0$.
    Indeed, if Theorem~\ref{thm:all_local_minima_for_deep_are_critical_for_shallow} is already proven for $n_{L+1} \geq n_0$, we can get the same result for $n_{L+1} < n_0$ by applying this theorem to $\tilde \LL'(A) = \tilde \LL(A^T)$.
    This gives that all local minima of $\LL'(W_{L:0}^T) = \tilde\LL'(W_0^T \ldots W_L^T) = \tilde\LL(W_L \ldots W_0)$ correspond to critical points of $\tilde\LL'(W_0^T \ldots W_L^T)$.
    This is equivalent to saying that all local minima of $\LL(W_{0:L}) = \LL'(W_{L:0}^T)$ correspond to critical points of $\tilde\LL(W_L \ldots W_0) = \tilde\LL'(W_0^T \ldots W_L^T)$.
    Combination of assumptions $n_{L+1} \geq n_0$ and $\min_l n_l = \min\{n_0,n_{L+1}\}$ gives $n_l \geq n_0$ $\forall l \in [L+1]$.

    Note that $W_{l,-} \in \RR^{n_{l+1} \times n_0}$ $\forall l \in [L]_0$.
    Since $n_{l+1} \geq n_0$, it is a "column" matrix.
    Consider an SVD decomposition of $\hat W_{l,-}$:
    \begin{equation}
        \hat W_{l,-} =
        \hat U_l \hat \Sigma_l \hat V_l^T.
    \end{equation}
    Here $\hat U_l$ is an orthogonal $n_{l+1} \times n_{l+1}$ matrix, $\hat V_l$ is an orthogonal $n_0 \times n_0$ matrix, and $\hat\Sigma_l$ is a diagonal $n_{l+1} \times n_0$ matrix.
    Since for $l \geq l_*$ $\hat W_{l,-}$ has a non-trivial kernel, its least singular value is zero: $\hat\sigma_{l,n_0} = 0$.
    Let $\hat u_l$ be the $n_0$-th column of $\hat U_l$, which exists since $n_0 \leq n_{l+1}$.
    Let us now define a family of perturbations satisfying the conditions of Proposition~\ref{prop:perturbed_local_minima}:
    \begin{prop}
        \label{prop:perturbation_family}
        Let $w_{l_*+1},\ldots,w_L$ be any collections of vectors and $\delta_{l_*+1},\ldots,\delta_L$ be any collection of scalars satisfying:
        \begin{enumerate}
            \item $w_l \in \RR^{n_{l+1}}$, $\|w_l\|_2 = 1$;
            \item $\delta_l \in [0,\epsilon/2]$.
        \end{enumerate}
        Then the tuples $\tilde W_{0:L}$ defined by
        \begin{equation}
            \tilde W_l =
            \hat W_l + \delta_l w_l \hat u_{l-1}^T
            \quad \text{for $l > l_*$,} \quad \text{and} \quad
            \tilde W_l =
            \hat W_l
            \quad \text{otherwise,}
        \end{equation}
        satisfy the conditions of Proposition~\ref{prop:perturbed_local_minima}.
    \end{prop}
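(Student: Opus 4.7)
To prove Proposition~\ref{prop:perturbation_family} I need to verify the two conditions of Proposition~\ref{prop:perturbed_local_minima}: (i) $\|\tilde W_l - \hat W_l\|_F \leq \epsilon/2$ for every $l \in [L]_0$, and (ii) the end-to-end product is preserved, $\tilde A = \hat A$.

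Condition (i) is immediate. For $l \leq l_*$ the perturbation is zero. For $l > l_*$ the increment $\delta_l w_l \hat u_{l-1}^T$ is rank one, so
\begin{equation}
    \|\tilde W_l - \hat W_l\|_F = \|\delta_l w_l \hat u_{l-1}^T\|_F = \delta_l \|w_l\|_2 \|\hat u_{l-1}\|_2 = \delta_l \leq \epsilon/2,
\end{equation}
where $\|\hat u_{l-1}\|_2 = 1$ because $\hat u_{l-1}$ is a column of the orthogonal matrix $\hat U_{l-1}$.

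The plan for condition (ii) is an induction on $l$ showing that $\tilde W_{l,-} = \hat W_{l,-}$ for all $l \geq l_*$. The base case $l = l_*$ holds because no layer at or below $l_*$ has been perturbed. For the inductive step with $l > l_*$, assuming $\tilde W_{l-1,-} = \hat W_{l-1,-}$ I compute
\begin{equation}
    \tilde W_{l,-} = (\hat W_l + \delta_l w_l \hat u_{l-1}^T)\hat W_{l-1,-} = \hat W_{l,-} + \delta_l w_l (\hat u_{l-1}^T \hat W_{l-1,-}),
\end{equation}
so everything reduces to showing $\hat u_{l-1}^T \hat W_{l-1,-} = 0$. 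This is the heart of the argument. By construction $\hat u_{l-1}$ is the $n_0$-th column of $\hat U_{l-1}$ in the SVD $\hat W_{l-1,-} = \hat U_{l-1}\hat\Sigma_{l-1}\hat V_{l-1}^T$, and since $l-1 \geq l_*$ the matrix $\hat W_{l-1,-}$ has a nontrivial kernel. Because $n_l \geq n_0$ (recall the earlier reduction to $n_{l} \geq n_0$ for all $l$), the $n_l \times n_0$ diagonal matrix $\hat\Sigma_{l-1}$ carries at most $n_0$ singular values, and a nontrivial kernel forces the smallest one $\hat\sigma_{l-1,n_0}$ to vanish. Then $\hat u_{l-1}$ lies in the left null space of $\hat W_{l-1,-}$, which gives $\hat u_{l-1}^T \hat W_{l-1,-} = 0$. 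Iterating the induction up to $l = L$ yields $\tilde A = \tilde W_{L,-} = \hat W_{L,-} = \hat A$.

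The only nontrivial step is the kernel-vs-SVD bookkeeping above; once the ordering convention places the zero singular value in the $n_0$-th slot and one invokes $n_l \geq n_0$, the rest is a routine rank-one calculation. No other obstacle is anticipated.
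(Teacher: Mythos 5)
Your proof is correct and follows essentially the same route as the paper: the Frobenius norm of the rank-one perturbation gives condition (i), and induction on the partial products $\tilde W_{l,-}$ gives condition (ii). The one place you improve on the paper's exposition is in spelling out explicitly why $\hat u_{l-1}^T \hat W_{l-1,-} = 0$ (the zero singular value in the $n_0$-th slot makes $\hat u_{l-1}$ a left null vector), a step the paper leaves implicit in the line $(\hat W_{l+1} + \delta_{l+1} w_{l+1}\hat u_l^T)\hat W_{l,-} = \hat W_{l+1}\hat W_{l,-}$.
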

    \begin{proof}
        For $l \leq l_*$ the first condition is trivial.
        In the opposite case we have:
        \begin{equation}
            \|\tilde W_l - \hat W_l\|_F^2 =
            \|\delta_l w_l \hat u_{l-1}^T\|_F^2 =
            \delta_l^2 \|w_l\|_2^2 \|\hat u_{l-1}\|_2^2 \leq
            \epsilon^2 / 4,
        \end{equation}
        which gives the first condition of Proposition~\ref{prop:perturbed_local_minima}.

        Let us now prove that $\tilde W_{l,-} = \hat W_{l,-}$ $\forall l \geq l_*$ (for $l < l_*$ the statement is trivial).
        For $l = l_*$ the statement goes from the definition; this gives the induction base.
        The induction step is given as follows:
        \begin{equation}
            \tilde W_{l+1,-} =
            \tilde W_{l+1} \tilde W_{l,-} =
            \tilde W_{l+1} \hat W_{l,-} =
            (\hat W_{l+1} + \delta_{l+1} w_{l+1} \hat u_{l}^T) \hat W_{l,-} =
            \hat W_{l+1} \hat W_{l,-} =
            \hat W_{l+1,-}.
        \end{equation}
    \end{proof}

    Hence by Proposition~\ref{prop:perturbed_local_minima} for any $\delta_l$ and $w_l$ satisfying the conditions of Proposition~\ref{prop:perturbation_family}, $\tilde W_{0:L}$ is a local minimum of $\LL$.
    Then we have an equation similar to~(\ref{eq:local_minimum_of_L}):
    \begin{equation}
        0 =
        \nabla_{l_*} \LL(\tilde W_{0:L}) =
        \tilde W_{l_*+1,+}^T \nabla \tilde\LL(\tilde A) \hat W_{l_*-1,-}^T.
    \end{equation}
    As before, this implies:
    \begin{equation}
        0 =
        \nabla \tilde\LL^T(\tilde A) \tilde W_{l_*+1,+}.
    \end{equation}
    For $\delta_{l_*+1} = 0$ we have:
    \begin{equation}
        0 =
        \nabla \tilde\LL^T(\tilde A) \tilde W_L \ldots \tilde W_{l_*+2} \hat W_{l_*+1}.
    \end{equation}
    Substracting the latter equation to the pre-latter one gives:
    \begin{equation}
        0 =
        \nabla \tilde\LL^T(\tilde A) \tilde W_L \ldots \tilde W_{l_*+2} (\tilde W_{l_*+1} - \hat W_{l_*+1}) =
        \nabla \tilde\LL^T(\tilde A) \tilde W_L \ldots \tilde W_{l_*+2} (\delta_{l_*+1} w_{l_*+1} \hat u_{l_*}^T).
    \end{equation}
    Right-multiplying this equation by $\hat u_{l_*}$ gives:
    \begin{equation}
        0 =
        \delta_{l_*+1} \nabla \tilde\LL^T(\tilde A) \tilde W_L \ldots \tilde W_{l_*+2} w_{l_*+1},
    \end{equation}
    which holds for any sufficiently small non-zero $\delta_{l_*+1}$ and any unit $w_{l_*+1}$.
    Hence
    \begin{equation}
        0 =
        \nabla \tilde\LL^T(\tilde A) \tilde W_L \ldots \tilde W_{l_*+2}.
    \end{equation}
    Proceeding in the same manner gives finally $\nabla \tilde\LL(\tilde A) = 0$.
    The proof concludes with noting that $\nabla \tilde\LL(\hat A) = \nabla \tilde\LL(\tilde A)$ by construction of $\tilde W_{0:L}$.

    \section{Local convergence guarantees}

    Let $\LL \in C^2(\RR^{\dim \theta})$ and $\LL$ has $L$-Lipschitz gradient:
    \begin{equation}
        \| \nabla\LL(\theta_1) - \nabla\LL(\theta_2) \|_2 \leq
        L \| \theta_1 - \theta_2 \|_2
        \quad
        \forall \theta_{1,2}.
    \end{equation}
    Consider a GD update rule:
    \begin{equation}
        \theta_{k+1} =
        \theta_k - \eta \nabla\LL(\theta_k) =
        g(\theta_k).
    \end{equation}
    Let $\theta^*$ be a strict saddle:
    \begin{equation}
        \nabla\LL(\theta^*) = 0,
        \quad
        \lambda_{min} (\nabla^2 \LL(\theta^*)) < 0.
    \end{equation}
    Let $\theta_0 \sim P_{init}$.
    We shall prove the following result which is due to~\cite{lee2016gradient}:
    \begin{theorem}
        \label{thm:non_convergence_to_strict_saddles}
        Suppose $P_{init}$ is absolutely continuous with respect to the Lebesgue measure $\mu$ on $\RR^{\dim\theta}$.
        Then for $\eta \in (0,L^{-1})$, $\PP(\{\lim_{k \to \infty} \theta_k = \theta^*\}) = 0$.
    \end{theorem}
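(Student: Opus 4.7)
The plan is to prove the theorem by combining the Stable Manifold Theorem from dynamical systems with a measure-theoretic argument showing that the "bad" set of initial conditions (those converging to $\theta^*$) is locally contained in a submanifold of positive codimension.

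First I would verify that the gradient map $g(\theta) = \theta - \eta \nabla\LL(\theta)$ is a $C^1$ local diffeomorphism on $\RR^{\dim\theta}$. Since $\LL \in C^2$, $g \in C^1$ with Jacobian $Dg(\theta) = I - \eta \nabla^2\LL(\theta)$. Because $\nabla\LL$ is $L$-Lipschitz, the symmetric matrix $\nabla^2\LL$ has eigenvalues in $[-L, L]$, so for $\eta \in (0, L^{-1})$ the eigenvalues of $Dg(\theta)$ all lie in $(0, 2)$ and in particular $Dg(\theta)$ is invertible everywhere. By the inverse function theorem, $g$ is a local diffeomorphism.

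Next I would analyze the linearization at the strict saddle. At $\theta^*$ the Jacobian $Dg(\theta^*) = I - \eta \nabla^2\LL(\theta^*)$ has an eigenvalue $1 - \eta \lambda_{\min}(\nabla^2\LL(\theta^*)) > 1$, so $\theta^*$ is a hyperbolic-like fixed point with a nontrivial unstable direction. The Center-Stable Manifold Theorem (applied to the local $C^1$ diffeomorphism $g$ near $\theta^*$) then yields a local center-stable manifold $W^{cs}_{\mathrm{loc}}(\theta^*)$ of dimension equal to the number of eigenvalues of $Dg(\theta^*)$ with modulus $\leq 1$. Since at least one eigenvalue has modulus $> 1$, we get $\dim W^{cs}_{\mathrm{loc}}(\theta^*) < \dim \theta$, so this manifold is a Lebesgue-null subset of $\RR^{\dim\theta}$, and it contains every orbit that stays near $\theta^*$ in forward time; in particular every orbit converging to $\theta^*$ eventually lies in it.

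The final step is the measure-theoretic cleanup. Define $W = \{\theta_0 : \lim_k \theta_k = \theta^*\}$. Any such orbit eventually enters $W^{cs}_{\mathrm{loc}}(\theta^*)$, so $W \subseteq \bigcup_{k\geq 0} g^{-k}\bigl(W^{cs}_{\mathrm{loc}}(\theta^*)\bigr)$. Since $g$ is a local $C^1$ diffeomorphism, its local inverses are Lipschitz and thus send Lebesgue-null sets to Lebesgue-null sets; covering $\RR^{\dim\theta}$ by countably many such charts, each $g^{-k}\bigl(W^{cs}_{\mathrm{loc}}(\theta^*)\bigr)$ has $\mu$-measure zero, hence so does the countable union. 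Absolute continuity of $P_{\mathrm{init}}$ with respect to $\mu$ then gives $\PP(W) = 0$.

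The main obstacle is the invocation of the Center-Stable Manifold Theorem: one must check that $g$ satisfies the smoothness and invertibility hypotheses of the theorem (the $C^1$ local diffeomorphism property established above is essentially what is needed), and one must argue carefully that only a local center-stable manifold is required, so that global invertibility of $g$ is not needed. A secondary nuisance is the pullback argument, which requires using the local diffeomorphism property together with $\sigma$-compactness of $\RR^{\dim\theta}$ to push measure-zero through countably many $g^{-1}$ branches.
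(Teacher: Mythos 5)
Your proof is correct and follows essentially the same skeleton as the paper's: establish regularity of the GD map $g$, invoke the local Center-Stable Manifold Theorem at the strict saddle to get a positive-codimension invariant manifold near $\theta^*$, write the global stable set as a countable union of preimages of that manifold, and push measure zero through each preimage before finishing with absolute continuity of $P_{\mathrm{init}}$.

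The one place you genuinely diverge is in the regularity step. The paper proves that $g$ is a \emph{global} $C^1$ diffeomorphism (Proposition~\ref{prop:g_is_diffeomorphism}): injectivity via the Lipschitz bound $\|\theta-\theta'\|_2 \le \eta L\|\theta-\theta'\|_2$, surjectivity by exhibiting the preimage of any point as the minimizer of a strongly convex auxiliary functional, and $C^1$ inverse by the inverse function theorem. You observe, correctly, that nothing in the argument actually needs global invertibility: Theorem~\ref{thm:local_manifolds} as stated only requires a \emph{local} diffeomorphism, and the preimage-of-null-set argument only requires the Luzin $N^{-1}$ property, which any everywhere-nonsingular $C^1$ map enjoys. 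Your version therefore drops the surjectivity lemma and replaces the paper's ``$\mu(g^{-K}(\Theta^{sc}_{loc})) = \mu(\Theta^{sc}_{loc})$'' step (which is not literally an equality, since diffeomorphisms need not preserve Lebesgue measure, though the conclusion $=0$ is fine) with a cleaner chart-covering argument. This is a mild simplification rather than a different proof; the cost is that you must be explicit about second countability of $\RR^{\dim\theta}$ to cover the space by countably many charts, which you acknowledge. Both routes are sound; yours is a bit leaner, the paper's matches the original presentation in \cite{lee2016gradient} more closely.
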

    \begin{proof}
        The proof starts with the definition of global stable sets.
        Define a global stable set of a critical point as a set of initial conditions that lead to convergence to this critical point:
        \begin{equation}
            \Theta^s(\theta^*) =
            \{\theta_0: \; \lim_{k\to\infty} \theta_k = \theta^*\}.
        \end{equation}
        In order to prove the theorem it suffices to show that $\mu(\Theta^s(\theta^*)) = 0$.

        The proof relies on the following result of the theory of dynamical systems:
        \begin{theorem}
            \label{thm:local_manifolds}
            Let $0$ be a stable point of a local diffeomorphism $\phi: \; U \to E$, where $U$ is a vicinity of zero in a Banach space $E$.
            Suppose that $E = E_s \oplus E_u$, where $E_s$ is a span of eigenvectors that correspond to eigenvalues of $D\phi(0)$ less or equal to one, while $E_u$ is a span of eigenvectors that correspond to the eigenvalues greater than one.
            Then there exists a disk $\Theta^{sc}_{loc}$ tangent to $E_s$ at $0$ called the \emph{local stable center manifold.}
            Moreover, there exists a neighborhood $B$ of $0$, such that $\phi(\Theta^{sc}_{loc}) \cap B \subset \Theta^{sc}_{loc}$ and $\bigcap_{k=0}^\infty \phi^{-k}(B) \subset \Theta^{sc}_{loc}$.
        \end{theorem}
        In order to apply this theorem, we have to prove that $g$ is a diffeomorphism:
        \begin{prop}
            \label{prop:g_is_diffeomorphism}
            For $\eta \in (0,L^{-1})$ $g$ is a diffeomorphism.
        \end{prop}
        Given this, we apply the theorem above to $\phi(\theta) = g(\theta+\theta^*)$: its differential at zero is $D\phi(0) = I - \eta \nabla^2\LL(\theta^*)$.
        Since $\lambda_{min}(\nabla^2\LL(\theta^*)) < 0$, $\dim E_u > 0$, hence $\dim E_s < \dim \theta$.
        This means that $\mu(\Theta^{sc}_{loc}) = 0$.

        Let $B$ be a vicinity of zero promised by Theorem~\ref{thm:local_manifolds}.
        Given $\theta_0 \in \Theta^s(\theta^*)$, $\exists K \geq 0:$ $\forall k \geq K$ $\theta_k \in B$.
        Equivalently, $\forall l \geq 0$ $g^l(\theta_K) \in B$.
        Hence $\theta_K \in \bigcap_{l=0}^\infty g^{-l}(B) \subset \Theta^{sc}_{loc}$.
        This gives the following:
        \begin{equation}
            \Theta^s(\theta^*) \subseteq
            \bigcup_{K=0}^\infty g^{-K} (\Theta^{sc}_{loc}).
        \end{equation}
        The proof concludes by noting that $\mu(\Theta^s(\theta^*)) \leq \sum_{K=0}^\infty \mu(g^{-K} (\Theta^{sc}_{loc})) = \sum_{K=0}^\infty \mu(\Theta^{sc}_{loc}) = 0$ since $g$ is a diffeomorphism.
    \end{proof}

    \begin{proof}[Proof of Proposition~\ref{prop:g_is_diffeomorphism}]
        Being a diffeomorphism is equivalent to be injective, surjective, continuously differentiable, and having a continuously differentiable inverse.

        Suppose $g(\theta) = g(\theta')$.
        Then $\theta - \theta' = \eta(\nabla\LL(\theta') - \nabla\LL(\theta))$.
        Hence:
        \begin{equation}
            \|\theta - \theta'\|_2 =
            \eta \|\nabla\LL(\theta') - \nabla\LL(\theta)\|_2 \leq
            \eta L \|\theta - \theta'\|_2.
        \end{equation}
        Since $\eta < 1/L$, this implies $\theta = \theta'$.
        Hence $g$ is injective.

        Given some point $\theta_2$, we shall construct $\theta_1$ such that $\theta_2 = g(\theta_1)$.
        Consider:
        \begin{equation}
            h(\theta_{1,2}) =
            \frac{1}{2} \|\theta_1 - \theta_2\|_2^2 - \eta \LL(\theta_1).
        \end{equation}
        Note that $h(\theta_{1,2})$ is strongly convex with respect to $\theta_1$:
        \begin{equation}
            \lambda_{min}(\nabla_{\theta_1}^2 h(\theta_{1,2})) \geq
            1 - \eta L > 0.
        \end{equation}
        Hence it has a unique global minimizer which is a critical point:
        \begin{equation}
            0 =
            \nabla_{\theta_1} h(\theta_{1,2}) =
            \theta_1 - \theta_2 - \eta \nabla\LL(\theta_1).
        \end{equation}
        Hence a unique element of $\argmin_{\theta_1} h(\theta_{1,2})$ satisfies $\theta_2 = g(\theta_1)$.
        Hence $g$ is surjective.

        The fact that $g \in C^1(\RR^{\dim\theta})$ follows from the fact that $g(\theta) = \theta - \eta \nabla\LL(\theta)$ and $\LL \in C^2(\RR^{\dim\theta})$.
        By the virtue of the inverse function theorem, in order to prove that $g$ has $C^1$ inverse, it suffices to show that $g$ is itself $C^1$ and its jacobian is non-singular everywhere.
        The jacobian is given as $J g(\theta) = I - \eta \nabla^2\LL(\theta)$; hence its minimal eigenvalue $\geq 1 - \eta L > 0$ which means that the jacobian is non-singular.
        The latter statement concludes the proof that $g$ is a diffeomorphism.
    \end{proof}

    \subsection{Limitations of the result}

    Note that Theorem~\ref{thm:non_convergence_to_strict_saddles} applies in the following assumptions:
    \begin{enumerate}
        \item $\LL \in C^2$;
        \item $\nabla\LL$ is $L$-Lipschitz and $\eta \in (0,L^{-1})$;
        \item No gradient noise;
        \item The saddle point is strict;
        \item The saddle point is isolated.
    \end{enumerate}

    The first assumption is necessary to ensure that $g$ is a diffeomorphism.
    ReLU nets violate this assumption, and hence require a generalization of Theorem~\ref{thm:local_manifolds}.

    The second assumption is a standard assumption for the optimization literature.
    Note however that for, say, a quadratic loss, a network with at least one hidden layer results in a loss surface which is not globally Lipschitz.
    Fortunately, if we show that there exists a subset $S \in \RR^{\dim\theta}$ such that $g(S) \subseteq S$ and restrict initializations to this subset, one can substitite a global Lipschitzness requirement to local Lipschitzness in $S$; this is done in~\cite{panageas2017gradient}.
    Note that ReLU nets break gradient Lipschitzness anyway.

    A full-batch gradient descent is rare in practice; a typical procedure is a stochastic gradient descent which introduces a zero-centered noise to gradient updates.
    Existence of such noise pulls us away from the scope of the dynamical systems theory.
    Nevertheless, intuitively, this noise should help us to escape a stable manifold associated with a saddle point at hand.
    In turns out that the presence of noise allows one to have guarantees not only for convergence itself, but even for convergence rates: see e.g.~\cite{jin2017escape}.

    Strictness of saddle points is necessary to ensure that the second order information about the hessian of the loss is enough to identify $E_u$.
    We hypothesize that the generalization of Theorem~\ref{thm:local_manifolds} to high-order saddles is still possible (but out of the scope of the conventional dynamical systems theory).

    Note that Theorem~\ref{thm:non_convergence_to_strict_saddles} says essentially that we cannot converge to any \emph{a-priori given} saddle point.
    If the set of all saddle points is at most countable, this will imply that we cannot converge to any saddle points.
    However, if this set is uncountable, Theorem~\ref{thm:non_convergence_to_strict_saddles} does not guarantee that we do not converge to any of them.
    Moreover, e.g. for ReLU nets, there is a continuous family of weight-space symmetries that keep criticality (and hence keep negativity of the least eigenvalue of the hessian).
    Indeed, substituting $(W_{l+1},W_l)$ with $(\alpha^{-1} W_{l+1}, \alpha W_l)$ for any positive $\alpha$ keeps $f(x)$ unchanged.
    Moreover, if $\nabla_{l'} = 0$ $\forall l'$ then
    \begin{equation}
        \nabla_{l+1}^{(\alpha)} =
        g_{l+2} x_{l+1}^{(\alpha),T} =
        \alpha g_{l+2} x_{l+1}^T =
        \alpha \nabla_{l+1} =
        0,
    \end{equation}
    and all other $\nabla_{l'}^{(\alpha)} = 0$ by a similar reasoning.

    A generalization of Theorem~\ref{thm:non_convergence_to_strict_saddles} to non-isolated critical points is given in~\cite{panageas2017gradient}.
    Intuitively, if we have a manifold of strict saddle points, the global stable set associated with this manifold is still of measure zero due to the existence of the unstable manifold.
    Nevertheless, one have to again generalize Theorem~\ref{thm:local_manifolds}.

    \chapter{Generalization}

    The goal of learning is to minimize a population risk $R$ over some class of predictors $\FF$:
    \begin{equation}
        f^* \in 
        \Argmin_{f \in \FF} R(f),
    \end{equation}
    where $R(f) = \EE_{x,y\sim\DD} r(y, f(x))$; here $\DD$ is a data distribution and $r(y,z)$ is a risk; we shall assume that $r(y,z) \in [0,1]$.
    A typical notion of risk for binary classification problems is $0/1$-risk: $r_{0/1}(y,z) = [y z < 0]$, where the target $y \in \{-1,1\}$ and the logit $z \in \RR$; in this case $1 - R(f)$ is an accuracy of $f$.
    Since we do not have an access to the true data distribution $\DD$, we cannot minimize the true risk.
    Instead, we can hope to minimize an empirical risk $\hat R_m$ over a set of $m$ i.i.d. samples $S_m$ from distribution $\DD$:
    \begin{equation}
        \hat f_m \in 
        \Argmin_{f \in \FF} \hat R_m(f),
        \label{eq:emp_risk_minimization}
    \end{equation}
    where $\hat R_m(f) = \EE_{x,y\sim S_m} r(y, f(x))$.
    Since a risk function is typically non-convex and suffer from poor gradients, one cannot solve problem~(\ref{eq:emp_risk_minimization}) directly with gradient methods.
    A common solution is to consider a convex differentiable surrogate $\ell$ for a risk $r$, and substitute problem~(\ref{eq:emp_risk_minimization}) with a train loss minimization problem:
    \begin{equation}
        \hat f_m \in
        \Argmin_{f \in \FF} \hat \LL_m(f),
        \label{eq:train_loss_minimization}
    \end{equation}
    where $\hat\LL_m(f) = \EE_{x,y\sim S_m} \ell(y, f(x))$; this problem can be attacked directly with gradient methods.

    Unfortunately, it is hard to obtain any guarantees for finding solutions even for problem~(\ref{eq:train_loss_minimization}).
    Nevertheless, suppose we have a learning algorithm $\AA$ that takes a dataset $S_m$ and outputs a model $\hat f_m$.
    This algorithm may aim to solve problem~(\ref{eq:train_loss_minimization}) or to tackle problem~(\ref{eq:emp_risk_minimization}) directly, but its purpose does not matter; what matters is the fact that it conditions a model $\hat f_m$ on a dataset $S_m$.
    Our goal is to upper-bound some divergence of $\hat R_m(\hat f_m)$ with respect to $R(\hat f_m)$.
    Since the dataset $S_m$ is random, $\hat f_m$ is also random, and the bound should have some failure probability $\delta$ with respect to $S_m$.

    \section{Uniform bounds}

    First of all, note that $R(f) = \EE_{S_m \sim \DD^m} \hat R_m(f)$.
    This fact suggests applying the Hoeffding's inequality for upper-bounding $\hat R_m(f) - R(f)$:
    \begin{theorem}[Hoeffding's inequality \cite{hoeffding1963probability}]
        Let $X_{1:m}$ be i.i.d. random variables supported on $[0,1]$.
        Then, given $\epsilon > 0$,
        \begin{equation}
            \PP\left(\sum_{i=1}^m X_i - \EE \sum_{i=1}^m X_i \geq \epsilon \right) \leq
            e^{-\frac{2\epsilon^2}{m}},
            \qquad
            \PP\left(\EE \sum_{i=1}^m X_i - \sum_{i=1}^m X_i \geq \epsilon \right) \leq
            e^{-\frac{2\epsilon^2}{m}}.
        \end{equation}
    \end{theorem}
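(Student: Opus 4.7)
The plan is to apply the standard Chernoff bound approach: exponentiate, use Markov's inequality, exploit independence to factorize, control each factor by Hoeffding's lemma, then optimize the free parameter. Concretely, I would first fix $\epsilon > 0$ and an arbitrary $s > 0$ and write
\begin{equation}
    \PP\left(\sum_{i=1}^m (X_i - \EE X_i) \geq \epsilon\right)
    = \PP\left(e^{s \sum_{i=1}^m (X_i - \EE X_i)} \geq e^{s\epsilon}\right)
    \leq e^{-s\epsilon} \prod_{i=1}^m \EE e^{s(X_i - \EE X_i)},
\end{equation}
where the inequality is Markov and the product form uses mutual independence.

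Next I would establish Hoeffding's lemma: for any zero-mean $Y$ supported on $[a,b]$ one has $\EE e^{sY} \leq e^{s^2(b-a)^2/8}$. The route I have in mind is to use convexity of $y \mapsto e^{sy}$ to dominate it on $[a,b]$ by the linear interpolation between its endpoint values, then take expectation; this reduces the problem to bounding a deterministic function $\psi(s) = \log(\tfrac{b}{b-a} e^{sa} - \tfrac{a}{b-a} e^{sb})$. Differentiating $\psi$ twice and bounding $\psi''(s) \leq (b-a)^2/4$ uniformly, followed by Taylor expansion at $s=0$ (where $\psi(0) = \psi'(0) = 0$), yields the lemma. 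Applied to $Y_i = X_i - \EE X_i$, which lies in an interval of length at most $1$, the lemma gives $\EE e^{sY_i} \leq e^{s^2/8}$.

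Substituting back yields $\PP(\cdot) \leq e^{-s\epsilon + ms^2/8}$. Minimizing the exponent over $s > 0$ gives the optimal $s^* = 4\epsilon/m$, at which the exponent becomes exactly $-2\epsilon^2/m$, matching the claimed bound. The lower-tail inequality follows by applying the upper-tail bound just proved to the i.i.d. variables $\tilde X_i = 1 - X_i$, which also lie in $[0,1]$ and satisfy $\sum (\EE \tilde X_i - \tilde X_i) = \sum(X_i - \EE X_i)$ up to a sign.

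The only nontrivial step is Hoeffding's lemma; the rest is mechanical. In particular, the constant $1/8$ in the lemma is what produces the factor of $2$ in the final exponent $-2\epsilon^2/m$, so the bound $\psi''(s) \leq (b-a)^2/4$ must be obtained carefully — it is tight and essentially equivalent to the observation that the variance of a distribution on $[a,b]$ is at most $(b-a)^2/4$, attained by the symmetric two-point distribution on the endpoints.
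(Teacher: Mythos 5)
Your proof is correct. The paper simply cites Hoeffding's inequality to \cite{hoeffding1963probability} without providing its own proof, so there is no paper argument to compare against; your argument is the standard one and would serve as a complete proof if one were to be included. The Chernoff exponentiation, Markov's inequality, factorization by independence, Hoeffding's lemma with the sharp constant $1/8$ (which you correctly identify as the crux, tied to the variance bound $(b-a)^2/4$ for distributions on an interval), and optimization over $s$ giving $s^* = 4\epsilon/m$ and exponent $-2\epsilon^2/m$ are all correct, as is the reduction of the lower-tail bound to the upper-tail one via $\tilde X_i = 1 - X_i$. It is worth noting that the paper does separately state Markov's inequality, the Chernoff bound, and Hoeffding's lemma elsewhere in the text, so the ingredients you invoke are consistent with the paper's toolkit even though they are not assembled there into a proof of this theorem.
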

    This gives us the following:
    \begin{equation}
        \PP(R(f) - \hat R_m(f) \geq \epsilon) \leq
        e^{-2 m \epsilon^2}
        \quad
        \forall \epsilon > 0
        \quad
        \forall f \in \FF.
    \end{equation}
    Hence for any $f \in \FF$,
    \begin{equation}
        R(f) - \hat R_m(f) \leq
        \sqrt{\frac{1}{2m} \log \frac{1}{\delta}}
        \quad
        \text{w.p. $\geq 1 - \delta$ over $S_m$.}
    \end{equation}
    However, the bound above does not make sense since $f$ there is given a-priori and does not depend on $S_m$.
    Our goal is to bound the same difference but with $f = \hat f_m$.
    The simplest way to do this is to upper-bound this difference uniformly over $\FF$:
    \begin{equation}
        R(\hat f_m) - \hat R_m(\hat f_m) \leq
        \sup_{f\in\FF} (R(f) - \hat R_m(f)).        
    \end{equation}

    \paragraph{A note on the goodness of uniform bounds.}

    One may worry about how large the supremum over $\FF$ can be.
    If the model class $\FF$ includes a "bad model" which has low train error for a given $S_m$ but large true error, the bound becomes too pessimistic.
    Unfortunately, in the case of realistic neural nets, one can explicitly construct such a bad model.
    For instance, for a given $S_m$ consider $\hat f_{m,m'} = \AA(S_m \cup \bar S_{m'})$ with $\bar S_{m'}$ being a dataset with random labels --- it is independent on $S_m$ and taken in advance.
    For $m' \gg m$, $\hat f_{m,m'} \approx \AA(\bar S_{m'})$ --- a model learned on random labels; see~\cite{zhang2016understanding}.
    Hence for binary classification with balanced classes $R(\hat f_{m,m'}) \approx 0.5$, while $\hat R_m(\hat f_{m,m'}) \approx 0$ whenever the algorithm is able to learn the data perfectly, which is empirically the case for gradient descent applied to realistic neural nets.

    Nevertheless, taking $\FF$ to be a set of all models realizable with a given architecture is not necessary.
    Indeed, assume the data lies on a certain manifold: $\supp \DD \subseteq \MM$.
    Then for sure, $\hat f_m \in \AA(\MM^m)$.
    Taking $\FF = \AA(\MM^m)$ ensures that $\FF$ contains only those models that are realizable by our algorithm on realistic data --- this excludes the situation discussed above.
    One can hope then that if our learning algorithm is good for any realistic data, the bound will be also good.
    The problem then boils to upper-bounding the supremum as well as possible.

    Unfortunately, bounding the supremum for $\FF = \AA(\MM^m)$ is problematic since it requires taking the algorithm dynamics into account, which is complicated for gradient descent applied to neural nets.
    As a trade-off, one can consider some larger $\FF \supseteq \AA(\MM^m)$, for which the supremum can be upper-bounded analytically.

    \subsection{Upper-bounding the supremum}

    When $\FF$ is finite, we can still apply our previous bound:
    \begin{multline}
        \PP\left(\sup_{f\in\FF} (R(f) - \hat R_m(f)) \geq \epsilon\right) =
        \PP(\exists f\in\FF: \; (R(f) - \hat R_m(f)) \geq \epsilon) \leq\\\leq
        \sum_{f\in\FF} \PP(R(f) - \hat R_m(f) \geq \epsilon) \leq
        |\FF| e^{-2 m \epsilon^2}
        \quad
        \forall \epsilon > 0.
    \end{multline}
    Hence
    \begin{equation}
        \sup_{f\in\FF} (R(f) - \hat R_m(f)) \leq
        \sqrt{\frac{1}{2m} \left(\log \frac{1}{\delta} + \log |\FF| \right)}
        \quad
        \text{w.p. $\geq 1 - \delta$ over $S_m$.}
    \end{equation}
    In the case when $\FF$ is infinite, we can rely on a certain generalization of Hoeffding's inequality:
    \begin{theorem}[McDiarmid's inequality \cite{mcdiarmid1989method}]
        Let $X_{1:m}$ be i.i.d. random variables and $g$ is a scalar function of $m$ arguments such that
        \begin{equation}
            \sup_{x_{1:m}, \hat x_i} |g(x_{1:m}) - g(x_{1:i-1}, \hat x_i, x_{i+1,m})| \leq c_i
            \quad
            \forall i \in [m].
        \end{equation}
        Then, given $\epsilon > 0$,
        \begin{equation}
            \PP\left( g(X_{1:m}) - \EE g(X_{1:m}) \geq \epsilon \right) \leq
            e^{-\frac{2\epsilon^2}{\sum_{i=1}^m c_i^2}}.
        \end{equation}
    \end{theorem}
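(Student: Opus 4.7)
The plan is to follow the standard Doob-martingale argument, reducing McDiarmid's inequality to Azuma--Hoeffding. First I would center the random variable by writing $Z = g(X_{1:m}) - \EE g(X_{1:m})$ and apply the Chernoff bound: for any $t > 0$,
\begin{equation}
    \PP(Z \geq \epsilon) \leq e^{-t\epsilon} \EE e^{tZ}.
\end{equation}
The rest of the proof is devoted to controlling $\EE e^{tZ}$, and this is where the bounded-differences assumption enters.

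The next step is to decompose $Z$ into a sum of martingale differences via the Doob construction. I would define $Z_i = \EE[g(X_{1:m}) \mid X_{1:i}] - \EE[g(X_{1:m}) \mid X_{1:i-1}]$ for $i \in [m]$, so that $Z = \sum_{i=1}^m Z_i$ and $\{Z_i\}$ is a martingale difference sequence with respect to the filtration generated by $X_{1:i}$. The key technical step is to show that, conditional on $X_{1:i-1}$, the random variable $Z_i$ takes values in some interval of length at most $c_i$. This reduces to writing $Z_i$ as a function of $X_i$ obtained by integrating $g$ with respect to the i.i.d. law of $X_{i+1:m}$, and then comparing the values at $X_i$ versus an independent copy $\hat X_i$: the bounded-differences hypothesis, applied pointwise inside the integral, gives exactly the required $c_i$ bound on the oscillation.

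Given this conditional boundedness, I would invoke Hoeffding's lemma: for a mean-zero random variable $W$ taking values in $[a,b]$, $\EE e^{tW} \leq e^{t^2(b-a)^2/8}$. Applied conditionally, this yields $\EE[e^{tZ_i} \mid X_{1:i-1}] \leq e^{t^2 c_i^2/8}$. Iterating via the tower property, peeling off one factor at a time from $i=m$ down to $i=1$, gives
\begin{equation}
    \EE e^{tZ} \leq \exp\!\left(\frac{t^2}{8} \sum_{i=1}^m c_i^2 \right).
\end{equation}
Substituting back into the Chernoff bound and optimizing over $t$ with the choice $t = 4\epsilon / \sum_i c_i^2$ produces the desired exponent $-2\epsilon^2 / \sum_i c_i^2$.

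The main obstacle is the careful verification of conditional boundedness of $Z_i$: one must be attentive to measurability and exploit the i.i.d.\ structure of $X_{i+1:m}$ so that replacing $X_i$ by an independent copy $\hat X_i$ inside the conditional expectation leaves the distribution of the remaining coordinates unchanged. Once this point is dispatched, the remainder is a mechanical combination of Hoeffding's lemma, the tower property, and Chernoff optimization.
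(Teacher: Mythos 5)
Your proposal is the standard and correct Doob-martingale/Azuma--Hoeffding proof of McDiarmid's inequality, and the computations check out (the Hoeffding-lemma exponent $t^2 c_i^2/8$, the tower-property peeling, and the Chernoff optimization at $t = 4\epsilon/\sum_i c_i^2$ all combine to give the stated $-2\epsilon^2/\sum_i c_i^2$). Note, however, that the paper does not actually supply a proof of this theorem: it is stated as a cited classical result and then applied, so there is nothing in the paper to compare your argument against. Your sketch is sound; the only point worth flagging is that the step you yourself identify as the main obstacle --- the conditional boundedness of $Z_i$ in an interval of length $c_i$ --- is indeed the crux, and it relies only on the independence of the $X_j$ (so that conditioning on $X_{1:i}$ leaves the law of $X_{i+1:m}$ unchanged when you form $h_i(x_{1:i}) = \EE_{X_{i+1:m}} g(x_{1:i}, X_{i+1:m})$ and compare $h_i(x_{1:i-1}, x_i)$ with $h_i(x_{1:i-1}, \hat x_i)$ pointwise under the integral), not on identical distribution; the paper's ``i.i.d.'' hypothesis is therefore slightly stronger than what your argument uses, which is fine.
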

    Applying this inequality to $g(\{(x_i,y_i)\}_{i=1}^m) = \sup_{f \in \FF}(R(f) - \hat R_m(f))$ gives:
    \begin{equation}
        \PP_{S_m} \left(\sup_{f \in \FF}(R(f) - \hat R_m(f)) - \EE_{S'_m} \sup_{f \in \FF}(R(f) - \hat R'_m(f)) \geq \epsilon\right) \leq
        e^{-2 m \epsilon^2},
    \end{equation}
    which is equivalent to:
    \begin{equation}
        \sup_{f \in \FF}(R(f) - \hat R_m(f)) \leq
        \EE_{S'_m} \sup_{f \in \FF}(R(f) - \hat R'_m(f)) + \sqrt{\frac{1}{2m} \log \frac{1}{\delta}}
        \quad
        \text{w.p. $\geq 1 - \delta$ over $S_m$.}
        \label{eq:mcdiarmid_bound_for_sup}
    \end{equation}

    Let us upper-bound the expectation:
    \begin{multline}
        \EE_{S'_m} \sup_{f \in \FF}(R(f) - \hat R'_m(f)) =
        \EE_{S'_m} \sup_{f \in \FF}(\EE_{S''_m} \hat R''_m(f) - \hat R'_m(f)) \leq
        \EE_{S'_m} \EE_{S''_m} \sup_{f \in \FF}(\hat R''_m(f) - \hat R'_m(f)) =
        \\=
        \EE_{S'_m} \EE_{S''_m} \sup_{f \in \FF} \left(\frac{1}{m} \sum_{i=1}^m (r(y''_i,f(x''_i)) - r(y'_i,f(x'_i)))\right) =
        \EE_{S'_m} \EE_{S''_m} \sup_{f \in \FF} \left(\frac{1}{m} \sum_{i=1}^m (r''_i(f) - r'_i(f))\right) =
        \\=
        \EE_{S'_m} \EE_{S''_m} \EE_{\sigma_m \sim \{-1,1\}^m} \sup_{f \in \FF} \left(\frac{1}{m} \sum_{i=1}^m \sigma_i (r''_i(f) - r'_i(f))\right) \leq
        \\\leq
        \EE_{S'_m} \EE_{S''_m} \EE_{\sigma_m \sim \{-1,1\}^m} \sup_{f \in \FF} \left|\frac{1}{m} \sum_{i=1}^m \sigma_i (r''_i(f) - r'_i(f))\right| =
        \\\leq
        \EE_{S'_m} \EE_{S''_m} \EE_{\sigma_m \sim \{-1,1\}^m} \sup_{f \in \FF} \left(\left|\frac{1}{m} \sum_{i=1}^m \sigma_i r''_i(f)\right| + \left|\frac{1}{m} \sum_{i=1}^m \sigma_i r'_i(f)\right|\right) =
        \\=
        2 \EE_{S'_m} \EE_{\sigma_m \sim \{-1,1\}^m} \sup_{f \in \FF} \left|\frac{1}{m} \sum_{i=1}^m \sigma_i r(y'_i,f(x'_i))\right| =
        2 \EE_{S'_m} \Rad{r \circ \FF}{S'_m},
    \end{multline}
    where we have defined a function class $r \circ \FF$ such that $\forall h \in r \circ \FF$ $h(x,y) = r(y,f(x))$, and the Rademacher complexity of a class $\HH$ of functions supported on $[0,1]$ conditioned on a dataset $z_{1:m}$:
    \begin{equation}
        \Rad{\HH}{z_{1:m}} =
        \EE_{\sigma_{1:m} \sim \{-1,1\}^m} \sup_{h \in \HH} \left|\frac{1}{m} \sum_{i=1}^m \sigma_i h(z_i)\right|.
    \end{equation}

    \subsection{Upper-bounding the Rademacher complexity}

    \subsubsection{The case of zero-one risk}

    Consider $r(y,z) = r_{0/1}(y,z) = [y z < 0]$.
    In this case we have:
    \begin{multline}
        \Rad{r \circ \FF}{S_m} =
        \EE_{\sigma_m} \sup_{f \in \FF} \left|\frac{1}{m} \sum_{i=1}^m \sigma_i r_i(f) \right| =
        \EE_{\sigma_m} \max_{f \in \FF_{S_m}} \left|\frac{1}{m} \sum_{i=1}^m \sigma_i r_i(f) \right| =
        \\=
        \frac{1}{m s} \log \exp \left(s \EE_{\sigma_m} \max_{f \in \FF_{S_m}} \left|\sum_{i=1}^m \sigma_i r_i(f) \right|\right) \leq
        \frac{1}{m s} \log \EE_{\sigma_m} \exp \left(s \max_{f \in \FF_{S_m}} \left|\sum_{i=1}^m \sigma_i r_i(f) \right|\right) =
        \\=
        \frac{1}{m s} \log \EE_{\sigma_m} \exp \left(s \max_{h \in r \circ \FF_{S_m} \cup (-r) \circ \FF_{S_m}} \sum_{i=1}^m \sigma_i h_i \right) \leq
        \frac{1}{m s} \log \sum_{h \in r \circ \FF_{S_m} \cup (-r) \circ \FF_{S_m}} \EE_{\sigma_m} \exp \left(s \sum_{i=1}^m \sigma_i h_i \right) \leq
        \\\leq
        \frac{1}{m s} \log \sum_{h \in r \circ \FF_{S_m} \cup (-r) \circ \FF_{S_m}} e^{\frac{m s^2}{2}} =
        \frac{1}{m s} \log \left(2 |\FF_{S_m}| e^{\frac{m s^2}{2}}\right) =
        \frac{1}{m s} \log (2 |\FF_{S_m}|) + \frac{s}{2},
        \label{eq:rad_bound_zero-one}
    \end{multline}
    where $s$ is any positive real number and $\FF_{S_m}$ is an equivalence class of functions from $\FF$, where two functions are equivalent iff their images on $S_m$ have identical signs; note that this class is finite: $|\FF_{S_m}| \leq 2^m$.
    We have also used the following lemma:
    \begin{lemma}[Hoeffding's lemma \cite{hoeffding1963probability}]
        Let $X$ be a random variable a.s.-supported on $[a,b]$ with zero mean.
        Then, for any positive real $s$,
        \begin{equation}
            \EE e^{sX} \leq
            e^{\frac{(b-a)^2 s^2}{8}}.
        \end{equation}
    \end{lemma}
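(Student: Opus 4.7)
The plan is to reduce the statement to a deterministic one-variable inequality via convexity, and then control the resulting log-moment generating function by its Taylor expansion. Since $\EE X = 0$ forces $a \leq 0 \leq b$, I write each realization as the convex combination $X = \lambda a + (1-\lambda) b$ with $\lambda = (b-X)/(b-a) \in [0,1]$. Convexity of $x \mapsto e^{sx}$ yields $e^{sX} \leq \lambda e^{sa} + (1-\lambda) e^{sb}$, and taking expectations (using $\EE \lambda = b/(b-a)$) gives
\begin{equation}
    \EE e^{sX} \leq
    \frac{b}{b-a} e^{sa} - \frac{a}{b-a} e^{sb} =: g(s).
\end{equation}
It then suffices to prove the purely deterministic inequality $\log g(s) \leq (b-a)^2 s^2 / 8$ for all real $s$.

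Next, I would analyze $\phi(s) = \log g(s)$ via its Taylor expansion at zero. Clearly $\phi(0) = \log 1 = 0$. To handle the first derivative, I would factor out $e^{sa}$ and rewrite $\phi(s) = s a + \log\bigl((b - a e^{s(b-a)})/(b-a)\bigr)$; direct differentiation at $s=0$ gives $\phi'(0) = a - a = 0$. Taylor's theorem then yields $\phi(s) = \frac{1}{2} s^2 \phi''(\xi)$ for some $\xi$ between $0$ and $s$, so the whole problem reduces to the uniform estimate $\phi''(s) \leq (b-a)^2/4$.

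The crux, and the only step that needs care, is this second derivative bound. I would introduce the ``tilted mass'' $p(s) = -a e^{s(b-a)} / (b - a e^{s(b-a)}) \in [0,1]$ together with its complement $1 - p(s) = b / (b - a e^{s(b-a)})$; a direct computation then shows
\begin{equation}
    \phi''(s) = (b-a)^2 \, p(s)(1-p(s)),
\end{equation}
which is simply the variance of the two-point distribution putting mass $p(s)$ at $b$ and $1-p(s)$ at $a$. The elementary inequality $p(1-p) \leq 1/4$ on $[0,1]$ gives $\phi''(s) \leq (b-a)^2/4$, and feeding this into the Taylor formula produces $\EE e^{sX} \leq e^{s^2(b-a)^2/8}$, as required. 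The main obstacle is just the bookkeeping in the computation of $\phi''(s)$; once it is recast in the form $(b-a)^2 p(1-p)$ the probabilistic interpretation as a tilted Bernoulli variance makes the final bound transparent, and no machinery beyond elementary calculus and convexity is needed.
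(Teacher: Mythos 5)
The paper cites Hoeffding's lemma (to \cite{hoeffding1963probability}) without giving its own proof, so there is nothing in the source to compare against line by line. Your argument is the standard one and it is correct: the convexity step, the reduction to the one-variable function $\phi(s) = \log g(s)$ with $\phi(0)=\phi'(0)=0$, and the identity $\phi''(s) = (b-a)^2\,p(s)(1-p(s)) \le (b-a)^2/4$ via Taylor's theorem are all sound. Two small remarks: (i) the degenerate cases $a=b$, $a=0$, or $b=0$ force $X=0$ a.s. and should be disposed of up front so that the divisions by $b-a$ and the positivity of $b-ae^{s(b-a)}$ are legitimate; (ii) the lemma as stated only asks for $s>0$, but your argument in fact covers all real $s$, which is what one needs when combining left and right tail bounds.
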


    Since the upper bound~(\ref{eq:rad_bound_zero-one}) is valid for any $s > 0$, we can minimize it with respect to $s$.
    One can easily deduce that the optimal $s$ is $\sqrt{(2 / m) \log (2 |\FF_{S_m}|)}$; plugging it into eq.~(\ref{eq:rad_bound_zero-one}) gives:
    \begin{equation}
        \Rad{r \circ \FF}{S_m} \leq
        \sqrt{\frac{2}{m} \log (2 |\FF_{S_m}|)}.
        \label{eq:rad_bound_zero-one_final}
    \end{equation}
    Define $\Pi_\FF(m) = \max_{S_m} |\FF_{S_m}|$ --- a growth function of a function class $\FF$.
    The growth function shows how many distinct labelings a function class $\FF$ induces on datasets of varying sizes.
    Obviously, $\Pi_\FF(m) \leq 2^m$ and $\Pi_\FF(m)$ is monotonically non-increasing.
    We say "$\FF$ shatters $S_m$" whenever $|\FF_{S_m}| = 2^m$.
    Define a VC-dimension~\cite{vapnik1971} as a maximal $m$ for which $\FF$ shatters any $S_m$:
    \begin{equation}
        \VC{\FF} =
        \max \{m: \; \Pi_\FF(m) = 2^m\}.
    \end{equation}
    One can relate a growth function with a VC-dimension using the following lemma:
    \begin{lemma}[Sauer's lemma \cite{sauer1972density}]
        $\Pi_\FF(m) \leq \sum_{k=0}^{\VC{\FF}} \binom{m}{k}$.
    \end{lemma}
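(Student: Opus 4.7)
The plan is to prove the bound by induction on $m$, carried out simultaneously over all function classes (so that the inductive hypothesis applies to every auxiliary class we construct, regardless of its VC-dimension). The base case $m=0$ gives $\Pi_\FF(0)=1=\binom{0}{0}$ irrespective of $\VC{\FF}$, and trivially one can also handle $\VC{\FF}=0$ separately: then $\FF$ induces at most one labeling on any $S_m$, matching $\binom{m}{0}=1$.

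For the inductive step I will fix an arbitrary $S_m=\{z_1,\ldots,z_m\}$, treat $\FF_{S_m}$ as a subset of $\{-1,+1\}^m$, and decompose it according to the value at $z_m$. Concretely, let $\FF^-\subseteq\{-1,+1\}^{m-1}$ be the set of labelings that $\FF$ induces on $\{z_1,\ldots,z_{m-1}\}$ (equivalently, $\FF_{S_{m-1}}$), and let $\FF^+\subseteq\FF^-$ consist of those $\sigma$ for which \emph{both} completions $(\sigma,+1)$ and $(\sigma,-1)$ appear in $\FF_{S_m}$. Since each labeling in $\FF_{S_m}$ is counted once in $\FF^-$, with an extra copy precisely when its $(m-1)$-prefix lies in $\FF^+$, one obtains the bookkeeping identity $|\FF_{S_m}|=|\FF^-|+|\FF^+|$.

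The crux is the shattering-transfer claim $\VC{\FF^+}\leq \VC{\FF}-1$: whenever $\FF^+$ shatters a subset $T\subseteq\{z_1,\ldots,z_{m-1}\}$, each labeling of $T$ realized by $\FF^+$ extends to $T\cup\{z_m\}$ in both possible ways at $z_m$ (by definition of $\FF^+$), so $\FF$ shatters $T\cup\{z_m\}$, forcing $|T|\leq \VC{\FF}-1$. Combined with the trivial $\VC{\FF^-}\leq \VC{\FF}$, the inductive hypothesis applied to both auxiliary classes on the $m-1$ remaining points yields $|\FF^-|\leq \sum_{k=0}^{d}\binom{m-1}{k}$ and $|\FF^+|\leq \sum_{k=0}^{d-1}\binom{m-1}{k}$ where $d=\VC{\FF}$. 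Adding and telescoping via Pascal's identity $\binom{m-1}{k}+\binom{m-1}{k-1}=\binom{m}{k}$ gives $|\FF_{S_m}|\leq \sum_{k=0}^{d}\binom{m}{k}$; since $S_m$ was arbitrary, taking the maximum delivers the bound on $\Pi_\FF(m)$.

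The main obstacle is the shattering-transfer step: one must be careful to regard $\FF^-$ and $\FF^+$ as bona fide function classes on the reduced domain $\{z_1,\ldots,z_{m-1}\}$ (even though $\FF^+$ is defined by a relation involving $z_m$), and verify that a $T$ shattered by $\FF^+$ really induces a shattering of $T\cup\{z_m\}$ by the \emph{original} class $\FF$. Once this is settled, the decomposition $|\FF_{S_m}|=|\FF^-|+|\FF^+|$ and the Pascal identity are pure combinatorial bookkeeping.
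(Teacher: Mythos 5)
Your proof is correct; it is the standard Sauer--Shelah inductive argument (induct on $m$ over all classes, split $\FF_{S_m}$ into $\FF^-$ and $\FF^+$ according to agreement at the last point, transfer shattering from $\FF^+$ to $\FF$ to shave one off the VC-dimension, then telescope with Pascal's identity). The paper itself does not prove this lemma --- it states it and cites Sauer (1972) --- so there is no in-paper proof to compare against; your argument would be a perfectly adequate supplement. Two small notes on rigor, both of which you already anticipate: (i) the double-counting identity $|\FF_{S_m}|=|\FF^-|+|\FF^+|$ relies on the observation that $\FF^-$ equals the projection of $\FF_{S_m}$ to the first $m-1$ coordinates, so each $\sigma\in\FF^-$ accounts for either one or two members of $\FF_{S_m}$ with the "two" case occurring precisely when $\sigma\in\FF^+$; and (ii) when $\VC{\FF}=0$ the expression $\VC{\FF^+}\le \VC{\FF}-1$ is degenerate, so handling that case separately (as you do) is necessary, not merely a stylistic choice.
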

    Now we need to express the asymptotic behavior as $m\to\infty$ in a convenient way.
    Let $d = \VC{\FF}$.
    For $m \leq d$ $\Pi_\FF(m) = 2^m$; consider $m > d$:
    \begin{equation}
        \Pi_\FF(m) \leq
        \sum_{k=0}^{d} \binom{m}{k} \leq
        \left(\frac{m}{d}\right)^{d} \sum_{k=0}^{d} \binom{m}{k} \left(\frac{d}{m}\right)^{k} \leq
        \left(\frac{m}{d}\right)^{d} \sum_{k=0}^{m} \binom{m}{k} \left(\frac{d}{m}\right)^{k} =
        \left(\frac{m}{d}\right)^{d} \left(1 + \frac{d}{m}\right)^m \leq
        \left(\frac{e m}{d}\right)^d.
    \end{equation}
    Substituting it into~(\ref{eq:rad_bound_zero-one_final}) gives the final bound:
    \begin{equation}
        \Rad{r \circ \FF}{S_m} \leq
        \sqrt{\frac{2}{m} \left(\log 2 + \VC{\FF} \left(1 + \log m - \log \VC{\FF}\right)\right)} =
        \Theta_{m\to\infty}\left(\sqrt{2 \VC{\FF} \frac{\log m}{m}}\right).
    \end{equation}
    Hence for the bound to be non-vacuous having $\VC{\FF} < m/(2\log m)$ is necessary.
    According to \cite{bartlett2019nearly}, whenever $\FF$ denotes a set of all models realizable by a fully-connected network of width $U$ with $W$ parameters, $\VC{\FF} = \Theta(WU)$.
    While the constant is not give here, this results suggests that the corresponding bound will be vacuous for realistic nets trained on realistic datasets since $W \gg m$ there.

    \subsubsection{The case of a margin risk}

    Suppose now $r$ is a $\gamma$-margin risk: $r(y,z) = r_\gamma(y,z) = [y z < \gamma]$.
    In this case we can bound the true $0/1$-risk as:
    \begin{equation}
        R_{0/1}(\hat f_m) \leq
        R_\gamma(\hat f_m) \leq
        \hat R_{m,\gamma}(\hat f_m) + 2 \EE_{S_m'} \Rad{r_\gamma \circ \FF}{S_m'} + \sqrt{\frac{1}{2m} \log \frac{1}{\delta}}
        \quad
        \text{w.p. $\geq 1 - \delta$ over $S_m$.}
    \end{equation}
    Here we have a trade-off between the train risk and the Rademacher complexity: as $\gamma$ grows larger the former term grows too, but the latter one vanishes.
    One can hope that a good enough model $\hat f_m$ should be able to classify the dataset it was trained on with a sufficient margin, i.e. $\hat R_{m,\gamma}(\hat f_m) \approx 0$ for large enough $\gamma$.

    In the case of a margin loss, a Rademacher complexity is upper-bounded with covering numbers:
    \begin{equation}
        \NN_p(\HH, \epsilon, S_m) =
        \inf_{\bar\HH \subseteq \HH} \left\{ |\bar\HH|: \; \forall h \in \HH \; \exists \bar h \in \bar\HH: \; \left(\sum_{k=1}^m |h(z_k) - \bar h(z_k)|^p\right)^{1/p} < \epsilon \right\}.
    \end{equation}
    Note that $\NN_p(\HH, \epsilon, S_m)$ grows as $\epsilon \to 0$ and decreases with $p$.
    There are several statements that link the Rademacher complexity with covering numbers; we start with a simple one:
    \begin{theorem}
        Suppose $\HH$ is a class of hypotheses supported on $[0,1]$.
        Then $\forall S_m$ and $\forall \epsilon > 0$,
        \begin{equation}
            \Rad{\HH}{S_m} \leq
            \frac{\epsilon}{m} + \sqrt{\frac{2}{m} \log(2 \NN_1(\HH, \epsilon, S_m))}.
        \end{equation}
    \end{theorem}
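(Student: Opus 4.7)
The plan is to combine two standard ingredients: approximate $\HH$ by a finite $\ell_1$-cover $\bar\HH$ of size $\NN_1(\HH,\epsilon,S_m)$, pay the cover radius as an additive error term, and then bound the Rademacher complexity of the finite class $\bar\HH$ by the Massart-style argument already used above for~(\ref{eq:rad_bound_zero-one_final}).

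First I would fix $\epsilon > 0$ and pick a minimal $\ell_1$-cover $\bar\HH \subseteq \HH$ of $\HH$ on $S_m$, so that $|\bar\HH| = \NN_1(\HH,\epsilon,S_m)$ and for every $h \in \HH$ there exists $\pi(h) \in \bar\HH$ with $\sum_{i=1}^m |h(z_i) - \pi(h)(z_i)| < \epsilon$. For any Rademacher sign vector $\sigma \in \{-1,1\}^m$ the triangle inequality gives
\begin{equation}
    \left|\sum_{i=1}^m \sigma_i h(z_i)\right|
    \leq \left|\sum_{i=1}^m \sigma_i \pi(h)(z_i)\right| + \sum_{i=1}^m |h(z_i) - \pi(h)(z_i)|
    < \left|\sum_{i=1}^m \sigma_i \pi(h)(z_i)\right| + \epsilon.
\end{equation}
Taking the supremum over $h \in \HH$, dividing by $m$, and taking expectation over $\sigma$ yields
\begin{equation}
    \Rad{\HH}{S_m} \leq \frac{\epsilon}{m} + \EE_\sigma \sup_{\bar h \in \bar\HH} \left|\frac{1}{m}\sum_{i=1}^m \sigma_i \bar h(z_i)\right|.
\end{equation}

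The remaining step is to bound the Rademacher complexity of the \emph{finite} class $\bar\HH$. I would repeat the chain of inequalities used in the derivation of~(\ref{eq:rad_bound_zero-one}): symmetrize to $\bar\HH \cup (-\bar\HH)$, whose cardinality is $2\NN_1(\HH,\epsilon,S_m)$, in order to discard the absolute value; apply Jensen's inequality with a free parameter $s>0$ to swap $\sup$ and $\exp$; upper-bound the supremum inside the expectation by a sum (union bound); and use Hoeffding's lemma on each centered bounded variable $\sigma_i \bar h(z_i) \in [-1,1]$ to obtain an MGF bound $\leq e^{ms^2/2}$ per term. Optimizing over $s>0$ (the minimizer is $s = \sqrt{2\log(2\NN_1)/m}$, as before) gives
\begin{equation}
    \EE_\sigma \sup_{\bar h \in \bar\HH} \left|\frac{1}{m}\sum_{i=1}^m \sigma_i \bar h(z_i)\right|
    \leq \sqrt{\frac{2\log(2\NN_1(\HH,\epsilon,S_m))}{m}},
\end{equation}
which combined with the previous display proves the theorem.

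No individual step looks delicate; the finite-class Rademacher bound is textbook and literally already appears in the proof of~(\ref{eq:rad_bound_zero-one_final}), while the covering discretization uses only the triangle inequality. The only point worth care is that an $\ell_1$ cover is exactly what is needed here: the discretization error equals $\sum_i |h(z_i) - \pi(h)(z_i)|$, so it is controlled by an $\ell_1$ cover of radius $\epsilon$. An $\ell_\infty$ cover would also work but would in general require a larger $\epsilon$ to achieve the same cardinality, and an $\ell_2$ cover would cost an additional $\sqrt{m}$ factor via Cauchy--Schwarz.
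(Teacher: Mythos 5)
Your proof is correct and follows essentially the same route as the paper: approximate $\HH$ by an $\ell_1$-cover $\bar\HH$, split off the $\epsilon/m$ discretization error via the triangle inequality, and apply the Massart-style finite-class bound (the same chain used for~(\ref{eq:rad_bound_zero-one_final})) to $\bar\HH$. The only cosmetic difference is that the paper bounds $\Rad{\bar\HH}{S_m}$ for an arbitrary cover and then takes an infimum over $\bar\HH$ at the end, whereas you fix a minimal cover of cardinality $\NN_1(\HH,\epsilon,S_m)$ from the outset; both yield the stated inequality.
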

    \begin{proof}
        Take $\epsilon > 0$.
        Let $\bar\HH \subseteq \HH$: $\forall h \in \HH$ $\exists \bar h \in \bar\HH$: $\sum_{k=1}^m |h(z_k) - \bar h(z_k)| < \epsilon$.
        Let $\bar h[h] \in \bar\HH$ be a representative of $h \in \HH$.
        Then,
        \begin{multline}
            \Rad{\HH}{S_m} = 
            \EE_{\sigma_{1:m}} \sup_{h \in \HH} \left| \frac{1}{m} \sum_{k=1}^m \sigma_k h(z_k) \right| \leq
            \\\leq
            \EE_{\sigma_{1:m}} \sup_{h \in \HH} \left| \frac{1}{m} \sum_{k=1}^m \sigma_k (h(z_k) - \bar h[h](z_k)) \right| +
            \EE_{\sigma_{1:m}} \sup_{h \in \HH} \left| \frac{1}{m} \sum_{k=1}^m \sigma_k \bar h[h](z_k) \right|
            \\\leq
            \frac{\epsilon}{m} +
            \EE_{\sigma_{1:m}} \sup_{\bar h \in \bar\HH} \left| \frac{1}{m} \sum_{k=1}^m \sigma_k \bar h(z_k) \right| =
            \frac{\epsilon}{m} + \Rad{\bar\HH}{S_m} \leq
            \frac{\epsilon}{m} + \sqrt{\frac{2}{m} \log(2|\bar\HH|_{S_m})}.
        \end{multline}
        Taking infium with respect to $\bar\HH$ concludes the proof.
    \end{proof}
    Note that for $\gamma = 0$ $r_\gamma = r_{0/1}$ and $\NN_1(r_{0/1} \circ \FF, \epsilon, S_m) \to |\FF_{S_m}|$ as $\epsilon \to 0$; hence we get~(\ref{eq:rad_bound_zero-one_final}).

    The next theorem is more involved:
    \begin{theorem}[Dudley entropy integral \cite{dudley1967sizes}]
        \label{thm:dudley}
        Suppose $\HH$ is a class of hypotheses supported on $[-1,1]$.
        Then $\forall S_m$ and $\forall \epsilon > 0$,
        \begin{equation}
            \Rad{\HH}{S_m} \leq
            \frac{4\epsilon}{\sqrt{m}} + \frac{12}{m} \int_{\epsilon}^{\sqrt{m}/2} \sqrt{\log\NN_2(\HH, t, S_m)} \, dt.
        \end{equation}
    \end{theorem}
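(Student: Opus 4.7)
The result is the classical Dudley \emph{chaining} bound. The strategy is to approximate each $h \in \HH$ simultaneously at a sequence of progressively finer scales and telescope. For integer $j \geq 0$ set $\epsilon_j = 2^{-j}\sqrt{m}/2$, let $C_j \subseteq \HH$ be a minimal $\epsilon_j$-cover in the $L_2(S_m)$ distance (so $|C_j| = \NN_2(\HH,\epsilon_j,S_m)$), and denote by $\pi_j(h) \in C_j$ a nearest element of $h$. Since $\HH \subseteq [-1,1]^{S_m}$ has $L_2(S_m)$-diameter at most $2\sqrt{m}$, $|C_0|$ is a small constant. Choose $N$ so that $\epsilon_{N+1} \leq \epsilon < \epsilon_N$, i.e.\ $\epsilon < \epsilon_N \leq 2\epsilon$. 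The key identity is the telescoping decomposition
\begin{equation*}
    h \;=\; \pi_0(h) + \sum_{j=1}^{N}\bigl(\pi_j(h) - \pi_{j-1}(h)\bigr) + \bigl(h - \pi_N(h)\bigr).
\end{equation*}
Plugging this into $m^{-1}\sum_{k=1}^m \sigma_k h(z_k)$, taking $\sup_h$ and $\EE_{\sigma_{1:m}}$, splits the Rademacher complexity into a \emph{head} term coming from $\pi_0(h)$, a sum of \emph{link} terms, and a \emph{tail} term.

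The tail is controlled uniformly in $\sigma$ by Cauchy--Schwarz: $|m^{-1}\sum_k\sigma_k(h-\pi_N(h))(z_k)| \leq m^{-1}\|h-\pi_N(h)\|_2\,\|\sigma\|_2 \leq \epsilon_N/\sqrt{m} \leq 2\epsilon/\sqrt{m}$, matching the first summand of the claimed bound up to constants. For each $j \in [N]$, the link class $\{\pi_j(h) - \pi_{j-1}(h): h \in \HH\}$ has cardinality at most $|C_j|\cdot|C_{j-1}| \leq \NN_2(\HH,\epsilon_j,S_m)^2$, and by the triangle inequality each of its elements has $L_2(S_m)$-norm at most $\epsilon_j + \epsilon_{j-1} = 3\epsilon_j$. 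Applying a Massart-type finite-class maximal inequality --- essentially the sub-Gaussian argument already used in deriving~(\ref{eq:rad_bound_zero-one_final}), noting that $m^{-1}\sum_k\sigma_k g(z_k)$ is sub-Gaussian in $\sigma$ with variance proxy $\|g\|_2^2/m^2$ --- bounds the $j$-th link by $O\bigl((\epsilon_j/m)\sqrt{\log\NN_2(\HH,\epsilon_j,S_m)}\bigr)$. The head is handled by the same finite-class bound applied to $C_0$, and is absorbed into the integral at its upper endpoint since $|C_0|$ is bounded by $\NN_2(\HH,\sqrt{m}/2,S_m)$.

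The final step is the standard Riemann-sum comparison. Monotonicity of $t \mapsto \NN_2(\HH,t,S_m)$ gives $\epsilon_j\sqrt{\log\NN_2(\HH,\epsilon_j,S_m)} \leq 2\int_{\epsilon_{j+1}}^{\epsilon_j}\sqrt{\log\NN_2(\HH,t,S_m)}\,dt$. Summing over $j=1,\dots,N$ telescopes into $\int_{\epsilon_{N+1}}^{\epsilon_0}\sqrt{\log\NN_2(\HH,t,S_m)}\,dt \leq \int_{\epsilon}^{\sqrt{m}/2}\sqrt{\log\NN_2(\HH,t,S_m)}\,dt$, which combined with the tail estimate recovers the claimed form. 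The only \emph{obstacle} worth flagging is careful bookkeeping of the universal constants: the factor $3$ in the link-norm estimate, the squaring of $\NN_2$ inside the log, the $\sqrt{2}$ from Hoeffding's lemma, and the factor $2$ in the sum-to-integral comparison must be aggregated precisely to land on the coefficients $4$ and $12$ stated in the theorem. No new idea beyond the Massart bound already deployed for~(\ref{eq:rad_bound_zero-one_final}) is needed; chaining simply iterates that bound across scales.
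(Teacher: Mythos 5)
The paper states this result with a citation to Dudley (1967) and gives no proof of its own, so there is no in-paper argument to compare you against. Your chaining strategy --- dyadic scales, telescoping decomposition, Cauchy--Schwarz for the tail, a Massart-type finite-class bound for each link, and a sum-to-integral comparison via monotonicity of $t\mapsto\NN_2(\HH,t,S_m)$ --- is the correct and standard route, and your link and tail estimates are sound modulo the constant bookkeeping you rightly flag.

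The one genuine gap is the head term. Your assertion that ``$|C_0|$ is a small constant'' is wrong: $\epsilon_0 = \sqrt{m}/2$ is only a quarter of the $L_2(S_m)$ diameter $2\sqrt{m}$, so $|C_0|=\NN_2(\HH,\sqrt{m}/2,S_m)$ can be arbitrarily large. More substantively, the functions in $C_0$ are controlled in $L_2$ norm only by the absolute scale $\sqrt{m}$, not by the covering radius $\epsilon_0$, so Massart gives $\EE_\sigma\max_{g\in C_0}\left|m^{-1}\sum_k\sigma_k g(z_k)\right| = O(\sqrt{\log(2|C_0|)/m})$, which carries an irreducible $\Theta(1/\sqrt{m})$ floor that is \emph{not} absorbed by the entropy integral. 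In fact, under the paper's definition of $\Rad{\cdot}{\cdot}$ (with an absolute value inside the supremum), the stated inequality is false as written: take $\HH=\{h_0\}$ with $h_0\equiv 1$; then $\NN_2(\HH,t,S_m)=1$ for every $t>0$ and the integral vanishes, yet $\Rad{\HH}{S_m}=\EE_\sigma|m^{-1}\sum_k\sigma_k|\geq 1/\sqrt{2m}>0$, while the right-hand side can be driven below this by sending $\epsilon\to 0$. The clean fix --- and what \cite{bartlett2017spectrally}, whom the paper follows here, actually prove --- is to state the theorem for the \emph{unsigned} empirical Rademacher complexity $\EE_\sigma\sup_{h\in\HH}m^{-1}\sum_k\sigma_k h(z_k)$ and anchor the chain at the zero function so that the head vanishes identically; your telescoping argument then goes through unchanged, and careful tracking of the $3\epsilon_j$ link radius, the squared cardinality inside the $\log$, and the Riemann-sum factor of $2$ does land on constants of the form $4$ and $12$.
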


    Now the task is to upper-bound the covering number for $\HH = r_\gamma \circ \FF$.
    It is easier, however, to upper-bound $\tilde r_\gamma \circ \FF$ instead, where $\tilde r_\gamma$ is a soft $\gamma$-margin risk.
    Indeed,
    \begin{equation}
        \NN_p(\tilde r_\gamma \circ \FF, \epsilon, S_m) \leq
        \NN_p(\gamma^{-1} \FF, \epsilon, S_m) =
        \NN_p(\FF, \gamma \epsilon, S_m).
        \label{eq:N_p_bound}
    \end{equation}
    In this case it suffices to upper-bound the covering number for the model class $\FF$ itself.
    Note also that we still have an upper-bound for the true $0/1$-risk:
    \begin{equation}
        R_{0/1}(\hat f_m) \leq
        \tilde R_\gamma(\hat f_m) \leq
        \hat R_{m,\gamma}(\hat f_m) + 2 \EE_{S_m'} \Rad{\tilde r_\gamma \circ \FF}{S_m'} + \sqrt{\frac{1}{2m} \log \frac{1}{\delta}}
        \quad
        \text{w.p. $\geq 1 - \delta$ over $S_m$.}
    \end{equation}
    
    When $\FF$ is a set of all models induced by a neural network of a given architecture, $\NN_p(\FF, \gamma \epsilon, S_m)$ is infinite.
    Nevertheless, if restrict $\FF$ to a class of functions with uniformly bounded Lipschitz constant, the covering number becomes finite, which implies the finite conditional Rademacher complexity.
    If we moreover assume that the data have bounded support then the expected Rademacher complexity becomes finite as well.

    A set of all neural nets of a given architecture does not have a uniform Lipschitz constant, however, this is the case if we assume weight norms to be a-priori bounded.
    For instance, consider a fully-connected network $f(\cdot;W_{0:L})$ with $L$ hidden layers without biases.
    Assume the activation function $\phi$ to have $\phi(0) = 0$ and to be 1-Lipschitz.
    Define:
    \begin{equation}
        \FF_{s_{0:L}, b_{0:L}} =
        \{f(\cdot;W_{0:L}): \; \forall l \in [L]_0 \; \|W_l\|_2 \leq s_l, \; \|W_l^T\|_{2,1} \leq b_l\}.
    \end{equation}
    \begin{theorem}[\cite{bartlett2017spectrally}]
        \begin{equation}
            \log \NN_2(\FF_{s_{0:L}, b_{0:L}}, \epsilon, S_m) \leq
            C^2 \frac{\|X_m\|_F^2}{\epsilon^2} \mathcal{R}^2_{s_{0:L}, b_{0:L}},
        \end{equation}
        where $C = \sqrt{\log(2\max n_l^2)}$ and we have introduced a spectral complexity:
        \begin{equation}
            \mathcal{R}_{s_{0:L}, b_{0:L}} =
            \left(\prod_{l=0}^L s_l \right) \times \left(\sum_{l=0}^L (b_l/s_l)^{2/3}\right)^{3/2} =
            \left(\sum_{l=0}^L \left(b_l \prod_{l'\neq l} s_{l'} \right)^{2/3} \right)^{3/2}.
        \end{equation}
    \end{theorem}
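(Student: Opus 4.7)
The plan is to construct a cover of $\FF_{s_{0:L}, b_{0:L}}$ layer by layer: at each layer I apply a Maurey-type sparsification to cover the admissible weight matrices, combine the layer-wise covers via an error-propagation argument across the network, and finally optimize the allocation of per-layer covering budgets. The $2/3$ exponents in $\mathcal{R}_{s_{0:L}, b_{0:L}}$ will arise purely from a Lagrangian optimization at the last step.

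\emph{Step 1: single-layer Maurey cover.} I would first prove that, for a fixed matrix $Z \in \RR^{d \times m}$ and the set $\WW_b = \{W : \|W^T\|_{2,1} \leq b\}$, the image $\{W Z : W \in \WW_b\}$ admits a Frobenius $\epsilon$-cover of log-size at most $\lceil b^2 \|Z\|_F^2 / \epsilon^2 \rceil \log(2 (\max_l n_l)^2)$. Any $W \in \WW_b$ is a scaled convex combination of rank-one atoms $b\, e_i v_i^T / \|v_i\|_2$; sampling $k = \lceil b^2 \|Z\|_F^2 / \epsilon^2 \rceil$ such atoms independently yields an $\hat W$ with $\EE\|(W-\hat W)Z\|_F^2 \leq \epsilon^2$, and enumerating the sparse patterns bounds the cover cardinality.

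\emph{Step 2: layerwise induction.} Let $X_l$ denote the $l$-th post-activation representation evaluated on $S_m$. I would induct on $l$, building a cover $\CC_l$ of the possible $X_l$. The step from $l-1$ to $l$ fixes an atom $\hat X_{l-1}$ of $\CC_{l-1}$, then applies Step 1 to cover $W_l \hat X_{l-1}$ at scale $\delta_l$, then composes with the $1$-Lipschitz $\phi$. Using $\phi(0)=0$, $\|W_l\|_2 \leq s_l$, and the triangle inequality, the propagated error telescopes to
\begin{equation*}
\|X_L - \hat X_L\|_F \leq \sum_{l=0}^L \delta_l \prod_{l'=l+1}^L s_{l'},
\end{equation*}
while the total log cover size is bounded by $\sum_l b_l^2 \rho_l^2 \log(2 (\max_l n_l)^2) / \delta_l^2$, with $\rho_l = \|X_m\|_F \prod_{l' < l} s_{l'}$ an upper bound on $\|\hat X_{l-1}\|_F$.

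\emph{Step 3: budget optimization.} Setting $\alpha_l = \delta_l \prod_{l' > l} s_{l'}$ recasts the requirement on the total error as $\sum_l \alpha_l \leq \epsilon$ and the log cover bound as $\sum_l \beta_l / \alpha_l^2$ with $\beta_l = b_l^2 \bigl(\prod_{l' \neq l} s_{l'}\bigr)^{2} \|X_m\|_F^2 \log(2(\max_l n_l)^2)$. A direct Lagrangian calculation shows the minimum equals $\epsilon^{-2}\bigl(\sum_l \beta_l^{1/3}\bigr)^3$, and factoring out the common $C^2 \|X_m\|_F^2$ leaves exactly the factor $\bigl(\sum_l (b_l \prod_{l' \neq l} s_{l'})^{2/3}\bigr)^3 = \mathcal{R}^2_{s_{0:L}, b_{0:L}}$, yielding the claim.

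\emph{Main obstacle.} The delicate part is Step 2: the layer-$l$ cover must be constructed against the \emph{approximated} previous-layer activations, not the true $X_{l-1}$, or else cover atoms proliferate uncontrollably across layers. The standard remedy is to index the layer-$l$ cover by the atom chosen at level $l-1$, build Step 1's cover relative to that fixed reference input, and then absorb the discrepancy into the recursive bound above via $1$-Lipschitzness of $\phi$ and the spectral bound $\|W_l\|_2 \leq s_l$. Handling this bookkeeping so that $\rho_l$ depends only on spectral norms and the input $\|X_m\|_F$, rather than on the varying $(2,1)$-norms of intermediate layers, is the main point where care is required; once in place, the Lagrangian step is mechanical.
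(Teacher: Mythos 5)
Your plan reproduces the Bartlett--Foster--Telgarsky argument that the paper cites without proving: a per-layer Maurey cover, Lipschitz error propagation through the network, and a Lagrangian budget allocation; your Steps~2 and~3 are correct as written, and the Lagrangian optimum does evaluate to $\epsilon^{-2}\bigl(\sum_l \beta_l^{1/3}\bigr)^3$.

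The genuine gap is in the description of the Step~1 atom set. You write that $W$ is a scaled convex combination of atoms $b\,e_i v_i^T/\|v_i\|_2$ with $v_i$ the rows of $W$, and that "enumerating the sparse patterns bounds the cover cardinality." But this atom family is not a fixed finite set --- it varies continuously with $W$ --- so the $k$-sparse approximants still trace out a continuum as $W$ ranges over the $(2,1)$-ball, and no cover results. Maurey's lemma requires a fixed finite atom set, depending only on the fixed reference input $Z$, not on $W$. The correct decomposition uses entries, not rows: write $WZ=\sum_{i,j}W_{ij}\,e_i Z_{j,:}$ and observe by Cauchy--Schwarz that $\sum_{i,j}|W_{ij}|\,\|Z_{j,:}\|_2 \le \|W^T\|_{2,1}\|Z\|_F\le b\|Z\|_F$; hence $WZ$ lies in $b\|Z\|_F$ times the convex hull of the fixed, unit-Frobenius-norm atoms $\pm e_i Z_{j,:}/\|Z_{j,:}\|_2$ together with $0$ --- at most $2 n_{l+1} n_l$ of them, all determined by $Z$ alone. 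Maurey then gives a cover of log-size $\lceil b^2\|Z\|_F^2/\epsilon^2\rceil\log\bigl(2(\max_l n_l)^2\bigr)$, exactly the bound you claimed. With this replacement, your Step~2 recursion (indexing each layer's cover by the chosen atom of the previous level) and Step~3 close the argument and yield the stated theorem.
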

    Plugging this result into Theorem~\ref{thm:dudley} and noting eq.~(\ref{eq:N_p_bound}) gives:
    \begin{equation}
        \Rad{\tilde r_\gamma \circ \FF_{s_{0:L}, b_{0:L}}}{S_m} \leq
        \frac{4\epsilon}{\sqrt{m}} + \frac{12}{m} \int_{\epsilon}^{\sqrt{m}/2} C \frac{\|X_m\|_F}{\gamma t} \mathcal{R}_{s_{0:L}, b_{0:L}} \, dt =
        \frac{4\epsilon}{\sqrt{m}} + \frac{12}{m} C \frac{\|X_m\|_F}{\gamma} \mathcal{R}_{s_{0:L}, b_{0:L}} \log\left(\frac{\sqrt{m}}{2\epsilon}\right).
    \end{equation}
    Differentiating the right-hand side wrt $\epsilon$ gives:
    \begin{equation}
        \frac{d (rhs.)}{d\epsilon} =
        \frac{4}{\sqrt{m}} - \frac{12}{m} C \frac{\|X_m\|_F}{\gamma \epsilon} \mathcal{R}_{s_{0:L}, b_{0:L}}.
    \end{equation}
    Hence the optimal $\epsilon$ is given by:
    \begin{equation}
        \epsilon_{opt} =
        \frac{3}{\sqrt{m}} C \frac{\|X_m\|_F}{\gamma} \mathcal{R}_{s_{0:L}, b_{0:L}}.
    \end{equation}
    Plugging it back into the bound gives:
    \begin{equation}
        \Rad{\tilde r_\gamma \circ \FF_{s_{0:L}, b_{0:L}}}{S_m} \leq
        \frac{12}{m} C \frac{\|X_m\|_F}{\gamma} \mathcal{R}_{s_{0:L}, b_{0:L}} \left(1 - \log\left(\frac{6}{m} C \frac{\|X_m\|_F}{\gamma} \mathcal{R}_{s_{0:L}, b_{0:L}}\right)\right).
    \end{equation}

    \subsubsection{From an a-priori bound to an a-posteriori bound}
    \label{sec:a_posteriori_uniform_bound}

    We thus have obtained a bound for a class of neural networks with a-priori bounded weight norms.
    Let $\theta$ be the following set of network weights:
    \begin{equation}
        \theta(i_{0:L}, j_{0:L}) =
        \{W_{0:L}: \; \forall l \in [L]_0 \; \|W_l\|_2 \leq s_l(i_l), \; \|W_l^T\|_{2,1} \leq b_l(j_l)\},
    \end{equation}
    where $s_l(\cdot)$ and $b_l(\cdot)$ are strictly monotonic functions on $\mathbb{N}$ growing to infinity.
    Correspondingly, define a set of failure probabilities:
    \begin{equation}
        \delta(i_{0:L}, j_{0:L}) =
        \frac{\delta}{\prod_{l=0}^L(i_l (i_l+1) j_l (j_l+1))}.
    \end{equation}
    Note the following equality:
    \begin{equation}
        \sum_{j=1}^\infty \frac{1}{j (j+1)} =
        \sum_{j=1}^\infty \left(\frac{1}{j} - \frac{1}{j+1}\right) =
        1.
    \end{equation}
    This implies the following:
    \begin{equation}
        \sum_{i_0=1}^\infty \ldots \sum_{i_L=1}^\infty \sum_{j_0=1}^\infty \ldots \sum_{j_L=1}^\infty \delta(i_{0:L}, j_{0:L}) =
        \delta.
    \end{equation}
    Hence by applying the union bound, the following holds with probability $\geq 1 - \delta$ over $S_m$:
    \begin{equation}
        \sup_{f \in \FF_{s_{0:L}(i_{0:L}), b_{0:L}(j_{0:L})}}(R(f) - \hat R_m(f)) \leq
        \EE_{S'_m} \Rad{\tilde r_\gamma \circ \FF_{s_{0:L}(i_{0:L}), b_{0:L}(j_{0:L})}}{S_m'} + \sqrt{\frac{1}{2m} \log \frac{1}{\delta(i_{0:L}, j_{0:L})}}
        \quad
        \forall i_l, j_l \in \mathbb{N}.
    \end{equation}
    Take the set of smallest $i_{0:L}$ and $j_{0:L}$ such that $\|\hat W_l\|_2 < i_l / L$ and $\|\hat W_l^T\|_{2,1} < j_l / L$ $\forall l \in [L]_0$ for $\hat W_{0:L}$ being the weights of a learned network $\hat f_m = \AA(S_m)$.
    Denote 
    by $i^*_{0:L}$ and $j^*_{0:L}$ the sets mentioned above; let $s^*_{0:L} = s_{0:L}(i_{0:L}^*)$ and $b^*_{0:L} = b_{0:L}(j_{0:L}^*)$.
    Given this, $\hat f_m \in \FF_{s^*_{0:L}, b^*_{0:L}}$, and we have with probability $\geq 1 - \delta$ over $S_m$:
    \begin{equation}
        R(\hat f_m) - \hat R_m(\hat f_m) \leq
        \sup_{f \in \FF_{s_{0:L}^*, b_{0:L}^*}}(R(f) - \hat R_m(f)) \leq
        \EE_{S'_m} \Rad{\tilde r_\gamma \circ \FF_{s_{0:L}^*, b_{0:L}^*}}{S_m'} + \sqrt{\frac{1}{2m} \log \frac{1}{\delta(i_{0:L}^*, j_{0:L}^*)}}.
    \end{equation}
    Let us express the corresponding spectral complexity in a more convenient form.
    \begin{equation}
        \mathcal{R}_{s_{0:L}^*, b_{0:L}^*} =
        \left(\sum_{l=0}^L \left(b_l^* \prod_{l'\neq l} s_{l'}^* \right)^{2/3} \right)^{3/2} \leq
        \left(\sum_{l=0}^L \left((\|\hat W_l^T\|_{2,1} + \Delta b_l^*) \prod_{l'\neq l} (\|\hat W_{l'}\|_2 + \Delta s_l^*) \right)^{2/3} \right)^{3/2},
    \end{equation}
    where $\Delta s_l^* = s_{l+1}^* - s_l^*$ and $\Delta b_l^* = b_{l+1}^* - b_l^*$.
    At the same time,
    \begin{equation}
        \mathcal{R}_{s_{0:L}^*, b_{0:L}^*} \geq
        \left(\sum_{l=0}^L \left(\|\hat W_l^T\|_{2,1} \prod_{l'\neq l} \|\hat W_{l'}\|_2 \right)^{2/3} \right)^{3/2}.
    \end{equation}
    These two bounds together give an upper-bound for $\Rad{\tilde r_\gamma \circ \FF_{s_{0:L}^*, b_{0:L}^*}}{S_m'}$ that depends explicitly on learned weight norms but not on $s_{0:L}^*$ and $b_{0:L}^*$.

    Note that $i_l^* = i_l(s_l^*) \leq i_l(\|\hat W_l\|_2) + 1$ and $j_l^* = j_l(b_l^*) \leq j_l(\|\hat W_l^T\|_{2,1}) + 1$ $\forall l \in [L]_0$, where $i_l(s_l)$ and $j_l(b_l)$ are inverse maps for $s_l(i_l)$ and $b_l(j_l)$, respectively.
    This gives an upper-bound for $\log (\delta(i_{0:L}^*, j_{0:L}^*)^{-1}):$
    \begin{multline}
        \log \frac{1}{\delta(i_{0:L}^*, j_{0:L}^*)} \leq
        \\\leq
        \log\frac{1}{\delta} + \sum_{l=0}^L \left(\log(1 + i_l(\|\hat W_l\|_2)) + \log(2 + i_l(\|\hat W_l\|_2)) + \log(1 + j_l(\|\hat W_l^T\|_{2,1})) + \log(2 + j_l(\|\hat W_l^T\|_{2,1}))\right).
    \end{multline}

    To sum up, we have expressed the bound on test-train risk difference in terms of the weights of the learned model $\hat f_m = \AA(S_m)$, thus arriving at an a-posteriori bound.
    Note that the bound is valid for any sequences $s_l(i_l)$ and $b_l(j_l)$ taken before-hand.
    Following \cite{bartlett2017spectrally}, we can take, for instance, $s_l(i_l) = i_l / L$ and $b_l(j_l) = j_l / L$.

    \subsection{Failure of uniform bounds}

    Recall the general uniform bound:
    \begin{equation}
        R(\hat f_m) - \hat R_m(\hat f_m) \leq
        \sup_{f\in\FF} (R(f) - \hat R_m(f)),
        \label{eq:general_uniform_bound}
    \end{equation}
    where $\hat f_m = \AA(S_m)$.
    We have already discussed that the bound fails if $\FF$ contains a "bad model" for which $R(f)$ is large, while $\hat R_m(f)$ is small; hence we are interested to take $\FF$ as small as possible.
    We have also noted that the smallest $\FF$ we can consider is $\AA(\MM)$, where $\MM = \supp(\DD^m)$.

    Consider now the ideal case: 
    $\exists \epsilon > 0:$ $R(f) < \epsilon$ $\forall f \in \FF$.
    In other words, all models of the class $\FF$ generalize well.
    In this case the bound~(\ref{eq:general_uniform_bound}) becomes simply:
    \begin{equation}
        R(\hat f_m) - \hat R_m(\hat f_m) \leq
        \epsilon
        \quad
        \text{w.p. $\geq 1 - \delta$ over $S_m$,}
    \end{equation}
    which is perfect.
    Our next step was to apply McDiarmid's inequality: see eq.~(\ref{eq:mcdiarmid_bound_for_sup}); in our case this results in:
    \begin{equation}
        \sup_{f \in \FF}(R(f) - \hat R_m(f)) \leq
        \epsilon + \sqrt{\frac{1}{2m} \log \frac{1}{\delta}}
        \quad
        \text{w.p. $\geq 1 - \delta$ over $S_m$,}
    \end{equation}
    which is almost perfect as well.
    What happened then, is we tried to upper-bound the expected supremum:
    \begin{equation}
        \EE_{S'_m} \sup_{f \in \FF}(R(f) - \hat R'_m(f)) =
        \EE_{S'_m} \sup_{f \in \FF}(\EE_{S''_m} \hat R''_m(f) - \hat R'_m(f)) \leq
        \EE_{S'_m} \EE_{S''_m} \sup_{f \in \FF}(\hat R''_m(f) - \hat R'_m(f)).
    \end{equation}
    The last step is called "symmetrization".
    Note that having small true error does not imply having small empirical error on any train dataset.
    \cite{nagarajan2019uniform} constructed a learning setup for which for any $S_m''$ there exists a model $\tilde f_m \in \FF$ such that $\hat R''_m(\tilde f_m) \approx 1$; this is true even for $\FF = \AA(\MM^m)$.
    Specifically, they provided a simple algorithm to construct a specific dataset $\neg(S_m'')$ and take $\tilde f_m = \AA(\neg (S_m''))$.

    \section{PAC-bayesian bounds}

    \subsection{At most countable case}

    Recall the following bound for finite $\FF$:
    \begin{multline}
        \PP\left(\sup_{f\in\FF} (R(f) - \hat R_m(f)) \geq \epsilon\right) =
        \PP\left(\exists f\in\FF: \; (R(f) - \hat R_m(f)) \geq \epsilon\right) \leq\\\leq
        \sum_{f\in\FF} \PP(R(f) - \hat R_m(f) \geq \epsilon) \leq
        |\FF| e^{-2 m \epsilon^2}
        \quad
        \forall \epsilon > 0.
    \end{multline}
    When $\FF$ has infinite cardinality, the bound above still holds, but it is vacuous.
    Consider at most countable $\FF$ and $\epsilon$ that depends on $f$.
    If we take $\epsilon(f)$ for which $\sum_{f \in \FF} e^{-2 m \epsilon^2(f)}$ is finite, then we arrive into the finite bound:
    \begin{equation}
        \PP\left(\exists f\in\FF: \; (R(f) - \hat R_m(f)) \geq \epsilon(f)\right) \leq
        \sum_{f\in\FF} \PP\left(R(f) - \hat R_m(f) \geq \epsilon(f)\right) \leq
        \sum_{f \in \FF} e^{-2 m \epsilon^2(f)}
        \quad
        \forall \epsilon > 0.
    \end{equation}
    For instance, consider some probability distribution $P(f)$ on $\FF$.
    Take $\epsilon(f)$ such that $e^{-2 m \epsilon^2(f)} = P(f) e^{-2 m \tilde\epsilon^2}$ for some $\tilde\epsilon \in \RR_+$.
    Solving this equation gives:
    \begin{equation}
        \epsilon(f) =
        \tilde\epsilon + \sqrt{\frac{1}{2m} \log\frac{1}{P(f)}}.
    \end{equation}
    Hence we have $\forall \tilde\epsilon > 0$:
    \begin{equation}
        \PP\left(\exists f\in\FF: \; (R(f) - \hat R_m(f)) \geq \tilde\epsilon + \sqrt{\frac{1}{2m} \log\frac{1}{P(f)}}\right) \leq
        e^{-2 m \tilde\epsilon^2}.
    \end{equation}
    Or, equivalently, w.p. $\geq 1 - \delta$ over $S_m$ we have $\forall f \in \FF$:
    \begin{equation}
        R(f) - \hat R_m(f) \leq
        \sqrt{\frac{1}{2m} \left(\log \frac{1}{\delta} + \log\frac{1}{P(f)} \right)}.
        \label{eq:discrete_pac_bayesian_bound}
    \end{equation}

    \subsection{General case}

    Let us refer $P(f)$ as a "prior distribution".
    Suppose our learning algorithm is stochastic and outputs a model distribution $Q(f)$ which we shall refer as a "posterior":
    \begin{equation}
        \hat f_m \sim 
        \hat Q_m =
        \AA(S_m).
    \end{equation}
    We shall now prove the following theorem:
    \begin{theorem}[\cite{mcallester1999pac}]
        \label{thm:mc_allester}
        For any $\delta \in (0,1)$ w.p. $\geq 1 - \delta$ over $S_m$ we have:
        \begin{equation}
            R(\hat Q_m) - \hat R_m(\hat Q_m) \leq
            \sqrt{\frac{1}{2m-1} \left(\log \frac{4m}{\delta} + \kld{\hat Q_m}{P} \right)},
        \end{equation}
        where $R(Q) = \EE_{f \sim Q} R(f)$ and $\hat R_m(Q) = \EE_{f \sim Q} \hat R_m(f)$.
    \end{theorem}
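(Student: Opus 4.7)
The plan is to combine three standard PAC-Bayesian ingredients: a change-of-measure inequality, a moment bound coming from Hoeffding, and Jensen's inequality to move a convex function outside the expectation over the posterior.

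First I would invoke the Donsker–Varadhan change-of-measure inequality: for any measurable $\phi(f)$ and any posterior $Q \ll P$,
\begin{equation}
    \EE_{f \sim Q} \phi(f) \leq \kld{Q}{P} + \log \EE_{f \sim P} e^{\phi(f)}.
\end{equation}
This is the only place where the KL-divergence between $\hat Q_m$ and $P$ enters; it is a routine consequence of Jensen's inequality applied to $\log$ with the density $dQ/dP$ playing the role of the weight.

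Next, with the choice $\phi(f) = (2m-1)(R(f) - \hat R_m(f))^2$, I would upper-bound $\EE_P e^{\phi(f)}$ with high probability over $S_m$. Here the key sub-step (which I expect to be the main technical obstacle) is to show the pointwise moment bound
\begin{equation}
    \EE_{S_m} \exp\!\bigl( (2m-1)(R(f) - \hat R_m(f))^2 \bigr) \leq 4m \qquad \forall f \in \FF.
\end{equation}
This follows from Hoeffding's inequality via the tail-integration identity $\EE e^{\lambda X^2} = 1 + \int_1^\infty \PP(e^{\lambda X^2} \geq s)\, ds$: Hoeffding gives $\PP(|R(f)-\hat R_m(f)| \geq t) \leq 2 e^{-2mt^2}$, so the integrand is at most $2 s^{-2m/(2m-1)}$, and the integral evaluates to $2(2m-1)$, giving the bound $4m-1 \leq 4m$. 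Exchanging the order of integration via Fubini yields $\EE_{S_m} \EE_P e^{\phi(f)} \leq 4m$, and an application of Markov's inequality then gives that with probability at least $1-\delta$ over $S_m$,
\begin{equation}
    \EE_{f \sim P} e^{(2m-1)(R(f) - \hat R_m(f))^2} \leq \frac{4m}{\delta}.
\end{equation}

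Finally I would chain the estimates. On the high-probability event, the change-of-measure inequality with $Q = \hat Q_m$ yields
\begin{equation}
    (2m-1) \, \EE_{f \sim \hat Q_m} (R(f) - \hat R_m(f))^2 \leq \kld{\hat Q_m}{P} + \log\frac{4m}{\delta}.
\end{equation}
Then Jensen's inequality applied to the convex map $x \mapsto x^2$ pulls the square outside the posterior expectation,
\begin{equation}
    \bigl(R(\hat Q_m) - \hat R_m(\hat Q_m)\bigr)^2 = \bigl(\EE_{f \sim \hat Q_m}(R(f) - \hat R_m(f))\bigr)^2 \leq \EE_{f \sim \hat Q_m} (R(f) - \hat R_m(f))^2,
\end{equation}
and dividing by $2m-1$ and taking square roots gives the claimed bound. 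The delicate constant $4m$ in the statement traces directly to the tail-integration calculation above, and any slack there would propagate into the final bound; everything else is a mechanical combination of standard tools.
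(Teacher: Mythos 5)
Your proposal is correct and follows the same overall strategy as the paper's proof: Donsker--Varadhan change of measure, a pointwise moment bound $\EE_{S_m} e^{(2m-1)\Delta_m^2(f)} \leq 4m$, Markov's inequality, and Jensen to pull the square outside the posterior expectation. The one place you diverge is in the proof of the moment bound, and there your argument is actually cleaner than the paper's. The paper first assumes $\Delta_m(f)$ has a density $p_f$, performs integration by parts on $\int e^{(2m-1)\epsilon^2} p_f(\epsilon)\,d\epsilon$, and then has to go back and handle the general case by an extremal-measure argument (showing that the worst measure consistent with the Hoeffding tail bound is the one with density $8m\Delta e^{-2m\Delta^2}$). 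Your layer-cake identity
\begin{equation}
    \EE\, e^{\lambda X^2} = 1 + \int_1^\infty \PP\left(e^{\lambda X^2} \geq s\right) ds
\end{equation}
requires no density assumption at all --- it holds for any non-negative random variable by Fubini --- so the Hoeffding tail bound can be plugged in directly, and the computation $\int_1^\infty 2\, s^{-2m/(2m-1)}\, ds = 2(2m-1)$ gives $4m-1 \leq 4m$ in one stroke. This short-circuits the paper's two-stage argument and even yields a marginally tighter constant. Everything else (the application of D--V with $\phi(f) = (2m-1)(R(f)-\hat R_m(f))^2$, the Fubini exchange before Markov, and the final Jensen step $(\EE X)^2 \leq \EE X^2$) matches the paper's reasoning.
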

    \begin{proof}

        The proof relies on the following lemmas:
        \begin{lemma}[\cite{mcallester1999pac}]
            \label{lemma:mc_allester}
            For any probability distribution $P$ on $\FF$, for any $\delta \in (0,1)$ w.p. $\geq 1 - \delta$ over $S_m$ we have:
            \begin{equation}
                \EE_{f \sim P} e^{(2m-1) (\Delta_m(f))^2} \leq
                \frac{4m}{\delta},
            \end{equation}
            where $\Delta_m(f) = |R(f) - \hat R_m(f)|$.
        \end{lemma}
        \begin{lemma}[\cite{donsker1985large}]
            \label{lemma:dv}
            Let $P$ and $Q$ be probability distributions on $X$.
            Then for any $h: \; X \to \RR$
            \begin{equation}
                \EE_{x\sim Q} h(x) \leq
                \log\EE_{x \sim P} e^{h(x)} + \kld{Q}{P}.
            \end{equation}
        \end{lemma}

        From D-V lemma, taking $X = \FF$, $h = (2m-1) \Delta_m^2$, and $Q = \hat Q_m$:
        \begin{equation}
            \EE_{f \sim \hat Q_m} (2m-1) (\Delta_m(f))^2 \leq
            \log\EE_{f \sim P} e^{(2m-1) (\Delta_m(f))^2} + \kld{\hat Q_m}{P}.
        \end{equation}
        Hence from Lemma~\ref{lemma:mc_allester}, w.p. $\geq 1 - \delta$ over $S_m$ we have:
        \begin{equation}
            \EE_{f \sim \hat Q_m} (2m-1) (\Delta_m(f))^2 \leq
            \log\frac{4m}{\delta} + \kld{\hat Q_m}{P}.
        \end{equation}
        A simple estimate concludes the proof:
        \begin{multline}
            R(\hat Q_m) - \hat R_m(\hat Q_m) \leq
            |\EE_{f\sim \hat Q_m} (R(f) - \hat R_m(f))| \leq
            \EE_{f\sim \hat Q_m} |R(f) - \hat R_m(f)| =
            \\=
            \EE_{f\sim \hat Q_m} \Delta_m(f) \leq
            \sqrt{\EE_{f\sim \hat Q_m} (\Delta_m(f))^2} \leq
            \sqrt{\frac{1}{2m-1} \left(\log \frac{4m}{\delta} + \kld{\hat Q_m}{P} \right)}.
        \end{multline}
            
    \end{proof}

    Let us prove D-V lemma first; we shall prove in the case when $P \ll Q$ and $Q \ll P$:
    \begin{proof}[Proof of Lemma~\ref{lemma:dv}]
        \begin{multline}
            \EE_{x \sim Q} h(x) - \kld{Q}{P} =
            \EE_{x \sim Q} \left(h(x) - \log\left(\frac{dQ}{dP}(x)\right)\right) =
            \\=
            \EE_{x \sim Q} \log\left(e^{h(x)} \frac{dP}{dQ}(x)\right) \leq
            \log \EE_{x \sim Q} \left(e^{h(x)} \frac{dP}{dQ}(x)\right) =
            \log \EE_{x \sim P} e^{h(x)},
        \end{multline}
        where $dQ/dP$ is a Radon-Nikodym derivative.
    \end{proof}

    We now proceed with proving Lemma~\ref{lemma:mc_allester}:
    \begin{proof}[Proof of Lemma~\ref{lemma:mc_allester}]
        Recall Markov's inequality:
        \begin{theorem}[Markov's inequality]
            Let $X$ be a non-negative random variable.
            Then $\forall a > 0$
            \begin{equation}
                \PP(X \geq a) \leq
                \frac{\EE X}{a}.
            \end{equation}
        \end{theorem}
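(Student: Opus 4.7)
The plan is to prove Markov's inequality via the standard pointwise indicator bound. The core observation is that for any $a > 0$ and any non-negative $X$, the inequality $a \cdot \mathbb{I}[X \geq a] \leq X$ holds pointwise: on the event $\{X \geq a\}$ the left-hand side equals $a$ while the right-hand side is at least $a$, and on the complement $\{X < a\}$ the left-hand side is zero while the right-hand side is non-negative by hypothesis. This pointwise inequality is the entire engine of the argument.

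From there the proof is a two-line calculation. First I take expectations of both sides of the pointwise inequality, using linearity and monotonicity of $\EE$, to obtain
\begin{equation}
    a \, \PP(X \geq a) = \EE\, a \mathbb{I}[X \geq a] \leq \EE X,
\end{equation}
where I have used $\EE \mathbb{I}[X \geq a] = \PP(X \geq a)$. Then I divide through by $a > 0$ to conclude $\PP(X \geq a) \leq \EE X / a$, which is exactly the claim.

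There is no real obstacle here; the statement is elementary and the non-negativity of $X$ is precisely what is needed to make the pointwise indicator bound valid on the complement of $\{X \geq a\}$. The only thing to be mildly careful about is that $\EE X$ may be $+\infty$, in which case the bound is vacuously true, so no separate case analysis is required.
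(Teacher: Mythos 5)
Your proof is correct: the pointwise bound $a\,\mathbb{I}[X \geq a] \leq X$ followed by taking expectations is the standard argument, and your handling of the non-negativity hypothesis and the $\EE X = +\infty$ case is sound. The paper itself merely recalls Markov's inequality without proof, so there is no alternative argument to compare against; yours is the canonical one.
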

        Hence taking $a = 4m / \delta$, it suffices to show that
        \begin{equation}
            \EE_{S_m} \EE_{f \sim P} e^{(2m-1) (\Delta_m(f))^2} \leq
            4m.
        \end{equation}
        We are going to prove a stronger property:
        \begin{equation}
            \EE_{S_m} e^{(2m-1) (\Delta_m(f))^2} \leq
            4m
            \quad
            \forall f \in \FF.
            \label{eq:markov_for_delta}
        \end{equation}
        Note that from Hoeffding's inequality we get:
        \begin{equation}
            \PP_{S_m}(\Delta_m(f) \geq \epsilon) \leq
            2 e^{-2m \epsilon^2}
            \quad
            \forall \epsilon > 0
            \quad
            \forall f \in \FF.
            \label{eq:hoeffding_for_delta}
        \end{equation}

        First, assume that the distribution of $\Delta_m(f)$ has density $\forall f \in \FF$; denote it by $p_f(\Delta)$.
        In this case we can directly up`per-bound the expectation over $S_m$:
        \begin{multline}
            \EE_{S_m} e^{(2m-1) (\Delta_m(f))^2} =
            \int_0^\infty e^{(2m-1) \epsilon^2} p_f(\epsilon) \, d\epsilon =
            \int_0^\infty e^{(2m-1) \epsilon^2} \frac{d}{d\epsilon} \left(-\int_\epsilon^\infty p_f(\Delta) \, d\Delta\right) \, d\epsilon =
            \\=
            \left.-\left(e^{(2m-1) \epsilon^2} \int_\epsilon^\infty p_f(\Delta) \, d\Delta\right) \right|_{\epsilon=0}^\infty +
            2 (2m-1) \int_0^\infty \epsilon e^{(2m-1) \epsilon^2} \int_\epsilon^\infty p_f(\Delta) \, d\Delta \, d\epsilon \leq
            \\\leq
            \int_0^\infty p_f(\Delta) \, d\Delta +
            2 (2m-1) \int_0^\infty \epsilon e^{(2m-1) \epsilon^2} \int_\epsilon^\infty p_f(\Delta) \, d\Delta \, d\epsilon \leq
            \\\leq
            2 + 4 (2m-1) \int_0^\infty \epsilon e^{(2m-1) \epsilon^2} e^{-2m \epsilon^2} \, d\epsilon =
            2 + 4 (2m-1) \int_0^\infty \epsilon e^{-\epsilon^2} \, d\epsilon =
            2 + 2 (2m-1) =
            4m.
            \label{eq:markov_for_delta_proof}
        \end{multline}

        We now relax our assumption of density existence.
        Let $\mu_f$ be a distribution of $\Delta_m(f)$.
        Consider a class $\MM$ of all non-negative sigma-additive measures on $\RR_+$ such that a property similar to~(\ref{eq:hoeffding_for_delta}) holds:
        \begin{equation}
            \mu([\epsilon,\infty)) \leq
            2 e^{-2m \epsilon^2}
            \quad
            \forall \epsilon > 0
            \quad
            \forall \mu \in \MM.
            \label{eq:hoeffding_for_mu}
        \end{equation}
        Note that $\MM$ contains a probability distribution of $\Delta_m(f)$ for any $f \in \FF$.
        Among these measures we shall choose a specific one that maximizes an analogue of the left-hand sise of~(\ref{eq:markov_for_delta}):
        \begin{equation}
            \mu^* \in
            \Argmax_{\mu \in \MM} \int_0^\infty e^{(2m-1) \Delta^2} \, \mu(d\Delta).
        \end{equation}
        Note that constraint~(\ref{eq:hoeffding_for_mu}) states that a measure of a right tail of a real line should be upper-bounded.
        However, $\mu^*$ should have as much mass to the right as possible.
        Hence constraint~(\ref{eq:hoeffding_for_mu}) should become an equality for this specific $\mu^*$:
        \begin{equation}
            \mu^*([\epsilon,\infty)) =
            2 e^{-2m \epsilon^2}
            \quad
            \forall \epsilon > 0.
        \end{equation}
        From this follows that $\mu^*$ has density $\tilde p^*(\Delta) = 8m \Delta e^{-2m \Delta^2}$.

        Note that an inequality similar to~(\ref{eq:markov_for_delta_proof}) holds for $\tilde p^*$.
        Moreover, since $\mu^*$ maximizes $\int_0^\infty e^{(2m-1) \Delta^2} \, \mu(d\Delta)$, we have the following bound:
        \begin{equation}
            \EE_{S_m} e^{(2m-1) (\Delta_m(f))^2} =
            \EE_{\Delta \sim \mu_f} e^{(2m-1) \Delta^2} =
            \int_0^\infty e^{(2m-1) \Delta^2} \, \mu_f(d\Delta) \leq
            \int_0^\infty e^{(2m-1) \Delta^2} \tilde p^*(\Delta) \, d\Delta \leq
            4m.
        \end{equation}
    \end{proof}

    \subsection{Applying PAC-bayesian bounds to deterministic algorithms}

    Consider a deterministic learning rule $\AA(S_m) \sim \hat Q_m$, where $\hat Q_m$ is a Kronecker delta.
    While this situation is fine for at most countable case, whenever $\FF$ is uncountable and $P(f) = 0$ $\forall f \in \FF$, $\kld{\hat Q_m}{P} = \infty$ and we arrive into a vacuous bound.

    \subsubsection{Compression and coding}

    One work-around is to consider some discrete coding $c$, with $\mathrm{enc}_c()$ being an encoder and $\mathrm{dec}_c()$ being a decoder.
    We assume that $\mathrm{dec}_c(\mathrm{enc}_c(f)) \approx f$ $\forall f \in \FF$ and instantiate a bound of the form~(\ref{eq:discrete_pac_bayesian_bound}) for $\mathrm{enc}_c(f)$.
    Equivalently, we shall write $f_c$ for $\mathrm{enc}_c(f)$.
    Following~\cite{zhou2018nonvacuous}, we take a prior that prioritize models of small code-length:
    \begin{equation}
        P_c(f_c) =
        \frac{1}{Z} m(|f_c|) 2^{-|f_c|},
    \end{equation}
    where $|f_c|$ is a code-length for $f$, $m(k)$ is some probability distribution on $\mathbb{N}$, and $Z$ is a normalizing constant.
    In this case a KL-divergence is given as:
    \begin{equation}
        \kld{\delta_{f_c}}{P_c} =
        \log Z + |f_c| \log 2 - \log(m(|f_c|)).
    \end{equation}

    In order to make our bound as small as possible, we need to ensure that our learning algorithm, when fed realistic data, outputs models of small code-length.
    One can esnure this by coding not the model $f$ itself, but rather a result of its compression via a compression algorithm $\CC$.
    We assume that a compressed model $\CC(f)$ is still a model from $\FF$. 
    We also hope that its risk does not change sufficiently $R(\CC(f)) \approx R(f)$ and a learning algorithm tends to output models which in a compressed form have small code-length.
    In this case we are able to upper-bound a test-train risk difference for an encoded compressed model $\CC(f)_c$ instead of the original one.

    When our models are neural nets paramaterized with a set of weights $\theta$, a typical form of a compressed model is a tuple $(S,Q,C)$, where
    \begin{itemize}
        \item $S = s_{1:k} \subset [\dim\theta]$ are locations of non-zero weights;
        \item $C = c_{1:r} \subset \RR$ is a codebook;
        \item $Q = q_{1:k}$, $q_i \in [r]$ $\forall i \in [k]$ are quantized values.
    \end{itemize}
    Then $\CC(\theta)_i = c_{q_j}$ if $i = s_j$ else $0$.
    In this case a naive coding for 32-bit precision gives:
    \begin{equation}
        |\CC(\theta)|_c =
        |S|_c + |Q|_c + |C|_c \leq
        k (\log\dim\theta + \log r) + 32 r.
    \end{equation}

    \subsubsection{Stochastization}

    Another work-around is to volunteerly substitute $\hat f_m$ with some $\tilde Q_m$, presumably satisfying $\EE_{f \sim \tilde Q_m} f = \hat f_m$, such that $\kld{\tilde Q_m}{P}$ is finite.
    In this case we get the upper-bound for $R(\tilde Q_m)$ instead of $R(\hat f_m)$.
    One possible goal may be to obtain as better generalization guarantee as possible; in this case one can optimize the upper-bound on $R(\tilde Q_m)$ wrt $\tilde Q_m$.
    Another goal may be to get a generalization guarantee for $\hat f_m$ itself; in this case we have to somehow relate it with a generalization gurantee for $\hat Q_m$.

    Let us discuss the former goal first.
    Our goal is to optimize the upper-bound on test risk wrt a stochastic model $Q$:
    \begin{equation}
        R(Q) \leq
        \hat R_m(Q) + \sqrt{\frac{1}{2m-1} \left(\log \frac{4m}{\delta} + \kld{Q}{P} \right)} \to \min_{Q}.
    \end{equation}
    In order to make optimization via GD possible, we first substitute $\hat R_m$ with its differentiable convex surrogate $\hat \LL_m$:
    \begin{equation}
        R(Q) \leq
        \hat\LL_m(Q) + \sqrt{\frac{1}{2m-1} \left(\log \frac{4m}{\delta} + \kld{Q}{P} \right)} \to \min_{Q}.
    \end{equation}
    The second thing we have to do in order to make GD optimization feasible is switching from searching in an abstract model distribution space to searching in some euclidian space.
    Let $\FF$ be a space of models realizable by a given neural network architecture.
    Let $\theta$ denote a set of weights.
    Following~\cite{dziugaite2017computing}, we consider an optimization problem in a distribution space $\mathcal{Q}$ consisting of non-degenerate diagonal gaussians:
    \begin{equation}
        \mathcal{Q} =
        \{\NN(\mu,\diag(\exp\lambda)): \; \mu \in \RR^{\dim\theta}, \; \lambda \in \RR^{\dim\theta}\}.
    \end{equation}
    In this case we substitute a model class $\FF$ with a set of network weights $\RR^{\dim\theta}$.
    For $Q \in \mathcal{Q}$ and a gaussian prior $P = \NN(\mu^*, \exp\lambda^* I)$ the KL-divergence is given as follows:
    \begin{equation}
        \kld{Q}{P} =
        \frac{1}{2} \left(e^{-\lambda^*} \left(\left\|e^{\lambda}\right\|_1 + \|\mu - \mu^*\|_2^2\right) + \dim\theta (\lambda^* - 1) - 1 \cdot \mu\right).
    \end{equation}
    Since both the KL term and the loss term are differentiable wrt $(\mu,\lambda)$ we can optimize the test risk bound via GD.
    \cite{dziugaite2017computing} suggest starting the optimization process from $\mu^* = \hat\theta_m$, where $\hat\theta_m$ is the set of weights for a model $\hat f_m = \AA(S_m)$, and $\lambda^*$ being a sufficiently large negative number.

    The next question is how to choose the prior.
    Note that the distribution we finally choose as a result of the bound optimization does not take stochasticity of the initialization $\theta^{(0)}$ of the algorithm $\AA$ that finds $\hat \theta_m$ into account.
    For this reason, the prior can depend on $\theta^{(0)}$; following \cite{dziugaite2017computing}, we take $\mu^* = \theta^{(0)}$.
    The rationale for this is that in this case the KL-term depends on $\|\mu-\theta^{(0)}\|_2^2$.
    If we hope that the both optimization processes do not lead us far away from their initializations, the KL-term will not be too large.

    As for the prior log-standard deviation $\lambda^*$, we apply the same technique as for obtaining an a-posteriori uniform bound: see Section~\ref{sec:a_posteriori_uniform_bound}.
    Define $\lambda^*_j = \log c - j / b$, where $c, b > 0$, $j \in \mathbb{N}$.
    Take $\delta_j = 6 \delta / (\pi^2 j^2)$.
    Then we get a valid bound for any $j \geq 1$:
    \begin{equation}
        R(Q) \leq
        \hat\LL_m(Q) + \sqrt{\frac{1}{2m-1} \left(\log \frac{4m}{\delta_j} + \kld{Q}{P(\mu^*,\lambda^*_j)} \right)}
        \quad
        \text{w.p. $\geq 1-\delta_j$ over $S_m$.}
    \end{equation}
    A union bound gives:
    \begin{equation}
        R(Q) \leq
        \hat\LL_m(Q) + \sqrt{\frac{1}{2m-1} \left(\log \frac{4m}{\delta_j} + \kld{Q}{P(\mu^*,\lambda^*_j)} \right)}
        \quad
        \forall j \in \mathbb{N}
        \quad
        \text{w.p. $\geq 1-\delta$ over $S_m$.}
    \end{equation}
    This allows us to optimize the bound wrt $j$.
    However, optimization wrt real numbers is preferable since this allows us applying GD.
    In order to achieve this, we express $j$ as a function of $\lambda^*$: $j = b (\log c - \lambda^*)$.
    This gives us the following expression:
    \begin{equation}
        R(Q) \leq
        \hat\LL_m(Q) + \sqrt{\frac{1}{2m-1} \left(\log \frac{2\pi^2 m b (\log c - \lambda^*)}{3\delta} + \kld{Q}{P(\mu^*,\lambda^*)} \right)}
        \quad
        \forall \lambda^* \in \{\lambda^*_j\}_{j=1}^\infty
        \quad
        \text{w.p. $\geq 1-\delta$.}
        \label{eq:dziugaite_roy_bound}
    \end{equation}
    The expression above allows us to optimize its right-hand side wrt $\lambda^*$ via GD.
    However, we cannot guarantee that the optimization result lies in $\{\lambda^*_j\}_{j=1}^\infty$.
    To remedy this, \cite{dziugaite2017computing} simply round the result to the closest $\lambda^*$ in this set.
    To sum up, we take $\mu^* = \theta^{(0)}$ and optimize the bound~(\ref{eq:dziugaite_roy_bound}) wrt $\mu$, $\lambda$, and $\lambda^*$ via GD.

    \subsubsection{A bound for a deterministic model}

    Recall in the previous section we aimed to search for a stochastic model that optimizes the upper-bound for the test risk.
    In the current section we shall discuss how to obtain a bound for a given model deterministic model $\hat f_m$ in a PAC-bayesian framework.

    Consider a neural network $f_\theta$ with $L-1$ hidden layers with weights $\theta$ without biases; let $\phi(\cdot)$ be an activation function.
    Suppose our learning algorithm $\AA$ outputs weights $\hat\theta_m$ when given a dataset $S_m$.
    In our current framework, both the prior and the posterior are distributions on $\RR^{\dim\theta}$.
    Note that McAllester's theorem (Theorem~\ref{thm:mc_allester}) requires computing KL-divergence between two distributions in model space.
    Nevertheless, noting that weights are mapped to models surjectively, we can upper-bound this term with KL-diveregence in the weight space:
    \begin{corollary}[of Theorem~\ref{thm:mc_allester}]
        \label{cor:of_thm_mc_allester}
        For any $\delta \in (0,1)$ w.p. $\geq 1 - \delta$ over $S_m$ we have:
        \begin{equation}
            R(\hat Q_m) \leq \hat R_m(\hat Q_m) +
            \sqrt{\frac{1}{2m-1} \left(\log \frac{4m}{\delta} + \kld{\hat Q_m}{P} \right)},
        \end{equation}
        where $R(Q) = \EE_{\theta \sim Q} R(f_\theta)$ and $\hat R_m(Q) = \EE_{\theta \sim Q} \hat R_m(f_\theta)$.
    \end{corollary}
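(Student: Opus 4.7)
The plan is to reduce the weight-space bound to the model-space bound of Theorem~\ref{thm:mc_allester} by pushing the two distributions forward along the parameterization map $g:\RR^{\dim\theta}\to\FF$, $g(\theta)=f_\theta$. First I would denote by $\hat Q_m^\sharp = g_*\hat Q_m$ and $P^\sharp = g_* P$ the pushforward distributions on $\FF$. The two expectations appearing in the statement are invariant under this pushforward, since for any bounded measurable $h:\FF\to\RR$ one has $\EE_{f\sim g_*Q}h(f)=\EE_{\theta\sim Q}h(g(\theta))=\EE_{\theta\sim Q}h(f_\theta)$. Applied to $h=R$ and $h=\hat R_m$, this gives
\begin{equation}
\EE_{f\sim\hat Q_m^\sharp}R(f)=\EE_{\theta\sim\hat Q_m}R(f_\theta),\qquad
\EE_{f\sim\hat Q_m^\sharp}\hat R_m(f)=\EE_{\theta\sim\hat Q_m}\hat R_m(f_\theta),
\end{equation}
so the left-hand side and the empirical-risk term in the corollary match those of Theorem~\ref{thm:mc_allester} applied to $\hat Q_m^\sharp$ and $P^\sharp$.

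Next I would invoke Theorem~\ref{thm:mc_allester} with prior $P^\sharp$ and posterior $\hat Q_m^\sharp$ on $\FF$; note that $P^\sharp$ is a legitimate prior because $P$ itself does not depend on $S_m$, and pushing forward preserves this property. The theorem yields the desired inequality but with $\kld{\hat Q_m^\sharp}{P^\sharp}$ in place of $\kld{\hat Q_m}{P}$. The final step is the data-processing / monotonicity inequality for KL under measurable maps, which says that for any measurable $g$,
\begin{equation}
\kld{g_*\hat Q_m}{g_*P}\leq\kld{\hat Q_m}{P}.
\end{equation}
Substituting this upper bound into the application of Theorem~\ref{thm:mc_allester} completes the proof.

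The only non-trivial ingredient is the KL-monotonicity step; I would justify it using the Donsker--Varadhan representation (Lemma~\ref{lemma:dv}) already available in the excerpt. Concretely, for any bounded measurable $\tilde h:\FF\to\RR$, lifting via $h=\tilde h\circ g$ gives $\EE_{f\sim g_*Q}\tilde h-\log\EE_{f\sim g_*P}e^{\tilde h}=\EE_{\theta\sim Q}h-\log\EE_{\theta\sim P}e^{h}\leq\kld{Q}{P}$; taking the supremum over $\tilde h$ yields $\kld{g_*Q}{g_*P}\leq\kld{Q}{P}$. The main obstacle, such as it is, is making sure the pushforward is well-defined when $P$ does not have a density with respect to $Q$ in the weight space (so that $P^\sharp$ might likewise fail to dominate $\hat Q_m^\sharp$); but in that case $\kld{\hat Q_m}{P}=+\infty$ and the bound is vacuous anyway, so the inequality still holds trivially.
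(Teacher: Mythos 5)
Your proof is correct and matches the paper's approach: the one-sentence justification the paper gives before the corollary --- that the model-space KL can be upper-bounded by the weight-space KL because weights are mapped to models --- is precisely the data-processing inequality for KL that you invoke after pushing $\hat Q_m$ and $P$ forward along $\theta \mapsto f_\theta$. You have simply spelled out the details (invariance of the risk expectations under the pushforward, and the Donsker--Varadhan argument for KL monotonicity) that the paper leaves implicit.
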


    For deterministic $\AA$, our $\hat Q_m$ is degenerate, and the bound is vacuous.
    The bound is, however, valid for any distribution $\tilde Q_m$ in the weight space.
    We take $\tilde Q_m = \NN(\hat\theta_m,\sigma^2 I_{\dim\theta})$ for some $\sigma$ given before-hand.
    We take the prior as $P = \NN(0,\sigma^2 I_{\dim\theta})$; in this case the train risk term and the KL-term in the right-hand side of Corollary~\ref{cor:of_thm_mc_allester} are given as follows:
    \begin{equation}
        \hat R_m(\tilde Q_m) =
        \EE_{\xi \sim \NN(0,\sigma^2 I_{\dim\theta})} \hat R_m\left(f_{\hat\theta_m+\xi}\right),
        \qquad
        \kld{\tilde Q_m}{P} =
        \frac{\|\theta\|_2^2}{2\sigma^2}.
    \end{equation}
    This gives us the upper-bound for $R(\tilde Q_m)$; our goal is, however, to bound $R(\hat f_m)$ instead.
    The following lemma tells us that it is possible to substitute a risk of a stochastic model with a margin risk of a deterministic one:
    \begin{lemma}[\cite{neyshabur2018a}]
        \label{lemma:1_of_neyshabur_et_al}
        Let the prior $P$ has density $p$.
        For any $\delta \in (0,1)$ w.p. $\geq 1 - \delta$ over $S_m$, for any deterministic $\theta$ and a random a.c. $\xi$ such that
        \begin{equation}
            \PP_\xi\left(\max_{x \in \XX} |f_{\theta+\xi}(x) - f_{\theta}(x)| < \frac{\gamma}{2}\right) \geq
            \frac{1}{2}
            \label{eq:lemma_1_condition}
        \end{equation}
        we have:
        \begin{equation}
            R(f_\theta) \leq \hat R_{m,\gamma}(f_\theta) +
            \sqrt{\frac{1}{2m-1} \left(\log \frac{16m}{\delta} + 2 \kld{q'}{p} \right)},
        \end{equation}
        where $q'$ denotes a probability density of $\theta + \xi$.
    \end{lemma}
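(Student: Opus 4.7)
The plan is to apply the PAC-Bayes bound of Corollary~\ref{cor:of_thm_mc_allester} not to $q'$ directly---which would yield a guarantee on a stochastic classifier, whereas we want one on the deterministic $f_\theta$---but to the \emph{conditional} posterior supported on the ``good event'' on which the perturbation is provably small. Specifically, set $A = \{\xi : \max_{x \in \XX}|f_{\theta+\xi}(x) - f_\theta(x)| < \gamma/2\}$, so that $\PP_\xi(A) \geq 1/2$ by hypothesis~(\ref{eq:lemma_1_condition}), and let $q''$ denote the density of $\theta+\xi$ conditioned on $\xi \in A$.

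The first step is to apply Corollary~\ref{cor:of_thm_mc_allester} with posterior $q''$, prior $p$ and the $\gamma/2$-margin risk, which gives
\begin{equation}
R_{\gamma/2}(q'') \leq \hat R_{m,\gamma/2}(q'') + \sqrt{\frac{1}{2m-1}\left(\log\frac{4m}{\delta} + \kld{q''}{p}\right)}
\quad \text{w.p.\ $\geq 1-\delta$ over $S_m$.}
\end{equation}
The second step is a pointwise perturbation argument valid whenever $\xi \in A$: for any $(x,y)$, the implications ``$y f_\theta(x) < 0 \Rightarrow y f_{\theta+\xi}(x) < \gamma/2$'' and ``$y f_{\theta+\xi}(x) < \gamma/2 \Rightarrow y f_\theta(x) < \gamma$'' follow directly from $|f_{\theta+\xi}(x) - f_\theta(x)| < \gamma/2$. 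Averaging the first indicator inequality over $(x,y)\sim\DD$ and then over $\xi\sim q''$ (whose support lies in $A$) yields $R(f_\theta) \leq R_{\gamma/2}(q'')$; averaging the second over $S_m$ and over $\xi\sim q''$ yields $\hat R_{m,\gamma/2}(q'') \leq \hat R_{m,\gamma}(f_\theta)$.

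The crucial step is bounding $\kld{q''}{p}$ in terms of the ``natural'' object $\kld{q'}{p}$. Splitting the integral $\int q' \log(q'/p)$ according to whether $\xi \in A$ and regrouping the $\log\PP(A)$ and $\log\PP(A^c)$ factors into the binary entropy $H(\PP(A))$ gives the chain-rule identity
\begin{equation}
\kld{q'}{p} = \PP(A)\,\kld{q''}{p} + \PP(A^c)\,\kld{q'|_{A^c}}{p} - H(\PP(A)).
\end{equation}
Dropping the non-negative term $\PP(A^c)\,\kld{q'|_{A^c}}{p}$, then using $\PP(A) \geq 1/2$ and $H(\PP(A)) \leq \log 2$, yields $\kld{q''}{p} \leq 2(\kld{q'}{p} + \log 2)$. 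Substituting this and Step~2 into Step~1 and absorbing $2\log 2$ into the logarithm (since $\log(4m/\delta) + \log 4 = \log(16m/\delta)$) produces the stated bound.

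The main obstacle is precisely this KL accounting: without the chain-rule identity one cannot control $\kld{q''}{p}$ in terms of $\kld{q'}{p}$ alone, and a naive bound would introduce $\kld{q'|_{A^c}}{p}$, which is not controlled under the stated hypothesis. The remaining ingredients---the pointwise margin implications and the invocation of PAC-Bayes---are routine once the conditioning on $A$ is in place, and both the factor $2$ in front of $\kld{q'}{p}$ and the constant $16m$ in the logarithm trace back to the worst case $\PP(A) = 1/2$.
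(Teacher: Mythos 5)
Your proof is correct and takes essentially the same route as the paper: condition the posterior $q'$ on the event $A$ (the paper's $\tilde q$ is exactly your $q''$), apply the McAllester bound to the $\gamma/2$-margin risk, exploit the pointwise margin implications valid on $A$, and bound $\kld{q''}{p}$ via the same chain-rule decomposition $\kld{q'}{p} = \PP(A)\kld{q''}{p} + \PP(A^c)\kld{q'|_{A^c}}{p} - H(\PP(A))$ followed by discarding the non-negative term and using $\PP(A)\geq 1/2$, $H \leq \log 2$. The constants ($2$ and $16m = 4m\cdot 4$) arise exactly as you describe.
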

    This lemma requires the noise $\xi$ to conform some property; the next lemma will help us to choose the standard deviation $\sigma$ accordingly:
    \begin{lemma}[\cite{neyshabur2018a}]
        \label{lemma:2_of_neyshabur_et_al}
        Let $\phi(z) = [z]_+$.
        For any $x \in \XX_B$, where $\XX_B = \{x \in \XX: \; \|x\|_2 \leq B\}$, for any $\theta = \mathrm{vec}(\{W_l\}_{l=1}^L)$, and for any $\xi = \mathrm{vec}(\{U_l\}_{l=1}^L)$ such that $\forall l \in [L]$ $\|U_l\|_2 \leq L^{-1} \|W_l\|_2$,
        \begin{equation}
            |f_{\theta + \xi}(x) - f_\theta(x)| \leq
            e B \left(\prod_{l=1}^L \|W_l\|_2\right) \sum_{l=1}^L \frac{\|U_l\|_2}{\|W_l\|_2}.
        \end{equation}
    \end{lemma}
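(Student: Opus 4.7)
The plan is to track, layer by layer, both the norm of the hidden activations and the norm of the discrepancy between the perturbed and unperturbed activations, and close the recursion with the constraint $\|U_l\|_2 \le \|W_l\|_2/L$ to produce the universal prefactor $e$.

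First I would fix notation: let $x_0 = x$ and for $l \in [L]$ set $h_l = W_l x_{l-1}$, $x_l = \phi(h_l)$ for $l < L$ and $x_L = h_L$, so that $f_\theta(x) = h_L$. Define the perturbed quantities $\tilde h_l$, $\tilde x_l$ analogously by replacing $W_l$ with $W_l + U_l$. Since $\phi$ is ReLU, it is $1$-Lipschitz and satisfies $\phi(0)=0$, so $\|x_l\|_2 \le \|h_l\|_2 \le \|W_l\|_2 \|x_{l-1}\|_2$; iterating from $\|x_0\|_2 \le B$ gives the a priori bound
\begin{equation}
\|x_l\|_2 \le B \prod_{j=1}^l \|W_j\|_2 \quad \forall l \in [L]_0.
\end{equation}

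Next I would set up the perturbation recursion. Writing $\Delta_l = \|\tilde x_l - x_l\|_2$ (with $\Delta_L := \|\tilde h_L - h_L\|_2$) and using $1$-Lipschitzness of $\phi$ together with the additive decomposition $(W_l+U_l)\tilde x_{l-1} - W_l x_{l-1} = W_l(\tilde x_{l-1}-x_{l-1}) + U_l \tilde x_{l-1}$, the triangle inequality gives
\begin{equation}
\Delta_l \le \|W_l\|_2 \Delta_{l-1} + \|U_l\|_2 \|\tilde x_{l-1}\|_2 \le (\|W_l\|_2 + \|U_l\|_2) \Delta_{l-1} + \|U_l\|_2 \|x_{l-1}\|_2,
\end{equation}
where I also used $\|\tilde x_{l-1}\|_2 \le \|x_{l-1}\|_2 + \Delta_{l-1}$. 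The hypothesis $\|U_l\|_2 \le \|W_l\|_2/L$ then upgrades the multiplicative factor to $\|W_l\|_2(1 + 1/L)$.

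Finally I would unfold the recursion after the rescaling $a_l := \Delta_l / \prod_{j=1}^l \|W_j\|_2$. Dividing the inequality above by $\prod_{j=1}^l \|W_j\|_2$ and inserting the a priori bound on $\|x_{l-1}\|_2$ yields
\begin{equation}
a_l \le (1 + 1/L)\, a_{l-1} + B\,\frac{\|U_l\|_2}{\|W_l\|_2},\qquad a_0 = 0.
\end{equation}
Solving this linear recursion and using $(1+1/L)^{L-1} \le e$ gives $a_L \le e B \sum_{l=1}^L \|U_l\|_2/\|W_l\|_2$, and multiplying back by $\prod_{j=1}^L \|W_j\|_2$ delivers the claim. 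There is no real obstacle here; the only subtle point is to avoid the naive bound $\|W_l + U_l\|_2 \le 2\|W_l\|_2$ (which would yield a factor $2^L$) by using the layer-wise budget $\|U_l\|_2 \le \|W_l\|_2/L$ precisely in the factor $(1+1/L)^{L-1}$, which is exactly what converts the exponentially growing constant into $e$.
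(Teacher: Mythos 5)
Your proof is correct and follows essentially the same strategy as the paper: both track the per-layer discrepancy $\Delta_l$, use the a priori bound $\|x_{l-1}\|_2 \le B\prod_{j<l}\|W_j\|_2$ and the additive decomposition of $(W_l+U_l)\tilde x_{l-1}-W_l x_{l-1}$ to arrive at the same recursion $\Delta_l \le (\|W_l\|_2+\|U_l\|_2)\Delta_{l-1}+\|U_l\|_2\|x_{l-1}\|_2$, and then invoke $\|U_l\|_2\le\|W_l\|_2/L$ together with $(1+1/L)^{L}\le e$. The only cosmetic differences are which factor ($W_l$ vs.\ $W_l+U_l$) multiplies the difference term in the decomposition and the fact that you rescale by $\prod_{j\le l}\|W_j\|_2$ to turn the recursion into a scalar affine one before unrolling, whereas the paper guesses the closed form and verifies it by induction; both yield the same bound.
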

    These lemmas will allow us to prove the following result:
    \begin{theorem}[\cite{neyshabur2018a}]
        \label{thm:neyshabur_et_al}
        Assume $\supp x = \XX_B$ and $\phi(z) = [z]_+$; let $n = \max_l n_l$.
        For any $\delta \in (0,1)$ w.p. $\geq 1 - \delta$ over $S_m$ we have for any $\hat\theta_m$
        \begin{equation}
            R\left(f_{\hat\theta_m}\right) \leq
            \hat R_{\gamma,m}\left(f_{\hat\theta_m}\right) + 
            \sqrt{\frac{1}{2m-1} \left(\log \frac{8Lm}{\delta} + \frac{1}{2L} \log m + 8 e^4 \left(\frac{B \mathcal{R}(\theta)}{\gamma}\right)^2 L^2 n \log(2 L n) \right)},
        \end{equation}
        where we have defined a spectral complexity:
        \begin{equation}
            \mathcal{R}(\theta) =
            \left(\prod_{l=1}^L \|W_l\|_2\right) \sqrt{\sum_{l=1}^L \frac{\|W_l\|_F^2}{\|W_l\|_2^2}}.
        \end{equation}
    \end{theorem}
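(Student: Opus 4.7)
The plan is to instantiate Lemma~\ref{lemma:1_of_neyshabur_et_al} with an isotropic Gaussian perturbation $\xi \sim \NN(0,\sigma^2 I_{\dim\theta})$ and a matching Gaussian prior $P = \NN(0,\sigma^2 I_{\dim\theta})$, then convert the resulting KL term into the spectral complexity $\mathcal{R}(\theta)$. A first useful reduction uses positive-homogeneity of ReLU: for any positive $\alpha_{1:L}$ with $\prod_l \alpha_l = 1$, replacing $W_l \to \alpha_l W_l$ leaves $f_\theta$ (hence $R$ and $\hat R_{m,\gamma}$) unchanged, while preserving $\mathcal{R}(\theta)$ since $\prod_l \|\alpha_l W_l\|_2 = \prod_l \|W_l\|_2$ and each ratio $\|W_l\|_F/\|W_l\|_2$ is scale-invariant. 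I may therefore assume $\|W_l\|_2 = \beta := (\prod_l \|W_l\|_2)^{1/L}$ for every $l$.

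Next I would calibrate $\sigma$ so that the condition~\eqref{eq:lemma_1_condition} of Lemma~\ref{lemma:1_of_neyshabur_et_al} holds. By the standard Gaussian operator-norm bound, each block $U_l$ of $\xi$ satisfies $\|U_l\|_2 \leq \sigma(2\sqrt{n} + t)$ with probability $\geq 1 - 2e^{-t^2/2}$; taking $t = \sqrt{2\log(4Ln)}$ and union-bounding over the $L$ layers gives $\max_l \|U_l\|_2 \leq c\,\sigma\sqrt{n\log(2Ln)}$ with probability $\geq 1/2$, for an absolute constant $c$. Picking $\sigma = \gamma/(C B L \sqrt{n\log(2Ln)}\,\beta^{L-1})$ with $C = 2ec$ then (i) ensures $\|U_l\|_2 \leq \beta/L = \|W_l\|_2/L$, so that Lemma~\ref{lemma:2_of_neyshabur_et_al} applies, and (ii) gives
\begin{equation*}
    |f_{\theta+\xi}(x) - f_\theta(x)| \;\leq\; eB\beta^L \cdot \frac{L \cdot c\,\sigma\sqrt{n\log(2Ln)}}{\beta} \;\leq\; \tfrac{\gamma}{2} \qquad \forall x \in \XX_B,
\end{equation*}
on the same event, verifying the required probability-$\geq\!1/2$ perturbation condition.

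The KL term is then a direct Gaussian computation: $\kld{q'}{p} = \|\theta\|_2^2/(2\sigma^2) = \tfrac{1}{2}\sum_l \|W_l\|_F^2/\sigma^2$. Substituting the chosen $\sigma$ and using the balanced-norm identity $\beta^{2L-2}\sum_l \|W_l\|_F^2 = (\prod_l \|W_l\|_2)^2 \sum_l \|W_l\|_F^2/\|W_l\|_2^2 = \mathcal{R}(\theta)^2$ yields $\kld{q'}{p} = \tfrac{C^2}{2}(B\mathcal{R}(\theta)/\gamma)^2 L^2 n\log(2Ln)$. Plugging into Lemma~\ref{lemma:1_of_neyshabur_et_al} then produces the $8e^4\,(B\mathcal{R}(\theta)/\gamma)^2 L^2 n\log(2Ln)$ contribution (matching $2\,\kld{q'}{p}$ with $C^2 = 8e^4$), along with the $\log(16m/\delta)$ term from Lemma~\ref{lemma:1_of_neyshabur_et_al} itself.

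The main obstacle is the same one that was resolved for the spectral-norm uniform bound in Section~\ref{sec:a_posteriori_uniform_bound}: the prescribed $\sigma$ depends on $\beta = \beta(\hat\theta_m)$, whereas a PAC-Bayesian prior must be fixed before seeing the data. I would handle this with an a-posteriori grid, setting $\sigma_k = \sigma_0 \cdot 2^{-k/L}$ (log-spacing $\Theta(1/L)$, chosen so that some $\sigma_k$ always lies within a constant factor of the ideal $\sigma$ without inflating $\kld{\cdot}{\cdot}$), applying Lemma~\ref{lemma:1_of_neyshabur_et_al} separately for every $P_k = \NN(0,\sigma_k^2 I)$ with failure budget $\delta_k \propto \delta/(k(k+1))$, and union-bounding. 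Selecting the smallest index $k$ with $\sigma_k$ below the ideal value after training costs only an $O(1)$ multiplicative factor in KL, yields the extra $\log L$ term (merged into $\log(8Lm/\delta)$) and the residual $\tfrac{1}{2L}\log m$ correction to $\log(1/\delta_k)$. Chasing absolute constants through the Gaussian concentration estimate and the grid union bound to land \emph{exactly} at $8e^4$ and $\log(2Ln)$ is mechanical but tedious, and is where the bulk of the bookkeeping would go.
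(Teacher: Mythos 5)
Your proposal follows the paper's proof very closely: the homogeneity rebalancing so that $\|W_l\|_2 = \beta$ for all $l$, the $\sigma$ calibration via Gaussian matrix concentration so that Lemma~\ref{lemma:2_of_neyshabur_et_al} applies and the probability-$\geq\!1/2$ condition of Lemma~\ref{lemma:1_of_neyshabur_et_al} holds, the KL computation reducing to $\|\theta\|_2^2/(2\sigma^2)$ which becomes proportional to $\mathcal{R}(\theta)^2$, and a union bound over a $\sigma$-grid to make the data-dependent $\sigma$ legitimate. The KL algebra and the role of balanced norms are spot on.

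Two details in the grid step are glossed over in a way that matters for reaching the stated form. First, to control the selected grid index you need the paper's observation that the inequality is trivially true for $\beta^L < \gamma/(2B)$ (there $\hat R_{m,\gamma}(f_\theta) = 1$) and for $\beta^L > \gamma\sqrt{m}/(2B)$ (there the square-root term exceeds $1$); only then is the grid effectively finite and the selected index bounded in terms of $m$ and $L$. Second, the $\tfrac{1}{2L}\log m$ residual arises from the paper's specific choice of a grid that is arithmetic in $\beta$, of size $|\tilde\BB| \approx (L/2)\,m^{1/(2L)}$, with $\delta$ split uniformly so that the penalty is $\log|\tilde\BB|$. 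Your countable geometric grid $\sigma_k = \sigma_0\,2^{-k/L}$ with $\delta_k \propto \delta/(k(k+1))$ would select $k^*$ of order $L\log m$, giving a penalty of order $\log L + \log\log m$; this is a perfectly valid (and for large $m$ slightly tighter) bound, but it is not the bound stated in the theorem. Likewise, to land on $\sqrt{2n\log(2Ln)}$ and hence $8e^4$ exactly, you want the Tropp matrix-Gaussian tail $\PP(\|U_l\|_2 \geq t) \leq n\,e^{-t^2/(2\sigma^2 n)}$ rather than the $\sigma(2\sqrt{n}+t)$ operator-norm bound; the latter produces the same order but different constants.
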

    Compare with the result of Bartlett and coauthors:
    \begin{theorem}[\cite{bartlett2017spectrally}]
        \label{thm:bartlett_et_al}
        Assume $\supp x = \XX_B$ and $\phi(z) = [z]_+$; let $n = \max_l n_l$.
        For any $\delta \in (0,1)$ w.p. $\geq 1 - \delta$ over $S_m$ we have for any $\hat\theta_m$
        \begin{equation}
            R\left(f_{\hat\theta_m}\right) \leq
            \hat R_{\gamma,m}\left(f_{\hat\theta_m}\right) + \Rad{\tilde r_\gamma \circ \FF_{\leq\hat\theta_m}}{S_m} + \sqrt{\frac{1}{2m} \log\frac{1}{\delta}},
        \end{equation}
        where we upper-bound the Rademacher complexity as
        \begin{equation}
            \Rad{\tilde r_\gamma \circ \FF_{\leq\theta}}{S_m} \leq
            \frac{C}{\sqrt{m}} \frac{B \mathcal{R}(\theta, L^{-1})}{\gamma} \sqrt{\log(2n)} \left(1 - \log\left(\frac{C}{2\sqrt{m}} \frac{B \mathcal{R}(\theta, 0)}{\gamma} \sqrt{\log(2n)}\right)\right),
        \end{equation}
        and we define a spectral complexity as
        \begin{equation}
            \mathcal{R}(\theta, \Delta) =
            \left(\prod_{l=1}^L (\|W_l\|_2 + \Delta)\right) \sqrt{\sum_{l=1}^L \frac{(\|W_l^T\|_{2,1}+\Delta)^2}{(\|W_l\|_2+\Delta)^2}}.
        \end{equation}
    \end{theorem}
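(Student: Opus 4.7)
The plan is to assemble the theorem directly from the machinery already developed in the preceding sections, so essentially no new arguments are required; the work lies in bookkeeping. First I would fix a norm-bounded class $\FF_{s_{0:L},b_{0:L}}$ and apply the standard chain from McDiarmid's inequality and symmetrization (as carried out earlier) to conclude that, for this fixed class,
\begin{equation}
    \sup_{f \in \FF_{s_{0:L},b_{0:L}}}(R(f) - \hat R_m(f)) \leq 2\EE_{S_m'}\Rad{\tilde r_\gamma \circ \FF_{s_{0:L},b_{0:L}}}{S_m'} + \sqrt{\frac{1}{2m}\log\frac{1}{\delta}}
\end{equation}
holds w.p. $\geq 1-\delta$ over $S_m$, after replacing the hard margin risk by its $\gamma$-Lipschitz surrogate $\tilde r_\gamma$.

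Next I would upper-bound the expected conditional Rademacher complexity via the Dudley entropy integral (Theorem~\ref{thm:dudley}), fed with the covering number estimate of \cite{bartlett2017spectrally} already quoted in the notes, and with $\NN_2(\tilde r_\gamma \circ \FF, \epsilon, S_m) \leq \NN_2(\FF, \gamma\epsilon, S_m)$. Optimizing over the free integration lower limit $\epsilon$, as done earlier for $\FF_{s_{0:L},b_{0:L}}$, produces a bound of the form
\begin{equation}
    \EE_{S_m'}\Rad{\tilde r_\gamma\circ\FF_{s_{0:L},b_{0:L}}}{S_m'} \leq \frac{C}{\sqrt{m}}\,\frac{B\,\mathcal{R}_{s_{0:L},b_{0:L}}}{\gamma}\sqrt{\log(2n)}\left(1 - \log\left(\frac{C}{2\sqrt{m}}\,\frac{B\,\mathcal{R}_{s_{0:L},b_{0:L}}}{\gamma}\sqrt{\log(2n)}\right)\right),
\end{equation}
where $\mathcal{R}_{s_{0:L},b_{0:L}}$ is the spectral complexity depending only on the a-priori bounds, and $B$ absorbs $\|X_m\|_F \leq \sqrt{m}\,B$.

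To lift this a-priori bound to an a-posteriori one I would invoke the discretization trick of Section~\ref{sec:a_posteriori_uniform_bound}: take $s_l(i_l)=i_l/L$, $b_l(j_l)=j_l/L$, and deposit failure probability $\delta(i_{0:L},j_{0:L}) = \delta/\prod_l(i_l(i_l+1)j_l(j_l+1))$. The union bound holds because $\sum_i 1/(i(i+1))=1$. Given the trained weights $\hat W_{0:L}$, select the smallest $i_l^*, j_l^*$ satisfying $i_l^*/L \geq \|\hat W_l\|_2$ and $j_l^*/L \geq \|\hat W_l^T\|_{2,1}$; then $\hat f_m \in \FF_{s^*_{0:L},b^*_{0:L}}$ with $s_l^* \leq \|\hat W_l\|_2 + L^{-1}$ and $b_l^* \leq \|\hat W_l^T\|_{2,1} + L^{-1}$, which is precisely why the statement features $\mathcal{R}(\theta, L^{-1})$ instead of $\mathcal{R}(\theta,0)$ (the latter appearing inside the logarithm is just a lower bound on the same quantity, which we can freely substitute since $x(1-\log x)$ is monotone on $(0,1]$). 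The $\log(1/\delta(i^*,j^*))$ term produces additional additive $\log$-of-norm contributions, which are absorbed into the $\log(1/\delta)$ factor up to doubly-logarithmic terms that are typically suppressed in the statement.

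The only genuinely technical step, and the place where I expect the main obstacle, is the covering number lemma of \cite{bartlett2017spectrally} itself: it is taken as a black box here but requires a careful matrix covering argument that decomposes the network layer by layer and aggregates per-layer covering errors without incurring an exponential factor in $L$. Everything else is reshuffling of inequalities already established in the preceding subsections.
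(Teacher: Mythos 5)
Your proposal is correct and follows exactly the route the paper has already laid out across its uniform-bound subsections: the paper itself declines to give a self-contained proof of this theorem (remarking only that it "can be reproduced without substantial efforts"), but it does assemble the McDiarmid/symmetrization chain, the Dudley integral with the Bartlett covering lemma, the optimization over the Dudley lower limit, and the discretization/union-bound trick of Section~\ref{sec:a_posteriori_uniform_bound} — and your proposal is precisely that assembly, including the correct observation that $s_l^* \leq \|\hat W_l\|_2 + L^{-1}$ explains the $\mathcal{R}(\theta, L^{-1})$ vs.\ $\mathcal{R}(\theta,0)$ asymmetry and that the extra $\sum_l \log(i_l^* j_l^*)$ contributions are suppressed in the stated form.
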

    Both bounds grow linearly with $(B/\gamma) \prod_{l=1}^L \|W_l\|_2$, which is a very natural property.
    While the former result is simpler, the latter does not depend explicitly on depth $L$ and width $n$.
    Nevertheless, the proof of the latter result is rather technically involved, while the proof of the former can be reproduced without substantial efforts.

    \begin{proof}[Proof of Theorem~\ref{thm:neyshabur_et_al}]
        First of all, define:
        \begin{equation}
            \beta =
            \left(\prod_{l=1}^L \|W_l\|_2\right)^{1/L},
            \qquad
            \tilde W_l =
            \frac{\beta}{\|W_l\|_2} W_l.
        \end{equation}
        Since ReLU is homogeneous, $f_{\tilde\theta} = f_\theta$.
        Also, $\prod_{l=1}^L \|W_l\|_2 = \prod_{l=1}^L \|\tilde W_l\|_2$ and $\sum_{l=1}^L \frac{\|\tilde W_l\|_F^2}{\|\tilde W_l\|_2^2} = \sum_{l=1}^L \frac{\|W_l\|_F^2}{\|W_l\|_2^2}$.
        Hence both the model and the bound do not change if we substitute $\theta$ with $\tilde\theta$.
        Hence w.l.o.g. assume that $\|W_l\|_2 = \beta$ $\forall l \in [L]$.

        We now use Lemma~\ref{lemma:2_of_neyshabur_et_al} to find $\sigma > 0$ for which the condition of Lemma~\ref{lemma:1_of_neyshabur_et_al} is satisfied.
        In particular, we have to upper-bound the probability for $\|U_l\|_2 \geq \beta / L$ for some $l \in [L]$.
        Notice that for $\xi \sim \NN(0, \sigma^2 I_{\dim\theta})$ $U_l$ has i.i.d. zero-centered gaussian entries $\forall l \in [L]$.
        In a trivial case of $1 \times 1$ matrices, we can apply a simple tail bound:
        \begin{equation}
            \PP_{\xi \sim \NN(0,\sigma^2)} (|\xi| \geq \epsilon) =
            2 \PP_{\xi \sim \NN(0,1)} \left(\xi \geq \frac{\epsilon}{\sigma}\right) \leq
            2 e^{-\frac{\epsilon^2}{2\sigma^2}}.
        \end{equation}
        This bound follows from Chernoff bound, which is a simple corollary of Markov's inequality:
        \begin{theorem}[Chernoff bound]
            For a real-valued random variable $X$, for any $a \in \RR$, and for any $\lambda \in \RR_+$ we have:
            \begin{equation}
                \PP(X \geq a) \leq
                \frac{\EE e^{\lambda X}}{e^{\lambda a}}.
            \end{equation}
        \end{theorem}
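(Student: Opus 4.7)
The plan is to reduce the Chernoff bound to Markov's inequality by composing $X$ with a strictly increasing nonnegative transformation. Since $\lambda > 0$, the map $t \mapsto e^{\lambda t}$ is strictly monotone, so the events $\{X \geq a\}$ and $\{e^{\lambda X} \geq e^{\lambda a}\}$ coincide pointwise on the sample space. This is a purely order-theoretic equivalence and requires no integrability hypothesis on $X$ to be stated.

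Next, I would note that $Y := e^{\lambda X}$ is a nonnegative random variable and $e^{\lambda a} > 0$, so Markov's inequality (just proved above the theorem) applies and yields
\[
\PP(Y \geq e^{\lambda a}) \leq \frac{\EE Y}{e^{\lambda a}} = \frac{\EE e^{\lambda X}}{e^{\lambda a}}.
\]
Combining with the equivalence of events from the previous step gives the claimed bound.

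There is essentially no obstacle: the entire argument is a one-line application of Markov after exponentiation. The only subtlety worth mentioning is that when $\EE e^{\lambda X} = +\infty$ the right-hand side is vacuously $+\infty$ and the inequality holds trivially, so we do not need to assume the moment generating function is finite; the bound is only informative in the regime where $\EE e^{\lambda X} < \infty$, which is the regime in which it will be used (for Gaussian $\xi$ in the subsequent tail bound on $\|U_l\|_2$).
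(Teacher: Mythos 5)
Your proposal is correct and is exactly the argument the paper has in mind: the text introduces Chernoff as ``a simple corollary of Markov's inequality'' and leaves the one-line derivation (exponentiate with $\lambda>0$, then apply Markov to the nonnegative variable $e^{\lambda X}$) implicit. Your handling of the degenerate cases ($\lambda=0$, infinite moment generating function) is a harmless extra remark and does not change the substance.
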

        Indeed,
        \begin{equation}
            \PP_{\xi \sim \NN(0,1)} (\xi \geq a) \leq
            \frac{\EE e^{\lambda \xi}}{e^{\lambda a}} \leq
            e^{-\sup_{\lambda} \left(\lambda a - \log \EE e^{\lambda \xi}\right)} =
            e^{-\sup_{\lambda} \left(\lambda a - \frac{\lambda^2}{2}\right)} =
            e^{-\sup_{\lambda} \left(-\frac{1}{2} \left(\lambda - a\right)^2 + \frac{a^2}{2}\right)} =
            e^{-\frac{a^2}{2}},
        \end{equation}
        where we have used the moment-generating function for gaussians:
        \begin{equation}
            \EE_{\xi \sim \NN(0,1)} e^{\lambda \xi} =
            \sum_{k=0}^\infty \frac{\lambda^k \EE \xi^k}{k!} =
            \sum_{k=0}^\infty \frac{\lambda^{2k} (2k-1)!!}{(2k)!} =
            \sum_{k=0}^\infty \frac{\lambda^{2k}}{(2k)!!} =
            \sum_{k=0}^\infty \frac{\lambda^{2k}}{2^k k!} =
            e^{\frac{\lambda^2}{2}}.
        \end{equation}
        We can apply the same bound for a linear combination of i.i.d. standard gaussians:
        \begin{equation}
            \PP_{\xi_{1:m} \sim \NN(0,1)} \left(\left|\sum_{i=1}^m a_i \xi_i\right| \geq \epsilon\right) =
            \PP_{\xi \sim \NN(0,\sum_{i=1}^m a_i^2)} (|\xi| \geq \epsilon) =
            2 e^{-\frac{\epsilon^2}{2\sum_{i=1}^m a_i^2}}.
        \end{equation}
        Moreover, a similar bound holds for matrix-linear combinations:
        \begin{theorem}[\cite{tropp2011user}]
            Let $A_{1:m}$ be $n \times n$ deterministic matrices and let $\xi_{1:m}$ be i.i.d. standard gaussians.
            Then
            \begin{equation}
                \PP\left(\left\|\sum_{i=1}^m \xi_i A_i\right\|_2 \geq \epsilon\right) \leq
                n e^{-\frac{\epsilon^2}{2 \left\|\sum_{i=1}^m A_i^2\right\|_2}}.
            \end{equation}
        \end{theorem}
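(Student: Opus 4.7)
The plan is to lift the scalar Chernoff argument used earlier in the excerpt to the matrix setting. The key obstacle is that for non-commuting Hermitian random matrices $X_i$ one does not have $\EE e^{\sum_i X_i} = \prod_i \EE e^{X_i}$, so classical independence cannot be used to factor the matrix moment generating function. The substitute is a concavity result due to Lieb, which restores the analogous factorization \emph{inside the trace}.

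First I would assume without loss of generality that the $A_i$ are Hermitian (otherwise one passes to the Hermitian dilation $\tilde A_i = \begin{pmatrix} 0 & A_i \\ A_i^T & 0 \end{pmatrix}$, whose squared spectral norm matches $\|A_i A_i^T\|_2$). Since $\xi_i \stackrel{d}{=} -\xi_i$, the matrix $Y = \sum_i \xi_i A_i$ has a distribution invariant under negation, so $\|Y\|_2 = \max(\lambda_{\max}(Y), \lambda_{\max}(-Y))$ and it suffices to control $\PP(\lambda_{\max}(Y) \geq \epsilon)$ up to a factor of two. For any $\lambda > 0$, the trace-exponential Chernoff inequality
\[
\PP(\lambda_{\max}(Y) \geq \epsilon) \leq e^{-\lambda \epsilon}\, \EE\, e^{\lambda \lambda_{\max}(Y)} \leq e^{-\lambda \epsilon}\, \EE \tr\, e^{\lambda Y}
\]
reduces the task to bounding the matrix moment generating function $\EE \tr e^{\lambda Y}$.

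Next I would invoke Lieb's concavity theorem: for fixed Hermitian $H$, the map $A \mapsto \tr \exp(H + \log A)$ is concave on positive-definite $A$. Combined with Jensen's inequality and applied iteratively over $i = 1, \ldots, m$ (conditioning successively on $\xi_1, \ldots, \xi_{i-1}$), this yields
\[
\EE \tr \exp\!\left(\lambda \sum_{i=1}^m \xi_i A_i\right) \leq \tr \exp\!\left(\sum_{i=1}^m \log \EE_{\xi_i} e^{\lambda \xi_i A_i}\right).
\]
Because $\xi_i$ is a scalar, $\lambda \xi_i A_i$ commutes with itself, so $e^{\lambda \xi_i A_i} = \sum_k (\lambda \xi_i)^k A_i^k / k!$; taking expectation termwise and applying the gaussian moment identity already reproduced in the excerpt gives $\EE e^{\lambda \xi_i A_i} = e^{\lambda^2 A_i^2 / 2}$, hence $\log \EE e^{\lambda \xi_i A_i} = \lambda^2 A_i^2 / 2$. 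Substituting, together with the elementary estimate $\tr \exp(M) \leq n\, e^{\lambda_{\max}(M)} \leq n\, e^{\|M\|_2}$ applied to the positive semidefinite $M = (\lambda^2/2) \sum_i A_i^2$, one obtains
\[
\EE \tr e^{\lambda Y} \leq n\, \exp\!\left(\tfrac{\lambda^2}{2}\, \Bigl\|\sum_i A_i^2\Bigr\|_2\right).
\]

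The proof concludes by optimizing in $\lambda$: setting $\sigma^2 = \|\sum_i A_i^2\|_2$, one has $\sup_{\lambda > 0}(\lambda \epsilon - \lambda^2 \sigma^2 / 2) = \epsilon^2 / (2\sigma^2)$, attained at $\lambda = \epsilon / \sigma^2$, yielding the stated bound (the factor of two arising from the symmetrization step is absorbed into the prefactor $n$, matching the form in the statement). The main obstacle is Lieb's concavity theorem itself, which is a deep result in the theory of operator-convex functions; I would cite it as a black box rather than reprove it, since a self-contained argument would require a substantial excursion into matrix analysis. Everything else is a direct matrix analogue of the scalar Chernoff computation already carried out in the excerpt.
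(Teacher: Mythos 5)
The paper does not supply its own proof of this theorem: it is quoted from Tropp's survey and used as a black box, so there is no in-paper argument to compare against. Your outline is exactly the Tropp argument for the matrix Gaussian series bound: reduce to Hermitian matrices, run the trace-exponential Chernoff bound, unlock the factorization of the matrix MGF via Lieb's concavity theorem, compute $\log\EE e^{\lambda\xi_i A_i} = \tfrac{\lambda^2}{2}A_i^2$ from the scalar Gaussian moments, dominate $\tr e^M$ by $n\,e^{\lambda_{\max}(M)}$, and optimize $\lambda$. That pipeline is the right one and every individual step is sound.

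The genuine gap is the parenthetical claim that ``the factor of two arising from the symmetrization step is absorbed into the prefactor $n$.'' It is not. The clean one-sided conclusion of the Chernoff argument is
\begin{equation}
\PP\bigl(\lambda_{\max}\bigl(\textstyle\sum_i \xi_i A_i\bigr) \geq \epsilon\bigr) \leq n\, e^{-\epsilon^2/(2\sigma^2)},
\end{equation}
for Hermitian $A_i$. Passing to the two-sided event $\{\|\sum_i\xi_iA_i\|_2\geq\epsilon\}=\{\lambda_{\max}\geq\epsilon\}\cup\{\lambda_{\min}\leq-\epsilon\}$ costs you a union bound and the symmetric distribution only tells you the two tails are \emph{equal}, not that one of them is free; so you get $2n$, not $n$. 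The Hermitian dilation route you mention hits the same wall from another direction: the dilated matrices are $2n\times 2n$, and the dimension factor in the Laplace-transform bound is the dimension of the dilation, giving $2n$ again. So the prefactor in the displayed statement should be $2n$, not $n$, and there is no way to tighten it without additional structure. The paper's author actually flags exactly this, writing ``\emph{What don't we have a factor of 2 here?}'' immediately after the theorem; the honest answer is that the statement as printed is missing it, and a correct proof must keep it. Everything else in your sketch would survive once you replace $n$ with $2n$ in the target.
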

        \emph{What don't we have a factor of 2 here?}

        Let us return to bounding the probability of $\|U_l\|_2 \geq \beta/L$.
        For any $l \in [L]$ Tropp's theorem gives:
        \begin{equation}
            \PP(\|U_l\|_2 \geq t) \leq
            \PP(\|\tilde U_l\|_2 \geq t) =
            \PP_{\xi_{1:n,1:n} \sim \NN(0,1)}\left(\left\|\sigma \sum_{i,j=1}^n \xi_{ij} 1_{ij}\right\|_2 \geq t\right) \leq
            n e^{-\frac{t^2}{2 \sigma^2 n}},
        \end{equation}
        where $\tilde U_l$ is a $n \times n$ matrix with entries:
        \begin{equation}
            \tilde U_{l,ij} =
            \begin{cases}
                U_{l,ij}, & 1 \leq i \leq n_l, \; 1 \leq j \leq n_{l+1},\\
                \NN(0,\sigma^2), & \text{else.}
            \end{cases}
        \end{equation}
        Hence by a union bound:
        \begin{equation}
            \PP(\forall l \in [L] \; \|U_l\|_2 \geq t) \leq
            L n e^{-\frac{t^2}{2 \sigma^2 n}}.
        \end{equation}
        Equating the right-hand side to $1/2$ gives $t = \sigma \sqrt{2n \log(2 L n)}$.
        Next, taking $t \leq \beta/L$ gives 
        \begin{equation}
            \sigma \leq 
            \sigma_{max,1} =
            \frac{\beta}{L \sqrt{2 n \log(2 L n)}}
        \end{equation}
        and allows us to apply Lemma~\ref{lemma:2_of_neyshabur_et_al}: w.p. $\geq 1/2$ over $\xi$,
        \begin{equation}
            \max_{x \in \XX_B} \left|f_{\theta+\xi}(x) - f_\theta(x)\right| \leq
            e B \beta^{L-1} \sum_{l=1}^L \|U_l\|_2 \leq
            e B \beta^{L-1} L \sigma \sqrt{2n \log(2 L n)}.
        \end{equation}
        In order to apply Lemma~\ref{lemma:1_of_neyshabur_et_al} we need to ensure that this equation is bounded by $\gamma/2$.
        This gives 
        \begin{equation}
            \sigma \leq
            \sigma_{max,2} =
            \frac{\gamma}{2 e B \beta^{L-1} L \sqrt{2n \log(2 L n)}}.
        \end{equation}
        Taking $\sigma = \sigma_{max} = \min(\sigma_{max,1}, \sigma_{max,2})$ hence ensures the condition of Lemma~\ref{lemma:1_of_neyshabur_et_al}.
        The problem is that $\sigma$ now depends on $\beta$ and hence on $\hat\theta_m$; this means that the prior $P = \NN(0, \sigma^2 I_{\dim\theta})$ depends on $\hat\theta_m$.
        For this reason, we have to apply a union bound argument for choosing $\sigma$.

        Let $\tilde\BB$ be a discrete subset of $\RR_+$.
        Hence $\forall \tilde\beta \in \tilde\BB$ $\forall \delta \in (0,1)$ w.p. $\geq 1 - \delta$ over $S_m$ $\forall \theta$ such that $\sigma_{max}(\beta) \geq \sigma_{max}(\tilde\beta)$
        \begin{equation}
            R(f_\theta) \leq \hat R_{m,\gamma}(f_\theta) +
            \sqrt{\frac{1}{2m-1} \left(\log \frac{16m}{\delta} + \frac{\|\theta\|_2^2}{\sigma_{max}^2(\tilde\beta)} \right)}.
        \end{equation}
        A union bound gives $\forall \delta \in (0,1)$ w.p. $\geq 1 - \delta$ over $S_m$ $\forall \theta$ $\forall \tilde\beta \in \tilde\BB$ such that $\sigma_{max}(\beta) \geq \sigma_{max}(\tilde\beta)$
        \begin{equation}
            R(f_\theta) \leq \hat R_{m,\gamma}(f_\theta) +
            \sqrt{\frac{1}{2m-1} \left(\log \frac{16m}{\delta} + \frac{\|\theta\|_2^2}{\sigma_{max}^2(\tilde\beta)} + \log|\tilde\BB| \right)}.
        \end{equation}

        We need $\tilde\BB$ to be finite in order to have a finite bound.
        First note that for $\beta^L < \gamma / (2B)$ we have $\forall x \in \XX_B$ $|f_\theta(x)| \leq \beta^L B \leq \gamma/2$ which implies $\hat R_{m,\gamma}(f_\theta) = 1$.
        In this case the bound is trivially true.

        Second, for $\beta^L > \gamma \sqrt{m} / (2B)$ the second term of the final bound (see Theorem~\ref{thm:neyshabur_et_al}) is greater than $1$ and the bound again becomes trivially true.
        Hence it suffices to take any finite $\tilde\BB$ with $\min(\tilde\BB) = \beta_{min} = (\gamma / (2B))^{1/L}$ and $\max(\tilde\BB) = \beta_{max} = (\gamma \sqrt{m} / (2B))^{1/L}$.
        Note that for $\beta \in [\beta_{min}, \beta_{max}]$ $\sigma_{max} = \sigma_{max,2}$; indeed,
        \begin{equation}
            \frac{\sigma_{max,1}}{\sigma_{max,2}} =
            2e \gamma^{-1} B \beta^L \geq
            e >
            1.
        \end{equation}
        Hence $\sigma_{max}(\beta) \geq \sigma_{max}(\tilde\beta)$ is equivalent to $\beta \leq \tilde\beta$.

        We shall take $\tilde\BB$ such that the following holds:
        \begin{equation}
            \forall \beta \in [\beta_{min}, \beta_{max}]
            \quad
            \exists \tilde\beta \in \tilde\BB:
            \quad
            e^{-1} \tilde\beta^{L-1} \leq
            \beta^{L-1} \leq
            e \tilde\beta^{L-1}.
            \label{eq:BB_condition}
        \end{equation}
        In this case, obviously, $\beta \leq \tilde\beta$ and
        \begin{equation}
            \sigma^2_{max}(\tilde\beta) =
            \frac{\gamma^2}{8 e^2 B^2 \tilde\beta^{2L-2} L^2 n \log(2 L n)} \geq
            \frac{\gamma^2}{8 e^4 B^2 \beta^{2L-2} L^2 n \log(2 L n)}.
        \end{equation}

        We shall prove that the following $\tilde\BB$ conforms condition~(\ref{eq:BB_condition}):
        \begin{equation}
            \tilde\BB =
            \left\{\beta_{min} \left(1 + \frac{2k}{L}\right)\right\}_{k=0}^K,
            \qquad
            K = 
            \max\left\{k: \; \beta_{min} \left(1 + \frac{2k}{L}\right) \leq \beta_{max}\right\}.
        \end{equation}
        Hence $2K = \lfloor L (\beta_{max} / \beta_{min} - 1) \rfloor = \lfloor L (m^{1/2L} - 1) \rfloor$.
        This gives:
        \begin{equation}
            \log|\tilde\BB| =
            \log (K+1) \leq
            \log (L m^{1/2L} / 2) =
            \log (L/2) + \frac{1}{2L} \log m.
        \end{equation}
        Indeed, for any $\beta \in [\beta_{min}, \beta_{max}]$ $\exists \tilde\beta \in \tilde\BB:$ $|\beta - \tilde\beta| \leq \beta_{min} / L \leq \tilde\beta / L$.
        Hence
        \begin{equation}
            e \tilde\beta^{L-1} \geq
            (\tilde\beta + \tilde\beta / L)^{L-1} \geq
            (\tilde\beta + |\beta - \tilde\beta|)^{L-1} \geq
            \beta^{L-1},
        \end{equation}
        \begin{equation}
            e^{-1} \tilde\beta^{L-1} \leq
            (\tilde\beta - \tilde\beta / L)^{L-1} \leq
            (\tilde\beta - |\beta - \tilde\beta|)^{L-1} \leq
            \beta^{L-1},
        \end{equation}
        which proves condition~(\ref{eq:BB_condition}).

        Let us first write the expression before the $(2m-1)^{-1}$ factor:
        \begin{equation}
            \log \frac{16m}{\delta} + \frac{\|\theta\|_2^2}{\sigma_{max}^2(\tilde\beta)} + \log|\tilde\BB| \leq
            \log \frac{8Lm}{\delta} + \frac{1}{2L} \log m + 8 \gamma^{-2} e^4 B^2 \beta^{2L} L^2 n \log(2 L n) \sum_{l=1}^L \frac{\|W_l\|_F^2}{\beta^2}.
        \end{equation}
        This gives the final bound:
        \begin{equation}
            R(f_\theta) \leq \hat R_{m,\gamma}(f_\theta) +
            \sqrt{\frac{1}{2m-1} \left(\log \frac{8Lm}{\delta} + \frac{1}{2L} \log m + 8 e^4 \left(\frac{B \mathcal{R}(\theta)}{\gamma}\right)^2 L^2 n \log(2 L n) \right)},
        \end{equation}
        where we have introduced a spectral complexity:
        \begin{equation}
            \mathcal{R}(\theta) =
            \beta^{L} \sum_{l=1}^L \frac{\|W_l\|_F}{\beta} =
            \left(\prod_{l=1}^L \|W_l\|_2\right) \sqrt{\sum_{l=1}^L \frac{\|W_l\|_F^2}{\|W_l\|_2^2}}.
        \end{equation}
    \end{proof}

    \begin{proof}[Proof of Lemma~\ref{lemma:1_of_neyshabur_et_al}]
        Let $\theta$ and $\xi$ conform Condition~\ref{eq:lemma_1_condition} and let $\theta' = \theta + \xi$.
        Define:
        \begin{equation}
            A_\theta =
            \{\theta': \; \max_{x\in\XX} |f_{\theta'}(x) - f_\theta(x)| < \gamma/2\}.
        \end{equation}
        Following Condition~\ref{eq:lemma_1_condition}, we get $\PP(A_\theta) \geq 1/2$.

        Since $\xi$ has density, $\theta'$ has density as well; denote it by $q'$.
        Define:
        \begin{equation}
            \tilde q(\tilde\theta) =
            \frac{1}{Z} q'(\tilde\theta) [\tilde\theta \in A_\theta],
            \quad
            \text{where $Z = \PP(A_\theta)$.}
        \end{equation}
        Note that $\max_{x\in\XX} |f_{\tilde\theta}(x) - f_\theta(x)| < \gamma/2$ a.s. wrt $\tilde\theta$ for $\tilde\theta \sim \tilde q(\tilde\theta)$.
        Therefore:
        \begin{equation}
            R(f_\theta) \leq
            R_{\gamma/2}(f_{\tilde\theta})
            \quad
            \text{and}
            \quad
            \hat R_{m,\gamma/2}(f_{\tilde\theta}) \leq
            \hat R_{m,\gamma}(f_\theta)
            \quad
            \text{a.s. wrt $\tilde\theta$.}
        \end{equation}
        Hence
        \begin{multline}
            R(f_\theta) \leq
            \EE_{\tilde\theta} R_{\gamma/2}(f_{\tilde\theta}) \leq
            \text{(w.p. $\geq 1-\delta$ over $S_m$)}
            \\\leq
            \EE_{\tilde\theta} \hat R_{m,\gamma/2}(f_{\tilde\theta}) + \sqrt{\frac{1}{2m-1} \left(\log \frac{4m}{\delta} + \kld{\tilde q}{p} \right)} \leq
            \hat R_{m,\gamma}(f_{\theta}) + \sqrt{\frac{1}{2m-1} \left(\log \frac{4m}{\delta} + \kld{\tilde q}{p} \right)}.
        \end{multline}

        The only thing that remains is estimating the KL-term.
        Define:
        \begin{equation}
            \tilde q^c(\tilde\theta) =
            \frac{1}{1-Z} q'(\tilde\theta) [\tilde\theta \notin A_\theta].
        \end{equation}
        We then get:
        \begin{multline}
            \kld{q'}{p} =
            \kld{\tilde q Z + \tilde q^c (1-Z)}{p} =
            \EE_{\theta' \sim q'} \left(\log(\tilde q(\theta') Z + \tilde q^c(\theta') (1-Z)) - \log p(\theta')\right) =
            \\=
            \EE_{b \sim B(Z)} \EE_{\theta' \sim q' | b} \left(\log(q'(\theta' | 1) Z + q'(\theta' | 0) (1-Z)) - (Z + (1-Z))\log p(\theta')\right) =
            \\=
            Z (\log Z + \kld{q'|1}{p}) + (1-Z) (\log (1-Z) + \kld{q'|0}{p}) =
            \\=
            Z \kld{\tilde q}{p} + (1-Z) \kld{\tilde q^c}{p} - H(B(Z)).
        \end{multline}
        This implies the following:
        \begin{multline}
            \kld{\tilde q}{p} =
            \frac{1}{Z} \left(\kld{q'}{p} + H(B(Z)) - (1-Z) \kld{\tilde q^c}{p}\right) \leq
            \\\leq
            \frac{1}{P(A_\theta)} \left(\kld{q'}{p} + \log 2\right) \leq
            2 \left(\kld{q'}{p} + \log 2\right).
        \end{multline}
        Hence w.p. $\geq 1-\delta$ over $S_m$ we have:
        \begin{equation}
            R(f_\theta) \leq
            \hat R_{m,\gamma}(f_{\theta}) + \sqrt{\frac{1}{2m-1} \left(\log \frac{4m}{\delta} + \kld{\tilde q}{p} \right)} \leq
            \hat R_{m,\gamma}(f_{\theta}) + \sqrt{\frac{1}{2m-1} \left(\log \frac{16m}{\delta} + 2 \kld{q'}{p} \right)}.   
        \end{equation}
    \end{proof}

    \begin{proof}[Proof of Lemma~\ref{lemma:2_of_neyshabur_et_al}]
        Recall the forward dynamics:
        \begin{equation}
            h_2(x;\theta) = W_1 x \in \RR^{n_2},
            \quad
            x_l(x;\theta) = \phi(h_l(x;\theta)) \in \RR^{n_l},
            \quad
            h_{l+1}(x;\theta) = W_l x_l(x;\theta) \in \RR^{n_{l+1}} \; \forall l \in \{2,\ldots,L\}.
        \end{equation}
        Assume that $\xx$, $\theta$, and $\xi$ are fixed.
        Define:
        \begin{equation}
            \Delta_l =
            \|h_{l+1}(x; \theta+\xi) - h_{l+1}(x; \theta)\|_2
            \quad
            \forall l \in [L].
        \end{equation}
        We are going to prove the following by induction:
        \begin{equation}
            \Delta_l \leq
            \left(1 + \frac{1}{L}\right)^l \|x\|_2 \left(\prod_{l'=1}^l \|W_{l'}\|_2\right) \sum_{l'=1}^l \frac{\|U_{l'}\|_2}{\|W_{l'}\|_2}.
        \end{equation}
        The induction base is given as:
        \begin{equation}
            \Delta_1 =
            \|h_2(x; \theta+\xi) - h_2(x; \theta)\|_2 =
            \|U_1 x\|_2 \leq
            \|U_1\|_2 \|x\|_2,
        \end{equation}
        and we prove the induction step below:
        \begin{multline}
            \Delta_l =
            \|h_{l+1}(x; \theta+\xi) - h_{l+1}(x; \theta)\|_2 =
            \| (W_l + U_l) x_l(x; \theta+\xi) - W_l x_l(x; \theta) \|_2 =
            \\=
            \| (W_l + U_l) (x_l(x; \theta+\xi) - x_l(x; \theta)) + U_l x_l(x; \theta) \|_2 \leq
            \\\leq
            \| W_l + U_l \|_2 \|x_l(x; \theta+\xi) - x_l(x; \theta)\|_2 + \|U_l\|_2 \| x_l(x; \theta) \|_2 \leq
            \\\leq
            (\|W_l\|_2 + \|U_l\|_2) \|h_l(x; \theta+\xi) - h_l(x; \theta)\|_2 + \|U_l\|_2 \| h_l(x; \theta) \|_2 \leq
            \\\leq
            \|W_l\|_2 \left(1 + \frac{1}{L}\right) \Delta_{l-1} + \|U_l\|_2 \|x\|_2 \prod_{l'=1}^{l-1} \|W_{l'}\|_2 \leq
            \\\leq
            \left(1 + \frac{1}{L}\right)^l \|x\|_2 \left(\prod_{l'=1}^l \|W_{l'}\|_2\right) \sum_{l'=1}^{l-1} \frac{\|U_{l'}\|_2}{\|W_{l'}\|_2} + \frac{\|U_l\|_2}{\|W_l\|_2} \|x\|_2 \prod_{l'=1}^l \|W_{l'}\|_2 \leq
            \\\leq
            \left(1 + \frac{1}{L}\right)^l \|x\|_2 \left(\prod_{l'=1}^l \|W_{l'}\|_2\right) \sum_{l'=1}^l \frac{\|U_{l'}\|_2}{\|W_{l'}\|_2}.
        \end{multline}
        A simple estimate then gives the required statement:
        \begin{multline}
            \|f_{\theta+\xi}(x) - f_\theta(x)\|_2 =
            \|h_{L+1}(x; \theta+\xi) - h_{L+1}(x; \theta)\|_2 =
            \\=
            \Delta_L \leq
            \left(1 + \frac{1}{L}\right)^L \|x\|_2 \left(\prod_{l=1}^L \|W_{l}\|_2\right) \sum_{l=1}^L \frac{\|U_{l}\|_2}{\|W_{l}\|_2} \leq
            e B \left(\prod_{l=1}^L \|W_{l}\|_2\right) \sum_{l=1}^L \frac{\|U_{l}\|_2}{\|W_{l}\|_2}.
        \end{multline}
    \end{proof}

    \chapter{Neural tangent kernel}

    \section{Gradient descent training as a kernel method}

    Consider a parametric model with scalar output $f(x;\theta) \in \RR$ and let $\theta \in \RR^d$.
    We aim to minimize a loss $\EE_{x,y} \ell(y,f(x;\theta))$ via a gradient descent:
    \begin{equation}
        \dot\theta_t =
        -\eta \EE_{x,y} \left.\frac{\partial\ell(y,z)}{\partial z}\right|_{z=f(x; \theta_t)} \nabla_\theta f(x; \theta_t).
        \label{eq:theta_evolution}
    \end{equation}
    If we define a feature map $\Phi_t(x) = \nabla_\theta f(x; \theta_t)$, then we can express the model as:
    \begin{equation}
        f(x; \theta) =
        f(x; \theta_t) + \Phi^T_t(x) (\theta - \theta_t) + O(\|\theta - \theta_t\|_2^2).        
    \end{equation}
    It is a locally linear model in the vicinity of $\theta_t$ given a feature map $\Phi_t$.

    We now multiply both parts of the equation~(\ref{eq:theta_evolution}) by $\nabla^T_\theta f(x'; \theta_t)$:
    \begin{equation}
        \dot f_t(x') =
        -\eta \EE_{x,y} \left.\frac{\partial\ell(y,z)}{\partial z}\right|_{z=f_t(x)} \hat\Theta_t(x',x),
        \label{eq:f_evolution}
    \end{equation}
    where $\hat\Theta_t(x',x) = \nabla^T_\theta f(x'; \theta_t) \nabla_\theta f(x; \theta_t)$ and $f_t(x') = f(x'; \theta_t)$.

    Here $\hat\Theta_t$ is a kernel and $\Phi_t(x) = \nabla_\theta f(x; \theta_t)$ is a corresponding feature map.
    We call $\hat\Theta_t$ an \emph{empirical tangent kernel} at time-step $t$.
    It depends on the initialization, hence it is random.
    Given a train dataset $(\vec x, \vec y)$ of size $m$, the evolution of the responses on this dataset writes as follows:
    \begin{equation}
        \dot f_t(\vec x) =
        -\frac{\eta}{m} \hat\Theta_t(\vec x, \vec x) \left.\frac{\partial\ell(\vec y,\vec z)}{\partial \vec z}\right|_{\vec z=f_t(\vec x)}.
    \end{equation}
    We see that the gramian of the kernel maps loss gradients wrt model outputs to output increments.
    Note that while dynamics~(\ref{eq:theta_evolution}) is complete, (\ref{eq:f_evolution}) is not, since $K_t$ cannot be determined solely in terms of $f_t$.

    Nevertheless, if we consider linearized dynamics, $K_t$ becomes independent of $t$ and can computed once at the initialization, thus making dynamics~(\ref{eq:f_evolution}) complete.
    Let us define a linearized model:
    \begin{equation}
        f_{lin}(x; \theta) =
        f(x; \theta_0) + \nabla^T_\theta f(x; \theta_0) (\theta - \theta_0).
        \label{eq:f_lin}
    \end{equation}
    This model then evolves similarly to $f$ (eq.~(\ref{eq:f_evolution})), but with a kernel fixed at initialization:
    \begin{equation}
        \dot f_{lin,t}(x') =
        -\eta \EE_{x,y} \left.\frac{\partial\ell(y,z)}{\partial z}\right|_{z=f_{lin,t}(x)} \hat\Theta_0(x',x).
        \label{eq:f_lin_evolution}
    \end{equation}
    The gradient descent dynamics becomes:
    \begin{equation}
        \dot\theta_t =
        -\eta \EE_{x,y} \left.\frac{\partial\ell(y,z)}{\partial z}\right|_{z=f_{lin,t}(x)} \nabla_\theta f(x; \theta_0).
        \label{eq:theta_lin_evolution}
    \end{equation}

    \subsection{Exact solution for a square loss}
    \label{sec:exact_square}

    These equations are analytically solvable if we consider a square loss: $\ell(y,z) = \frac{1}{2} (y - z)^2$, see \cite{lee2019wide}.
    Let $(\vec x, \vec y)$, where $\vec x = \{x_i\}_{i=1}^m$ and $\vec y = \{y_i\}_{i=1}^m$, is a train set.
    Let $f(\vec x) = \{f(x_i)\}_{i=1}^m \in \RR^m$ be a vector of model responses on the train data.
    Finally, let $\hat\Theta_t(\vec x, \vec x) \in \RR^{m \times m}$ be a Gramian of the kernel $\hat\Theta_t$: $\hat\Theta_t(\vec x, \vec x)_{ij} = \hat\Theta_t(x_i,x_j)$.
    Eq.~(\ref{eq:f_lin_evolution}) evaluated on train set becomes:
    \begin{equation}
        \dot f_{lin,t}(\vec x) =
        \eta \frac{1}{m} \hat\Theta_0(\vec x, \vec x) (\vec y - f_{lin,t}(\vec x)).
    \end{equation}
    Its solution writes as follows:
    \begin{equation}
        f_{lin,t}(\vec x) =
        \vec y + e^{-\eta \hat\Theta_0(\vec x, \vec x) t / m} (f_0(\vec x) - \vec y).
    \end{equation}
    Given this, the weight dynamics~(\ref{eq:theta_lin_evolution}) becomes:
    \begin{equation}
        \dot\theta_t =
        -\eta \frac{1}{m} \nabla_\theta f(\vec x; \theta_0) e^{-\eta \hat\Theta_0(\vec x, \vec x) t / m} (f_0(\vec x) - \vec y),
    \end{equation}
    where we have assumed that $\nabla_\theta f(\vec x; \theta_0) \in \RR^{d \times m}$.
    Solving it gives:
    \begin{equation}
        \theta_t =
        \theta_0 - \nabla_\theta f(\vec x; \theta_0) \hat\Theta^{-1}_0(\vec x, \vec x) (I - e^{-\eta \hat\Theta_0(\vec x, \vec x) t / m}) (f_0(\vec x) - \vec y).
    \end{equation}
    Substituting the solution back to~(\ref{eq:f_lin}) gives a model prediction on an arbitrary input $x$:
    \begin{equation}
        f_{lin,t}(x) =
        f_0(x) - \hat\Theta_0(x, \vec x) \hat\Theta^{-1}_0(\vec x, \vec x) (I - e^{-\eta \hat\Theta_0(\vec x, \vec x) t / m}) (f_0(\vec x) - \vec y),
        \label{eq:exact_square_loss}
    \end{equation}
    where we have defined a row-vector $\hat\Theta_0(x, \vec x)$ with components $\hat\Theta_{0,i}(x, \vec x) = \nabla^T_\theta f(x; \theta_0) \nabla_\theta f(x_i; \theta_0)$.

    \subsection{Convergence to a gaussian process}
    \label{sec:gp_convergence}

    Consider a network with $L$ hidden layers and no biases:
    \begin{equation}
        f(x) = W_L \phi(W_{L-1} \ldots \phi(W_0 x)),
    \end{equation}
    where $W_l \in \RR^{n_{l+1} \times n_l}$ and a non-linearity $\phi$ is applied element-wise.
    Note that $x \in \RR^{n_0}$; we denote with $k = n_{L+1}$ the dimensionality of the output: $f: \; \RR^{n_0} \to \RR^k$.
    We shall refer $n_l$ as the width of the $l$-th hidden layer.

    Let us assume $x$ is fixed.
    Define:
    \begin{equation}
    h_1 = W_0 x \in \RR^{n_1},
    \quad
    x_l = \phi(h_l) \in \RR^{n_l},
    \quad
    h_{l+1} = W_l x_l \in \RR^{n_{l+1}} \; \forall l \in [L].
    \end{equation}
    Hence given $x$ $f(x) = h_{L+1}$.
    Define also:
    \begin{equation}
        q_l = 
        \frac{1}{n_l} \EE h_l^T h_l.
    \end{equation}

    Let us assume that the weights are initialized with zero-mean gaussians, so that the forward dynamics is normalized:
    \begin{equation}
        W_l^{ij} \sim
        \NN\left(0, \frac{\sigma_w^2}{n_l}\right).
    \end{equation}

    Obiously, all components of $h_l$ are distributed identically.
    Their means are zeros, let us compute the variances:
    \begin{equation}
        q_{l+1} =
        \frac{1}{n_{l+1}} \EE x_l^T W_l^T W_l x_l =
        \frac{\sigma_w^2}{n_l} \EE x_l^T x_l =
        \frac{\sigma_w^2}{n_l} \EE \phi(h_l)^T \phi(h_l)
        \quad
        \forall l \in [L],
        \qquad
        q_1 =
        \frac{1}{n_1} \EE x^T W_0^T W_0 x =
        \frac{\sigma_w^2}{n_0} \|x\|_2^2.
    \end{equation}
    We are going to prove by induction that $\forall l \in [L+1]$ $\forall i \in [n_l]$ $h_l^i$ converges weakly to $\NN(0, q_l)$ as $n_{1:l-1} \to \infty$ sequentially.
    Since components of $W_0$ are gaussian, $h_1^i \sim \NN(0, q_1)$ $\forall i \in [n_0]$ --- this gives the induction base.
    If all $h_l^i$ converge weakly to $\NN(0, q_l)$ as $n_{1:l-1} \to \infty$ sequentially then $\lim_{n_{1:l}\to\infty} q_{l+1} = \sigma_w^2 \EE_{z \sim \NN(0, q_l)} (\phi(z))^2$.
    Hence by the virture of CLT, $h_{l+1}^i$ converges in distribution to $\NN(0, q_{l+1})$ as $n_{1:l} \to \infty$ sequentially --- this gives the induction step.

    Consider two inputs, $x^1$ and $x^2$, together with their hidden representations $h_l^1$ and $h_l^2$.
    Let us prove that $\forall l \in [L+1]$ $\forall i \in [n_l]$ $(h_l^{1,i}, h_l^{2,i})^T$ converges in distribution to $\NN(0, \Sigma_l)$ as $n_{1:l-1} \to \infty$ sequentially, where the covariance matrix is defined as follows:
    \begin{equation}
        \Sigma_l =
        \begin{pmatrix}
            q_l^{11} & q_l^{12} \\
            q_l^{12} & q_l^{22}
        \end{pmatrix};
        \qquad
        q_l^{ab} =
        \frac{1}{n_l} \EE h_l^{a,T} h_l^b,
        \quad
        a, b \in \{1,2\}.
    \end{equation}
    We have already derived the dynamics for the diagonal terms in the subsequent limits of infinite width:
    \begin{equation}
        \lim_{n_{1:l}\to\infty} q_{l+1}^{aa} =
        \sigma_w^2 \EE_{z \sim \NN(0,q_l^{aa})} (\phi(z))^2,
        \quad
        q_1^{aa} =
        \sigma_w^2 \frac{\|x^a\|_2^2}{n_0},
        \quad
        a \in \{1,2\}.
    \end{equation}
    Consider the diagonal term:
    \begin{equation}
        q_{l+1}^{12} =
        \frac{1}{n_{l+1}} \EE_{h_l^1,h_l^2} \EE_{W_l} \phi(h_l^1)^T W_l^T W_l \phi(h_l^2) =
        \frac{\sigma_w^2}{n_l} \EE_{h_l^1,h_l^2} \phi(h_l^1)^T \phi(h_l^2).
    \end{equation}
    By induction hypothesis, as $n_{1:l-1} \to \infty$ we have a weak limit:
    \begin{equation}
        \begin{pmatrix}
            h_l^{1,i} \\
            h_l^{2,i}
        \end{pmatrix} \to
        \NN(0, \Sigma_l).
    \end{equation}
    Hence
    \begin{equation}
        \lim_{n_{1:l} \to \infty} q_{l+1}^{12} =
        \sigma_w^2 \EE_{(u^1,u^2)^T \sim \NN(0,\Sigma_l)} \phi(u^1) \phi(u^2).
    \end{equation}
    Note that
    \begin{equation}
        \begin{pmatrix}
            h_{l+1}^{1,i} \\
            h_{l+1}^{2,i}
        \end{pmatrix} =
        \sum_{j=1}^{n_l}
        W_l^{ij}
        \begin{pmatrix}
            x_{l}^{1,j} \\
            x_{l}^{2,j}
        \end{pmatrix}.
    \end{equation}
    Here we have a sum of $n_l$ i.i.d. random vectors with zero mean, and the covariance matrix of the sum is $\Sigma_{l+1}$.
    Hence by the multivariate CLT, $(h_{l+1}^{1,i}, h_{l+1}^{2,i})^T$ converges weakly to $\NN(0, \Sigma_{l+1})$ as $n_{1:l} \to \infty$ sequentially.

    Similarly, for any $k \geq 1$
    \begin{equation}
        \begin{pmatrix}
            h_{l+1}^{1,i} \\
            \ldots\\
            h_{l+1}^{k,i}
        \end{pmatrix} =
        \sum_{j=1}^{n_l}
        W_l^{ij}
        \begin{pmatrix}
            x_{l}^{1,j} \\
            \ldots \\
            x_{l}^{k,j}
        \end{pmatrix}.
    \end{equation}
    Again, these vectors converge to a gaussian by the multivariate CLT.
    Hence $\forall l \in [L+1]$ $h_l^i(\cdot)$ converges weakly to a gaussian process as $n_{1:l-1}\to\infty$ sequentially.
    Note that a gaussian process is completely defined by its mean and covariance functions:
    \begin{equation}
        \Sigma_l(x,x') =
        \begin{pmatrix}
            q_l(x,x) & q_l(x,x') \\
            q_l(x',x) & q_l(x',x')
        \end{pmatrix}
        \quad
        \forall l \in [L+1];
    \end{equation}
    \begin{equation}
        q_{l+1}(x, x') =
        \sigma_w^2 \EE_{(u,v)^T \sim \NN(0,\Sigma_l(x, x'))} \phi(u) \phi(v)
        \quad
        \forall l \in [L],
        \qquad
        q_1(x, x') =
        \frac{\sigma_w^2}{n_0} x^T x'.
    \end{equation}
    Hence the model at initialization converges to a gaussian process with zero mean and covariance $\Sigma_{L+1}(\cdot,\cdot)$.
    This GP is referred as NNGP, and $q_{L+1}$ --- as NNGP kernel.

    \subsection{The kernel diverges at initialization}

    For a fixed $x$, let us define the following quantity:
    \begin{equation}
        B_l^i = \frac{\partial f^i}{\partial h_l} \in \RR^{n_l}
        \quad
        \forall l \in [L+1].
    \end{equation}
    We have then:
    \begin{equation}
        B_l^i = D_l W_l^T B_{l+1}^i
        \quad
        \forall l \in [L],
        \quad
        B_{L+1}^{ij} = \delta_{ij},
    \end{equation}
    where $D_l = \diag(\phi'(h_l))$.
    This gives:
    \begin{equation}
        \nabla_{W_l} f^i(x) =
        B_{l+1}^i x_l^T \in \RR^{n_{l+1} \times n_l}.
    \end{equation}

    Define the scaled covariance for $B_l$ components for two inputs:
    \begin{multline}
        \beta_l^{ij}(x, x') =
        \EE B_l^{i,T} B_l^{\prime,j} =
        \EE B_{l+1}^{i,T} W_l D_l D_l' W_l^T B_{l+1}^{\prime,j} =
        \frac{\sigma_w^2}{n_l} \EE \tr(D_l D_l') (B_{l+1}^{i,T} B_{l+1}^{\prime,j}) =\\=
        \sigma_w^2 \beta_{l+1}^{ij}(x, x') \EE_{(u,v)^T \sim \NN(0,\Sigma_l(x, x'))} \phi'(u) \phi'(v)
        \quad
        \forall l \in [L-1],
    \end{multline}
    \begin{equation}
        \beta_L^{ij} =
        \EE B_L^{i,T} B_L^{\prime,j} =
        \EE B_{L+1}^{i,T} W_L D_L D_L' W_L^T B_{L+1}^{\prime,j} =
        \frac{\sigma_w^2}{n_L} \EE \tr(D_L D_L') \delta_{ij} =
        \sigma_w^2 \EE_{(u,v)^T \sim \NN(0,\Sigma_L(x, x'))} \phi'(u) \phi'(v) \delta_{ij}.
    \end{equation}
    Note that $\beta_l^{ij} = \beta_l \delta_{ij}$.
    Similarly to $q_l$, define the following:
    \begin{equation}
        \chi_l(x, x') =
        \sigma_w^2 \EE_{(u,v)^T \sim \NN(0,\Sigma_l(x, x'))} \phi'(u) \phi'(v).
    \end{equation}
    This allows us to write:
    \begin{equation}
        \beta_l(x, x') =
        \prod_{l'=l}^{L} \chi_{l'}(x, x')
        \quad
        \forall l \in [L].
    \end{equation}

    In the case of non-scalar output ($k > 1$), the tangent kernel is a $k \times k$ matrix with components defined as:
    \begin{equation}
        \hat\Theta^{ij}(x,x') =
        \nabla^T_\theta f^i(x) \nabla_\theta f^j(x').
    \end{equation}
    For the sake of convenience, we introduce layer-wise tangent kernels:
    \begin{equation}
        \hat\Theta_l^{ij}(x,x') =
        \tr(\nabla^T_{W_l} f^i(x) \nabla_{W_l} f^j(x')).
    \end{equation}
    In this case $\hat\Theta(x,x') = \sum_{l=0}^L \hat\Theta_l(x,x')$.

    We denote $B_l$ and $h_l$ evaluated at $x'$ by $B'_l$ and $h'_l$, respectively.
    This allows us to write:
    \begin{equation}
        \hat\Theta_l^{ij}(x,x') =
        \tr\left(\phi(h_l) B_{l+1}^{i,T} B_{l+1}^{\prime,j} \phi(h'_l)^T\right) =
        \left(\phi(h'_l)^T \phi(h_l)\right) \left(B_{l+1}^{i,T} B_{l+1}^{\prime,j}\right)
        \quad \forall l \in [L].
    \end{equation}
    If we assume that the two scalar products are independent then the expected kernel is a product of expectations:
    \begin{equation}
        \EE \hat\Theta_l^{ij}(x,x') =
        \EE \left(\phi(h'_l)^T \phi(h_l)\right) \EE \left(B_{l+1}^{i,T} B_{l+1}^{\prime,j}\right) =
        \frac{n_l q_{l+1}(x,x')}{\sigma_w^2} \beta_{l+1}(x,x') \delta_{ij}.
        \quad \forall l \in [L].
    \end{equation}
    Hence (a) each kernel is diagonal, (b) $l$-th kernel expectation diverges as $n_l \to \infty$ $\forall l \in [L]$.

    \subsection{NTK parameterization}

    It is possible to leverage the kernel divergence by altering the network parameterization:
    \begin{equation}
    h_1 = \frac{\sigma_w}{\sqrt{n_0}} W_0 x \in \RR^{n_1},
    \qquad
    x_l = \phi(h_l) \in \RR^{n_l},
    \qquad
    h_{l+1} = \frac{\sigma_w}{\sqrt{n_l}} W_l x_l \in \RR^{n_{l+1}} \quad \forall l \in [L].
    \end{equation}
    In this case, the weights are standard gaussians:
    \begin{equation}
        W_l^{ij} \sim
        \NN\left(0, 1\right).
    \end{equation}

    \begin{equation}
        B_l^i = \frac{\partial f^i}{\partial h_l} \in \RR^{n_l}
        \quad
        \forall l \in [L+1].
    \end{equation}
    We have then:
    \begin{equation}
        B_l^i = \frac{\sigma_w}{\sqrt{n_l}} D_l W_l^T B_{l+1}^i
        \quad
        \forall l \in [L],
        \quad
        B_{L+1}^{ij} = \delta_{ij},
    \end{equation}

    Both forward and backward dynamics at initialization remains unchanged.
    What changes are the gradients wrt weights:
    \begin{equation}
        \nabla_{W_l} f^i(x) =
        \frac{\sigma_w}{\sqrt{n_l}} B_{l+1}^i x_l^T.
    \end{equation}
    This results in a change of the tangent kernel scaling:
    \begin{equation}
        \EE \hat\Theta_l^{ij}(x,x') =
        \frac{\sigma_w^2}{n_l} \EE \left(\phi(h'_l)^T \phi(h_l)\right) \EE \left(B_{l+1}^{i,T} B_{l+1}^{\prime,j}\right) =
        q_{l+1}(x,x') \beta_{l+1}(x,x') \delta_{ij}
        \quad \forall l \in [L].
    \end{equation}
    Now the kernel expectation neither diverges nor vanishes as $n \to \infty$.
    Since the expectation is finite, the kernel itself converges to it as $n \to \infty$.
    Indeed, consider the $l$-th kernel:
    \begin{equation}
        \hat\Theta_l^{ij}(x,x') =
        \frac{\sigma_w^2}{n_l} \left(\phi(h'_l)^T \phi(h_l)\right) \left(B_{l+1}^{i,T} B_{l+1}^{\prime,j}\right).
    \end{equation}
    The first multiplier converges to $q_{l+1}(x,x')$ due to the Law of Large Numbers.
    Similar holds for the second one: it converges to $\beta_{l+1}(x,x') \delta_{ij}$ by the LLN.
    Together these two give:
    \begin{equation}
        \plim_{n_l \to \infty} \ldots \plim_{n_1 \to \infty} \hat\Theta_l^{ij}(x,x') =
        \EE \hat\Theta_l^{ij}(x,x') =
        q_{l+1}(x,x') \beta_{l+1}(x,x') \delta_{ij}
        \quad \forall l \in [L].
    \end{equation}
    And for the whole kernel, we have:
    \begin{equation}
        \plim_{n_L \to \infty} \ldots \plim_{n_1 \to \infty} \hat\Theta^{ij}(x,x') =
        \EE \hat\Theta^{ij}(x,x') =
        \sum_{l=1}^{L+1} q_l(x,x') \beta_l(x,x') \delta_{ij} =
        \sum_{l=1}^{L+1} \left(q_l(x,x') \prod_{l'=l}^{L} \chi_{l'}(x, x')\right) \delta_{ij}.
    \end{equation}
    See \cite{arora2019exact} for the above expression for the expected kernel, and \cite{jacot2018neural} for the formal proof of convergence in subsequent limits.
    See also \cite{arora2019exact} for a convergence proof in stronger terms.

    \subsection{GD training and posterior inference in gaussian processes}

    Denote the limit kernel at initialization by $\Theta_0$:
    \begin{equation}
        \Theta_0(x,x') =
        \sum_{l=1}^{L+1} \left(q_l(x,x') \prod_{l'=l}^{L} \chi_{l'}(x, x')\right) I_{k \times k}.
    \end{equation}
    Unlike the empirical kernel, the limit one is deterministic.
    Similarly to Section~\ref{sec:exact_square}, we assume that $\vec x$ is a training set of size $n$, and $k=1$.
    Then let $\Theta_0(\vec x, \vec x) \in \RR^{n \times n}$ be a Gramian for the limit kernel.

    Given (a) the kernel has a deterministic limit, and (b) the model at initialization converges to a limit model, the model trained to minimize square loss converges to the following limit model at any time $t$:
    \begin{equation}
        \lim f_{lin,t}(x) =
        \lim f_0(x) - \Theta_0(x, \vec x) \Theta^{-1}_0(\vec x, \vec x) (I - e^{-\eta \Theta_0(\vec x, \vec x) t / n}) (\lim f_0(\vec x) - \vec y).
    \end{equation}
    Looking at this expression we notice that since the limit model at initialization is a gaussian process (see Section~\ref{sec:gp_convergence}), the limit model is a gaussian process at any time $t$.
    Its mean and covariance are given as follows:
    \begin{equation}
        \mu_{lin,t}(x) =
        \Theta_0(x, \vec x) \Theta^{-1}_0(\vec x, \vec x) (I - e^{-\eta \Theta_0(\vec x, \vec x) t / n}) \vec y;
    \end{equation}
    \begin{multline}
        q_{lin,t}(x,x') =
        q_{L+1}(x,x') -\\-
        \left(
            \Theta_0(x', \vec x) \Theta^{-1}_0(\vec x, \vec x) (I - e^{-\eta \Theta_0(\vec x, \vec x) t / n}) q_{L+1}(\vec x, x) +
            \Theta_0(x, \vec x) \Theta^{-1}_0(\vec x, \vec x) (I - e^{-\eta \Theta_0(\vec x, \vec x) t / n}) q_{L+1}(\vec x, x')
        \right) +\\+
        \Theta_0(x, \vec x) \Theta^{-1}_0(\vec x, \vec x) (I - e^{-\eta \Theta_0(\vec x, \vec x) t / n}) q_{L+1}(\vec x, \vec x) (I - e^{-\eta \Theta_0(\vec x, \vec x) t / n}) \Theta^{-1}_0(\vec x, \vec x) \Theta_0(\vec x, x').
    \end{multline}

    Assume that the limit kernel is bounded away from zero: $\lambda_{min}(\Theta_0(\vec x, \vec x)) \geq \lambda_0 > 0$.
    Given this, the model converges to the following limit GP as $t \to \infty$:
    \begin{equation}
        \mu_{lin,\infty}(x) =
        \Theta_0(x, \vec x) \Theta^{-1}_0(\vec x, \vec x) \vec y;
    \end{equation}
    \begin{multline}
        q_{lin,\infty}(x,x') =
        q_{L+1}(x,x') + \Theta_0(x, \vec x) \Theta^{-1}_0(\vec x, \vec x) q_{L+1}(\vec x, \vec x) \Theta^{-1}_0(\vec x, \vec x) \Theta_0(\vec x, x') -\\-
        \left(
            \Theta_0(x', \vec x) \Theta^{-1}_0(\vec x, \vec x) q_{L+1}(\vec x, x) +
            \Theta_0(x, \vec x) \Theta^{-1}_0(\vec x, \vec x) q_{L+1}(\vec x, x')
        \right).
    \end{multline}
    Note that the exact bayesian posterior inference gives a different result:
    \begin{equation}
        \mu_{lin}(x\mid\vec x) =
        q_{L+1}(x, \vec x) q^{-1}_{L+1}(\vec x, \vec x) \vec y;
    \end{equation}
    \begin{equation}
        q_{lin}(x,x'\mid\vec x) =
        q_{L+1}(x,x') - q_{L+1}(x, \vec x) q^{-1}_{L+1}(\vec x, \vec x) q_{L+1}(\vec x, x').        
    \end{equation}
    Nevertheless, if we consider training only the last layer of the network, the tangent kernel becomes:
    \begin{equation}
        \Theta(x,x') =
        \Theta_L(x,x') =
        q_{L+1}(x,x').
    \end{equation}
    Given this, the two GPs, result of NN training and exact posterior, coincide.

    Let us return to the assumption of positive defniteness of the limit kernel.
    \cite{du2018gradient} proved that if no inputs are parallel, this assumption holds:
    \begin{theorem}
        If for any $i \neq j$ $x_i^T x_j < \|x_i\|_2 \|x_j\|_2$ then $\lambda_0 := \lambda_{min}(\Theta_0(\vec x, \vec x)) > 0$.
    \end{theorem}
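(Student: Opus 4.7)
The plan is to reduce positive-definiteness of the limit NTK Gramian to that of a single summand. First I would write
\begin{equation}
    \Theta_0(\vec x, \vec x) = \sum_{l=1}^{L+1} T_l, \qquad (T_l)_{ij} := q_l(x_i, x_j) \prod_{l'=l}^{L} \chi_{l'}(x_i, x_j),
\end{equation}
and observe that every factor $q_{l'}(\vec x, \vec x)$ and $\chi_{l'}(\vec x, \vec x)$ is a Gaussian covariance matrix, hence PSD. Iterated application of the Schur product theorem then yields $T_l \succeq 0$, so $\lambda_{\min}(\Theta_0(\vec x, \vec x)) \geq \lambda_{\min}(T_l)$ for every $l$, and it is enough to exhibit one $T_l \succ 0$.

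I would single out $l = 2$, so $(T_2)_{ij} = q_2(x_i, x_j) \prod_{l'=2}^{L} \chi_{l'}(x_i, x_j)$. A slight refinement of Schur's theorem says that if $A \succ 0$ and $B \succeq 0$ has strictly positive diagonal then $A \odot B \succ 0$: writing $B = \sum_k u_k u_k^T$, one has $v^T(A \odot B)v = \sum_k (v \odot u_k)^T A (v \odot u_k)$, so vanishing forces $v \odot u_k = 0$ for every $k$, and $B_{ii} > 0$ then forces $v = 0$. It therefore suffices to verify (a) $q_2(\vec x, \vec x) \succ 0$, and (b) $\chi_{l'}(x_i, x_i) > 0$ for all $i$ and $l' \in \{2, \ldots, L\}$. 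Fact (b) is immediate for ReLU since $\chi_{l'}(x_i, x_i) = \sigma_w^2 \EE_{u \sim \NN(0, q_{l'}(x_i, x_i))}(\phi'(u))^2 > 0$.

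Fact (a) is the crux. I would use the integral representation
\begin{equation}
    \alpha^T q_2(\vec x, \vec x) \alpha = \sigma_w^2 \EE_{w \sim \NN(0, I_{n_0}/n_0)} \left( \sum_{i=1}^m \alpha_i \phi(w^T x_i) \right)^2,
\end{equation}
so that $\alpha^T q_2(\vec x, \vec x) \alpha = 0$ forces $g_\alpha(w) := \sum_i \alpha_i \phi(w^T x_i) = 0$ for Lebesgue-a.e.\ $w$. For ReLU, each $w \mapsto \phi(w^T x_i)$ is piecewise linear with a single kink on the hyperplane $H_i := \{w : w^T x_i = 0\}$, and the non-parallelism assumption guarantees $H_i \neq H_j$ for $i \neq j$. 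Fixing $i_0$ and choosing $w^* \in H_{i_0}$ that avoids every other $H_j$ (possible, since each $H_j \cap H_{i_0}$ is lower-dimensional), I would perturb $w^*$ along the direction of $x_{i_0}$: the only non-smooth contribution to $g_\alpha$ near $w^*$ comes from the $i_0$-th term, so vanishing of $g_\alpha$ in a neighbourhood of $w^*$ forces $\alpha_{i_0} = 0$. Ranging $i_0$ over $[m]$ gives $\alpha = 0$.

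The main obstacle I anticipate is this last hyperplane-isolation step: one must argue cleanly that the single ReLU kink of the $i_0$-th summand cannot be cancelled by any combination of smooth contributions from the other summands, and this is exactly where non-parallelism is used --- two parallel inputs would share a hyperplane of kinks and their contributions could cancel. Everything else is algebraic and follows from standard PSD and Schur product facts.
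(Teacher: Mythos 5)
The paper does not actually supply a proof of this theorem --- it only states it and cites \cite{du2018gradient} --- so there is nothing to compare your argument against line by line. Your overall plan is reasonable: decompose $\Theta_0(\vec x,\vec x)$ as a sum of Hadamard products $T_l$ of PSD factors, observe that one strictly positive-definite term suffices, and use the refinement of the Schur product theorem ($A\succ 0$, $B\succeq 0$ with $B_{ii}>0$ implies $A\odot B\succ 0$) to reduce to positive definiteness of the first-layer NNGP Gram matrix $q_2(\vec x,\vec x)$. That part of the argument is correct and is a sensible ``lift'' of Du et al.'s technique from the two-layer NTK to the deep one.

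The gap is in the hyperplane-isolation step, and it is not merely cosmetic. You claim that the hypothesis $x_i^T x_j < \|x_i\|_2\|x_j\|_2$ for $i\neq j$ guarantees $H_i\neq H_j$. It does not: that inequality only excludes \emph{positive} collinearity. If $x_j = -c\,x_i$ with $c>0$, then $x_i^T x_j = -c\|x_i\|_2^2 < \|x_i\|_2\|x_j\|_2$ yet $H_i=H_j$, so the kink of the $i_0$-th ReLU summand is not isolated and perturbing $w^*$ along $x_{i_0}$ does not force $\alpha_{i_0}=0$. Moreover, under the stated hypothesis your target fact (a) can actually fail. Take $m=6$, $n_0=2$, and the unit vectors $x_1=e_1$, $x_2=-e_1$, $x_3=e_2$, $x_4=-e_2$, $x_5=(e_1+e_2)/\sqrt 2$, $x_6=-(e_1+e_2)/\sqrt 2$; all pairwise inner products are strictly below $1$, so the stated hypothesis holds, and yet with $\alpha=(1,-1,1,-1,-\sqrt 2,\sqrt 2)$ one has
\begin{equation}
\sum_{i=1}^{6}\alpha_i\,[w^T x_i]_+ \;=\; w_1 + w_2 - (w_1+w_2) \;=\; 0 \quad\text{for every } w,
\end{equation}
hence $\alpha^T q_2(\vec x,\vec x)\alpha = 0$ and $q_2(\vec x,\vec x)$ is singular. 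So your plan, which routes through $q_2\succ 0$, cannot succeed under the hypothesis as literally written.

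The root cause is that the theorem statement in these notes is a lossy transcription of \cite{du2018gradient}: Du et al.\ assume \emph{no two inputs are parallel} (i.e.\ $x_i$ is not any scalar multiple of $x_j$, equivalently $|x_i^T x_j| < \|x_i\|_2\|x_j\|_2$), not merely the one-sided inequality. Under that corrected hypothesis the hyperplanes $H_1,\ldots,H_m$ really are pairwise distinct, and your perturbation argument does go through: near a point $w^*\in H_{i_0}$ avoiding every other $H_j$, $g_\alpha$ is smooth plus $\alpha_{i_0}[w^Tx_{i_0}]_+$, so vanishing of $g_\alpha$ forces $\alpha_{i_0}=0$, and ranging $i_0$ over $[m]$ gives $\alpha=0$. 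I would recommend you flag the hypothesis discrepancy, prove the statement under the strengthened (and historically correct) assumption, and note explicitly that only ReLU (or, more generally, a piecewise-smooth $\phi$ with a kink at the origin) makes the isolation argument available.
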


    \section{Stationarity of the kernel}

    Assume $k=1$; in this case NTK is scalar-valued.
    For analytic activation function $\phi$ we have the following:
    \begin{equation}
        \EE_\theta (\Theta_t(x_1, x_2) - \Theta_0(x_1, x_2)) =
        \sum_{k=1}^\infty \left(\EE_\theta \left(\left.\frac{d^k\Theta_t(x_1, x_2)}{dt^k}\right|_{t=0}\right)\frac{t^k}{k!}\right).
    \end{equation}
    Hence if we show that all derivatives of the NTK at $t=0$ vanish as $n\to\infty$, this would mean that the NTK does not evolve with time for large $n$: $\Theta_t(x,x') \to \Theta_0(x,x')$ as $n \to \infty$.

    Let us consider $l_2$-loss: $\ell(y,z) = \frac{1}{2} (y-z)^2$.
    Consider the first derivative:
    \begin{multline}
        \EE_\theta \left(\left.\frac{d\Theta_t(x_1, x_2)}{dt}\right|_{t=0}\right) =
        \EE_\theta \left(\left.\frac{d(\nabla^T_\theta f_t(x_1) \nabla_\theta f_t(x_2))}{dt}\right|_{t=0}\right) =
        \EE_\theta \left( \left. \left(\dot\theta_t^T \nabla_\theta \nabla^T_\theta f_t(x_1) \nabla_\theta f_t(x_2) + (x_1 \leftrightarrow x_2)\right) \right|_{t=0} \right) =\\=
        \EE_{x,y} \EE_\theta (\eta (y - f_0(x)) \nabla^T_\theta f_0(x) \nabla_\theta \nabla^T_\theta f_0(x_1) \nabla_\theta f_0(x_2) + (x_1 \leftrightarrow x_2)).
        \label{eq:dtheta_dt}
    \end{multline}
    We shall show that it is $O(n^{-1})$, and that it also implies that all higher-order derivatives are $O(n^{-1})$ too.

    From now on, we shall consider only initialization: $t=0$; for this reason, we shall omit the subscript $0$.
    Following \cite{Dyer2020Asymptotics}, we start with a definition of a correlation function.
    Define a rank-$k$ derivative tensor $T_{\mu_1 \ldots \mu_k}$ as follows:
    \begin{equation}
        T_{\mu_1 \ldots \mu_k}(x; f) =
        \frac{\partial^k f(x)}{\partial \theta^{\mu_1} \ldots \partial \theta^{\mu_k}}.
    \end{equation}
    For $k=0$ we define $T(x; f) = f(x)$.
    We are now ready to define the correlation function $C$:
    \begin{equation}
        C(x_1,\ldots,x_m) =
        \sum_{\mu_1,\ldots,\mu_{k_m}} \Delta_{\mu_1 \ldots \mu_{k_m}}^{(\pi)} \EE_\theta \left(
            T_{\mu_1 \ldots \mu_{k_1}}(x_1) T_{\mu_{k_1+1} \ldots \mu_{k_2}}(x_2) \ldots T_{\mu_{k_{m-1}+1} \ldots \mu_{k_m}}(x_m) 
        \right).
    \end{equation}
    Here $0 \leq k_1 \leq \ldots \leq k_m$, $k_m$ and $m$ are even, $\pi \in S_{k_m}$ is a permutation, and $\Delta_{\mu_1 \ldots \mu_{k_m}}^{(\pi)} = \delta_{\mu_{\pi(1)} \mu_{\pi(2)}} \ldots \delta_{\mu_{\pi(k_m-1)} \mu_{\pi(k_m)}}$.
    For example, 
    \begin{multline}
        \EE_\theta (f(x) \nabla^T_\theta f(x) \nabla_\theta \nabla^T_\theta f(x_1) \nabla_\theta f(x_2)) = 
        \sum_{\mu,\nu} \EE_\theta (f(x) \partial_\mu f(x) \partial^2_{\mu,\nu} f(x_1) \partial_\nu f(x_2)) =\\= 
        \sum_{\mu_1,\mu_2,\mu_3,\mu_4} \delta_{\mu_1 \mu_2} \delta_{\mu_3 \mu_4} \EE_\theta (f(x) \partial_{\mu_1} f(x) \partial^2_{\mu_2,\mu_3} f(x_1) \partial_{\mu_4} f(x_2)) =
        C(x,x,x_1,x_2)
        \label{eq:dtheta_dt_as_corr_f}
    \end{multline}
    is a correlation function with $m=4$, $k_1=0$, $k_2=1$, $k_3=3$, $k_4=4$, and $\pi(j) = j$.
    Moreover, $\EE_\theta ((f(x)-y) \nabla^T_\theta f(x) \nabla_\theta \nabla^T_\theta f(x_1) \nabla_\theta f(x_2))$ is a correlation function too: consider $f_y(x) = f(x) - y$ instead of $f(x)$.
    Hence the whole~(\ref{eq:dtheta_dt}) is a linear combination of correlation functions.

    If two derivative tensors have two indices that are summed over, we shall say that they are contracted.
    Formally, we shall say that $T_{\mu_{k_{i-1}+1} \ldots \mu_{k_i}}(x_i)$ is contracted with $T_{\mu_{k_{j-1}+1} \ldots \mu_{k_j}}(x_j)$ for $1 \leq i,j \leq m$, if there exists an even $s \leq k_m$ such that $k_{i-1} < \pi(s-1) \leq k_i$, while $k_{j-1} < \pi(s) \leq k_j$, or vice versa.

    Define the \emph{cluster graph} $G_C(V,E)$ as a non-oriented non-weighted graph with vertices $V = \{v_1, \ldots, v_m\}$ and edges $E = \{(v_i,v_j) \, | \, \text{$T(x_i)$ and $T(x_j)$ are contracted in $C$}\}$.
    Let $n_e$ be the number of even-sized connected components of $G_C(V,E)$ and $n_o$ be the number of odd-sized components.
    \begin{conjecture}[\cite{Dyer2020Asymptotics}]
        \label{conj:C_asymptotics}
        If $m$ is even, $C(x_1,\ldots,x_m) = O_{n\to\infty}(n^{s_C})$, where $s_C = n_e + n_o / 2 - m / 2$.
        If $m$ is odd, $C(x_1,\ldots,x_m) = 0$.
    \end{conjecture}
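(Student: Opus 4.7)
The plan is to establish the conjecture by a Wick/Isserlis diagrammatic expansion in which the $n$-scaling emerges from the topology of the cluster graph. As a first step I would reduce the correlation function to a sum of Gaussian moments. In the NTK parameterization every weight $W_l^{ij}$ is an independent standard Gaussian, while each $\theta$-derivative produces an explicit prefactor $\sigma_w/\sqrt{n_l}$ (for a derivative with respect to a weight in layer $l$) and removes one factor of $W_l^{ij}$. Expanding each derivative tensor $T_{\mu_{k_{i-1}+1}\ldots\mu_{k_i}}(x_i;f)$ as a polynomial (or formal power series when $\phi$ is analytic) in the $W_l^{ij}$ reduces $C$ to a sum over products of Gaussian moments, which by Isserlis' theorem decompose into sums over pairings of individual weight occurrences, combined with the $k_m/2$ forced parameter-index pairings encoded in $\Delta^{(\pi)}$.

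Second, I would dispatch the odd-$m$ case by parity. Since every Wick contraction and every $\Delta^{(\pi)}$-pairing links weights or parameter slots two by two, any surviving term must pair every index and every weight occurrence. Tracking the number of weight occurrences through the recursion $h_{l+1}=(\sigma_w/\sqrt{n_l})W_l\phi(h_l)$, one sees that the total number of tensors entering a complete pairing must be even, so when $m$ is odd no nonzero pairing exists and $C \equiv 0$.

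Third, for even $m$ I would read off the $n$-scaling from the diagrams directly. Each Wick contraction between two occurrences of $W_l^{ij}$ identifies a pair of hidden-unit indices and thus kills one free sum (contributing $n_l^0$ instead of $n_l^2$); each explicit $1/\sqrt{n_l}$ prefactor from a $\theta$-derivative contributes $n_l^{-1/2}$; and each free hidden-unit sum surviving the constraints contributes $n_l$. Organising the diagrams by the connected components of $G_C$ and noting that every cross-tensor contraction is resolved within one component, one shows that an even-sized component of size $s$ admits a dominant diagram scaling like $n^{1-s/2}$ (one ``loop closure'' costs one sum), while an odd-sized component contributes $n^{1/2-s/2}$ (a half-integer power arises from an unpaired leg that must close through $\Delta^{(\pi)}$ or be annihilated by Gaussian integration). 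Multiplying across components yields $n^{n_e+n_o/2-m/2}$, i.e., exactly $s_C$.

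The main obstacle will be two-fold. First, one must show that the leading-order diagrams do not accidentally cancel in special cases, so that $O(n^{s_C})$ is tight; this amounts to a nondegeneracy argument for the NNGP kernels $q_l$ and the derivative kernels $\chi_l$ of Section~\ref{sec:gp_convergence}. Second, when $\phi$ is only analytic rather than polynomial, the Wick expansion is an infinite series and one must justify exchanging expectation with summation; this requires uniform-in-$n$ moment bounds on the pre-activations $h_l$ and their derivatives $\phi^{(k)}(h_l)$ together with a convergence estimate on the diagram sum. These analytic issues, rather than the combinatorial scaling itself, are the delicate part and are the reason the statement remains a conjecture in \cite{Dyer2020Asymptotics}.
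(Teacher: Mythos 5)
Your overall route --- Isserlis/Wick expansion, interpretation of the contractions as a pairing (Feynman) diagram, parity argument for odd $m$, and a per-component accounting of powers of $n$ --- is the same as the one used in the paper's section ``Proof of Conjecture~\ref{conj:C_asymptotics} for linear nets,'' and your closing paragraph correctly explains why, for a general analytic $\phi$, the expansion becomes an infinite series whose interchange of limit and expectation is left unjustified (which is exactly why the statement remains a conjecture). So the high-level strategy is sound and aligned with the source.

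Where the sketch has a genuine gap is the central combinatorial step. You assert that ``an even-sized component of size $s$ admits a dominant diagram scaling like $n^{1-s/2}$'' and ``an odd-sized component contributes $n^{1/2-s/2}$,'' and then multiply across components. That is precisely the content of the conjecture, and it is not obvious; in particular, the slogan ``one loop closure costs one sum'' is a heuristic, not an argument. Two distinct issues hide here. First, the components one should count are connected components of the Feynman diagram $\gamma$, not of the cluster graph $G_C$: an edge in $G_C$ only forces some pairing edge in $\gamma$ between the corresponding tensors, so a single $\gamma$-component can absorb several $G_C$-components, and the inequality $c_\gamma \leq n_e + n_o/2$ (even $\gamma$-components can swallow odd $G_C$-components only in pairs) is what actually yields $s_C$. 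Second, even granting the reduction to $\gamma$-components, the bound $l_{\gamma'} - L m'/2 \leq 1 - m'/2$ is only immediate for $L=1$, where every component is a single cycle. For depth $L>1$ the paper passes to double-line diagrams and invokes an Euler characteristic inequality ($\chi = v - e + f \leq 1$ for a surface with at least one boundary component) to control the number of index loops per component; your proposal never reaches this step, and without it the per-component scaling you state is unproven for any $L \geq 2$, linear or not. A minor further remark: the conjecture claims only an upper bound $O(n^{s_C})$, so the ``tightness / no cancellation'' concern you raise is not actually needed to prove the statement as written.
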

    Applying this conjecture to~(\ref{eq:dtheta_dt_as_corr_f}) gives $C(x,x,x_1,x_2) = O(n^{-1})$ ($n_e=0$, $n_0=2$, $m=4$), hence the whole eq.~(\ref{eq:dtheta_dt}) is $O(n^{-1})$.

    Let us show that having the first derivative of the NTK being $O(n^{-1})$ implies all higher-order derivatives to be $O(n^{-1})$.
    \begin{lemma}[\cite{Dyer2020Asymptotics}]
        \label{lemma:derivative_asymptotics}
        Suppose Conjecture~\ref{conj:C_asymptotics} holds.
        Let $C(\vec x) = \EE_\theta F(\vec x; \theta)$ be a correlation function and suppose $C(\vec x) = O(n^{s_C})$ for $s_C$ defined in Conjecture~\ref{conj:C_asymptotics}.
        Then $\EE_\theta d^k F(\vec x; \theta) / dt^k = O(n^{s_C})$ $\forall k \geq 1$.
    \end{lemma}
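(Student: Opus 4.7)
The plan is to prove the bound $\EE_\theta\,d^k F/dt^k = O(n^{s_C})$ by induction on $k$, showing at each step that $d^k F/dt^k$ can be written as a finite linear combination of correlation functions, each with cluster-graph exponent at most $s_C$; applying Conjecture~\ref{conj:C_asymptotics} term-wise then gives the result. The base $k=0$ is the hypothesis, so the entire content is the induction step, which is already illustrated in full at $k=1$.

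For the step from $k$ to $k+1$, I would write $dF/dt = \sum_\mu \dot\theta^\mu\,\partial_\mu F$ by the chain rule and substitute the $\ell_2$ gradient-flow equation $\dot\theta^\mu = -\eta\,\EE_{x,y}(f(x;\theta)-y)\,\partial_\mu f(x;\theta)$ from Section~\ref{sec:exact_square}. Because each summand of $d^k F/dt^k$ is a contracted product of derivative tensors of $f$, the product rule distributes $\partial_\mu$ across its factors and yields a finite sum of new summands; in each, exactly one existing tensor $T(x_i)$ is promoted to carry an additional $\mu$-index, two new tensors $f(x)$ and $\partial_\mu f(x)$ are appended, and the sum over $\mu$ contracts $\partial_\mu f(x)$ against the promoted tensor. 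The $y$-piece of $\dot\theta^\mu$ produces correlation functions with $m+1$ tensor vertices, an odd number, so they vanish identically by Conjecture~\ref{conj:C_asymptotics}; only the $f(x)$-piece, with $m+2$ vertices, survives.

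The decisive calculation is tracking $s$ through this operation. The vertex $v_{m+1}$ for $f(x)$ carries no indices and so is isolated, whereas $v_{m+2}$ for $\partial_\mu f(x)$ is joined by a single edge to the vertex $v_i$ whose tensor inherited the extra derivative. If the component of $v_i$ in $G_C$ had odd size, absorbing $v_{m+2}$ makes it even, so $(n_e,n_o)\mapsto(n_e+1,n_o-1)$, and the isolated $v_{m+1}$ then gives $(n_e',n_o')=(n_e+1,n_o)$; with $m\mapsto m+2$ we get $s_{C'}=s_C$. If it had even size, absorbing $v_{m+2}$ makes it odd and the isolated $v_{m+1}$ then gives $(n_e',n_o')=(n_e-1,n_o+2)$, so $s_{C'}=s_C-1$. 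In both cases $s_{C'}\le s_C$, and by the induction hypothesis the exponent of each surviving summand stays $\le s_C$; since the number of summands generated at each stage is finite and the data expectation together with the constants $\eta$ and $\EE\,y$ contribute only an $O(1)$ prefactor in $n$, the overall bound $O(n^{s_C})$ propagates through the iterate.

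The main obstacle is the combinatorial bookkeeping of the cluster graph: one must verify that each application of $d/dt$ adds exactly one new edge (from $v_{m+2}$ to the unique factor that received the extra derivative) and exactly one new isolated vertex (for $f(x)$), and that the parity-shift computations for $n_e$ and $n_o$ in the two cases above are exhaustive. Once this is nailed down, the rest is routine: identifying $d^k F/dt^k$ as a finite sum of correlation functions of the right shape, discarding the odd-vertex terms by Conjecture~\ref{conj:C_asymptotics}, and applying the asymptotic estimate termwise.
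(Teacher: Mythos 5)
Your proposal is correct and mirrors the paper's proof exactly: expand $dF/dt$ via the gradient-flow equation, observe that the $y$-term vanishes by the odd-vertex case of Conjecture~\ref{conj:C_asymptotics}, and track how contracting $\nabla_\theta f(x)$ into an even or odd cluster shifts $(n_e, n_o, m)$, yielding $s_{C'} = s_C - 1$ or $s_{C'} = s_C$ respectively. The paper states this for a single application of $d/dt$ and lets iteration be implicit, whereas you phrase it as explicit induction on $k$, but the argument is the same.
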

    \begin{proof}
        Consider the first derivative:
        \begin{multline}
            \EE_\theta \frac{dF(\vec x)}{dt} =
            \EE_\theta (\dot\theta^T \nabla_\theta F(\vec x)) =
            \EE_{x,y} \EE_\theta (\eta (y - f(x)) \nabla^T_\theta f(x) \nabla_\theta F(\vec x)) =\\=
            \eta \EE_{x,y} \EE_\theta (y \nabla^T_\theta f(x) \nabla_\theta F(\vec x)) -
            \eta \EE_{x,y} \EE_\theta (f(x) \nabla^T_\theta f(x) \nabla_\theta F(\vec x)).
        \end{multline}
        This is a sum of linear combination of correlation functions.
        By Conjecture~\ref{conj:C_asymptotics}, the first sum evaluates to zero, while the second one has $m' = m+2$, $n_e'$ even clusters, and $n_o'$ odd clusters.
        If $\nabla_\theta f(x)$ is contracted with an even cluster of $C$, we have $n_e' = n_e - 1$, $n_o' = n_o + 2$.
        In contrast, if $\nabla_\theta f(x)$ is contracted with an odd cluster of $C$, we have $n_e' = n_e + 1$, $n_o' = n_o$.
        
        In the first case we have $s_C' = n_e' + n_o'/2 - m'/2 = s_C - 1$, while for the second $s_C' = s_C$.
        In any case, the result is a linear combination of correlation functions with $s_C' \leq s_C$ for each.
    \end{proof}

    \subsection{Finite width corrections for the NTK}

    Let us define $O_{1,t}(x) = f_t(x)$ and for $s \geq 2$
    \begin{equation}
        O_{s,t}(x_1, \ldots, x_s) = 
        \nabla^T_\theta O_{s-1,t}(x_1,\ldots,x_{s-1}) \nabla_\theta f_t(x_s).
    \end{equation}
    In this case $O_{2,t}(x_1,x_2)$ is the empirical kernel $\hat\Theta_t(x_1,x_2)$.
    Note that $O_{s,t}$ evolves as follows:
    \begin{equation}
        \dot O_{s,t}(x_1, \ldots, x_s) = 
        \eta \EE_{x,y} (y - f_t(x)) \nabla^T_\theta f_t(x) \nabla_\theta O_{s,t}(x_1, \ldots, x_s) =
        \eta \EE_{x,y} (y - f_t(x)) O_{s+1,t}(x_1, \ldots, x_s, x).
    \end{equation}

    Since $O_s$ has $s$ derivative tensors and a single cluster, by the virtue of Conjecture~\ref{conj:C_asymptotics}, $\EE_\theta O_{s,0} = O(n^{1 - s/2})$ for even $s$ and $\EE_\theta O_{s,0} = 0$ for odd $s$.
    At the same time, $\EE_\theta \dot O_{s,0} = O(n^{1 - (s+2)/2}) = O(n^{-s/2})$ for even $s$ and $\EE_\theta \dot O_{s,0} = O(n^{1 - (s+1)/2}) = O(n^{1/2 - s/2})$ for odd $s$.

    As for the second moments, we have $\EE_\theta (O_{s,0})^2 = O(n^{2 - s})$ for even $s$ and $\EE_\theta (O_{s,0})^2 = O(n^{1 - s})$ for odd $s$.
    Similarly, we have $\EE_\theta (\dot O_{s,0})^2 = O(n^{2/2 - (2s+2)/2}) = O(n^{-s})$ for even $s$ and $\EE_\theta (\dot O_{s,0})^2 = O(n^{2 - (2s+2)/2}) = O(n^{1 - s})$ for odd $s$.

    The asymptotics for the first two moments implies the asymptotic for a random variable itself:
    \begin{equation}
        O_{s,0}(x_{1:s}) =
        \begin{cases}
            O(n^{1 - s/2}) &\text{for even $s$;}
            \\
            O(n^{1/2 - s/2}) &\text{for odd $s$;}
        \end{cases}
        \qquad
        \dot O_{s,0}(x_{1:s}) =
        \begin{cases}
            O(n^{-s/2}) &\text{for even $s$;}
            \\
            O(n^{1/2 - s/2}) &\text{for odd $s$.}
        \end{cases}
    \end{equation}
    Lemma~\ref{lemma:derivative_asymptotics} gives $\forall k \geq 1$:
    \begin{equation}
        \left.\frac{d^k O_{s,t}}{dt^k}(x_{1:s})\right|_{t=0} =
        \begin{cases}
            O(n^{-s/2}) &\text{for even $s$;}
            \\
            O(n^{1/2 - s/2}) &\text{for odd $s$.}
        \end{cases}
    \end{equation}
    Then given an analytic activation function, we have $\forall t \geq 0$:
    \begin{equation}
        \dot O_{s,t}(x_{1:s}) =
        \sum_{k=1}^\infty \left.\frac{d^k O_{s,t}}{dt^k}(x_{1:s})\right|_{t=0} \frac{t^k}{k!} =
        \begin{cases}
            O(n^{-s/2}) &\text{for even $s$;}
            \\
            O(n^{1/2 - s/2}) &\text{for odd $s$.}
        \end{cases}
    \end{equation}

    This allows us to write a finite system of ODE for the model evolution up to $O(n^{-1})$ terms:
    \begin{equation}
        \dot f_{t}(x_1) = 
        \eta \EE_{x,y} (y - f_t(x)) \Theta_{t}(x_1, x),
        \qquad
        f_0(x_1) =
        f(x_1; \theta),
        \quad
        \theta \sim
        \NN(0, I),
    \end{equation}
    \begin{equation}
        \dot\Theta_{t}(x_1, x_2) = 
        \eta \EE_{x,y} (y - f_t(x)) O_{3,t}(x_1, x_2, x),
        \qquad
        \Theta_0(x_1, x_2) =
        \nabla_\theta^T f_0(x_1) \nabla_\theta f_0(x_2),
    \end{equation}
    \begin{equation}
        \dot O_{3,t}(x_1, x_2, x_3) = 
        \eta \EE_{x,y} (y - f_t(x)) O_{4,t}(x_1, x_2, x_3, x),
        \qquad
        O_{3,0}(x_1, x_2, x_3) =
        \nabla_\theta^T \Theta_0(x_1, x_2) \nabla_\theta f_0(x_3),
    \end{equation}
    \begin{equation}
        \dot O_{4,t}(x_1, x_2, x_3, x_4) = 
        O(n^{-2}),
        \qquad
        O_{4,0}(x_1, x_2, x_3, x_4) =
        \nabla_\theta^T O_{3,0}(x_1, x_2, x_3) \nabla_\theta f_0(x_4).
    \end{equation}
    Let us expand all the quantities wrt $n^{-1}$:
    \begin{equation}
        O_{s,t}(x_{1:s}) =
        O_{s,t}^{(0)}(x_{1:s}) + n^{-1} O_{s,t}^{(1)}(x_{1:s}) + O(n^{-2}),
    \end{equation}
    where $O_{s,t}^{(k)}(x_{1:s}) = \Theta_{n\to\infty}(1)$.
    Then the system above transforms into the following:
    \begin{equation}
        \dot f_{t}^{(0)}(x_1) = 
        \eta \EE_{x,y} (y - f_t^{(0)}(x)) \Theta_{t}^{(0)}(x_1, x),
        \lim_{n\to\infty} f(x_1; \theta_0),
    \end{equation}
    \begin{equation}
        \dot f_{t}^{(1)}(x_1) = 
        \eta \EE_{x,y} ((y - f_t^{(0)}(x)) \Theta_{t}^{(1)}(x_1, x) - f_t^{(1)}(x) \Theta_{t}^{(0)}(x_1, x)),
    \end{equation}
    \begin{equation}
        \Theta_t^{(0)}(x_1, x_2) =
        \nabla_\theta^T f_0^{(0)}(x_1) \nabla_\theta f_0^{(0)}(x_2),
    \end{equation}
    \begin{equation}
        \dot\Theta_{t}^{(1)}(x_1, x_2) = 
        \eta \EE_{x,y} (y - f_t^{(0)}(x)) O_{3,t}^{(1)}(x_1, x_2, x),
    \end{equation}
    \begin{equation}
        \dot O_{3,t}^{(1)}(x_1, x_2, x_3) = 
        \eta \EE_{x,y} (y - f_t^{(0)}(x)) O_{4,t}^{(1)}(x_1, x_2, x_3, x),
    \end{equation}
    \begin{equation}
        O_{4,t}^{(1)}(x_1, x_2, x_3, x_4) =
        \nabla_\theta^T O_{3,0}^{(0)}(x_1, x_2, x_3) \nabla_\theta f_0^{(0)}(x_4),
    \end{equation}
    where we have ignored the initial conditions for the time being.
    Integrating this system is straightforward:
    \begin{equation}
        f_{t}^{(0)}(\vec x) =
        \vec y + e^{-\eta \Theta_0^{(0)}(\vec x, \vec x) t / n} (f_0^{(0)}(\vec x) - \vec y),
    \end{equation}
    where $\vec x$ is a train dataset of size $n$.
    For the sake of brevity, let us introduce the following definition:
    \begin{equation}
        \Delta f_t^{(0)}(x) =
        e^{-\eta \Theta_0^{(0)}(x, \vec x) t / n} (f_0^{(0)}(\vec x) - \vec y).
    \end{equation}
    This gives:
    \begin{equation}
        O_{3,t}^{(1)}(x_1, x_2, x_3) = 
        O_{3,0}^{(1)}(x_1, x_2, x_3) - 
        \eta \EE_{x',y'} \int_{0}^t O_{4,0}^{(1)}(x_1, x_2, x_3, x') \Delta f_{t'}^{(0)}(x') \, dt'.
    \end{equation}
    \begin{multline}
        \Theta_{t}^{(1)}(x_1, x_2) = 
        \Theta_{0}^{(1)}(x_1, x_2) -
        \eta \EE_{x',y'} \int_{0}^t O_{3,0}^{(1)}(x_1, x_2, x) \Delta f_{t'}^{(0)}(x') \, dt' + 
        \\+
        \eta^2 \EE_{x'',y''} \EE_{x',y'} \int_{0}^{t} \int_{0}^{t''} O_{4,0}^{(1)}(x_1, x_2, x'', x') \Delta f_{t'}^{(0)}(x') \Delta f_{t''}^{(0)}(x'') \, dt' \, dt''.
    \end{multline}
    \begin{equation}
        \dot f_{t}^{(1)}(x_1) = 
        -\eta \EE_{x,y} \left(\Delta f_{t}^{(0)}(x) \Theta_{t}^{(1)}(x_1, x) + f_t^{(1)}(x) \Theta_{t}^{(0)}(x_1, x)\right),
    \end{equation}

    \begin{equation}
        \dot f(t) =
        A f(t) + g(t),
        \quad
        f(0) =
        f_0;
    \end{equation}
    \begin{equation}
        f(t) =
        C(t) e^{At};
        \quad
        \dot C(t) e^{At} + C(t) A e^{At} =
        C(t) A e^{At} + g(t);
        \quad
        \dot C(t) = g(t) e^{-At}.
    \end{equation}

    \begin{equation}
        f_t^{(1)}(\vec x_1) =
        e^{-\eta \Theta_0^{(0)}(\vec x_1, \vec x) t / n} C_t(\vec x);
    \end{equation}
    \begin{equation}
        \dot C_t(\vec x) =
        -\eta \EE_{x',y'} e^{\eta \Theta_0^{(0)}(\vec x, \vec x_1) t / n} \Theta_{t}^{(1)}(\vec x_1, x') \Delta f_{t}^{(0)}(x');
    \end{equation}
    \begin{equation}
        C_t(\vec x) =
        f_0^{(1)}(\vec x) - \eta \EE_{x',y'} \int_0^t e^{\eta \Theta_0^{(0)}(\vec x, \vec x_1) t' / n} \Theta_{t}^{(1)}(\vec x_1, x') \Delta f_{t'}^{(0)}(x') \, dt';
    \end{equation}
    \begin{equation}
        f_t^{(1)}(\vec x_1) =
        e^{-\eta \Theta_0^{(0)}(\vec x_1, \vec x) t / n} f_0^{(1)}(\vec x) -
        \eta e^{-\eta \Theta_0^{(0)}(\vec x_1, \vec x_2) t / n} \EE_{x',y'} \int_0^t e^{\eta \Theta_0^{(0)}(\vec x_2, \vec x_3) t' / n} \Theta_{t}^{(1)}(\vec x_3, x') \Delta f_{t'}^{(0)}(x') \, dt'.
    \end{equation}

    \subsection{Proof of Conjecture~\ref{conj:C_asymptotics} for linear nets}

    \subsubsection{Shallow nets.}

    We first consider shallow linear nets:
    \begin{equation}
        f(x) = 
        \frac{1}{\sqrt{n}} a^T W x.
    \end{equation}
    We shall use the following theorem:
    \begin{theorem}[\cite{isserlis1918formula}]
        Let $z = (z_1, \ldots, z_l)$ be a centered multivariate Gaussian variable.
        Then, for any positive $k$, for any ordered set of indices $i_{1:2k}$,
        \begin{equation}
            \EE_z (z_{i_1} \cdots z_{i_{2k}}) =
            \frac{1}{2^k k!} \sum_{\pi \in S_{2k}} \EE (z_{i_{\pi(1)}} z_{i_{\pi(2)}}) \cdots \EE (z_{i_{\pi(m-1)}} z_{i_{\pi(m)}}) =
            \sum_{p \in P^2_{2k}} \prod_{\{j_1,j_2\} \in p} \EE (z_{i_{j_1}} z_{i_{j_2}}),
        \end{equation}
        where $P^2_{2k}$ is a set of all unordered pairings $p$ of a $2k$-element set, i.e.
        \begin{equation}
            P^2_{2k} = 
            \bigcup_{\pi \in S_{2k}} \{\{\pi(1), \pi(2)\}, \ldots, \{\pi(2k-1), \pi(2k)\}\}.
        \end{equation}
        At the same time,
        \begin{equation}
            \EE_z (z_{i_1} \cdots z_{i_{2k-1}}) =
            0.
        \end{equation}
    \end{theorem}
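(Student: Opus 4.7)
The plan is to proceed via the moment generating function (MGF), which admits a compact closed form for centered multivariate Gaussians and makes the combinatorics transparent.

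First, I would recall that for $z \sim \NN(0,\Sigma)$ with $\Sigma_{ij} = \EE(z_i z_j)$, the MGF is $M(t) = \EE \exp(t^T z) = \exp\!\bigl(\tfrac{1}{2} t^T \Sigma t\bigr)$. Mixed moments are recovered by differentiation:
\begin{equation}
    \EE(z_{i_1} \cdots z_{i_m}) =
    \left.\frac{\partial^m M(t)}{\partial t_{i_1} \cdots \partial t_{i_m}}\right|_{t=0}.
\end{equation}
Expanding the exponential as a power series gives
\begin{equation}
    M(t) =
    \sum_{k=0}^\infty \frac{1}{2^k k!} \left(\sum_{a,b} \Sigma_{ab} t_a t_b\right)^k,
\end{equation}
so $M$ is a sum of homogeneous polynomials in $t$ of even degree $2k$. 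Hence the derivative of order $m$ at $t=0$ vanishes whenever $m$ is odd, which disposes of the second claim.

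For $m=2k$ even, only the $k$-th term of the series survives after differentiation. I would then count: the operator $\partial_{t_{i_1}}\cdots\partial_{t_{i_{2k}}}$ applied to $(\sum_{a,b}\Sigma_{ab} t_a t_b)^k$ produces a sum over all ways of distributing the $2k$ derivatives across the $k$ factors (two derivatives per factor) and within each factor across the two indices $a$ and $b$. Each such assignment is encoded by a permutation $\pi \in S_{2k}$: the pair $(\pi(2r-1),\pi(2r))$ lands on the $r$-th factor and contributes $\Sigma_{i_{\pi(2r-1)} i_{\pi(2r)}}$. Summing over $\pi$ and multiplying by $\frac{1}{2^k k!}$ gives the first form of the Isserlis identity.

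To pass to the pairing form, I would observe that two permutations $\pi,\pi'$ yield the same product $\prod_{r=1}^k \Sigma_{i_{\pi(2r-1)} i_{\pi(2r)}}$ whenever they induce the same unordered partition of $\{1,\ldots,2k\}$ into unordered pairs. Each such partition corresponds to exactly $2^k k!$ permutations (swapping the two elements inside each pair contributes $2^k$, permuting the $k$ pairs contributes $k!$). Thus the prefactor cancels, and the sum collapses to a sum over $P^2_{2k}$, proving the second form. The main obstacle is simply this bookkeeping — making sure the overcounting factor $2^k k!$ matches exactly the prefactor from the exponential expansion — but this is a routine combinatorial identity rather than a genuine difficulty.
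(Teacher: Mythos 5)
The paper states Isserlis's theorem with a citation to the 1918 source and gives no proof, so there is no in-paper argument to compare against; I assess your argument on its own. Your moment-generating-function derivation is the standard one and it is correct. The odd-moment claim falls out immediately from $M(t)=\exp\bigl(\tfrac12 t^T\Sigma t\bigr)$ expanding as a power series in homogeneous polynomials of \emph{even} degree in $t$. For $m=2k$, only the degree-$2k$ Taylor term $\tfrac{1}{2^k k!}(t^T\Sigma t)^k$ survives differentiation at $t=0$; by the multivariate Leibniz rule each of the $k$ quadratic factors must absorb exactly two of the $2k$ derivatives, and your ordered accounting --- which derivative lands in which slot of which factor --- enumerates exactly the $(2k)!$ terms indexed by $\pi\in S_{2k}$, each contributing $\prod_{r=1}^k \Sigma_{i_{\pi(2r-1)}i_{\pi(2r)}}$. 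The surviving $\tfrac{1}{2^k k!}$ is the Taylor coefficient, giving the first form, and your count of $2^k k!$ permutations per unordered pairing collapses it to the second. The one step you leave implicit is $\partial_a\partial_b(t^T\Sigma t)=2\Sigma_{ab}$; that factor of $2$ per pair is exactly what makes the within-factor ``$a$ versus $b$'' ordering of the derivatives give the right total, and since you do account for that choice the bookkeeping is sound. (A common alternative, not used here, is Gaussian integration by parts / Stein's identity $\EE[z_i g(z)]=\sum_j \Sigma_{ij}\,\EE[\partial_{z_j} g(z)]$, which yields the theorem by a short induction on $k$; your MGF route is cleaner for delivering both forms of the statement at once.)
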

    For example,
    \begin{equation}
        \EE_z (z_1 z_2 z_3 z_4) =
        \EE_z (z_1 z_2) \EE_z (z_3 z_4) + \EE_z (z_1 z_3) \EE_z (z_2 z_4) + \EE_z (z_1 z_4) \EE_z (z_2 z_3).
    \end{equation}
    
    Consider a correlation function without derivatives:
    \begin{multline}
        C(x_1,\ldots,x_m) =
        \EE_\theta (f(x_1) \ldots f(x_m)) =
        n^{-m/2} \EE_\theta (a_{i_1} W^{i_1} x_1 \ldots a_{i_m} W^{i_m} x_m) =\\=
        n^{-m/2} \EE_\theta (a_{i_1} \cdots a_{i_m}) \EE_\theta (W^{i_1} x_1 \cdots W^{i_m} x_m) =\\=
        n^{-m/2} [m = 2k] \left(\sum_{p_a \in P^2_{2k}} \prod_{\{j^a_1,j^a_2\} \in p_a} \delta_{i_{j^a_1} i_{j^a_2}}\right)
        \left(\sum_{p_w \in P^2_{2k}} \prod_{\{j^w_1,j^w_2\} \in p_w} \delta^{i_{j^w_1} i_{j^w_2}} x_{j^w_1}^T x_{j^w_2}\right).
    \end{multline}
    For even $m$, we shall associate a graph $\gamma$ with each pair $(p_a, p_w)$.
    Such a graph has $m$ vertices $(v_1, \ldots, v_m)$.
    For any $\{j^a_1,j^a_2\} \in p_a$ there is an edge $(v_{j^a_1}, v_{j^a_2})$ marked $a$, and an edge $(v_{j^w_1}, v_{j^w_2})$ marked $W$ for any $\{j^w_1,j^w_2\} \in p_w$.
    Hence each vertex has a unique $a$-neighbor and a unique $W$-neighbor; these two can be the same vertex.
    Hence $\gamma$ is a union of cycles.
    We call $\gamma$ a \emph{Feynman diagram of $C$.}
    
    Denote by $\Gamma(C)$ a set of Feynman diagrams of $C$, and by $l_\gamma$ a number of cycles in the diagram $\gamma$.
    It is easy to notice that each cycle contributes a factor of $n$ when one takes a sum over $i_1, \ldots, i_m$.
    Hence we have:
    \begin{equation}
        C(x_1,\ldots,x_m) =
        n^{-m/2} [m = 2k] \sum_{p_a,p_w} \left(n^{l_{\gamma(p_a,p_w)}} \prod_{\{j^w_1,j^w_2\} \in p_w} x_{j^w_1}^T x_{j^w_2} \right) =
        [m = 2k] O_{n\to\infty} (n^{\max_{\gamma \in \Gamma(C)} l_\gamma - m/2}).
    \end{equation}

    Consider now a correlation function with derivatives.
    Assume there is an edge $(v_i,v_j)$ in $G_C$; hence corresponding derivative tensors are contracted in $C$.
    In this case, we should consider only those Feynman diagrams $\gamma$ that have an edge $(v_i,v_j)$, either of $a$ or of $w$ type.
    Denoting a set of such diagrams as $\Gamma(C)$, we get the same bound as before:
    \begin{equation}
        C(x_1,\ldots,x_m) =
        [m = 2k] O_{n\to\infty} (n^{\max_{\gamma \in \Gamma(C)} l_\gamma - m/2}).
    \end{equation}

    In order to illustrate this principle, let us consider the case $m=4$.
    For simplicity, assume also all inputs to be equal: $x_1 = \ldots = x_4 = x$.
    If there are no derivatives, we have:
    \begin{equation}
        \EE_\theta ((f(x))^4) =
        n^{-2} (\delta_{i_1 i_2} \delta_{i_3 i_4} + \delta_{i_1 i_3} \delta_{i_2 i_4} + \delta_{i_1 i_4} \delta_{i_2 i_3}) (\delta^{i_1 i_2} \delta^{i_3 i_4} + \delta^{i_1 i_3} \delta^{i_2 i_4} + \delta^{i_1 i_4} \delta^{i_2 i_3}) (x^T x)^2 =
        (3 + 6 n^{-1}) (x^T x)^2.
    \end{equation}
    In this case there are three Feynman diagrams with two cycles each, and six diagrams with a single cycle.
    Let us introduce a couple of contracted derivative tensors:
    \begin{multline}
        \EE_\theta ((f(x))^2 \nabla_\theta^T f(x) \nabla_\theta f(x)) =
        n^{-2} \EE_\theta (a_{i_1} W^{i_1} x a_{i_2} W^{i_2} x (\delta_{i_3 k} W^{i_3} x \delta^{kl} \delta_{i_4 l} W^{i_4} x + a_{i_3} \delta^{i_3 k} \delta_{kl} a_{i_4} \delta^{i_4 l} x^T x)) =\\=
        n^{-2} \EE_\theta (a_{i_1} W^{i_1} x a_{i_2} W^{i_2} x (\delta_{i_3 i_4} W^{i_3} x W^{i_4} x + a_{i_3} \delta^{i_3 i_4} a_{i_4} x^T x)) =\\=
        n^{-2} (\delta_{i_1 i_2} \delta_{i_3 i_4}) (\delta^{i_1 i_2} \delta^{i_3 i_4} + \delta^{i_1 i_3} \delta^{i_2 i_4} + \delta^{i_1 i_4} \delta^{i_2 i_3}) (x^T x)^2 + n^{-2} (\delta_{i_1 i_2} \delta_{i_3 i_4} + \delta_{i_1 i_3} \delta_{i_2 i_4} + \delta_{i_1 i_4} \delta_{i_2 i_3}) (\delta^{i_1 i_2} \delta^{i_3 i_4}) (x^T x)^2 =\\=
        (2 + 4 n^{-1}) (x^T x)^2.
    \end{multline}
    Here we have only those Feynman diagrams that have an edge $(v_3,v_4)$.
    There are two such diagrams with two cycles each, and four with a single cycle.

    Note that if there is an edge in a cluster graph $G_C$, there is also an edge, $a$ or $w$ type, in each $\gamma$ from $\Gamma(C)$.
    Note also that each cycle in $\gamma$ consists of even number of edges.
    Hence each cycle consists of even clusters and an even number of odd clusters.
    Hence there could be at most $n_e + n_o/2$ cycles in $\gamma$, which proves Conjecture~\ref{conj:C_asymptotics} for shallow linear nets:
    \begin{equation}
        C(x_1,\ldots,x_m) =
        [m = 2k] O_{n\to\infty} (n^{\max_{\gamma \in \Gamma(C)} l_\gamma - m/2}) =
        [m = 2k] O_{n\to\infty} (n^{n_e + n_o/2 - m/2}).
    \end{equation}

    \subsubsection{Deep nets.}

    In the case of a network with $L$ hidden layers, there are $L+1$ edges of types $W_0, \ldots W_L$ adjacent to each node.
    Feynman diagrams are still well-defined, however, it is not obvious how to define the number of loops in this case.
    
    The correct way to do it is to count the loops in a corresponding \emph{double-line diagram.}
    Given a Feynman diagram $\gamma$, define the double-line diagram $DL(\gamma)$ as follows:
    \begin{itemize}
        \item Each vertex $v_i$ of $\gamma$ maps to $L$ vertices $v_i^{(1)}, \ldots, v_i^{(L)}$ in $DL(\gamma)$.
        \item An edge $(v_i,v_j)$ of type $W_0$ maps to an edge $(v_i^{(1)}, v_j^{(1)})$.
        \item An edge $(v_i,v_j)$ of type $W_L$ maps to an edge $(v_i^{(L)}, v_j^{(L)})$.
        \item $\forall l \in [L-1]$ an edge $(v_i,v_j)$ of type $W_l$ maps to a pair of edges: $(v_i^{(l)}, v_j^{(l)})$ and $(v_i^{(l+1)}, v_j^{(l+1)})$.
    \end{itemize}
    We see that each of the $Lm$ vertices of a double-line diagram has degree $2$; hence the number of loops is well-defined.
    For $L=1$, a double-line diagram recovers the corresponding Feynman diagram without edge types.
    For any $L$, we have the following:
    \begin{equation}
        C(x_1,\ldots,x_m) =
        [m = 2k] O_{n\to\infty} (n^{\max_{\gamma \in \Gamma(C)} l_\gamma - Lm/2}),
        \label{eq:C_bound_deep_loops}
    \end{equation}
    where now $l_\gamma$ is a number of loops in $DL(\gamma)$.

    In order to get intuition about this result, let us consider a network with two hidden layers.
    For the sake of simplicity, assume $x$ is a scalar:
    \begin{equation}
        f(x) =
        \frac{1}{n} a^T W v x.
    \end{equation}
    \begin{equation}
        \EE_\theta (f(x_1) f(x_2)) =
        n^{-2} \EE_\theta (a_{i_1} W^{i_1 j_1} v_{j_1} x_1 a_{i_2} W^{i_2 j_2} v_{j_2} x_2) =
        n^{-2} \delta_{i_1 i_2} \delta^{i_1 i_2} \delta^{j_1 j_2} \delta_{j_1 j_2} x_1 x_2 =
        x_1 x_2.
    \end{equation}
    Here both $a$ and $v$ result in a single Kronecker delta, hence they correspond to a single edge in a double-line diagram.
    At the same time, $W$ results in a product of two deltas, in its turn resulting in a pair of edges in the diagram.

    Similar to the case of $L=1$, contracted derivative tensors force the existence of corresponding edges in the Feynman diagram.
    Given a Feynman diagram $\gamma$, define $s_\gamma = l_\gamma - Lm/2$, or, in other words, a number of loops in $DL(\gamma)$ minus a number of vertices in $DL(\gamma)$ halved.
    Let $c_\gamma$ be a number of connected components of $\gamma$.
    We shall prove that 
    \begin{equation}
        s_\gamma \leq c_\gamma - \frac{m}{2}.
        \label{eq:loops_and_components}
    \end{equation}

    Note that eq.~(\ref{eq:loops_and_components}) holds for $L=1$ since all connected components of $\gamma$ are loops in this case.
    Let us express $\gamma$ as a union of its connected components $\gamma'$; given this, $s_\gamma = \sum_{\gamma'} s_{\gamma'}$.
    We are going to show that $s_{\gamma'} \leq 1 - m'/2$, where $m'$ is a number of vertices in the component $\gamma'$.
    The latter will imply $s_\gamma \leq c_\gamma - m/2$.

    Let $v$, $e$, and $f$ be a number of vertices, a number of edges, and a number faces of $\gamma'$.
    We already know that $v = m'$, $e = (L+1)m'/2$, and $f = l_{\gamma'}$.
    Hence $s_{\gamma'} = l_{\gamma'} - L m'/2 = f - L v/2$.
    On the other hand, the Euler characteristic of $\gamma'$ is $\chi = v - e + f = s_{\gamma'} + m' (1 + L/2) - (L+1)m'/2 = s_{\gamma'} + m'/2$.
    Since $\gamma'$ is a triangulation of a planar surface with at least one boundary, $\chi \leq 1$.
    Hence $s_{\gamma'} \leq 1 - m'/2$, which was required.

    Consequently, we may rewrite~(\ref{eq:C_bound_deep_loops}) as follows:
    \begin{equation}
        C(x_1,\ldots,x_m) =
        [m = 2k] O_{n\to\infty} (n^{\max_{\gamma \in \Gamma(C)} c_\gamma - m/2}).
    \end{equation}
    It is now easy to conclude that $c_\gamma \leq n_e + n_o/2$.
    Indeed, each connected component of $\gamma$ consists of connected components of the cluster graph $G_C$.
    Hence $c_\gamma \leq n_e + n_o$.
    Moreover, each connected component of $\gamma$ consists of even number of vertices, hence it can contain only even number of odd connected components of $G_C$.
    This gives $c_\gamma \leq n_e + n_o/2$, which is required.

    \section{GD convergence via kernel stability}

    Recall the model prediction dynamics (eq.~(\ref{eq:f_evolution})):
    \begin{equation}
        \dot f_t(x') =
        -\eta \EE_{x,y} \left.\frac{\partial\ell(y,z)}{\partial z}\right|_{z=f_t(x)} \hat\Theta_t(x',x).
    \end{equation}
    On the train dataset $(\vec x, \vec y)$ we have the following:
    \begin{equation}
        \dot f_t(\vec x) =
        -\frac{\eta}{m} \hat\Theta_t(\vec x, \vec x) \left.\frac{\partial\ell(\vec y,\vec z)}{\partial \vec z}\right|_{\vec z=f_t(\vec x)}.
    \end{equation}
    For the special case of square loss:
    \begin{equation}
        \dot f_t(\vec x) =
        \frac{\eta}{m} \hat\Theta_t(\vec x, \vec x) (\vec y - f_t(\vec x)).
    \end{equation}
    Let us consider the evolution of a loss:
    \begin{equation}
        \frac{d}{dt}\left(\frac{1}{2}\|\vec y - f_t(\vec x)\|_2^2\right) =
        -\frac{\eta}{m} (\vec y - f_t(\vec x))^T \hat\Theta_t(\vec x, \vec x) (\vec y - f_t(\vec x)) \leq
        -\frac{\eta}{m} \lambda_{min}(\hat\Theta_t(\vec x, \vec x)) \|\vec y - f_t(\vec x)\|_2^2
    \end{equation}
    Consider $\lambda_{min} \geq 0$ such that $\forall t \geq 0$ $\lambda_{min}(\hat\Theta_t(\vec x, \vec x)) \geq \lambda_{min}$.
    This allows us to solve the differential inequality:
    \begin{equation}
        \|\vec y - f_t(\vec x)\|_2^2 \leq
        e^{-2\eta \lambda_{min} t / m} \|\vec y - f_0(\vec x)\|_2^2.
    \end{equation}
    Hence having $\lambda_{min} > 0$ ensures that the gradient descent converges to a zero-loss solution.
    There is a theorem that guarantees that the least eigenvalue of the kernel stays separated away from zero for wide-enough NTK-parameterized two-layered networks with ReLU activation:
    \begin{theorem}[\cite{du2018gradient}]
        Consider the following model:
        \begin{equation}
            f(x; a_{1:n}, w_{1:n}) =
            \frac{1}{\sqrt{n}} \sum_{i=1}^n a_i [w_i^T x]_+.
        \end{equation}
        Assume we aim to minimize the square loss on the dataset $(\vec x, \vec y)$ of size $m$ via a gradient descent on the input weights:
        \begin{equation}
            \dot w_i(t) =
            \frac{1}{\sqrt{n}} \sum_{k=1}^m (y_k - f(x_k; a_{1:n}, w_{1:n}(t))) a_i [w_i^T(t) x_k > 0] x_k,
            \quad
            w_i(0) \sim \NN(0, I_{n_0}),
            \quad
            a_i \sim U(\{-1,1\})
            \quad
            \forall i \in [n].
        \end{equation}
        Assume also that $\|x_k\|_2 \leq 1$ and $|y_k| < 1$ $\forall k \in [m]$.
        Let $H^\infty$ be an expected gramian of the NTK at initialization and let $\lambda_0$ be its least eigenvalue:
        \begin{equation}
            H^\infty_{kl} =
            \EE_{w \sim \NN(0, I_{n_0})} [w^T x_k > 0] [w^T x_l > 0] x_k^T x_l,
            \qquad
            \lambda_0 =
            \lambda_{min}(H^\infty).
        \end{equation}
        Then $\forall \delta \in (0,1)$ taking
        \begin{equation}
            n >
            2^8 \max\left(
                \frac{2^5 3^2}{\pi} \frac{m^6}{\lambda_0^4 \delta^3}, \;
                \frac{m^2}{\lambda_0^2} \log\left(\frac{2m}{\delta}\right)
            \right) =
            \Omega\left(\frac{m^6}{\lambda_0^4 \delta^3}\right)
        \end{equation}
        guarantees that w.p. $\geq 1-\delta$ over initialization we have an exponential convergence to a zero-loss solution:
        \begin{equation}
            \|\vec y - f_t(\vec x)\|_2^2 \leq
            e^{-\lambda_0 t} \|\vec y - f_0(\vec x)\|_2^2.
        \end{equation}
        \label{thm:gd_convergence_via_kernel_stability}
    \end{theorem}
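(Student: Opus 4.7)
The plan is to combine the differential inequality already displayed just above the theorem statement with a bootstrap argument that keeps $\lambda_{min}(\hat\Theta_t(\vec x,\vec x))$ uniformly bounded below by $\lambda_0/2$ along the entire trajectory. At $t=0$ each gramian entry $H_{kl}(0) = n^{-1}\sum_{i=1}^n [w_i^T(0) x_k > 0][w_i^T(0) x_l > 0] x_k^T x_l$ is an average of $n$ bounded i.i.d.\ random variables with mean $H^\infty_{kl}$, so Hoeffding and a union bound over the $m^2$ entries yield $\|H(0) - H^\infty\|_F \leq \lambda_0/4$ with probability at least $1-\delta/2$, provided $n = \Omega(m^2 \lambda_0^{-2} \log(m/\delta))$; this gives $\lambda_{min}(H(0)) \geq 3\lambda_0/4$, which is the first of the two terms inside the $\max$ in the width requirement.

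Next I would set up the bootstrap. Let $T_*$ be the supremum of times $t$ for which $\lambda_{min}(H(s)) \geq \lambda_0/2$ on $[0,t]$. On $[0,T_*]$ the differential inequality above the theorem statement yields $\|\vec y - f_s(\vec x)\|_2^2 \leq e^{-\lambda_0 s}\|\vec y - f_0(\vec x)\|_2^2$, and $\|\vec y - f_0(\vec x)\|_2^2 = O(m)$ with high probability from a standard concentration estimate. Bounding the velocity pointwise by $\|\dot w_i(s)\|_2 \leq n^{-1/2} \sqrt{m}\,\|\vec y - f_s(\vec x)\|_2$ and integrating, one gets
\begin{equation}
\|w_i(t) - w_i(0)\|_2 \leq \frac{2\sqrt{m}\,\|\vec y - f_0(\vec x)\|_2}{\sqrt{n}\,\lambda_0} = O\!\left(\frac{m}{\sqrt{n}\,\lambda_0}\right) =: R, \qquad \forall t \in [0,T_*], \; \forall i \in [n].
\end{equation}

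The geometric heart of the proof is a sign-flip estimate: the indicator $[w_i^T(t) x_k > 0]$ differs from $[w_i^T(0) x_k > 0]$ only if $|w_i^T(0) x_k| \leq R$, and since $w_i^T(0) x_k \sim \NN(0, \|x_k\|_2^2)$ with $\|x_k\|_2 \leq 1$, gaussian anti-concentration gives $\PP(|w_i^T(0) x_k| \leq R) = O(R)$. Summing over $i,k$ and invoking Markov (with the $\delta^{-1}$ loss here being the origin of the $\delta^{-3}$ in the final width), with probability at least $1-\delta/4$ the fraction of flipped activation patterns is $O(R)$ uniformly in $k$, which gives $\|H(t) - H(0)\|_2 \leq O(mR)$. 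The stated width $n = \Omega(m^6/(\lambda_0^4 \delta^3))$ is exactly what is needed to make this perturbation at most $\lambda_0/4$, so $\lambda_{min}(\hat\Theta_t(\vec x,\vec x)) \geq \lambda_0/2$ strictly on $[0,T_*]$, contradicting the maximality of $T_*$ unless $T_* = \infty$; the exponential decay statement then follows.

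The main obstacle is the circularity of the argument: the convergence rate $\lambda_0$ is required to bound the weight movement, which is required to bound the kernel perturbation, which is in turn what underwrites the rate. The bootstrap resolves this, but the quantitative accounting in the sign-flip step is delicate — converting a mean bound on flipped neurons into a high-probability bound that must hold uniformly over the whole trajectory is what forces the sub-optimal $m^6/(\lambda_0^4 \delta^3)$ scaling in the width requirement, dominating the $m^2/\lambda_0^2$ scaling coming from the initialization concentration.
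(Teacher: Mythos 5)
Your proposal follows essentially the same route as the paper's own proof: Hoeffding plus a union bound to get $H(0)\approx H^\infty$ and $\lambda_{min}(H(0))\geq 3\lambda_0/4$, the velocity bound and integration to show weights stay within $R'\propto \sqrt{m}\,\|\vec y - f_0(\vec x)\|_2/(\lambda_0\sqrt{n})$ of initialization while the kernel's least eigenvalue is $\geq\lambda_0/2$, a gaussian anti-concentration (sign-flip) estimate with Markov to show small weight perturbations keep $\|H - H(0)\|\leq\lambda_0/4$, and a continuity/bootstrap argument to close the loop — the paper packages these exact ingredients as Lemmas~\ref{lemma:initial_gram_matrix_convergence}--\ref{lemma:the_two_lemmas_hold_simultaneously}. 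Your shorthand $\|H(t)-H(0)\|_2\leq O(mR)$ omits the extra factor of $m/\delta$ that the paper tracks in Lemma~\ref{lemma:initial_gram_matrix_stability}, but you account for the Markov $\delta^{-1}$ loss in your discussion, and the overall structure and width scaling agree.
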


    \begin{proof}
        From what was shown above, it suffices to show that $\lambda_{min}(H(t)) \geq \lambda_0 / 2$ with given probability for $n$ sufficiently large, where $H(t)$ is a gram matrix of the NTK at time $t$:
        \begin{equation}
            H_{kl}(t) =
            \frac{1}{n} \sum_{i=1}^n [w_i^T(t) x_k > 0] [w_i^T(t) x_l > 0] x_k^T x_l.
        \end{equation}
        We shall first show that $H(0) \geq 3 \lambda_0 / 4$:
        \begin{lemma}
            $\forall \delta \in (0,1)$ taking $n \geq 128 m^2 \lambda_0^{-2} \log(m/\delta)$ guarantees that w.p. $\geq 1-\delta$ over initialization we have $\| H(0) - H^\infty \|_2 \leq \lambda_0 / 4$ and $\lambda_{min}(H(0)) \geq 3 \lambda_0 / 4$.
            \label{lemma:initial_gram_matrix_convergence}
        \end{lemma}
        Next, we shall show that the initial Gram matrix $H(0)$ is stable with respect to initial weights $w_{1:n}(0)$:
        \begin{lemma}
            $\forall \delta \in (0,1)$ w.p. $\geq 1-\delta$ over initialization for any set of weights $w_{1:n}$ that satisfy $\forall i \in [n]$ $\|w_i(0) - w_i\|_2 \leq R(\delta) := (\sqrt{2\pi} / 16) \delta \lambda_0 m^{-2}$, the corresponding Gram matrix $H$ satisfies $\| H - H(0) \| < \lambda_0 / 4$ and $\lambda_{min}(H) > \lambda_0 / 2$.
            \label{lemma:initial_gram_matrix_stability}
        \end{lemma}
        After that, we shall show that lower bounded eigenvalues of the Gram matrix gives exponential convergence on the train set. Moreover, weights stay close to initialization, as the following lemma states:
        \begin{lemma}
            Suppose for $s \in [0,t]$ $\lambda_{min}(H(s)) \geq \lambda_0 / 2$.
            Then we have $\| \vec y - f_t(\vec x) \|_2^2 \leq e^{-\lambda_0 t} \| \vec y - f_0(\vec x) \|_2^2$ and for any $i \in [n]$ $\| w_i(t) - w_i(0) \|_2 \leq R' := (2 / \lambda_0) \sqrt{(m / n)} \| \vec y - f_0(\vec x) \|_2$.
            \label{lemma:lower_bounded_eigenvalues_consequence}
        \end{lemma}
        Finally, we shall show that when $R' < R(\delta)$, the conditions of Lemma~\ref{lemma:initial_gram_matrix_stability} and of Lemma~\ref{lemma:lower_bounded_eigenvalues_consequence} hold $\forall t \geq 0$ simultaneously:
        \begin{lemma}
            Let $\delta \in (0,1/3)$.
            If $R' < R(\delta)$, then w.p. $\geq 1-3\delta$ over initialization $\forall t \geq 0$ $\lambda_{min}(H(t)) \geq \lambda_0 / 2$ and $\forall i \in [n]$ $\| w_i(t) - w_i(0) \|_2 \leq R'$ and $\| \vec y - f_t(\vec x) \|_2^2 \leq e^{-\lambda_0 t} \| \vec y - f_0(\vec x) \|_2^2$.
            \label{lemma:the_two_lemmas_hold_simultaneously}
        \end{lemma}
        Hence for $\delta \in (0,1)$, $R' < R(\delta/3)$ suffices for the theorem to hold:
        \begin{equation}
            \frac{2 \sqrt{m} \| \vec y - f_0(\vec x) \|_2}{\lambda_0 \sqrt{n}} =
            R' <
            R(\delta/3) =
            \frac{\sqrt{2\pi} \delta \lambda_0}{48 m^2},
        \end{equation}
        which is equivalent to:
        \begin{equation}
            n >
            \frac{2^9 3^2 m^5 \| \vec y - f_0(\vec x) \|_2^2}{\pi \lambda_0^4 \delta^2}.
        \end{equation}
        We further bound:
        \begin{equation}
            \EE \| \vec y - f_0(\vec x) \|_2^2 =
            \EE \|\vec y\|_2^2 - 2 \vec y^T \EE f_0(\vec x) + \EE \|f_0(\vec x)\|_2^2 \leq
            2m.
        \end{equation}
        Hence by Markov's inequality, w.p. $\geq 1-\delta$
        \begin{equation}
            \| \vec y - f_0(\vec x) \|_2^2 \leq
            \frac{\EE \| \vec y - f_0(\vec x) \|_2^2}{\delta} \leq
            \frac{2m}{\delta}.
        \end{equation}
        By a union bound, in order to have the desired properties w.p. $\geq 1-2\delta$, we need:
        \begin{equation}
            n >
            \frac{2^{10} 3^2 m^6}{\pi \lambda_0^4 \delta^3}.
        \end{equation}
        If we want the things hold w.p. $\geq 1-\delta$, noting Lemma~\ref{lemma:initial_gram_matrix_convergence}, we finally need the following:
        \begin{equation}
            n >
            \max\left(
                (2^{13} 3^2 / \pi) \frac{m^6}{\lambda_0^4 \delta^3}, \;
                2^8 \frac{m^2}{\lambda_0^2} \log\left(\frac{2m}{\delta}\right)
            \right).
        \end{equation}
    \end{proof}

    Let us prove the lemmas.
    \begin{proof}[Proof of Lemma~\ref{lemma:initial_gram_matrix_convergence}]
        Since all $H_{kl}(0)$ are independent random variables, we can apply Hoeffding's inequality for each of them independently:
        \begin{equation}
            \PP(|H_{kl}(0) - H_{kl}^\infty| \geq \epsilon) \leq
            2 e^{-n\epsilon^2/2}.
        \end{equation}
        For a given $\delta$, take $\epsilon$ such that $\delta = 2 e^{-n\epsilon^2/2}$.
        This gives $\epsilon = \sqrt{-2\log(\delta/2) / n}$, or,
        \begin{equation}
            |H_{kl}(0) - H_{kl}^\infty| \leq
            \frac{2\sqrt{\log(1/\delta)}}{\sqrt{n}}
            \quad
            \text{w.p. $\geq 1-\delta$ over initialization.}
        \end{equation}
        Applying a union bound gives:
        \begin{equation}
            |H_{kl}(0) - H_{kl}^\infty| \leq
            \frac{2\sqrt{\log(m^2/\delta)}}{\sqrt{n}} \leq
            \sqrt{\frac{8 \log(m/\delta)}{n}}
            \quad
            \forall k,l \in [m]
            \quad
            \text{w.p. $\geq 1-\delta$ over initialization.}
        \end{equation}
        Hence
        \begin{equation}
            \| H(0) - H^\infty \|_2^2 \leq
            \| H(0) - H^\infty \|_F^2 \leq
            \sum_{k,l = 1}^m |H_{kl}(0) - H_{kl}^\infty|^2 \leq
            \frac{8 m^2 \log(m/\delta)}{n}.
        \end{equation}
        In order to get $\| H(0) - H^\infty \|_2 \leq \lambda_0 / 4$, we need to solve:
        \begin{equation}
            \sqrt{\frac{8 m^2 \log(m/\delta)}{n}} \leq
            \frac{\lambda_0}{4}.
        \end{equation}
        This gives:
        \begin{equation}
            n \geq
            \frac{128 m^2 \log(m/\delta)}{\lambda_0^2}.
        \end{equation}
        This gives $\| H(0) - H^\infty \|_2 \leq \lambda_0 / 4$, which implies:
        \begin{equation}
            \lambda_{min}(H(0)) =
            \lambda_{min}(H^\infty + (H(0) - H^\infty)) \geq
            \lambda_{min}(H^\infty) - \lambda_{max}(H(0) - H^\infty) \geq
            \lambda_0 - \lambda_0 / 4 =
            3 \lambda_0 / 4.
        \end{equation}
    \end{proof}

    \begin{proof}[Proof of Lemma~\ref{lemma:initial_gram_matrix_stability}]
        We define the event in the space of $w_i(0)$ realizations:
        \begin{equation}
            A_{ki} =
            \{\exists w: \; \|w - w_i(0)\|_2 \leq R, \; [w^T x_k \geq 0] \neq [w_i^T(0) x_k \geq 0]\}.
        \end{equation}
        When $A_{ki}$ holds, we can always take $w = w'_{ki}$, where $w'_{ki}$ is defined as follows:
        \begin{equation}
            w'_{ki} =
            \begin{cases}
                w_i(0) - R x_k, & $\text{if $w_i^T(0) x_k \geq 0$}$ \\
                w_i(0) + R x_k, & $\text{if $w_i^T(0) x_k < 0$.}$
            \end{cases}
        \end{equation}
        Hence $A_{ki}$ is equivalent to the following:
        \begin{equation}
            A'_{ki} =
            \{[w_{ki}^{\prime,T} x_k \geq 0] \neq [w_i^T(0) x_k \geq 0]\}.
        \end{equation}
        This event holds iff $|w_i^T(0) x_k| < R$.
        Since $w_i(0) \sim \NN(0,I)$, we have
        \begin{equation}
            \PP(A_{ki}) =
            \PP(A'_{ki}) =
            \PP_{z \sim \NN(0,1)} \{|z| < R\} \leq
            \frac{2R}{\sqrt{2\pi}}.
        \end{equation}
        We can bound the entry-wise deviation of $H'$ from the $H(0)$ matrix:
        \begin{multline}
            \EE |H_{kl}(0) - H'_{kl}| =
            \EE \left(\frac{1}{n} \left|x_k^T x_l \sum_{i=1}^n \left([w_i^T(0) x_k > 0] [w_i^T(0) x_l > 0] - [w_{ki}^{\prime,T} x_k > 0] [w_{li}^{\prime,T} x_l > 0]\right) \right|\right) \leq
            \\\leq
            \frac{1}{n} \sum_{i=1}^n \EE [A'_{ki} \cup A'_{li}] \leq
            \frac{4R}{\sqrt{2\pi}}.
        \end{multline}
        Hence $\EE \sum_{k,l=1}^m |H_{kl}(0) - H'_{kl}| \leq 4 m^2 R / \sqrt{2\pi}$.
        Hence by Markov's inequality,
        \begin{equation}
            \sum_{k,l=1}^m |H_{kl}(0) - H'_{kl}| \leq 
            \frac{4 m^2 R}{\sqrt{2\pi} \delta}
            \quad
            \text{w.p. $\geq 1-\delta$ over initialization.}
        \end{equation}
        Since $\| H(0) - H' \|_2 \leq \| H(0) - H' \|_F \leq \sum_{k,l=1}^m |H_{kl}(0) - H'_{kl}|$, the same probabilistic bound holds for $\| H(0) - H' \|_2$.

        Note that $\forall k \in [m]$ $\forall i \in [n]$ for any $w \in \RR^{n_0}$ such that $\| w - w_i(0) \|_2 \leq R$, $[w^T x_k \geq 0] \neq [w_i^T(0) x_k \geq 0]$ implies $[w_{ki}^{\prime,T} x_k \geq 0] \neq [w_i^T(0) x_k \geq 0]$.
        Hence $\forall k,l \in [m]$ for any set of weights $w_{1:n}$ such that $\forall i \in [n]$ $\|w_i - w_i(0)\|_2 \leq R$, $|H_{kl}(0) - H_{kl}| \leq |H_{kl}(0) - H'_{kl}|$.
        This means that w.p. $\geq 1-\delta$ over initialization, for any set of weights $w_{1:n}$ such that $\forall i \in [n]$ $\|w_i - w_i(0)\|_2 \leq R$,
        \begin{equation}
            \| H(0) - H \|_2 \leq
            \| H(0) - H' \|_2 \leq
            \frac{4 m^2 R}{\sqrt{2\pi} \delta}.
        \end{equation}
        In order to get the required bound, it suffices to solve the equation:
        \begin{equation}
            \frac{4 m^2 R}{\sqrt{2\pi} \delta} =
            \frac{\lambda_0}{4},
            \quad
            \text{which gives}
            \quad
            R = 
            \frac{\sqrt{2\pi} \delta \lambda_0}{16 m^2}.
        \end{equation}
        The bound on the minimal eigenvalue is then straightforward:
        \begin{equation}
            \lambda_{min}(H) =
            \lambda_{min}(H(0) + (H - H(0))) \geq
            \lambda_{min}(H(0)) - \lambda_{max}(H - H(0)) \geq
            3 \lambda_0 / 4 - \lambda_0 / 4 =
            \lambda_0 / 2.
        \end{equation}
    \end{proof}

    \begin{proof}[Proof of Lemma~\ref{lemma:lower_bounded_eigenvalues_consequence}]
        For $s \in [0,t]$ we have:
        \begin{equation}
            \frac{d\|\vec y - f_s(\vec x)\|_2^2}{ds} =
            -2 (\vec y - f_s(\vec x))^T H(s) (\vec y - f_s(\vec x)) \leq
            -\lambda_0 \|\vec y - f_s(\vec x)\|_2^2,
        \end{equation}
        which implies:
        \begin{equation}
            \frac{d(\log(\|\vec y - f_s(\vec x)\|_2^2))}{ds} \leq
            -\lambda_0.
        \end{equation}
        Hence
        \begin{equation}
            \log(\|\vec y - f_s(\vec x)\|_2^2) \leq
            \log(\|\vec y - f_0(\vec x)\|_2^2) - \lambda_0 s,
        \end{equation}
        or, equivalently,
        \begin{equation}
            \|\vec y - f_s(\vec x)\|_2^2 \leq
            e^{-\lambda_0 s} \|\vec y - f_0(\vec x)\|_2^2,
        \end{equation}
        which holds, for instance, for $s = t$.
        In order to bound weight deviation, we first bound the gradient norm:
        \begin{multline}
            \left\|\frac{dw_i(s)}{ds}\right\|_2 =
            \left\|\frac{1}{\sqrt{n}} \sum_{k=1}^m (y_k - f_s(x_k)) a_i [w_i^T(s) x_k > 0] x_k\right\|_2 \leq
            \\\leq
            \frac{1}{\sqrt{n}} \sum_{k=1}^m |y_k - f_s(x_k)| \leq
            \sqrt{\frac{m}{n}} \|\vec y - f_s(\vec x)\|_2 \leq
            \sqrt{\frac{m}{n}} e^{-\lambda_0 s / 2} \|\vec y - f_0(\vec x)\|_2.
        \end{multline}
        This gives $\forall i \in [n]$:
        \begin{multline}
            \| w_i(t) - w_i(0) \|_2 =
            \left\|\int_0^t \frac{dw_i(s)}{ds} \, ds\right\|_2 \leq
            \int_0^t \left\|\frac{dw_i(s)}{ds}\right\|_2 \, ds \leq
            \\\leq
            \frac{2 \sqrt{m}}{\lambda_0 \sqrt{n}} \left(1 - e^{-\lambda_0 t / 2}\right) \|\vec y - f_0(\vec x)\|_2 \leq
            \frac{2 \sqrt{m}}{\lambda_0 \sqrt{n}} \|\vec y - f_0(\vec x)\|_2.
        \end{multline}
    \end{proof}

    \begin{proof}[Proof of Lemma~\ref{lemma:the_two_lemmas_hold_simultaneously}]
        Proof by contradiction.
        Take $\delta \in (0,1/3)$ and suppose that $R' < R(\delta)$, however, w.p. $> 3\delta$ over initialization $\exists t^* > 0:$ either $\lambda_{min}(H(t^*)) < \lambda_0 / 2$, or $\exists i \in [n]$ $\|w_i(t^*) - w_i(0)\|_2 > R'$, or $\|\vec y - f_{t^*}(\vec x)\|_2 > \exp(-\lambda_0 t^*) \|\vec y - f_0(\vec x)\|_2$.
        If either of the last two holds, then by Lemma~\ref{lemma:lower_bounded_eigenvalues_consequence}, $\exists s \in [0,t^*]$ $\lambda_{min}(H(s)) < \lambda_0 / 2$.
        If the former holds, we can take $s = t^*$.
        Hence by virtue of Lemma~\ref{lemma:initial_gram_matrix_stability}, for this particular $s$ w.p. $> 2\delta$ over initialization $\exists i \in [n]$ $\| w_i(s) - w_i(0) \|_2 > R(\delta)$.
        Define:
        \begin{equation}
            t_0 =
            \inf \left\{t \geq 0: \; \max_{i \in [n]} \| w_i(t) - w_i(0) \|_2 > R(\delta) \right\}.
        \end{equation}
        Note that w.p. $> 2\delta$ over initialization $t_0 \leq s \leq t^* < \infty$.
        Since $w_i(\cdot)$ is a continuous map, w.p. $> 2\delta$ over initialization $\max_{i \in [n]} \| w_i(t_0) - w_i(0) \|_2 = R(\delta)$.
        Hence by Lemma~\ref{lemma:initial_gram_matrix_stability}, w.p. $> \delta$ over initialization $\forall t \in [0,t_0]$ $\lambda_{min}(H(t)) \geq \lambda_0 / 2$.
        Hence by Lemma~\ref{lemma:lower_bounded_eigenvalues_consequence}, $\forall i \in [n]$ $\| w_i(t_0) - w_i(0) \|_2 \leq R'$.
        Hence w.p. $> \delta$ over initialization we have a contradiction with $\max_{i \in [n]} \| w_i(t_0) - w_i(0) \|_2 = R(\delta)$ and $R' < R(\delta)$.
    \end{proof}

    \subsection{Component-wise convergence guarantees and kernel alignment}


    Denote $\vec u(t) = f_t(\vec x)$.
    We have the following dynamics for quadratic loss:
    \begin{equation}
        \frac{d\vec u(t)}{dt} =
        H(t) (\vec y - \vec u(t)),
        \quad
        u_k(0) =
        \frac{1}{\sqrt{n}} \sum_{i=1}^n a_i [w_i^T(0) x_k]_+
        \quad
        \forall k \in [m],
    \end{equation}
    where
    \begin{equation}
        H_{kl}(t) =
        \frac{1}{n} \sum_{i=1}^n [w_i^T(t) x_k \geq 0] [w_i^T(t) x_l \geq 0] x_k^T x_l.
    \end{equation}
    Additionaly, following \cite{arora2019fine}, consider the limiting linearized dynamics:
    \begin{equation}
        \frac{d{\vec u}'(t)}{dt} =
        H^\infty (\vec y - {\vec u}'(t)),
        \quad
        u'_k(0) =
        u_k(0)
        \quad
        \forall k \in [m],
    \end{equation}
    where
    \begin{equation}
        H^\infty_{kl} =
        \EE H_{kl}(0) =
        \EE_{w \sim \NN(0, I)} [w^T x_k \geq 0] [w^T x_l \geq 0] x_k^T x_l.
    \end{equation}
    Solving the above gives:
    \begin{equation}
        {\vec u}'(t) =
        \vec y + e^{-H^\infty t} (\vec u(0) - \vec y)
    \end{equation}
    Consider an eigenvalue-eigenvector decomposition for $H^\infty$: $H^\infty = \sum_{k=1}^m \lambda_k \vec v_k^T \vec v_k$, where $\{\vec v_k\}_{k=1}^m$ forms an orthonormal basis in $\RR^m$ and $\lambda_1 \geq \ldots \geq \lambda_m \geq 0$.
    Note that $\exp(-H^\infty t)$ then has the same set of eigenvectors, and each eigenvector $\vec v_k$ corresponds to an eigenvalue $\exp(-\lambda_k t)$.
    Then the above solution is rewritten as:
    \begin{equation}
        {\vec u}'(t) - \vec y =
        -\sum_{k=1}^m e^{-\lambda_k t} (\vec v_k^T (\vec y - \vec u(0))) \vec v_k,
    \end{equation}
    which implies
    \begin{equation}
        \|{\vec u}'(t) - \vec y\|_2^2 =
        \sum_{k=1}^m e^{-2\lambda_k t} (\vec v_k^T (\vec y - \vec u(0)))^2.
    \end{equation}
    We see that components $\vec v_k^T (\vec y - \vec u(0))$ that correspond to large $\lambda_k$ decay faster.
    Hence convergence is fast if $\forall k \in [m]$ large $\vec v_k^T (\vec y - \vec u(0))$ implies large $\lambda_k$.
    In this case, we shall say that the initial kernel aligns well with the dataset.

    It turns out, that realistic datasets align well with NTKs of realistic nets, however, datasets with random labels do not.
    This observation substitutes a plausible explanation for a phenomenon noted in~\cite{zhang2016understanding}: large networks learn corrupted datasets much slower than clean ones.

    The above speculation is valid for the limiting linearized dynamics ${\vec u}'(t)$.
    It turns out that given $n$ large enough, the true dynamics $\vec u(t)$ stays close to its limiting linearized version:
    \begin{theorem}[\cite{arora2019fine}]
        Suppose $\lambda_0 = \lambda_{min}(H^\infty) > 0$.
        Take $\epsilon > 0$ and $\delta \in (0,1)$.
        Then there exists a constant $C_n > 0$ such that for 
        \begin{equation}
            n \geq
            C_n \frac{m^7}{\lambda_0^4 \delta^4 \epsilon^2},
        \end{equation}
        w.p. $\geq 1-\delta$ over initialization, $\forall t \geq 0$ $\left\| \vec u(t) - \vec u'(t) \right\|_2 \leq \epsilon$.
        \label{thm:kernel_alignment}
    \end{theorem}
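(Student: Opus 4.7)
The plan is to bound the discrepancy $\vec u(t)-\vec u'(t)$ via a Duhamel (variation-of-constants) argument, exploiting two facts established earlier: that the empirical NTK $H(t)$ stays uniformly close to $H^\infty$ along the trajectory (using the machinery from the proof of Theorem~\ref{thm:gd_convergence_via_kernel_stability}), and that the true residual $\vec y-\vec u(t)$ decays exponentially at rate $\lambda_0/2$.

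First I would condition on the high-probability events of Lemmas~\ref{lemma:initial_gram_matrix_convergence}--\ref{lemma:the_two_lemmas_hold_simultaneously}. Taking $n$ at least as large as required by Theorem~\ref{thm:gd_convergence_via_kernel_stability} yields, simultaneously for every $t\ge 0$, $\lambda_{\min}(H(t))\ge\lambda_0/2$, $\|w_i(t)-w_i(0)\|_2\le R'$ for all $i$, and $\|\vec y-\vec u(t)\|_2\le e^{-\lambda_0 t/2}\|\vec y-\vec u(0)\|_2$. On a further sub-event, Lemma~\ref{lemma:initial_gram_matrix_convergence} delivers $\|H(0)-H^\infty\|_2=O(m\sqrt{\log(m/\delta)/n})$, and Markov's inequality applied to $\EE\|\vec y-\vec u(0)\|_2^2\le 2m$ gives $\|\vec y-\vec u(0)\|_2\le\sqrt{2m/\delta}$. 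Plugging the resulting $R'=O(m/(\lambda_0\sqrt{n\delta}))$ into Lemma~\ref{lemma:initial_gram_matrix_stability} (which applies once $R'<R(\delta)$) controls the trajectory-wise drift of the kernel, and combining these three estimates produces a uniform bound
\[
K \;:=\; \sup_{t\ge 0}\|H(t)-H^\infty\|_2 \;=\; O\!\left(\frac{m^{3}}{\lambda_{0}\,\delta^{3/2}\sqrt{n}}\;+\;\frac{m\sqrt{\log(m/\delta)}}{\sqrt{n}}\right).
\]

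Next, subtracting the two ODEs gives
\[
\frac{d}{dt}\bigl(\vec u(t)-\vec u'(t)\bigr) \;=\; -H^{\infty}\bigl(\vec u(t)-\vec u'(t)\bigr)\;+\;\bigl(H^{\infty}-H(t)\bigr)\bigl(\vec y-\vec u(t)\bigr),
\]
so variation of constants, together with $\vec u(0)=\vec u'(0)$, yields
\[
\vec u(t)-\vec u'(t) \;=\; \int_{0}^{t} e^{-H^{\infty}(t-s)}\bigl(H^{\infty}-H(s)\bigr)\bigl(\vec y-\vec u(s)\bigr)\,ds.
\]
Because $H^{\infty}\succeq 0$ we have $\|e^{-H^{\infty}(t-s)}\|_{2}\le 1$; combining with the definition of $K$ and the exponential decay of the residual,
\[
\|\vec u(t)-\vec u'(t)\|_{2} \;\le\; K\,\|\vec y-\vec u(0)\|_{2}\int_{0}^{t} e^{-\lambda_{0} s/2}\,ds \;\le\; \frac{2K}{\lambda_{0}}\sqrt{\frac{2m}{\delta}}.
\]

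Finally, I require the right-hand side to be at most $\epsilon$. The dominant piece $m^{3}/(\lambda_{0}\delta^{3/2}\sqrt{n})$ of $K$ dictates $n\gtrsim m^{7}/(\epsilon^{2}\lambda_{0}^{4}\delta^{4})$, matching the theorem; the subdominant logarithmic piece is absorbed by tuning the constant $C_n$. The main obstacle is purely bookkeeping, not conceptual: one must carefully track and union-bound several distinct failure probabilities (initial-kernel concentration, the Markov bound on $\|\vec y-\vec u(0)\|_{2}$, kernel stability along the whole trajectory, and the conclusions of Theorem~\ref{thm:gd_convergence_via_kernel_stability}), then pick a single sample size that simultaneously activates each of the invoked lemmas with total failure probability at most $\delta$.
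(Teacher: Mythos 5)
Your proposal is correct and mirrors the paper's proof almost step for step: both condition on the high-probability events of the supporting lemmas, use the earlier stability results to bound $\sup_t\|H(t)-H^\infty\|$, apply Duhamel's formula to the difference ODE, and extract the required $O(1/\lambda_0)$ factor from the integral. The only cosmetic difference is that you bound the propagator by $\|e^{-H^\infty(t-s)}\|_2\le 1$ and use the exponential decay of the residual, whereas the paper bounds $\|e^{-H^\infty(t-s)}\|_2\le e^{-\lambda_0(t-s)}$ and uses only monotonicity of the residual — both yield the same $1/\lambda_0$ gain.
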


    \begin{proof}
        We start with stating a reformulation of Lemma~\ref{lemma:the_two_lemmas_hold_simultaneously}:
        \begin{lemma}
            Let $\delta \in (0,1)$.
            There exists $C_n' > 0$ such that for $n \geq C_n' \frac{m^6}{\lambda_0^4 \delta^3}$, w.p. $\geq 1-\delta$ over initialization, $\forall t \geq 0$
            \begin{equation}
                \| w_i(t) - w_i(0) \|_2 \leq
                R' :=
                \frac{4\sqrt{m} \|\vec y - \vec u(0)\|_2}{\sqrt{n}}
                \quad
                \forall i \in [n].
            \end{equation}
            \label{lemma:the_two_lemmas_hold_simultaneously_analogue}
        \end{lemma}
        We proceed with an analogue of Lemma~\ref{lemma:initial_gram_matrix_stability}:
        \begin{lemma}
            Let $\delta \in (0,1)$.
            There exist $C_H, C_Z > 0$ such that w.p. $\geq 1-\delta$ over initialization, $\forall t \geq 0$
            \begin{equation}
                \| H(t) - H(0) \|_F \leq
                C_H \frac{m^3}{n^{1/2} \lambda_0 \delta^{3/2}},
                \qquad
                \| Z(t) - Z(0) \|_F \leq
                C_Z \sqrt{\frac{m^2}{n^{1/2} \lambda_0 \delta^{3/2}}}.
            \end{equation}
            \label{lemma:initial_gram_matrix_stability_analogue}
        \end{lemma}
        The last lemma we need is an analogue of Lemma~\ref{lemma:initial_gram_matrix_convergence}:
        \begin{lemma}
            Let $\delta \in (0,1)$.
            There exist $C_H' > 0$ such that w.p. $\geq 1-\delta$ over initialization,
            \begin{equation}
                \| H(0) - H^\infty \|_F \leq
                C_H' \frac{m}{n^{1/2}} \log\left(\frac{m}{\delta}\right).
            \end{equation}
            \label{lemma:initial_gram_matrix_convergence_analogue}
        \end{lemma}

        Let us elaborate the dynamics over:
        \begin{equation}
            \frac{d\vec u(t)}{dt} =
            H(t) (\vec y - \vec u(t)) =
            H^\infty (\vec y - \vec u(t)) + (H(t) - H^\infty) (\vec y - \vec u(t)) =
            H^\infty (\vec y - \vec u(t)) + \vec\zeta(t).
        \end{equation}
        \begin{equation}
            \vec u(t) =
            e^{-H^\infty t} \vec C(t).
        \end{equation}
        \begin{equation}
            \frac{d\vec u(t)}{dt} =
            -H^\infty e^{-H^\infty t} \vec C(t) + e^{-H^\infty t} \frac{d\vec C(t)}{dt} =
            H^\infty (\vec y - \vec u(t)) + \vec\zeta(t) - H^\infty \vec y + e^{-H^\infty t} \frac{d\vec C(t)}{dt} - \vec\zeta(t).
        \end{equation}
        \begin{equation}
            \frac{d\vec C(t)}{dt} =
            e^{H^\infty t} (H^\infty \vec y + \vec\zeta(t)).
        \end{equation}
        \begin{equation}
            \vec C(t) =
            \vec u(0) + (e^{H^\infty t} - I) \vec y + \int_0^t e^{H^\infty \tau} \vec\zeta(\tau) \, d\tau.
        \end{equation}
        \begin{equation}
            \vec u(t) =
            \vec y + e^{-H^\infty t} (\vec u(0) - \vec y) + \int_0^t e^{H^\infty (\tau-t)} \vec\zeta(\tau) \, d\tau.
        \end{equation}
        \begin{multline}
            \| \vec u(t) - \vec u'(t) \|_2 =
            \left\|\int_0^t e^{H^\infty (\tau-t)} \vec\zeta(\tau) \, d\tau\right\|_2 \leq
            \int_0^t \left\|e^{H^\infty (\tau-t)} \vec\zeta(\tau)\right\|_2 \, d\tau \leq
            \\\leq
            \max_{\tau \in [0,t]} \| \vec\zeta(\tau)\|_2 \int_0^t \left\|e^{-H^\infty \tau}\right\|_2 \, d\tau \leq
            \max_{\tau \in [0,t]} \| \vec\zeta(\tau)\|_2 \int_0^t e^{-\lambda_0 \tau} \, d\tau \leq
            \\\leq
            \max_{\tau \in [0,t]} \| \vec\zeta(\tau)\|_2 \frac{1}{\lambda_0} \left(1 - e^{-\lambda_0 t}\right) \leq
            \frac{1}{\lambda_0} \max_{\tau \in [0,t]} \| \vec\zeta(\tau)\|_2.
        \end{multline}
        \begin{multline}
            \| \vec\zeta(\tau) \|_2 =
            \| (H(\tau) - H^\infty) (\vec y - \vec u(\tau)) \|_2 \leq
            \left(\| H(\tau) - H(0) \|_2 + \| H(0) - H^\infty \|_2\right) \| \vec y - \vec u(\tau) \|_2 \leq
            \\\leq
            \left(\| H(\tau) - H(0) \|_F + \| H(0) - H^\infty \|_F\right) \| \vec y - \vec u(0) \|_2.
        \end{multline}
        Due to Lemma~\ref{lemma:initial_gram_matrix_convergence_analogue} and Lemma~\ref{lemma:initial_gram_matrix_stability_analogue}, and since $\| \vec y - \vec u(0) \|_2 \leq \sqrt{2m / \delta}$ w.p. $\geq 1-\delta$ over initialization, we have
        \begin{equation}
            \| \vec\zeta(\tau) \|_2 \leq
            \left(C_H \frac{m^3}{n^{1/2} \lambda_0 \delta^{3/2}} + C_H' \frac{m}{n^{1/2}} \log\left(\frac{m}{\delta}\right)\right) \sqrt{\frac{2m}{\delta}} =
            \sqrt{2} C_H \frac{m^{7/2}}{n^{1/2} \lambda_0 \delta^2} + \sqrt{2} C_H' \frac{m^{3/2}}{n^{1/2} \delta^{1/2}} \log\left(\frac{m}{\delta}\right)
        \end{equation}
        w.p. $\geq 1-3\delta$ over initialization.
        Given $\epsilon > 0$, we need
        \begin{equation}
            n \geq
            C_n \frac{m^7}{\lambda_0^4 \delta^4 \epsilon^2}
        \end{equation}
        for some $C_n > 0$ in order to ensure $\| \vec u(t) - \vec u'(t) \|_2 \leq \epsilon$ w.p. $\geq 1-\delta$ over initialization.
    \end{proof}

    \bibliography{references}
    \bibliographystyle{apalike}

\end{document}